\newtheorem{theorem}{Theorem}
\newtheorem{remark}{Remark}
\newtheorem{lemma}{Lemma}
\newtheorem{definition}{Definition}
\newtheorem{assumption}{Assumption}
\newtheorem{corollary}{Corollary}
\begin{document}
\pdfoutput=1	
	\title{Duality-free  Methods for Stochastic Composition  Optimization}
	
\author[1]{ Liu Liu\thanks{lliu8101@uni.sydney.edu.au}}
\author[2]{ Ji Liu\thanks{ji.liu.uwisc@gmail.com}}
\author[1]{ Dacheng Tao\thanks{dacheng.tao@sydney.edu.au}}
\affil[1]{UBTECH Sydney AI Centre and SIT, FEIT, The University of Sydney}
\affil[2]{Department of Computer Science, University of Rochester}

	\maketitle

	\begin{abstract}
		We consider the composition optimization  with two expected-value functions in the form of $\frac{1}{n}\sum\nolimits_{i = 1}^n F_i(\frac{1}{m}\sum\nolimits_{j = 1}^m G_j(x))+R(x)$, { which formulates many important problems in statistical learning and machine learning such as solving Bellman equations in reinforcement learning and nonlinear embedding}. Full Gradient or classical stochastic gradient descent based optimization algorithms are unsuitable or computationally expensive to solve this problem due to the inner expectation $\frac{1}{m}\sum\nolimits_{j = 1}^m G_j(x)$. We propose a duality-free based stochastic composition method that combines variance reduction methods to address the stochastic composition problem. We apply SVRG and SAGA based methods to estimate the inner function, and duality-free method to estimate the outer function. We prove the linear convergence rate not only for the convex composition problem, but also for the case that the individual outer functions are non-convex while the objective function is strongly-convex. We also provide the results of experiments that show the effectiveness of our proposed methods.	
	\end{abstract}
	\section{Introduction}
{ Many important machine learning and statistical learning problems can be formulated into the following composition minimization}:
\begin{align}\label{SCDF:ProblemMainCompositionminimization}
\mathop {\min }\limits_{x \in {\mathbb{R}^N}} \left\{ {P(x)\mathop  = \limits^{def} \frac{1}{n}\sum\limits_{i = 1}^n {{F_i}} (\frac{1}{m}\sum\limits_{j = 1}^m {{G_j}} (x)) + R(x)} \right\},
\end{align}
where each $F_i$: ${\mathbb{R}^M} \to \mathbb{R}$ is a smooth function, each $G_i$: ${\mathbb{R}^N} \to {\mathbb{R}^M}$ is a mapping function, and $R(x)$ is a proper and relatively simple convex function. We call $ G( x )$:$  =  {\frac{1}{m}\sum\nolimits_{j = 1}^m {{G_j}( x )} } $ the inner function, and  $F( {G( x )} )$:$ = \frac{1}{n}\sum\nolimits_{i = 1}^n {{F_i}(G(x))} $ the outer function. 
{ The composition optimization problem arises in large-scale machine learning and reinforcement learning tasks \cite{wang2017stochastic,dai2016learning}, such as solving Bellman equations in reinforcement learning \cite{sutton1998reinforcement}:}
{ \begin{align*}
	\mathop {\min }\limits_x {\| {\mathbb{E}[ B ]x - \mathbb{E}[ b ]} \|^2},
	\end{align*}
	where $\mathbb{E}[ B ] = I - \gamma {P^{\pi} }$, $\gamma\in(0,1)$ is a discount factor, $P^{\pi}$ is the transition probability,	$\mathbb{E}[ b ] = {r^{\pi} }$, and $r^{\pi}$ is the expected state transition reward.} Another example is the mean-variance in risk-averse learning:
\begin{align*}
\mathrm{min}_x\, \mathbb{E}_{a,b}[ h( {x;a,b} ) ] + \lambda \mathrm{Var}_{{a,b}}[ {h( {x;a,b} )} ],
\end{align*}
where $h(x;a,b)$ is the  loss function with random variables $a$ and $b$. $\lambda>0$ is a regularization parameter.

{ The commonly used gradient or stochastic gradient descent based optimization algorithms are unsuitable or too computationally expensive to solve this problem} due to the inner expectation ${\frac{1}{m}\sum\nolimits_{j = 1}^m {{G_j}(x)} }$.
Recently, \cite{wang2017stochastic} provided two plausible schemes for the composition problem. The first is based on the stochastic composition gradient method (SCGD), which adopt a quasi-gradient approach and sample method to approximate $G$ and  estimate the gradient of $F(G(x))$. The other is the Fenchel's transform approach, which is analogous to the stochastic primal-dual coordinate (SPDC) \cite{zhang2015stochastic} method. This approach is based on the primal-dual algorithm to  solve the convex-concave saddle problem, in which  problem (\ref{SCDF:ProblemMainCompositionminimization}) can be reformulated as  
\begin{align}\label{SCDF:ProblemConvexConcave}
\mathop {\min }\limits_x \mathop {\max }\limits_z \left\{ {\left\langle {z,G\left( x \right)} \right\rangle  - {F^*}\left( z \right) + R\left( x \right)} \right\},
\end{align}
where ${F^*}( z ) = \mathop {\max }\nolimits_{G( x )} \{ {\langle {z,G( x )} \rangle  - F( {G( x )} )} \}$.
However, the above { reformulation (\ref{SCDF:ProblemConvexConcave}) destroys the convexity of the original problem, since the reformulation does not necessarily result in a convex-concave structure even if the original problem is convex. This means that we lose global optimality.} Specifically, when using the cross-iteration method to minimize $\langle {z,G(x)} \rangle  + R(x)$ with respect to $x$ while fixing $z$, it may not converge to the optimal point since the subproblem is not necessarily convex. In such cases, the dual problem becomes meaningless.

In this paper, we propose the  stochastic composition duality-free (SCDF) method. The SCDF method belongs to the family of stochastic gradient descent (SGD) methods and, while based on the gradient estimation, is different to the vanilla SGD. 
Variance reduction method have become very popular for estimating the gradient and are investigated in stochastic variance reduction gradient (SVRG) \cite{johnson2013accelerating}, SAGA \cite{defazio2014saga}, stochastic dual coordinate ascent (SDCA) \cite{shalev2013stochastic} and duality-free SDCA \cite{shalev2016sdca}. However, these methods only consider  one finite-sum function. The  Composition-SVRG1 and Composition-SVRG2 \cite{lian2016finite} methods apply variance reduced technology to the two finite-sum functions that estimate the gradient of $(\partial G( x ))^\mathsf{T}\nabla F( G( x ) )$, the inner function $G$ and the corresponding partial derivative $\partial G$. However, SVRG-based { methods cannot} directly deal with the dual problem.  Here we design a new algorithm that not only disposes of the dual function, but also reduces the gradient variance. The main contributions of this paper are three-fold:
\begin{itemize}
	\item We apply the duality-free based method to the composition of two finite-sum functions. Even though the gradient estimation 
	$(\partial G( x ))^\mathsf{T}\nabla F( G( x ) )$ is biased using the SVRG-based method to estimate the inner function $G$, we obtain the linear convergence rate. 		 
	\item Besides the SVRG-based method to estimate the inner function $G$ and the  partial gradient $\partial G$, we also provide the SAGA-based method to estimate $G$ and $\partial G$ and provide the corresponding convergence analysis.
	\item Our proposed SCDF method  also deals with the scenario that the  individual function $F_i(\cdot)$ is non-convex but the function $F$ is strongly convex. We also proof the linear convergence rate for such case. 
\end{itemize}

\subsection{Related work}
Stochastic gradient methods have often been used to minimize the large-scale finite-sum problem.  However, stochastic gradient methods are unsuitable for the  family of nonlinear functions with two finite-sum structures.  \cite{wang2017stochastic} first proposed the first-order stochastic method SCGD  to solve  such  problems, which used two steps to alternately update the variable and inner function. SCGD achieved a convergence rate of $O(K^{-2/7})$ for the general function and $O(K^{-4/5})$ for the strongly convex function, where $K$ is the number of queries to the stochastic first-order oracle. Furthermore, in the special case that the inner function $G$ is a linear mapping, \cite{wang2016accelerating} also proposed an accelerated stochastic composition proximal gradient method with a convergence rate of  $O(K^{-1})$.

Recently, variance-reduced  stochastic gradient methods have attracted attention due to their fast convergence. \cite{roux2012stochastic} \cite{schmidt2017minimizing} proposed a stochastic average gradient method with a sublinear convergence rates. Two popular gradient estimator methods, SVRG \cite{johnson2013accelerating} and SAGA \cite{defazio2014saga}, were later introduced, both of which have linear convergence rates. \cite{xiao2014proximal} went on to introduce the proximal-SVRG method to the regularization problem and in doing so provided a more succinct convergence analysis. Other related SVRG-based or SGAG-based methods have also been proposed, including \cite{liu2017accelerated} who applied SVRG to the ADMM method. \cite{harikandeh2015stopwasting} reported  practical SVRG to improve the performance of SVRG , \cite{allen2016katyusha} introduced the  Katyusha  method to  accelerate the variance-reduction based algorithm,   and  \cite{allen2016improved} used the SVRG-based algorithm to explore the non-strongly convex objective and the sum-of-non-convex objective. Moreover,  \cite{lian2016finite} first applied the SVRG-based method to the stochastic composition optimization and obtained a linear convergence rate.

Dual stochastic and primal-dual stochastic methods have also been  proposed, and these also included  "variance reduction" procedure. SDCA \cite{shalev2013stochastic}   randomly selected the coordinate of the dual variable to maximize the dual function and  performed the update between the dual and primal variables. Accelerated SDCA \cite{shalev2014accelerated}  dealt with the ill-conditioned  problem by adding a quadratic term to the objective problem, such that it could be conducted on the modified strongly convex subproblem. Accelerated randomized proximal coordinate (APCG) \cite{lin2014accelerated} \cite{lin2014acceleratedSIAM} was also based on SDCA but used a  different accelerated method.  Duality-free SDCA  \cite{shalev2016sdca} exploited  the primal and dual variable relationship to approximately reduce the gradient variance. SPDC \cite{zhang2015stochastic} is based on the primal-dual algorithm, which alternately  updates the primal and dual variables. However, these methods can only be applied to the single  finite-sum structure problem.  \cite{dai2016learning} proposed the dual-based method for stochastic composition problem but with additional assumptions that limited the general composition function to two finite-sum structures. 

Finally, \cite{wang2016stochastic} considered  corrupted samples with Markov noise and proved that SCGD could almost always converge  to an optimal solution.  \cite{yu2017fast}  applied the  ADMM-based method to the stochastic composition optimization problem and provide an analysis of the convex function without requiring Lipschitz smoothness.
\section{Preliminaries}
In this paper, we denote the Euclidean norm with $\left\|  \cdot  \right\|$.  $i \in [ n ]$ and $j \in [ m ]$  denote that $i$ and $j$ are generated uniformly at random from $[ n ] = \{ {1,2,...,n} \}$ and $[ m ] = \{ {1,2,...,m} \}$.
$(\partial G( x ))^\mathsf{T}\nabla F( {G( x )} )$ denotes the full gradient of function $F( {G( x )} )$, where $\partial G$ is the partial gradient of $G$. We first revisit some basic definitions on conjugate, strongly convexity and smoothness, and then provide  assumptions about the composition of the two expected-value functions.
\begin{definition}\label{DefinitionConvex}
	For a function f: ${\mathbb{R}^M} \to \mathbb{R}$,
	\begin{itemize}
		\item $f^*$ is the conjugate of function $f(x)$ if  $\forall x,y \in {\mathbb{R}^M}$, it satisfies ${f^*}(y) = {\max _x}(\langle x,y\rangle  - f(x))$.
		\item $f$ is $\lambda$-strongly convex if  $\forall x,y \in {\mathbb{R}^M}$, it satisfies $f( x ) \ge f( y ) + \langle {\nabla f( y ),x - y} \rangle  + {\lambda }/{2}\| {x - y} \|^2$. For $\forall a\in [0,1]$, it also satisfies
		$
		f( {a x + ( {1 - a } )y} ) \le a f( x ) + ( {1 - a } )f( y ) - a ( {1 - a } ){\lambda }/{2}{\| {x - y} \|^2}.
		$
		\item $f$ is L-smooth function if $\forall x,y \in \mathbb{R}^M$, it satisfies $f( x ) \le f( y ) + \langle {\nabla f( y ),x - y} \rangle  + {L}/{2}\| {x - y} \|^2$. If $f$ is convex, it also satisfies $f( x ) \ge f( y ) + \langle {\nabla f( y ),x - y} \rangle  + {1}/{(2L)}{\| {\nabla f( x ) - \nabla f( y )} \|^2}$.
	\end{itemize}
\end{definition}

\begin{assumption}\label{Assumption1}	The random variables $\left( {i,j} \right)$ are independent and identically distributed, $i \in [ n ], j \in [ m ], \forall x \in {\mathbb{R}^M}$
	\begin{align*}
	E[ {\partial {G_j}( x )^\mathsf{T}\nabla {F_i}( {G( x )} )} ] = \partial G( x )^\mathsf{T}\nabla F( {G( x )} ).
	\end{align*}
\end{assumption}

\begin{assumption}\label{Assumption2} For function  $\frac{1}{n}\sum\nolimits_{i = 1}^n {{F_i}( {\frac{1}{m}\sum\nolimits_{j = 1}^m {{G_j}( x )} } )} $,  we assume that 
	\begin{itemize}
		\item $F_i$ has the bounded gradient and Lipschitz continuous gradient, $i \in [ n ]$, 
		\begin{align}
		\label{InequationAssumptionF1}
		\| {\nabla {F_i}( y )} \| \le& {B_F},\forall y \in  \mathbb{R}{^M},\\
		\label{InequationAssumptionF2}
		\| {\nabla {F_i}( x ) - \nabla {F_i}( y )} \| \le& {L_F}\| {x - y} \|,\forall x,y \in \mathbb{R} {^M}.
		\end{align}
		\item $G_j$ has the bounded Jacobian and Lipschitz continuous gradient, $j \in [ m ]$
		\begin{align}
		\label{InequationAssumptionG1}
		\| {\partial {G_j}( x )} \| \le& {B_G},\forall x \in \mathbb{R} {^N},\\
		\label{InequationAssumptionG2}
		\| {{G_j}( x ) - {G_j}( y )} \| \le& {B_G}\| {x - y} \|,\forall x,y \in  \mathbb{R} {^N},\\
		\label{InequationAssumptionG3}
		\| {\partial {G_j}( x ) - \partial {G_j}( y )} \| \le& {L_G}\| {x - y} \|,\forall x,y \in  \mathbb{R} {^N}.
		\end{align}
	\end{itemize}
\end{assumption}
\begin{assumption}\label{Assumption3}
	For function  $\frac{1}{n}\sum\nolimits_{i = 1}^n {{F_i}( {G(x) } )} $,  we assume that $F_i$ is $L_f$-smoothness and convex, then,
	\begin{align}
	\label{InequationAssumption3}
	\| (\partial G(x))^\mathsf{T}\nabla {F_i}(G(x)) - (\partial G(y))^\mathsf{T}\nabla {F_i}(G(y)) \|^2 /(2L_f)
	\le {F_i}(G(x)) - \nabla {F_i}(G(y)) - \langle (\partial G(y))^\mathsf{T}\nabla {F_i}(G(y)),x - y \rangle .
	\end{align}
\end{assumption}


\section{The duality-Free method for Stochastic Composition}
Here we introduce the duality-free method for stochastic composition. This method is a natural extension of duality-free SDCA: at each iteration, the dual variable and the primal variable are alternately updated, where the estimated gradient  satisfies $E[ {{( {\partial {G}( x )} )^\mathsf{T}}\nabla {f_i}( {G( x )} ) + \nabla R( x )} ] = {( {\partial G( x )} )^\mathsf{T}}\nabla f( {G( x )} ) + \nabla R( x )$. Note that the query complexity for computing the estimated gradient is $O(2+2m)$.  We first describe the relationship between the primal and dual variable and derive the estimated gradient that satisfies the unbiased estimate for the composition problem. Algorithm \ref{algorithmSCDF} shows the duality-free process. Note that  partial gradient $\partial G_j(x)$ and  inner function $G(x)$ are computed directly. In our proposed method, both function $G$  and its partial gradient can be estimated using  variance reduction approaches.

\begin{algorithm}[h]
	\caption{Dual-Free for composition function }\label{algorithmSCDF}
	\begin{algorithmic}[1]
		\Require $\beta ^0 = {\left( {\nabla G\left( {{x_0}} \right)} \right)^\mathsf{T}}\alpha^0$
		\Ensure 
		\For{$t$=1 to T }
		\State Randomly select $i\in [n]$ and $j\in[m]$
		\State 
		$\beta _i^{t + 1} = \beta _i^t - \lambda n{\eta}( {{( {\partial  G( {{x_t}} )} )^\mathsf{T}}\nabla {F_i}( {G( {{x_t}} )} ) + \beta _i^t} )$\label{algorithmSCDFStep1}
		\State
		${x_{t+1}} = {x_{t}} - {\eta}( {{( {\partial G( {{x_{t}}} )} )^\mathsf{T}}\nabla {F_i}( {G( {{x_{t}}} )} ) + \beta _i^{t}} )$
		\EndFor
	\end{algorithmic}
\end{algorithm}

To obtain the dual function,  we adopt the Fenchel duality method \cite{bertsekas1999nonlinear}, which is derived by converting the original problem  (\ref{SCDF:ProblemMainCompositionminimization}) to  the equation equality optimization problem in variables $y_i$, $i\in[n]$,
\begin{align*}
\mathop {\min }\limits_{x \in {\mathbb{R}^N},{y_i} \in {\mathbb{R}^M}}\,\, \frac{1}{n}\sum\limits_{i = 1}^n {{F_i}( {{y_i}} )}  + R( x ),\,\,\,\,
s.t.\,\,{y_i} = \frac{1}{m}\sum\limits_{j = 1}^m {{G_j}( x )}.
\end{align*}
Its corresponding Lagrange function is 
\begin{align*}
L( {x,y,\alpha } ) =& \frac{1}{n}\sum\limits_{i = 1}^n {{F_i}( {{y_i}} )}  + R( x ) + \frac{1}{n}\sum\limits_{i = 1}^n {\langle {{\alpha _i},{y_i} - \frac{1}{m}\sum\limits_{j = 1}^m {{G_j}( x )} } \rangle } \\
=&  - \frac{1}{n}\sum\limits_{i = 1}^n {( {\langle { - {\alpha _i},{y_i}} \rangle  - {F_i}( {{y_i}} )} )}  - ( {\langle {\frac{1}{n}\sum\limits_{i = 1}^n {{\alpha _i}} ,G( x )} \rangle  - R( x )} ),
\end{align*}
where $\alpha_i\in \mathbb{R}^M$ is the Lagrange multiplier. Through minimizing the Lagrange function with respect to $x$ and $y$, respectively, we have
\begin{align*}
\begin{array}{*{20}{l}}
{D( \alpha  ) = \mathop {\min }\limits_{x,y} L( {x,y,\alpha } ) =  - \frac{1}{n}\sum\limits_{i = 1}^n {F_i^*( { - {\alpha _i}} )} }
\end{array} - {{\tilde R}^*}( \alpha  ),
\end{align*}
where ${F_i^*\left( { - {\alpha _i}} \right)}$ is the conjugate function of $F_i$,  and ${{\tilde R}^*}\left( \alpha  \right)$ is the function with respect to $\alpha$,
\begin{align*}
F_i^*( { - {\alpha _i}} ) =& \mathop {\max }\limits_{{y_i}} \{ {\langle { - {\alpha _i},{y_i}} \rangle  - {F_i}( {{y_i}} )} \},\\
{{\tilde R}^*}( \alpha  ) = &\mathop {\max }\limits_x \{ {\langle {\frac{1}{n}\sum\nolimits_{i = 1}^n {{\alpha _i}} ,G( x )} \rangle  - R( x )} \}.
\end{align*}
Based on the convexity definition, we can see that ${{\tilde R}^*}\left( \alpha  \right)$ is convex function but not the conjugate of $R(x)$ if $G(x)$ is not affine. Furthermore,  ${{\tilde R}^*}\left( \alpha  \right)$ is not easily computed if $G(x)$ is complicated. However,  the dual problem is concave problem, and the relationship between primal variable and dual variable can be obtained through keeping the gradient of ${\left\langle {\frac{1}{n}\sum\nolimits_{i = 1}^n {{\alpha _i}} ,G\left( x \right)} \right\rangle  - R\left( x \right)}$ w.r.t. $x$ to zero,
\begin{align}\label{EquationGradientR}
{( {\partial G( x )} )^\mathsf{T}}\frac{1}{n}\sum\nolimits_{i = 1}^n {{\alpha _i}}  = \nabla R( x ).
\end{align}
We observe that the update of x can be written as
\begin{align*}
{x_{t+1}} = {x_{t}} - {\eta}( {{( {\partial {G}( {{x_{t }}} )} )^\mathsf{T}}\nabla {f_i}( {G( {{x_{t }}} )} ) + {{( {\nabla G( {{x_{t }}} )} )}^\mathsf{T}}\alpha _i^{t}} ).
\end{align*}
Based on the expectation of gradient, we have
\begin{align*}
E[ {{x_{t + 1}}} ] 
=& E[ {{x_t}} ] - {\eta}E[ {{( {\partial {G}( {{x_t}} )} )^\mathsf{T}}\nabla {f_i}( {G( {{x_t}} )} ) + {( {\nabla G( {{x_t}} )} )^\mathsf{T}}\alpha _i^t} ]\\
=& E[ {{x_t}} ] - {\eta}\nabla P( {{x_t}} ),
\end{align*}
where the gradient is 
\begin{align}\label{EquationGradientP}
\nabla P( x ) = {( {\partial G( x )} )^\mathsf{T}}\nabla f( {G( x )} ) + \nabla R( x ).
\end{align}

For the case of $l_2$ norm, that is $R( x ) = \frac{1}{2}\lambda \| x \|^2$, from (\ref{EquationGradientR}), we have $\lambda x = {( {\nabla G( x )} )^\mathsf{T}}\frac{1}{{n }}\sum\nolimits_{i = 1}^n {\alpha _i^{}}$. Let $\beta _i^t = {( {\nabla G( {{x_t}} )} )^\mathsf{T}}\alpha _i^t$, we observe that
\begin{align*}
E[ {\beta _i^{t+1}} ] - E[ {\beta _i^{t}} ] 
=& E[ {{( {\nabla G( {{x_{t+1}}} )} )^\mathsf{T}}\alpha _i^{t+1}} ] - E[ {{( {\nabla G( {{x_{t}}} )} )^\mathsf{T}}\alpha _i^{t }} ]\\
=& \lambda n( {{x_{t+1}} - {x_{t }}} )\\
=&\lambda n {\eta}E[ {{( {\nabla G( {{x_{t }}} )} )^\mathsf{T}}\nabla {f_i}( {G( {{x_{t }}} )} ) + {( {\nabla G( {{x_{t }}} )} )^\mathsf{T}}\alpha _i^{t }} ].
\end{align*} 
Then, the update of $w_t$ becomes
\begin{align*}
\beta _i^{t+1} = \beta _i^{t } - \lambda n{\eta}( {{( {\nabla G( {{x_{t }}} )} )^\mathsf{T}}\nabla {f_i}( {G( {{x_{t }}} )} ) + \beta _i^{t}} ).
\end{align*}

Let $x^*$ be the optimal primal solution and $\alpha^*$ be the optimal dual solution. Combining equations (\ref{EquationGradientP}) and (\ref{EquationGradientR}), their relationship  is 
\begin{align*}
\frac{1}{n}\sum\limits_{i = 1}^n {{( {\nabla G( {{x^*}} )} )^\mathsf{T}}\alpha _i^*}  =  - \frac{1}{n}\sum\limits_{i = 1}^n {( {{( {\nabla G( {{x^*}} )} )^\mathsf{T}}\nabla {f_i}( {\nabla G( {{x^*}} )} )} )}. 
\end{align*}
Through the relationship, we can see that according to the theorem in \cite{shalev2016sdca}, the primal and dual solutions converge to the optimal point at the linear convergence rate. Furthermore, as the iterations increase, the gradient variance  asymptotically approaches zero as $x$ and $\alpha$ go to the optimal solution. Note that the inner function $G$ is fully computed.

In  Algorithm \ref{algorithmSCDF}, each iteration requires computing  function $G$ and its partial gradient $\partial G$, which has $O(2+2m)$ query complexity. In the next section, we provide the variance reduction method to estimate  function $G$ and partial gradient $\partial G$.
\section{The duality-free and variance-reduced method for stochastic composition optimization}

To reduce query complexity, we follow the variance reduction method in \cite{lian2016finite} to estimate  $G$ and  $\partial G$. In doing so, we propose SVRG- and SGAG-based  SCDF methods, referred to here as SCDF-SVRG and SCDF-SAGA. These two methods not only include gradient estimations  but also  estimate the inner function $G$ and corresponding partial gradient:
\begin{itemize}
	\item In SCDF-SVRG, we divide iterations into epochs, each with a snapshot point $\tilde{x}$. For the finite-sum structure function $G$, we follow the SVRG-based method  in \cite{lian2016finite} to estimate the full function and full partial gradient at the snapshot point. In the inner iteration, composition-SVRG2 defines the function estimator  $G_j( x ) - {G_j}( {\tilde x} ) + G( {\tilde x} )$  and the partial gradient estimator $\partial  {G_j}( x ) - \partial {G_j}( {\tilde x} ) + \partial G( {\tilde x} )$. Then, we  use the estimated $G$ and its partial gradient to define a new gradient estimation  of function $F(G(x))$. We extend the dual-free SDCA method using the estimated gradient to tackle the formed convex-concave problem.  Pseudocode can be found in Algorithm \ref{AlgorithmSDFCVRG1}
	\item 	In  SCDF-SAGA, we replace the estimation method for inner function $G$ with the SAGA-based method. They are the function estimator  $\partial {G_j}( x ) - \partial {G_j}( {{\phi _j}} ) + \frac{1}{m}\sum\nolimits_{j = 1}^m {\partial {G_j}( {{\phi _j}} )} $ and the partial gradient estimator ${G_j}( x ) - {G_j}( {{\phi _j}} ) + \frac{1}{m}\sum\nolimits_{j = 1}^m {{G_j}( {{\phi _j}} )} $. Thus, we can also obtain the new estimator of full gradient $F(G(x))$, which can be applied to the dual-free SDCA method. SCDF-SVRG differs in that there is no epoch to maintain a snapshot point. Pseudocode can be found in Algorithm \ref{AlgorithmSCDFSAGA}
\end{itemize}
\subsection{Estimating the function $G$ based on SVRG}
Specifically, we describe SCDF-SVRG method. Because $G(x)$ function is also sums of  function $G_i$.
For each epoch, the estimated function and the corresponding estimated partial gradient of $G(x)$ are,
\begin{small}
	\begin{align}
	\label{SCDF:SVRG:DefinitionSVRGFunctionG}
	{{\hat G}_k} &= \frac{1}{A}\sum\limits_{1 \le j \le A}^{} {( {{G_{{{\cal A}_k}[j]}}( {{x_k}} ) - {G_{{{\cal A}_k}[j]}}( {{{\tilde x}_s}} )} )}  + G( \tilde x_s ),\\
	\label{SCDF:SVRG:DefinitionSVRGEstimateG}
	\partial {{\hat G}_k} &= \frac{1}{A}\sum\limits_{1 \le j \le A}^{} {( {\partial {G_{{{\cal A}_k}[j]}}( {{x_k}} ) - \partial {G_{{{\cal A}_k}[j]}}( {{{\tilde x}_s}} )} )}  + \partial G( {{{\tilde x}_s}} ),
	\end{align}
\end{small}
where $\tilde{x}_s$ is the current outer iteration, $x_k$ is the current inner iteration, $\mathcal{A}$ is the mini-batch multiset and $A$ is the sample times from $\forall i \in[n]$ to form $\mathcal{A}$. Taking expectation with respect to $i$, we have 
\begin{align*}
E[ {{{\hat G}_k}} ] = G( {{x_k}} ),E[ {\partial {{\hat G}_k}} ] = \partial G( {{x_k}} ).
\end{align*}
Furthermore, we assume $i$ and $j$ are independent with each other, that is $E[ {{( {\partial {G_j}( {{x_k}} )} )^\mathsf{T}}\nabla {F_i}( {{{\hat G}_k}} )} ] = {( {\partial G( {{x_k}} )} )^\mathsf{T}}\nabla F( {{{\hat G}_k}} )$.	Then the step \ref{algorithmSCDFStep1} in algorithm \ref{algorithmSCDF}, can be replaced by
\begin{align*}
x_{k+1} = x_{k } - \eta ( {{( {\partial {{\hat G}_{k }}} )^\mathsf{T}}\nabla {F_i}( {{{\hat G}_{k }}} ) + \beta _i^{k }} ).
\end{align*}

However, because the inner function ${{\hat G}_k}$ is also estimated, $E[( \partial {G_j}( {{x_k}} ) )^\mathsf{T}\nabla {F_i}({{\hat G}_k})] \ne ( \partial G( x_k ) )^\mathsf{T}\nabla F(G({x_k}))$, Even though the biased of the estimated gradient $E[{( {\partial {G_j}( {{x_k}} )} )^\mathsf{T}}\nabla {F_i}({{\hat G}_k})]$, we give the following lemma to show that the variance between ${( {\partial {G_j}( {{x_k}} )} )^\mathsf{T}}\nabla {F_i}({{\hat G}_k})$ and ${( {\partial G( {{x_k}} )} )^\mathsf{T}}\nabla F(G({x_k}))$ decrease as the variable $x_k$ and $\tilde{x}_s$ close to the optimal solution,
\begin{lemma}\label{SCDF:LemmBoundSVRGEstimateFullGradientF}
	Suppose Assumption \ref{Assumption2} holds, in algorithm \ref{AlgorithmSDFCVRG1}, for the intermediated iteration at $x_k$ and $\tilde{x}_s$, and $ \hat{G}_k$ and  $\partial \hat{G}_k$ defined in (\ref{SCDF:SVRG:DefinitionSVRGEstimateG}) and (\ref{SCDF:SVRG:DefinitionSVRGFunctionG}),  we have
	\begin{align*}
	E[ \| ( \partial \hat G_k  )^\mathsf{T}\nabla {F_i}( \hat G_k  ) - ( \partial \hat G_k  )^\mathsf{T}\nabla {F_i}( G(x_{k}) ) \|^2 ]
	\le& B_G^4L_F^2\frac{1}{A}E[\| x_k -  x^* \|^2]+B_G^4L_F^2\frac{1}{A}E[\| \tilde x_s -x^*\|^2],
	\end{align*}
	where $L_F$ and $B_G$ are the parameters in (\ref{InequationAssumptionF2}) and (\ref{InequationAssumptionG1}).
\end{lemma}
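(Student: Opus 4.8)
The plan is to reduce the stated quantity to a product of two factors — the squared operator norm of the Jacobian estimator $\partial \hat G_k$ and the squared gradient difference $\|\nabla F_i(\hat G_k) - \nabla F_i(G(x_k))\|^2$ — and then control each separately. First I would pull the common factor $(\partial \hat G_k)^\mathsf{T}$ out using submultiplicativity of the operator norm, writing
\[
\| (\partial \hat G_k)^\mathsf{T}(\nabla F_i(\hat G_k) - \nabla F_i(G(x_k)))\|^2 \le \|\partial \hat G_k\|^2 \, \|\nabla F_i(\hat G_k) - \nabla F_i(G(x_k))\|^2 .
\]
The second factor is then immediately bounded by $L_F^2 \|\hat G_k - G(x_k)\|^2$ via the Lipschitz-gradient bound (\ref{InequationAssumptionF2}) on $F_i$.

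For the Jacobian estimator I would invoke the bounded-Jacobian assumption (\ref{InequationAssumptionG1}): since $\partial \hat G_k$ in (\ref{SCDF:SVRG:DefinitionSVRGEstimateG}) is built from averages of terms $\partial G_j(\cdot)$ together with $\partial G(\tilde x_s)=\tfrac1m\sum_j\partial G_j(\tilde x_s)$, one controls $\|\partial \hat G_k\| \le B_G$. I expect this to be the step demanding the most care: the raw estimator is a \emph{difference} of such averages, so a naive triangle inequality only gives a constant multiple of $B_G$; obtaining the clean bound $B_G$ amounts to treating $\partial \hat G_k$ as the mean-$\partial G(x_k)$ surrogate it represents, and any slack here merely alters the leading constant.

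The decisive step is bounding the SVRG variance $E\|\hat G_k - G(x_k)\|^2$. Here I would use that the minibatch indices $\mathcal{A}_k[1],\dots,\mathcal{A}_k[A]$ are drawn i.i.d.\ and that $E[\hat G_k]=G(x_k)$, so the per-sample fluctuations are independent and the cross terms vanish, giving
\[
E\|\hat G_k - G(x_k)\|^2 = \tfrac{1}{A}\, E\big\| G_j(x_k) - G_j(\tilde x_s) - E[G_j(x_k) - G_j(\tilde x_s)]\big\|^2 \le \tfrac1A\, E\|G_j(x_k)-G_j(\tilde x_s)\|^2 ,
\]
where the last inequality is the variance-$\le$-second-moment estimate. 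Applying the Lipschitz bound (\ref{InequationAssumptionG2}) on each $G_j$ then yields $\tfrac{B_G^2}{A}\|x_k - \tilde x_s\|^2$.

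Finally I would introduce the optimum $x^*$ through $\|x_k - \tilde x_s\|^2 \le 2\|x_k - x^*\|^2 + 2\|\tilde x_s - x^*\|^2$, take total expectations, and collect the constants $B_G^2\cdot L_F^2\cdot B_G^2/A$ to reach the claimed bound $B_G^4 L_F^2 \tfrac1A\big(E\|x_k-x^*\|^2 + E\|\tilde x_s - x^*\|^2\big)$, with the factor from the triangle split absorbed into the constant. The only genuinely delicate point is the Jacobian-norm control in the second paragraph; the rest is a standard chaining of the Lipschitz and variance-reduction estimates already available under Assumption \ref{Assumption2}.
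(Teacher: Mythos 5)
Your proposal follows essentially the same route as the paper's proof: bound the common factor $\|\partial \hat G_k\|$ by $B_G$, apply the Lipschitz-gradient bound (\ref{InequationAssumptionF2}) to get $L_F^2\|\hat G_k - G(x_k)\|^2$, invoke the SVRG variance estimate $E\|\hat G_k - G(x_k)\|^2 \le B_G^2\frac{1}{A}E\|x_k-\tilde x_s\|^2$ (the paper's Lemma \ref{LemmaSVRGBoundVarianceG}), and then split via $x^*$. The two caveats you flag are real but are shared by the paper itself, which silently writes $\|x_k-\tilde x_s\|^2 \le \|x_k-x^*\|^2+\|\tilde x_s-x^*\|^2$ without the factor of $2$ and asserts $\|\partial\hat G_k\|\le B_G$ without accounting for the estimator being a difference of averages, so your argument reproduces the paper's proof up to the same constant-factor slack.
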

\begin{remark}
	The mini-batch $\mathcal{A}_k$ is obtain by sampling from $[m]$ for $A$ times, if the number of $A$ is infinite, then we can see that ${{\hat G}_k} \approx G\left( {{x_k}} \right)$, the difference between ${( {\partial {G_j}( {{x_k}} )} )^\mathsf{T}}\nabla {F_i}({{\hat G}_k})$ and ${( {\partial G( {{x_k}} )} )^\mathsf{T}}\nabla F(G({x_k}))$ is also approximating to zero. This is verified by Lemma \ref{SCDF:LemmBoundSVRGEstimateFullGradientF} that the difference is bounded by $O({1}/{A})$ (assume $E[ {\| {{x_{k }} - {{\tilde x}_s}} \|^2} ]$ is a bound sequence) that as $A$ increase, the upper bound approximate to zero.
\end{remark}
\begin{algorithm}[h]
	\caption{SCDF-SVRG}\label{AlgorithmSDFCVRG1}
	\begin{algorithmic}[1]
		\State Initialize: ${x_0} = \frac{1}{n}\sum\nolimits_{i = 1}^n {\beta _i^0} $, ${{\tilde x}_0} = {x_0},$
		\For{$s$=0,1,2...S-1}
		\State $G({\tilde x_s}) = \frac{1}{m}\sum\nolimits_{j = 1}^m {{G_j}({{\tilde x}_s})} $\Comment{m Queries}
		\State $\partial  G( {{{\tilde x}_s}} ) =\frac{1}{m}\sum\nolimits_{j = 1}^m {{\partial G_j}({{\tilde x}_s})} $\Comment{m Queries}
		\State $x_0={\tilde x}_s$
		\For{$k$=0,2...K-1 }	
		\State Sample from $[m]$ for $A$ times to form the mini-batch  ${{{\cal A}_{k}}}$
		\State
		Update ${{\hat G}_{k }} $ from (\ref{SCDF:SVRG:DefinitionSVRGFunctionG})\Comment{2A Queries}
		\State 
		Update $\partial {{\hat G}_{k}}$ from (\ref{SCDF:SVRG:DefinitionSVRGEstimateG})\Comment{2A Queries}
		\State Randomly select $i \in [n]$
		\State 
		$\beta _i^{k+1 }= \beta _i^{k} - \lambda n\eta ( {{( {\partial {{\hat G}_{k}}} )^\mathsf{T}}\nabla {F_i}( {{{\hat G}_{k }}} ) + \beta _i^{k }} )$
		\State
		${x_{k+1}} = {x_{k}} - \eta ( {{( {\partial {{\hat G}_{k}}} )^\mathsf{T}}\nabla {F_i}( {{{\hat G}_{k}}} ) + \beta _i^{k }} )$
		\EndFor
		\State
		${{\tilde x}_{s + 1}} = \frac{1}{K}\sum\limits_{k = 1}^K {{x_k}} ,\tilde \beta _i^{s + 1} = \frac{1}{K}\sum\limits_{k = 1}^K {{\beta _{i}^{k}}}$, $i\in[n]$
		\EndFor	
	\end{algorithmic}
\end{algorithm}
\subsubsection{Convergence analysis}
Here we provide two different convergence analyses for the cases that the individual function  $F_i$ is convex and non-convex, respectively. Theorem \ref{SCDF:SVRG:TheoremSVRGMainConvergenceNonconvex} gives the convergence analysis without  Assumption \ref{Assumption3} that function $F_i$ can be non-convex but $P(x)$ is convex.  Theorem \ref{SCDF:SVRG:TheoremSVRGMainConvergenceConvex} gives the convergence rate under Assumption \ref{Assumption3}. Both  convergence rates are linear.
\begin{theorem}\label{SCDF:SVRG:TheoremSVRGMainConvergenceNonconvex}
	Suppose Assumption \ref{Assumption1} and \ref{Assumption2} hold,  $P(x)$ is $\lambda$-strongly convex, in algorithm \ref{AlgorithmSDFCVRG1}, let ${{\tilde A}_s} = \left\| {{{\tilde x}_s} - {x^*}} \right\|^2$, ${{\tilde B}_s} = \frac{1}{n}\sum\nolimits_{i = 1}^n {\| {\tilde \beta _i^s - \beta _i^*} \|^2}$, ${{\tilde C}_s} = aE[ {{{\tilde A}_s}} ] + bE[ {{{\tilde B}_s}} ]$. Define $\lambda {R_x} = {\max _x}\{ {{{\| {{x^*} - x} \|}^2}:F(G(x)) \le F(G({x_0}))} \}$, the SCDF-SVRG method has geometric convergence:
	\begin{align*}
	{\tilde C_{s}} \le {\left( {\frac{1}{{\eta \lambda K}} + \frac{{{d_2}}}{{a\eta \lambda }}} \right)^s}{{\tilde C}_0},
	\end{align*}
	where the parameters $a,b, d_2$ and $\eta$ satisfy
	\begin{align*}
	\eta  &\le \frac{{\frac{1}{2}{\lambda ^2} - 4B_G^4L_F^2\frac{1}{A}}}{{2\left( {4B_F^2L_G^2\frac{1}{A} + 4B_G^4L_F^2\frac{1}{A}} \right)\lambda  + \frac{1}{2}{\lambda ^3}n - 4\lambda B_G^4L_F^2\frac{1}{A}n}},\\
	{d_2} &= 2\left( {a\eta qB_G^4L_F^2\frac{1}{A} + b\lambda \eta \left( {4B_F^2L_G^2\frac{1}{A} + 4B_G^4L_F^2\frac{1}{A}} \right)} \right) + b\lambda \eta \left( {4B_F^2L_G^2 + 4B_G^4L_F^2} \right),\\
	&\frac{{2\lambda \left( {4B_F^2L_G^2\frac{1}{A} + 4B_G^4L_F^2\frac{1}{A}} \right)}}{{\lambda  - \frac{1}{q} - 2qB_G^4L_F^2\frac{1}{A}}} \le \frac{a}{b} \le \frac{{\left( {1 - n\lambda \eta } \right)\lambda }}{\eta },\\
	q &= {{A\lambda }}/({{4B_G^4L_F^2}}).
	\end{align*}
\end{theorem}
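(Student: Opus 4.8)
The plan is to run the duality-free SDCA Lyapunov argument of \cite{shalev2016sdca}, adapted to the biased SVRG estimates $\hat G_k,\partial\hat G_k$. Along the inner loop of epoch $s$ I would work with the potential $C_k = a\|x_k-x^*\|^2 + \frac{b}{n}\sum_{i}\|\beta_i^k-\beta_i^*\|^2$, so that $C_0 = a\tilde A_s + b\tilde B_s$ and $\tilde C_s = aE[\tilde A_s] + bE[\tilde B_s]$. The starting point is the fixed-point structure implied by (\ref{EquationGradientP}) and (\ref{EquationGradientR}): at the optimum the per-index relation $u_i^*+\beta_i^*=0$ holds with $u_i^* = (\partial G(x^*))^\mathsf{T}\nabla F_i(G(x^*))$, so the update increment $\hat u_i^k+\beta_i^k$ (with $\hat u_i^k=(\partial\hat G_k)^\mathsf{T}\nabla F_i(\hat G_k)$) vanishes as $(x,\beta)\to(x^*,\beta^*)$; this is what makes a purely geometric rate, with no additive noise floor, plausible.

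First I would expand the two squared distances produced by the updates in Algorithm \ref{AlgorithmSDFCVRG1}. Since only the sampled coordinate $\beta_i$ moves, taking expectation over $i\in[n]$ turns the dual term into $\frac1n\sum_i$ of the per-coordinate change, and expanding $\|x_{k+1}-x^*\|^2$ and $\|\beta_i^{k+1}-\beta_i^*\|^2$ yields the cross terms $\langle \hat u_i^k+\beta_i^k,\,x_k-x^*\rangle$ and $\langle \hat u_i^k+\beta_i^k,\,\beta_i^k-\beta_i^*\rangle$, together with the step-norm terms weighted by $\eta^2$ and $\lambda^2n^2\eta^2$ respectively.

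The decisive step is the splitting $\hat u_i^k+\beta_i^k = (\hat u_i^k-u_i^k)+(u_i^k-u_i^*)+(\beta_i^k-\beta_i^*)$, where $u_i^k=(\partial G(x_k))^\mathsf{T}\nabla F_i(G(x_k))$. When the primal and dual cross terms are combined with weights $a,b$ and averaged over $i$, the middle and dual pieces assemble the true gradient, so that $\langle\nabla P(x_k),x_k-x^*\rangle\ge\lambda\|x_k-x^*\|^2$ appears by strong convexity of $P$ — this drives the contraction and, crucially, uses only strong convexity of $P$, not convexity of the individual $F_i$. The first piece is the estimation bias, controlled by $O(1/A)\bigl(\|x_k-x^*\|^2+\|\tilde x_s-x^*\|^2\bigr)$ through Lemma \ref{SCDF:LemmBoundSVRGEstimateFullGradientF}, while $\|u_i^k-u_i^*\|^2\le (4B_F^2L_G^2+4B_G^4L_F^2)\|x_k-x^*\|^2$ follows from the Lipschitz and boundedness parts of Assumption \ref{Assumption2} (this is the source of the $\frac1A$-free term in $d_2$). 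I would split the resulting cross terms by Young's inequality with parameter $q$; the explicit choice $q=A\lambda/(4B_G^4L_F^2)$ is exactly what makes $\lambda-\tfrac1q-2qB_G^4L_F^2\tfrac1A = \tfrac\lambda2 - 4B_G^4L_F^2/(A\lambda)$, keeping the denominators in the stated $a/b$ and $\eta$ constraints positive and forcing the hidden mini-batch lower bound $A>8B_G^4L_F^2/\lambda^2$.

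Next I would recombine the two recursions with weights $a,b$ and impose the stated inequalities on $a/b$, $\eta$, and $d_2$ so that every error term is dominated, leaving a per-step bound whose dominant negative part is $-\eta\lambda a\,E[\|x_k-x^*\|^2]$ and whose residual is the bias proportional to $\tilde A_s+\tilde B_s$. Summing over $k=0,\dots,K-1$ telescopes the $E[C_k]$ part, and after dividing by $\eta\lambda K$ one bounds $\frac1K\sum_k E[\|x_k-x^*\|^2]$ and the analogous dual average in terms of $C_0=\tilde C_s$. Finally, the averaged snapshots $\tilde x_{s+1}=\frac1K\sum_k x_k$, $\tilde\beta_i^{s+1}=\frac1K\sum_k\beta_i^k$ combined with Jensen's inequality give $\tilde C_{s+1}\le\frac1K\sum_k E[C_k]$ (the weight $a$ cancelling against the $\eta\lambda a$), which yields the one-epoch contraction $\tilde C_{s+1}\le\bigl(\frac{1}{\eta\lambda K}+\frac{d_2}{a\eta\lambda}\bigr)\tilde C_s$; unrolling over $s$ completes the proof. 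The main obstacle is the biased gradient: the delicate bookkeeping is keeping \emph{every} error term proportional to the distances $\tilde A_s,\tilde B_s$ rather than leaving an absolute constant, which is precisely what preserves the geometric rate and what ties the parameter constraints together.
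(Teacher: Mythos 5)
Your proposal follows essentially the same route as the paper: the inner-loop Lyapunov recursion on $C_k = aE[\|x_k-x^*\|^2] + \tfrac{b}{n}\sum_i E[\|\beta_i^k-\beta_i^*\|^2]$ with the bias isolated via Lemma \ref{SCDF:LemmBoundSVRGEstimateFullGradientF}, strong convexity of $P$ supplying the contraction (the paper's Lemmas \ref{BoundLemmaA}, \ref{BoundLemmaB}, \ref{LemmaBoundA1} and \ref{LemmaMainBoundC}), followed by summing over $k$, averaging the snapshots, and dividing by $\eta\lambda K$ exactly as in the paper's proof of the theorem. The minor difference (your three-way splitting of $\hat u_i^k+\beta_i^k$ versus the paper's two-way split around $(\partial\hat G_k)^\mathsf{T}\nabla F_i(G(x_k))$) is cosmetic, and your use of Jensen for $\tilde C_{s+1}\le\frac1K\sum_k E[C_k]$ is if anything slightly more careful than the paper's equality.
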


\begin{remark}
	The convergence analysis does not need the convexity of individual function $F_i$ but requires function $P(x)$ to be strongly convex.
\end{remark}
The following theorem also gives the geometric convergence in the case that $F_i$ is convex. Even though the proof method is similar to Theorem \ref{SCDF:SVRG:TheoremSVRGMainConvergenceNonconvex}, the inner convergence analyses is different such that it lead to different convergence.
\begin{theorem}\label{SCDF:SVRG:TheoremSVRGMainConvergenceConvex}
	Suppose Assumption \ref{Assumption1}, \ref{Assumption2} and \ref{Assumption3} hold,  $F_i$ is convex function, and $P(x)$ is $\lambda$-strongly convex, in algorithm \ref{AlgorithmSDFCVRG1}, let ${{\tilde A}_s} = \left\| {{{\tilde x}_s} - {x^*}} \right\|^2,{{\tilde B}_s} = \frac{1}{n}\sum\nolimits_{i = 1}^n {\| {\tilde \beta _i^s - \beta _i^*} \|^2}  ,{{\tilde C}_s} = aE[ {{{\tilde A}_s}} ] + bE[ {{{\tilde B}_s}} ]$. Define $\lambda {R_x} = {\max _x}\{ \| {{x^*} - x} \|^2:F(G(x)) \le F(G({x_0})) \}$, the SCDF-SVRG method has geometric convergence:
	\begin{align*}
	{\tilde C_{s}} \le {\left( {\frac{1}{{\eta \lambda K}} + \frac{{{e_2}}}{{a\eta \lambda }}} \right)^s}{{\tilde C}_0},
	\end{align*}
	where the parameters $a,b,d, e_2$ , $\eta$ and $A$ satisfy
	\begin{align*}
	&A \ge {{2{R_x}B_G^4L_F^2}}/{d},\\
	&\eta  \le ({{1 - d}})/({{2{L_f} + \lambda n\left( {1 - d} \right)}}),\\
	&e_2={2a\eta \lambda {R_x}B_G^4L_F^2\frac{1}{A} + 4b\lambda \eta \left( {B_F^2L_G^2 + B_G^4L_F^2} \right)/{A}},\\
	&\frac{{2\left( {2B_F^2L_G^2 + B_G^4L_F^2} \right)\frac{1}{A} - {L_f}\lambda }}{{d - 2{R_x}B_G^4L_F^2\frac{1}{A}}} \le \frac{a}{b} \le \frac{{(1 - n\lambda \eta )\lambda }}{\eta },\\
	&d \le \frac{{\left( {2B_F^2L_G^2 + B_G^4L_F^2} \right)\frac{1}{A} + \lambda {L_F}{R_x}B_G^4L_F^2\frac{1}{A}}}{{\left( {2B_F^2L_G^2 + B_G^4L_F^2} \right)\frac{1}{A} + \lambda {L_f}}}.
	\end{align*}
\end{theorem}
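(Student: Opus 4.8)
The plan is to run a Lyapunov (potential-function) argument on the coupled primal--dual iterates $(x_k,\beta_i^k)$ generated by the inner loop of Algorithm \ref{AlgorithmSDFCVRG1}, tracking the composite potential $\tilde{C}_s = aE[\tilde{A}_s] + bE[\tilde{B}_s]$, establishing a one-epoch contraction, and then iterating over epochs $s$. First I would expand the two squared distances separately: from the primal update I write out $\|x_{k+1}-x^*\|^2$ and from the dual update $\|\beta_i^{k+1}-\beta_i^*\|^2$, each of which produces a cross term $-2\langle\cdot,\cdot\rangle$ and a quadratic remainder $\eta^2\|\cdot\|^2$. Taking expectation over the random index $i\in[n]$ and the mini-batch $\mathcal{A}_k$, I would use the stated primal--dual optimality relation (linking $\frac{1}{n}\sum_i(\nabla G(x^*))^\mathsf{T}\alpha_i^*$ to the full composite gradient, so that $\beta_i^* = -(\partial G(x^*))^\mathsf{T}\nabla F_i(G(x^*))$) to center all gradient terms at the optimum.

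The crucial step is controlling the bias caused by estimating the inner function. I would split the estimated gradient as $(\partial\hat{G}_k)^\mathsf{T}\nabla F_i(\hat{G}_k) = [(\partial\hat{G}_k)^\mathsf{T}\nabla F_i(\hat{G}_k) - (\partial\hat{G}_k)^\mathsf{T}\nabla F_i(G(x_k))] + (\partial\hat{G}_k)^\mathsf{T}\nabla F_i(G(x_k))$ and bound the bracketed (biased) piece in squared norm by Lemma \ref{SCDF:LemmBoundSVRGEstimateFullGradientF}, which contributes the $B_G^4L_F^2/A$ factors multiplying $E[\tilde{A}_s]$ and $E[\|x_k-x^*\|^2]$. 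This is exactly the source of the condition $A\ge 2R_xB_G^4L_F^2/d$: the mini-batch must be large enough that the accumulated bias is dominated by the contraction budget. For the remaining, approximately unbiased, part I would invoke Assumption \ref{Assumption3}, whose co-coercivity-type inequality converts the inner products and the quadratic remainder into function-value gaps and distance decrements carrying the favorable constant $2L_f$. This is precisely what separates the present convex case from Theorem \ref{SCDF:SVRG:TheoremSVRGMainConvergenceNonconvex}, where no such co-coercivity is available, and it is what yields the sharper step-size requirement $\eta\le(1-d)/(2L_f+\lambda n(1-d))$.

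With the per-step inequality assembled, I would choose the weights $a,b$ so that the cross terms coupling $\tilde{A}_s$ and $\tilde{B}_s$ cancel against the $\lambda$-strong-convexity contraction of $P$. The admissible window for $a/b$ and the definitions of $d$ and $e_2$ are dictated exactly by demanding these cancellations while keeping the residual coefficients nonnegative: the lower bound on $a/b$ forces the dual cross term to be absorbed, the upper bound $(1-n\lambda\eta)\lambda/\eta$ comes from the primal quadratic remainder, and $e_2$ collects the leftover bias $2a\eta\lambda R_xB_G^4L_F^2/A$ together with the variance term $4b\lambda\eta(B_F^2L_G^2+B_G^4L_F^2)/A$. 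Finally I would sum the per-step bound over $k=0,\ldots,K-1$, apply convexity of the squared norm (Jensen) to the epoch-averages $\tilde{x}_{s+1}=\frac{1}{K}\sum_k x_k$ and $\tilde{\beta}_i^{s+1}=\frac{1}{K}\sum_k\beta_i^k$, and telescope; the averaging produces the $\frac{1}{\eta\lambda K}$ factor while the accumulated estimation error produces the $\frac{e_2}{a\eta\lambda}$ factor, and iterating over $s$ gives the claimed geometric bound.

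I expect the main obstacle to be the simultaneous feasibility of all the parameter constraints rather than any single inequality: one must verify that the interval prescribed for $a/b$ is nonempty and that the bias contribution $e_2$ stays strictly below what the contraction $\frac{1}{\eta\lambda K}$ leaves available, so that the overall factor $\frac{1}{\eta\lambda K}+\frac{e_2}{a\eta\lambda}$ is genuinely below $1$. This is the delicate point, because the biased inner-function estimate could, if $A$ were chosen too small, inflate $e_2$ enough to destroy the linear rate; the coupling of $A$, $d$, and $\eta$ through the displayed bounds is what guarantees this does not happen.
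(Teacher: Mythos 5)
Your proposal follows essentially the same route as the paper: a weighted Lyapunov function $aE[\tilde A_s]+bE[\tilde B_s]$, separate per-step recursions for the primal and dual distances centered at the optimum via $\beta_i^*=-(\partial G(x^*))^\mathsf{T}\nabla F_i(G(x^*))$, the bias of the inner estimate controlled by Lemma \ref{SCDF:LemmBoundSVRGEstimateFullGradientF}, Assumption \ref{Assumption3} supplying the co-coercivity that yields the $2L_f$ step-size condition (this is exactly the content of Lemma \ref{LemmaSVRGLemmaMainBoundCConvex}, built from Lemmas \ref{LemmaSVRGBoundLemmaAConvex} and \ref{LemmaSAGABoundLemmaBConvex}), and finally summing over the inner loop, averaging, and telescoping over epochs as in Theorem \ref{SCDF:SVRG:TheoremSVRGMainConvergenceNonconvex}. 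Your identification of where each parameter constraint originates matches the paper's derivation, so the plan is correct and not materially different.
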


The variance bound of the modified estimate gradient is shown in the following corollary. Note that the inner function $\hat G$ is the estimated function of $G$. 
\begin{corollary}\label{CorollarySVRGGradient}
	Suppose Assumption \ref{Assumption2} holds, in algorithm \ref{AlgorithmSDFCVRG1}, for the intermediated iteration at $x_k$ and $\beta_k$, we have
	\begin{align*}
	E[ {\| {{( {\partial {{\hat G}_k}} )^\mathsf{T}}\nabla {F_i}( {{{\hat G}_k}} ) + \beta _i^k} \|^2} ]
	\le& \left( {4B_F^2L_G^2{1}/{A} + 4B_G^4L_F^2{1}/{A}} \right)E[ {\| {{x_k} - {{\tilde x}_s}} \|^2} ]\\
	&+ ( {4B_F^2L_G^2 + 4B_G^4L_F^2} )E[ {\| {{{\tilde x}_s} - {x^*}} \|^2} ] + E[ {\| {\beta _i^k - \beta _i^*} \|^2} ],
	\end{align*}
	where $B_F$, $B_G$ and $L_G$ are the parameters in (\ref{InequationAssumptionF1})- (\ref{InequationAssumptionG2}).
\end{corollary}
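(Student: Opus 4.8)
The plan is to center the update direction at the primal--dual optimum and then separate the two independent sources of estimation error. The starting observation is the optimality relation derived above: at $(x^*,\alpha^*)$ the Fenchel coupling forces $\alpha_i^* = -\nabla F_i(G(x^*))$, so that $\beta_i^* = (\nabla G(x^*))^\mathsf{T}\alpha_i^* = -(\partial G(x^*))^\mathsf{T}\nabla F_i(G(x^*))$, i.e. $(\partial G(x^*))^\mathsf{T}\nabla F_i(G(x^*)) + \beta_i^* = 0$. Subtracting this zero, I would write the update direction as
\[
(\partial \hat G_k)^\mathsf{T}\nabla F_i(\hat G_k) + \beta_i^k = \big[(\partial \hat G_k)^\mathsf{T}\nabla F_i(\hat G_k) - (\partial G(x^*))^\mathsf{T}\nabla F_i(G(x^*))\big] + \big[\beta_i^k - \beta_i^*\big],
\]
so that after taking norms the term $E[\|\beta_i^k - \beta_i^*\|^2]$ is isolated and the remaining work is to bound the squared norm of the bracketed gradient deviation.

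Next I would split the gradient deviation into a \emph{Jacobian-estimation} error and a \emph{function-value-estimation} error by inserting the intermediate vector $(\partial \hat G_k)^\mathsf{T}\nabla F_i(G(x_k))$ (and, for the residual, passing from $x_k$ to $x^*$). The function-value piece $(\partial \hat G_k)^\mathsf{T}\nabla F_i(\hat G_k) - (\partial \hat G_k)^\mathsf{T}\nabla F_i(G(x_k))$ is exactly the quantity already controlled by Lemma \ref{SCDF:LemmBoundSVRGEstimateFullGradientF}: using $\|\partial \hat G_k\| \le B_G$ (eq. \ref{InequationAssumptionG1}), the $L_F$-Lipschitz gradient of $F_i$ (eq. \ref{InequationAssumptionF2}), and the $B_G$-Lipschitz property of $G_j$ (eq. \ref{InequationAssumptionG2}), it contributes the $B_G^4 L_F^2$-type constant. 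The Jacobian piece $\big(\partial \hat G_k - \partial G(\cdot)\big)^\mathsf{T}\nabla F_i(\cdot)$ is handled with $\|\nabla F_i\| \le B_F$ (eq. \ref{InequationAssumptionF1}) and the $L_G$-Lipschitz continuity of $\partial G_j$ (eq. \ref{InequationAssumptionG3}), producing the $B_F^2 L_G^2$-type constant.

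The mechanism that separates the two size scales in the statement is the variance reduction built into (\ref{SCDF:SVRG:DefinitionSVRGFunctionG})--(\ref{SCDF:SVRG:DefinitionSVRGEstimateG}). For the zero-mean stochastic averages I would use $E[\|\hat G_k - G(x_k)\|^2] \le \tfrac{1}{A}E[\|G_j(x_k) - G_j(\tilde x_s)\|^2] \le \tfrac{B_G^2}{A}E[\|x_k - \tilde x_s\|^2]$ and the analogous $E[\|\partial \hat G_k - \partial G(x_k)\|^2] \le \tfrac{L_G^2}{A}E[\|x_k - \tilde x_s\|^2]$, which follow from $\mathrm{Var}(\cdot)\le E[\|\cdot\|^2]$ over the $A$ i.i.d. draws together with eqs. (\ref{InequationAssumptionG2}),(\ref{InequationAssumptionG3}). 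These variance terms carry the factor $1/A$ and attach to $\|x_k - \tilde x_s\|^2$, whereas the remaining deterministic deviations are measured from the snapshot to the optimum and attach to $\|\tilde x_s - x^*\|^2$ without the $1/A$ factor. Collecting the Jacobian and function-value sources, each split once more into its variance and deterministic parts via $\|a+b\|^2\le 2\|a\|^2+2\|b\|^2$, gives the two applications of the quadratic-sum inequality behind the stated constants $4B_F^2 L_G^2$ and $4B_G^4 L_F^2$.

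I expect the main obstacle to be the bookkeeping that keeps the variance-reduced ($O(1/A)$) contribution of $x_k$ cleanly separated from the non-reduced snapshot-to-optimum distance, so that $x_k$ enters only through $\tfrac1A\|x_k - \tilde x_s\|^2$ while $\|\tilde x_s - x^*\|^2$ absorbs the deterministic drift; choosing the intermediate anchor points so this separation survives is the delicate step. Equally delicate is keeping the coefficient of $E[\|\beta_i^k - \beta_i^*\|^2]$ at exactly $1$ rather than the $2$ produced by a blunt application of $\|a+b\|^2\le 2\|a\|^2+2\|b\|^2$: the cross term between the gradient deviation and $\beta_i^k-\beta_i^*$ does not vanish because the estimated gradient is biased, so a more careful grouping of the deviations is needed. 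The individual norm estimates themselves are routine consequences of Assumption \ref{Assumption2}.
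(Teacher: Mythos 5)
Your proposal follows essentially the same route as the paper: split off $\beta_i^k-\beta_i^*$ by centering at the optimum, where $(\partial G(x^*))^\mathsf{T}\nabla F_i(G(x^*))+\beta_i^*=0$, then bound the remaining gradient deviation by inserting intermediate anchor points and invoking the $O(1/A)$ variance lemmas for $\hat G_k$ and $\partial\hat G_k$ (this is exactly Lemma \ref{LemmaBoundSVRGestimateFullGradient} combined with $\|a+b\|^2\le 2\|a\|^2+2\|b\|^2$). The ``delicate step'' you flag --- obtaining coefficient $1$ on $E[\|\beta_i^k-\beta_i^*\|^2]$ rather than $2$ --- is not actually resolved in the paper either: its proof first writes $2E[\|(\partial\hat G_k)^\mathsf{T}\nabla F_i(\hat G_k)+\beta_i^*\|^2]+2E[\|\beta_i^k-\beta_i^*\|^2]$ and then silently drops the factors of $2$ in the final line, so the constants as stated in the corollary do not follow from the displayed argument, and the honest version of the bound (yours or the paper's) carries an extra factor of $2$ on every term.
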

\begin{remark}
	From Corollary \ref{CorollarySVRGGradient}, the variance of the estimated gradient is bound by $O( {E[ {\| {{x_k} - {{\tilde x}_s}} \|^2} ]} )$ and  $O( {E[ {\| {\beta _i^k - \beta _i^*} \|^2} ]} )$. As $x_k$, $\tilde{x}_s$ and $\beta_k$ go to the optimal solution, the variance also approximates to zero.
\end{remark}
\subsection{SAGA-based method for estimating function $G$}
Extending SAGA such that the table elements are updated iteratively, we propose SAGA-based SCDF. In contrast to SCDF-SVRG, there is no need to compute the full function and full partial gradient of $G$. This approach is analogous to the duality-free method in that it can avoid computing the full gradient of function $F$. Following the variance reduction technology in SGAG, we replace step \ref{algorithmSCDFStep1} in Algorithm \ref{algorithmSCDF}  with
\begin{align*}
{x_{k + 1}} = {x_k} - \eta ( {{(\partial {{\hat G}_k})^\mathsf{T}}\nabla {F_i}({{\hat G}_k}) + \beta _i^k} ),
\end{align*}
where
\begin{align}
{{\hat G}_k} =& \frac{1}{A}\sum\limits_{1 \le j \le A}^{} {( {{G_{{\mathcal{A}_k}[j]}}\left( {{x_k}} \right) - {G_{{A_k}[j]}}( {\phi _{{{{\cal A}_k}[j]}}^k} )} )}
\label{DefinitionSAGAEstimateG}
  + \frac{1}{m}\sum\limits_{j = 1}^m {G_j( {\phi _{j}^k} )}, \\
\partial {{\hat G}_k} =& \frac{1}{A}\sum\limits_{1 \le j \le A}^{} {( {\partial {G_{{\mathcal{A}_k}[j]}}\left( {{x_k}} \right) - \partial {G_{{A_k}[j]}}}({\phi _{{{{\cal A}_k}[j]}}^k}) )}
\label{DefinitionSAGAEstimateGradientG}
  + \frac{1}{m}\sum\limits_{j = 1}^m {\partial G_j( {\phi _{{j}}^k} )}, 
\end{align}
$\mathcal{A}_k$ is the mini-batch formed by sampling $A$ times from $[n]$. $\mathcal{A}_k[j]$, $j \in {{\cal A}_k}$ indicates the $j$th element in the list $\mathcal{A}$.	${\phi _{{{\cal A}_k}[j]}^k}$, $j \in {{\cal A}_k}$ is stored in the variable table list. Taking expectation on above estimated function $G$ and partial gradient of $G$, we have $E[{{{\hat G}_k}}]= G({{x_k}})$ and $E[{\partial {{\hat G}_k}} ] = \partial G({{x_k}} )$. But the same problem as in SCDF-SVRG, the estimated gradient  is not unbiased estimation, because $E[ {{(\partial {{\hat G}_k})^\mathsf{T}}\nabla {F_i}({{\hat G}_k})} ] \ne  E[{{(\partial {G(x_k)})^\mathsf{T}}\nabla {F_i}({G}({x_k}))}] $. However, based on the above estimation about function $G$, we also give the upper bound of the difference between them,

\begin{algorithm}[h]
	\caption{SCDF-SAGA}\label{AlgorithmSCDFSAGA}
	\begin{algorithmic}[1]
		\State Initialize: ${x_0} = \frac{1}{n}\sum\limits_{i = 1}^n {\beta _i^0} $, ${x_0} = \phi _j^0,j \in [m]$,
		\For{$k$=0,2...K-1 }	
		\State Sample from $\left\{ {1,...,m} \right\}$ for $A$ times to form the mini-batch ${{{\cal A}_{k}}}$
		\State
		Update ${{\hat G}_k}$ by using (\ref{DefinitionSAGAEstimateG})
		\Comment{A Queries}
		\State 
		Update$\partial {{\hat G}_k}$ by using (\ref{DefinitionSAGAEstimateGradientG})
		\Comment{A Queries}
		\State Take $\phi _{{{\cal A}_k}[j]}^{k + 1}=x_k$ for $j\in \mathcal{A}_k$, and $\phi _{{{\cal A}_k}[j]}^{k + 1}=\phi _{{{\cal A}_k}[j]}^{k}$ for $j\in[m]$ but $j \notin  \mathcal{A}_k$
		\State
		Update ${{\tilde G}_{k + 1}}$ by using (\ref{DefinitionSAGAEstimateGExchange})			
		\State
		Update $\partial {{\tilde G}_{k + 1}}$ by using (\ref{DefinitionSAGAEstimateGradientGExchange})
		\State Randomly select $i \in [n]$
		\State 
		$\beta _i^{k + 1} = \beta _i^k - \lambda n\eta ( {{(\partial {{\hat G}_k})^\mathsf{T}}\nabla {F_i}({{\hat G}_k}) + \beta _i^k} )$
		\State
		${x_{k + 1}} = {x_k} - \eta ( {{(\partial {{\hat G}_k})^\mathsf{T}}\nabla {F_i}({{\hat G}_k}) + \beta _i^k} )$	
		\EndFor	
	\end{algorithmic}
\end{algorithm}

\begin{lemma}\label{LammaBoundSAGANormEsimateGradientShort}
	Assume Assumption \ref{Assumption2} holds, in algorithm \ref{AlgorithmSCDFSAGA}, for the intermediated iteration at $x_k$, $ \hat G$ defined in (\ref{DefinitionSAGAEstimateG}) and  $\partial \hat G_k$ defined in (\ref{DefinitionSAGAEstimateGradientG}), the following bound satisfies,
	\begin{align*}
	E[ {\| {{(\partial {{\hat G}_k})^\mathsf{T}}\nabla {F_i}({{\hat G}_k}) - {(\partial G({x_k}))^\mathsf{T}}\nabla {F_i}({G_k}({x_k}))} \|^2} ] \le& ( {2B_F^2L_G^2 + 2B_G^4L_F^2} ){1}/{{{A^2}}}\sum\limits_{1 \le j \le A}^{} {E[ {\| {{x_k} - \phi _{{{\cal A}_k}[j]}^k} \|_{}^2} ]},  
	\end{align*}
	where  $L_F$, $L_G$, $B_F$ and $B_G$ are the parameters in (\ref{InequationAssumptionF1})- (\ref{InequationAssumptionG3}).  
\end{lemma}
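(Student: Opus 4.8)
The plan is to follow the same template as Lemma \ref{SCDF:LemmBoundSVRGEstimateFullGradientF}, replacing the SVRG estimators by their SAGA counterparts (\ref{DefinitionSAGAEstimateG}) and (\ref{DefinitionSAGAEstimateGradientG}). First I would insert the cross term $(\partial G(x_k))^\mathsf{T}\nabla F_i(\hat{G}_k)$ and split the gradient difference as $(\partial\hat{G}_k)^\mathsf{T}\nabla F_i(\hat{G}_k) - (\partial G(x_k))^\mathsf{T}\nabla F_i(G(x_k)) = (\partial\hat{G}_k - \partial G(x_k))^\mathsf{T}\nabla F_i(\hat{G}_k) + (\partial G(x_k))^\mathsf{T}(\nabla F_i(\hat{G}_k) - \nabla F_i(G(x_k)))$. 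Applying $\|u+v\|^2 \le 2\|u\|^2 + 2\|v\|^2$ together with submultiplicativity of the operator norm reduces everything to the two variance terms $E[\|\partial\hat{G}_k - \partial G(x_k)\|^2]$ and $E[\|\hat{G}_k - G(x_k)\|^2]$. For the first summand I would pull out $\|\nabla F_i(\hat{G}_k)\| \le B_F$ via (\ref{InequationAssumptionF1}); for the second I would pull out $\|\partial G(x_k)\| \le B_G$ via (\ref{InequationAssumptionG1}) and use the Lipschitz-gradient estimate $\|\nabla F_i(\hat{G}_k) - \nabla F_i(G(x_k))\| \le L_F\|\hat{G}_k - G(x_k)\|$ from (\ref{InequationAssumptionF2}). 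The deliberate choice to pair the \emph{estimator} difference with the bounded factor $\nabla F_i(\hat{G}_k)$ and the \emph{true} Jacobian $\partial G(x_k)$ with the Lipschitz difference is what produces the clean coefficients $2B_F^2$ and $2B_G^2L_F^2$.

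The heart of the argument is bounding the two variance terms using the control-variate structure of the SAGA estimators. Since $E[\hat{G}_k] = G(x_k)$, I would write $\hat{G}_k - G(x_k) = \frac{1}{A}\sum_{j=1}^A(Z_j - \bar{Z})$ with $Z_j = G_{\mathcal{A}_k[j]}(x_k) - G_{\mathcal{A}_k[j]}(\phi_{\mathcal{A}_k[j]}^k)$ and $\bar{Z} = E[Z_j]$, the stored average $\frac{1}{m}\sum_{l=1}^m G_l(\phi_l^k)$ being precisely the centering constant that absorbs $\bar Z$. Because the mini-batch indices are drawn i.i.d. from $[m]$, the $Z_j$ are i.i.d., so the cross terms in the expansion of the squared norm vanish and $E[\|\hat{G}_k - G(x_k)\|^2] = \frac{1}{A^2}\sum_{j=1}^A E[\|Z_j - \bar{Z}\|^2] \le \frac{1}{A^2}\sum_{j=1}^A E[\|Z_j\|^2]$, where the last inequality uses $E[\|Z-\bar{Z}\|^2]\le E[\|Z\|^2]$. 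Each second moment is then controlled by the Lipschitz bound (\ref{InequationAssumptionG2}), giving $E[\|Z_j\|^2] \le B_G^2\,E[\|x_k - \phi_{\mathcal{A}_k[j]}^k\|^2]$. Running the identical argument on $\partial\hat{G}_k$ and invoking (\ref{InequationAssumptionG3}) instead yields $E[\|\partial\hat{G}_k - \partial G(x_k)\|^2] \le \frac{L_G^2}{A^2}\sum_{j=1}^A E[\|x_k - \phi_{\mathcal{A}_k[j]}^k\|^2]$.

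Combining the two decomposed pieces with the two variance bounds produces the factor $2B_F^2 L_G^2 + 2B_G^4 L_F^2$ multiplying $\frac{1}{A^2}\sum_{1\le j\le A} E[\|x_k - \phi_{\mathcal{A}_k[j]}^k\|^2]$, which is exactly the claimed estimate. The main obstacle I anticipate is the variance step, and specifically the bookkeeping of the SAGA control variate: one must verify that the full stored average in (\ref{DefinitionSAGAEstimateG})–(\ref{DefinitionSAGAEstimateGradientG}) acts as the exact mean of the mini-batch increments, so that the centered increments $Z_j - \bar Z$ are i.i.d. and the cross terms genuinely cancel. This is what delivers the sharp $1/A^2$ factor with $A$ summands (rather than a crude $1/A$ times a full average over $[m]$); once this is in place, the remainder is a routine application of the boundedness and Lipschitz assumptions of Assumption \ref{Assumption2}.
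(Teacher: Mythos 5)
Your proposal is correct and follows essentially the same route as the paper: insert the cross term $(\partial G(x_k))^\mathsf{T}\nabla F_i(\hat G_k)$, split with $\|u+v\|^2\le 2\|u\|^2+2\|v\|^2$, pull out $B_F$ and $B_G$ respectively, apply the $L_F$-Lipschitz gradient bound, and then control $E[\|\hat G_k-G(x_k)\|^2]$ and $E[\|\partial\hat G_k-\partial G(x_k)\|^2]$ by $B_G^2$ (resp. $L_G^2$) times $\frac{1}{A^2}\sum_{1\le j\le A}E[\|x_k-\phi^k_{\mathcal A_k[j]}\|^2]$, which the paper delegates to Lemmas \ref{LemmaSAGABoundFunctionG} and \ref{LemmaSAGABoundGradientG}. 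Your explicit i.i.d.\ centering argument for why the cross terms vanish and the $1/A^2$ factor appears is a slightly more careful justification of those two sub-lemmas than the paper's own citation of Lemmas \ref{RandomVariable1} and \ref{RandomVariable2}, but the substance is identical.
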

\begin{remark}
	As $x_k$ and $\phi^k$ go to the optimal solution, the expectation bound  approximates to zero. Furthermore, this lemma also shows that as $A$ increases,  the estimated ${{{\hat G}_k}}$ approaches the exact function $G$.
\end{remark}
At intermediated iteration $x_k$, define 
\begin{align*}
{{\tilde G}_k} = \frac{1}{m}\sum\nolimits_{j = 1}^m {{G_j}\left( {\phi _j^k} \right)} , \partial {{\tilde G}_k} =\frac{1}{m}\sum\nolimits_{j = 1}^m {\partial {G_j}\left( {\phi _j^k} \right)}.
\end{align*}
Note that for each time estimation for function $G$, the term ${{\tilde G}_k}$ and $\partial {{\tilde G}_k}$ can be iteratively updated without computing the full function and full partial gradient of function $G$,    
\begin{align}
{{\tilde G}_{k + 1}} =& \frac{A}{n}\sum\nolimits_{1 \le j \le A} {{( {{G_{{\mathcal{A}_k}[j]}}( {{\phi _{{{\cal A}_k}[j]}^{k + 1}}} ) - {G_{{\mathcal{A}_k}[j]}}( {\phi _{{{{\cal A}_k}[j]}}^k} )} )}}
\label{DefinitionSAGAEstimateGExchange}
+{{\tilde G}_k},  \\
\partial {{\tilde G}_{k + 1}} =&\frac{A}{n}\sum\nolimits_{1 \le j \le A} {( {\partial {G_{{\mathcal{A}_k}[j]}}( {{\phi _{{{\cal A}_k}[j]}^{k + 1}}} ) - \partial {G_{{\mathcal{A}_k}[j]}}(\phi _{{{{\cal A}_k}[j]}}^k)} )} 
\label{DefinitionSAGAEstimateGradientGExchange}
+ \partial {{\tilde G}_k}. 
\end{align}

\subsubsection{Convergence analysis}
Similar to the SVRG-based SCDF method, we also provide two convergence rates for the two cases that the individual function $F_i$ is convex or non-convex. In Theorem \ref{SCDF:SAGA:TheoremSAGAMainConvergenceNonconvex}, we provide the linear convergence rate for the non-convex case but $P(x)$ is strongly convex; in Theorem \ref{SCDF:SAGA:TheoremSAGAMainConvergenceConvex}, we also provide linear convergence rate for the convex case where $P(x)$ is strongly convex;
\begin{theorem}\label{SCDF:SAGA:TheoremSAGAMainConvergenceNonconvex}
	Suppose Assumption \ref{Assumption1} and \ref{Assumption2} hold,  and  P(x) is $\lambda$-strongly convex. Let  ${A_k} = \| {{x_k} - {x^*}} \|^2$, ${B_k} = \frac{1}{n}\sum\nolimits_{i = 1}^n {\| {\beta _i^k - \beta _i^*} \|^2}  $ and ${C_{k}} = \frac{1}{m}\sum\nolimits_{j = 1}^m {\| {\phi _j^k - {x^*}} \|^2} $, $x^*$ is the minimizer of $P(x)$ and $E\left[ {\beta _i^*} \right] = \lambda {x^*}$, $A$ is the sample times for forming mini-batch $\mathcal{A}$. Define $\lambda {R_x} = \max _x \{ \| {{x^*} - x} \|^2:F(G(x)) \le F(G({x_0})) \}$. As long as the sample times and  the step satisfy,
	\begin{align*}
	A \ge& ( {\lambda \eta n + 16{R_x}( {B_F^2L_G^2 + B_G^4L_F^2} )} )/2 
	+ \sqrt {{\lambda ^2}{\eta ^2}{n^2} + {{( {16{R_x}( {B_F^2L_G^2 + B_G^4L_F^2} )} )}^2}}/2 ;\\
	\eta  \le& \frac{\lambda }{{2{Y_2} + \frac{A}{{A - \lambda \eta n}}2{Y_3} + {\lambda ^2}n\left( {1 - 8\left( {1 + \frac{A}{{A - \lambda \eta n}}} \right){Y_1}} \right)}},
	\end{align*}
	where ${Y_1} = {R_x}( {B_F^2L_G^2 + B_G^4L_F^2} ){1}/{A},{Y_2} = B_F^2L_G^2{1}/{A} + B_G^4L_F^2,{Y_3} = B_F^2L_G^2{1}/{A}$. Then the SDCA-SAGA method has geometric convergence in expectation:
	\begin{align*}
	aE[ {{A_{k }}} ] + bE[ {{B_{k}}} ] + cnE[ {{C_{k }}} ] \le& {( {1 - \lambda \eta} )^k}( {aE[ {{A_0}} ] + bnE[ {{B_0}} ] + cnE[ {{C_0}} ]} ),
	\end{align*}
	where the parameters $a$, $b$ and $c$ satisfy,
	\begin{align*}
	&\frac{{2{Y_2} + \frac{A}{{A - \lambda \eta n}}2{Y_3}}}{{1 - 8\left( {1 + \frac{A}{{A - \lambda \eta n}}} \right){Y_1}}} \le \frac{a}{b} \le \frac{{(1 - \lambda n\eta )\lambda }}{\eta },\\
	&c \le \left( { - 8a\lambda \eta {Y_1} + a\eta \lambda  - 2b\lambda \eta {Y_2}} \right)/{A}.
	\end{align*}
\end{theorem}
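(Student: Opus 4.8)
The plan is to build a single Lyapunov potential that couples the three quantities appearing in the statement and to show it contracts by a factor $1-\lambda\eta$ at each inner iteration. Writing $\hat g_k=(\partial\hat G_k)^\mathsf{T}\nabla F_i(\hat G_k)+\beta_i^k$ for the search direction, so that $x_{k+1}=x_k-\eta\hat g_k$ and, for the sampled index $i$, $\beta_i^{k+1}=\beta_i^k-\lambda n\eta\hat g_k$, I would track
\begin{align*}
\Phi_k=aE[A_k]+bnE[B_k]+cnE[C_k]
\end{align*}
and aim to prove $\Phi_{k+1}\le(1-\lambda\eta)\Phi_k$; iterating then gives the claimed geometric decay. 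The weights $a,b,c$ are kept symbolic until the end and are fixed precisely so that, after expanding $\Phi_{k+1}$, the coefficient of each of $E[A_k]$, $E[B_k]$ and $E[C_k]$ is bounded by $(1-\lambda\eta)$ times its own weight; the displayed ratio constraint on $a/b$ and the bound on $c$ are exactly the feasibility conditions for this coefficient matching.

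The core of the argument is three one-step recursions. For the primal term I would expand $A_{k+1}=A_k-2\eta\langle\hat g_k,x_k-x^*\rangle+\eta^2\|\hat g_k\|^2$ and take conditional expectation over $i$ and the mini-batch $\mathcal A_k$. Since $\frac1n\sum_i\beta_i^k=\lambda x_k=\nabla R(x_k)$, the expected direction equals the true gradient $\nabla P(x_k)$ plus the estimation bias of $(\partial\hat G_k)^\mathsf{T}\nabla F_i(\hat G_k)$; the clean part is handled by $\lambda$-strong convexity of $P$ through $\langle\nabla P(x_k),x_k-x^*\rangle\ge\lambda\|x_k-x^*\|^2$, which supplies the $-2\eta\lambda A_k$ that drives the contraction. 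For the dual term only the sampled coordinate changes, so the update of $E[B_k]$ reduces to the expected one-coordinate change $E[\|\beta_i^{k+1}-\beta_i^*\|^2-\|\beta_i^k-\beta_i^*\|^2]$, and substituting $\beta_i^{k+1}=\beta_i^k-\lambda n\eta\hat g_k$ produces a $-2\lambda\eta n\langle\hat g_k,\beta_i^k-\beta_i^*\rangle$ cross term together with an $\eta^2$ noise term. For the table term I would use that each $j\in[m]$ is refreshed to $x_k$ with the sampling probability $p$ of entering $\mathcal A_k$, giving $E[C_{k+1}]=pE[A_k]+(1-p)E[C_k]$, which is what couples $C_k$ back into the primal distance.

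The quadratic noise terms in all three recursions are controlled by the variance estimate of Lemma \ref{LammaBoundSAGANormEsimateGradientShort}, which bounds the bias of the composed gradient by $(2B_F^2L_G^2+2B_G^4L_F^2)\frac1{A^2}\sum_{1\le j\le A}E[\|x_k-\phi_{\mathcal A_k[j]}^k\|^2]$; writing $\Delta=B_F^2L_G^2+B_G^4L_F^2$ for brevity, this is where the $1/A$ gains and the quantities $Y_1,Y_2,Y_3$ enter. I would split each table gap by $\|x_k-\phi_j^k\|^2\le 2\|x_k-x^*\|^2+2\|\phi_j^k-x^*\|^2$ so that Lemma \ref{LammaBoundSAGANormEsimateGradientShort} feeds back into $A_k$ and $C_k$, and I would use the trajectory bound $\|x_k-x^*\|^2\le\lambda R_x$ to linearise the single product arising from the primal bias cross term $-2\eta\langle\hat g_k-\nabla P(x_k),x_k-x^*\rangle$ into a term of order $R_x\Delta\,\frac1A A_k$. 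Because $F_i$ is only assumed smooth and bounded (Assumption \ref{Assumption2}) and \emph{not} convex, no co-coercivity inequality is available, so this crude linearisation through $R_x$ is the only tool for the bias cross term and is precisely why the non-convex bound is looser than its convex counterpart.

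The main obstacle is making the combined coefficient of $E[A_k]$ genuinely contractive after all the $C_k$-to-$A_k$ and bias feedbacks have been injected. Collecting terms, the effective self-coefficient carries the factor $1-8\big(1+\tfrac{A}{A-\lambda\eta n}\big)Y_1$, which must stay strictly positive for the $\eta$ bound and the lower bound on $a/b$ to be meaningful. Imposing $1-8\big(1+\tfrac{A}{A-\lambda\eta n}\big)Y_1>0$ with $Y_1=R_x\Delta/A$ reduces, after clearing denominators, to the quadratic inequality $A^2-(\lambda\eta n+16R_x\Delta)A+8R_x\Delta\,\lambda\eta n>0$, whose larger root is exactly the stated threshold $A\ge\frac12(\lambda\eta n+16R_x\Delta)+\frac12\sqrt{\lambda^2\eta^2n^2+(16R_x\Delta)^2}$. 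Once $A$ clears this threshold and $\eta$ is taken below the displayed bound, one can choose $a/b$ in the stated interval and $c$ as prescribed so that all three coefficient inequalities hold simultaneously with common factor $1-\lambda\eta$, completing the one-step contraction and hence the theorem.
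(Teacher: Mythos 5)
Your proposal follows essentially the same route as the paper's proof: a weighted Lyapunov function $aE[A_k]+bE[B_k]+cnE[C_k]$, three one-step recursions (primal distance via strong convexity of $P$, dual coordinate update, table refresh), bias control through Lemma \ref{LammaBoundSAGANormEsimateGradientShort} combined with the $\lambda R_x$ trajectory bound, and coefficient matching that yields the stated constraints on $a/b$, $c$, $\eta$, and the quadratic threshold on $A$. Your derivation of that threshold (clearing denominators in $1-8(1+\tfrac{A}{A-\lambda\eta n})Y_1>0$ to get $A^2-(\lambda\eta n+16R_x\Delta)A+8R_x\Delta\lambda\eta n>0$) in fact reproduces the theorem's stated bound more faithfully than the paper's own intermediate algebra, which contains a constant-factor slip.
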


\begin{remark}
	The convergence analysis does not need the convexity of the individual function $F_i$ but requires function $P(x)$ to be strongly convex.
\end{remark}
\begin{theorem}\label{SCDF:SAGA:TheoremSAGAMainConvergenceConvex}
	Suppose Assumption \ref{Assumption1}, \ref{Assumption2} and \ref{Assumption3} hold, $F_i(x)$ is convex, and  P(x) is $\lambda$-strongly convex. Let  ${A_k} = \| {{x_k} - {x^*}} \|^2$, $B_k = \frac{1}{n}\sum\nolimits_{i = 1}^n \| \beta _i^k - \beta _i^* \|^2  $ and ${C_{k}} = \frac{1}{m}\sum\nolimits_{j = 1}^m {\| {\phi _j^k - {x^*}} \|^2} $, $x^*$ is the minimizer of $P(x)$ and $E\left[ {\beta _i^*} \right] = \lambda {x^*}$, $A$ is the sample times for forming mini-batch $\mathcal{A}$. Define $\lambda {R_x} = \max _x \{ \| {{x^*} - x} \|^2:F(G(x)) \le F(G({x_0})) \}$. As long as the samle times $A$ and the step satisfies,
	\begin{align*}
	A \ge& ( {2 + \sqrt 2 } )( {\lambda \eta n + {{16{R_x}( {B_F^2L_G^2 + B_G^4L_f^2} )}}/{d}} ),\\
	\eta  \le& {1}/{{( {{{2{L_f}\lambda }}/( {1 - d} ) + \lambda n} )}},
	\end{align*}
	then the SDCA-SAGA method has geometric convergence in expectation:
	\begin{align*}
	aE[ {{A_{k }}} ] + bE[ {{B_{k}}} ] + cnE[ {{C_{k }}} ]
	\le &{( {1 - \eta \lambda} )^k}( {aE[ {{A_0}} ] + bnE[ {{B_0}} ] + cnE[ {{C_0}} ]} ),
	\end{align*}
	where the parameters $a$, $b$, $c$ ,$d$ and  $q$ satisfy,
	\begin{align*}
	&{{2{L_f}\lambda }}/{{( {1 - d} )}} \le \frac{a}{b} \le {{(1 - \lambda n\eta )\lambda }}/{\eta },\\
	&d \le \frac{{\left( {4Y + \frac{4YA}{{A - \lambda \eta n}} - 2{L_f}\lambda } \right) + 16\left( {1 + \frac{A}{{A - \lambda \eta n}}} \right){R_x}Y{L_f}\lambda }}{{4Y + \frac{A}{{A - \lambda \eta n}}4Y  }},\\
	&c \le ( { - 8a\eta \lambda {R_x}Y + a\eta d\lambda  - 4b\lambda \eta Y + 2b{L_f}{\lambda ^2}\eta } )/A,\\
	&Y = ( {B_F^2L_G^2 + B_G^4L_F^2} )/{A}.
	\end{align*}
\end{theorem}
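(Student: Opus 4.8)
The plan is to build the Lyapunov function $T_k = aE[A_k] + bE[B_k] + cnE[C_k]$ and show it contracts by the factor $(1-\eta\lambda)$ at every iteration, so that $T_k \le (1-\eta\lambda)^k T_0$. The argument runs in parallel to the non-convex case of Theorem \ref{SCDF:SAGA:TheoremSAGAMainConvergenceNonconvex}, the essential difference being that the plain second-moment bounds used there are replaced by the co-coercivity estimate (\ref{InequationAssumption3}) supplied by Assumption \ref{Assumption3}; this is precisely what relaxes the admissible step size to $\eta \le 1/(2L_f\lambda/(1-d) + \lambda n)$ and fixes the stated window for $a/b$.

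First I would expand the primal update. Writing $g_k = (\partial\hat{G}_k)^\mathsf{T}\nabla F_i(\hat{G}_k) + \beta_i^k$ for the estimated gradient, the recursion ${x_{k+1}} = {x_k} - \eta g_k$ gives
\[
\|x_{k+1}-x^*\|^2 = \|x_k - x^*\|^2 - 2\eta\langle g_k, x_k - x^*\rangle + \eta^2\|g_k\|^2.
\]
Taking conditional expectation, the cross term splits into the contribution of the true composite gradient $\nabla P(x_k)$, whose inner product with $x_k - x^*$ is controlled by $\lambda$-strong convexity of $P$, plus a bias term arising because $\hat{G}_k$ only estimates $G(x_k)$. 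That bias is exactly what Lemma \ref{LammaBoundSAGANormEsimateGradientShort} bounds, in terms of $\frac{1}{A^2}\sum_j E[\|x_k - \phi_{\mathcal{A}_k[j]}^k\|^2]$, and this is the channel through which the table term $C_k$ enters the $A$-recursion.

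Next I would bound $E[\|g_k\|^2]$, where Assumption \ref{Assumption3} is decisive: because each $F_i$ is convex and $L_f$-smooth in the composite sense, inequality (\ref{InequationAssumption3}) bounds $\|(\partial G(x_k))^\mathsf{T}\nabla F_i(G(x_k)) - (\partial G(x^*))^\mathsf{T}\nabla F_i(G(x^*))\|^2$ by $2L_f$ times a function-value gap rather than by a crude $\|x_k-x^*\|^2$ term; summed over $i$ these gaps telescope against the strong-convexity decrease, which is why $L_f$ appears in both the step-size and the $d$-conditions. Adding and subtracting $(\partial G(x_k))^\mathsf{T}\nabla F_i(G(x_k))$ and $\beta_i^*$, I would reuse Lemma \ref{LammaBoundSAGANormEsimateGradientShort} for the estimation error and collect the residual variance through $Y = (B_F^2L_G^2 + B_G^4L_F^2)/A$. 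I would then treat the dual and table updates: $\beta_i^{k+1} = \beta_i^k - \lambda n\eta g_k$ yields a recursion for $E[B_{k+1}]$ in terms of $E[B_k]$ and $E[\|g_k\|^2]$, while the SAGA rule $\phi_{\mathcal{A}_k[j]}^{k+1} = x_k$ gives $E[C_{k+1}] = (1-p)E[C_k] + pE[A_k]$ with $p = 1-(1-1/m)^A$ the per-index refresh probability. Assembling $aA_{k+1}+bB_{k+1}+cnC_{k+1}$ and collecting the coefficients of $E[A_k]$, $E[B_k]$, $E[C_k]$ separately, I match each against $(1-\eta\lambda)$ times its counterpart; this matching is exactly the system of inequalities on $a,b,c,d$ in the statement.

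The main obstacle is the two-way coupling created by the biased inner-function estimate: the bias feeds $C_k$ into the $A$-recursion while the table refresh feeds $A_k$ into the $C$-recursion, so the three sequences $A_k,B_k,C_k$ are genuinely intertwined and none contracts alone. The delicate part is choosing the splitting parameter $d\in(0,1)$ and the weights $a,b,c$ so that all off-diagonal leakages are dominated simultaneously; the quadratic lower bound $A \ge (2+\sqrt{2})(\lambda\eta n + 16R_x(B_F^2L_G^2+B_G^4L_f^2)/d)$ is precisely the feasibility condition that keeps the coefficient of the stale table term nonnegative, and checking it closes the argument.
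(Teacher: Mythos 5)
Your proposal follows essentially the same route as the paper: the same Lyapunov function $aE[A_k]+bE[B_k]+cnE[C_k]$, the same three per-step recursions (primal via strong convexity plus the bias bound of Lemma \ref{LammaBoundSAGANormEsimateGradientShort}, dual via the $\beta$-update, table via the SAGA refresh), and the same use of Assumption \ref{Assumption3} to replace crude second-moment bounds by $2L_f$ times function-value gaps that are then cancelled by requiring the corresponding coefficient (the paper's $E_4$) to be nonpositive, which yields exactly the stated constraints on $a/b$, $c$, $d$, $\eta$, and $A$. The one place you deviate is the table recursion, where you use the with-replacement refresh probability $p=1-(1-1/m)^A$ instead of the paper's linear factor $A/n$ from Lemma \ref{LemmaBoundMainC}; this is arguably more careful but would perturb the constants appearing in the $c$-condition and in $E_1$, $E_2$ away from the exact form stated in the theorem.
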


\begin{remark}
	As parameter $d$ decreases, the lower bound number of sample times $A$ needs to increase, thus the estimated function $G$ and partial gradient of $G$ are well estimated. Furthermore, step $\eta$ can be larger than before. The opposite is also similar. This is verified in Theorem \ref{SCDF:SAGA:TheoremSAGAMainConvergenceConvex}.
\end{remark}

Note that as variable $x_k$ and $\beta^k$ go to the optimal solution, the variance of the gradient in the update iteration  approximates to zero. The following Corollary shows the bound of the estimated gradient variance.
\begin{corollary}\label{CorollarySAGAGradient}
	Suppose Assumption \ref{Assumption2} holds, in algorithm \ref{AlgorithmSCDFSAGA},  $ \hat{G}_k$ and  $\partial \hat{G}_k$ defined in (\ref{DefinitionSAGAEstimateG}) and (\ref{DefinitionSAGAEstimateGradientG}), we have,
	\begin{align*}
	E[ {\| {{( {\partial {{\hat G}_{k}}} )^\mathsf{T}}\nabla {F_i}( {{{\hat G}_{k }}} ) + \beta _i^k} \|^2} ]  
	\le & 4( {B_F^2L_G^2/{A} + B_G^4L_F^2} )E[ {\| {{x_k} - {x^*}} \|_{}^2} ]\\&+ 4B_F^2L_G^2\frac{1}{{{A^2}}}\sum\limits_{1 \le j \le A}^{} {E[ {\| {\phi _{{\mathcal{A}_k}[j]}^k - {x^*}} \|_{}^2} ]}+2E[\|\beta^k-\beta^*\|^2],
	\end{align*}
	where  $L_G$, $L_F$, $B_G$ and $B_F$ are parameters in (\ref{InequationAssumptionF1}) - (\ref{InequationAssumptionG3}).
\end{corollary}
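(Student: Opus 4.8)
The plan is to reduce everything to the per-sample optimality identity, the SAGA estimation bound of Lemma~\ref{LammaBoundSAGANormEsimateGradientShort}, and Young's inequality. First I would record the relation the dual variables satisfy at the solution. Since $\nabla P(x^*)=0$ and $\frac1n\sum_i\beta_i^*=\nabla R(x^*)$ by (\ref{EquationGradientR})--(\ref{EquationGradientP}), the per-coordinate dual optimum is $\beta_i^*=-(\partial G(x^*))^\mathsf{T}\nabla F_i(G(x^*))$, whose expectation over $i$ is $\nabla R(x^*)=\lambda x^*$, consistent with the theorems. This lets me replace the constant term inside the norm: writing $g_k:=(\partial\hat G_k)^\mathsf{T}\nabla F_i(\hat G_k)$, I have
\[
g_k+\beta_i^k=\big[g_k-(\partial G(x^*))^\mathsf{T}\nabla F_i(G(x^*))\big]+\big[\beta_i^k-\beta_i^*\big].
\]
Applying $\|u+v\|^2\le 2\|u\|^2+2\|v\|^2$ immediately isolates the dual term with the factor $2$ that appears in the statement, and leaves the ``gradient discrepancy'' $D_k:=g_k-(\partial G(x^*))^\mathsf{T}\nabla F_i(G(x^*))$ to be controlled.

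Second, I would split $D_k$ through the intermediate vector $(\partial\hat G_k)^\mathsf{T}\nabla F_i(G(x^*))$ into a function-argument part $P=(\partial\hat G_k)^\mathsf{T}[\nabla F_i(\hat G_k)-\nabla F_i(G(x^*))]$ and a Jacobian part $Q=[\partial\hat G_k-\partial G(x^*)]^\mathsf{T}\nabla F_i(G(x^*))$, again using $\|D_k\|^2\le 2\|P\|^2+2\|Q\|^2$. For $P$ I would use the bounded Jacobian $\|\partial\hat G_k\|\le B_G$ from (\ref{InequationAssumptionG1}) and the Lipschitz gradient (\ref{InequationAssumptionF2}) to get $\|P\|^2\le B_G^2L_F^2\|\hat G_k-G(x^*)\|^2$, then split $\|\hat G_k-G(x^*)\|^2\le 2\|\hat G_k-G(x_k)\|^2+2\|G(x_k)-G(x^*)\|^2$: the second piece is the drift $\le 2B_G^2\|x_k-x^*\|^2$ by (\ref{InequationAssumptionG2}), producing the $B_G^4L_F^2\|x_k-x^*\|^2$ term with no $1/A$ factor, while the first piece is the SAGA variance of the inner-function estimate (\ref{DefinitionSAGAEstimateG}), contributing a $1/A$-scaled $\sum_j\|x_k-\phi_{\mathcal A_k[j]}^k\|^2$. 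The part $Q$ is handled symmetrically through $\|\nabla F_i(G(x^*))\|\le B_F$ from (\ref{InequationAssumptionF1}) and the Jacobian-Lipschitz bound (\ref{InequationAssumptionG3}), its variance coming from (\ref{DefinitionSAGAEstimateGradientG}).

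Finally I would convert every $\|x_k-\phi_{\mathcal A_k[j]}^k\|^2$ appearing in the estimation bounds into $\|x_k-x^*\|^2$ and $\|\phi_{\mathcal A_k[j]}^k-x^*\|^2$ via $\|a-b\|^2\le 2\|a-x^*\|^2+2\|b-x^*\|^2$, so that summing the $A$ mini-batch terms turns $\frac1{A^2}\sum_j\|x_k-\phi^k\|^2$ into a $\frac1A\|x_k-x^*\|^2$ contribution plus the $\frac1{A^2}\sum_j\|\phi_{\mathcal A_k[j]}^k-x^*\|^2$ term of the statement; taking total expectation and collecting the $B_F^2L_G^2$ and $B_G^4L_F^2$ coefficients yields the claim. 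A shortcut is to invoke Lemma~\ref{LammaBoundSAGANormEsimateGradientShort} directly for the estimation error $g_k-(\partial G(x_k))^\mathsf{T}\nabla F_i(G(x_k))$ and bound the remaining true-gradient drift $(\partial G(x_k))^\mathsf{T}\nabla F_i(G(x_k))-(\partial G(x^*))^\mathsf{T}\nabla F_i(G(x^*))$ by Assumption~\ref{Assumption2}. \textbf{The main obstacle} is the constant bookkeeping: one must keep the Jacobian drift contribution (a pure Lipschitz term, naturally of order $B_F^2L_G^2\|x_k-x^*\|^2$ \emph{without} a $1/A$) separate from the Jacobian variance (which does carry $1/A$), and similarly for the function estimate, and then reconcile these against the stated coefficient $4(B_F^2L_G^2/A+B_G^4L_F^2)$; tracking exactly where each $1/A$ lands, and ensuring the boundedness $\|\partial\hat G_k\|\le B_G$ of the SAGA estimator is legitimately invoked, is the delicate part of the argument.
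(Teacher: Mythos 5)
Your first step coincides exactly with the paper's: identify $\beta_i^*=-(\partial G(x^*))^\mathsf{T}\nabla F_i(G(x^*))$, insert $\pm\beta_i^*$, and apply $\|u+v\|^2\le 2\|u\|^2+2\|v\|^2$ to peel off $2E[\|\beta_i^k-\beta_i^*\|^2]$; the paper then invokes Lemma \ref{LammaBoundSAGAgradientAndOptimal} to bound $E[\|(\partial\hat G_k)^\mathsf{T}\nabla F_i(\hat G_k)+\beta_i^*\|^2]$. Where you diverge is the pivot inside that remaining norm. The paper pivots through $(\partial G(x^*))^\mathsf{T}\nabla F_i(\cdot)$, so the bounded factors are $\|\nabla F_i(\hat G_k)\|\le B_F$ and $\|\partial G(x^*)\|\le B_G$ (both legitimate), and its second piece is the pure drift $\nabla F_i(G(x_k))-\nabla F_i(G(x^*))$, which contributes $B_G^4L_F^2\|x_k-x^*\|^2$ with no $\phi$-term at all. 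Your pivot through $(\partial\hat G_k)^\mathsf{T}\nabla F_i(G(x^*))$ has two concrete problems, both of which you flag but neither of which you resolve. First, the factor $\|\partial\hat G_k\|$ in your $P$-term is not bounded by $B_G$: from (\ref{DefinitionSAGAEstimateGradientG}) and (\ref{InequationAssumptionG1}) one only gets $\|\partial\hat G_k\|\le 3B_G$, so the clean $B_G^2$ prefactor is not available there (the paper's arrangement never needs to bound $\partial\hat G_k$ itself). Second, and more decisively, your $P$-term forces you through $\|\hat G_k-G(x^*)\|^2$, whose SAGA-variance component produces a $B_G^4L_F^2\frac{1}{A^2}\sum_j\|\phi_{\mathcal A_k[j]}^k-x^*\|^2$ contribution; the stated bound carries only $4B_F^2L_G^2$ on that sum, so your route cannot land on the claimed constants. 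The same holds for your proposed shortcut via Lemma \ref{LammaBoundSAGANormEsimateGradientShort}, which likewise puts the full $(B_F^2L_G^2+B_G^4L_F^2)$ coefficient on the $\phi$-sum (compare Lemma \ref{LammaBoundSAGAgradientAndOptimalConvex}).

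In fairness, the reconciliation you defer as "delicate bookkeeping" is not merely delicate: the paper's own Lemma \ref{LammaBoundSAGAgradientAndOptimal} reaches the stated constants only because its add-and-subtract step inserts $-(\partial G(x^*))^\mathsf{T}\nabla F_i(\hat G_k)$ but removes $(\partial G(x^*))^\mathsf{T}\nabla F_i(G(x_k))$, which is not an algebraic identity. Any fully rigorous version of either route produces the extra $B_G^4L_F^2$ term on the $\phi$-sum (and somewhat larger absolute constants). So your outline is an honest, arguably more careful, rendering of the argument, but as written it establishes a weaker inequality than the one stated, and the step you leave open is exactly where the stated constants and a correct derivation part ways.
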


\begin{remark}
	As the SCDF-SAGA method also shows geometric convergence, variables $x_k$ and $\beta_k$ both converge to the optimal solution iteratively. Since they control the upper bound of the  gradient as indicated in the Corollary, the gradient  variance decreases to zero.
\end{remark}

	\section{experiment}
\subsection{Portfolio management- Mean variance optimization}
\begin{figure*}
	\centering
	\subfigure[$\kappa=10$]{
		\begin{minipage}[b]{0.32\textwidth}
			\includegraphics[width=1.0\textwidth]{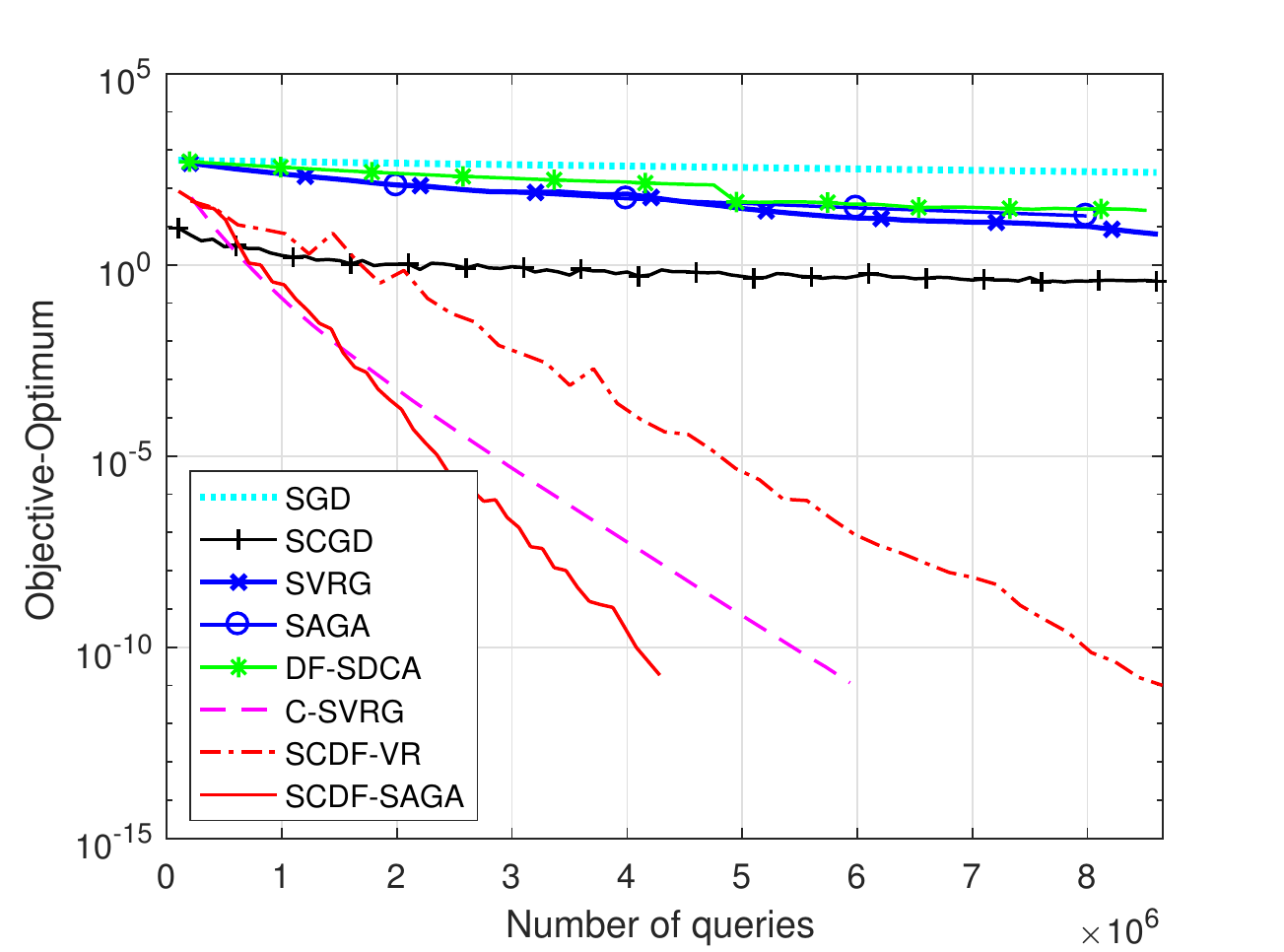}\\ 
			\includegraphics[width=1.0\textwidth]{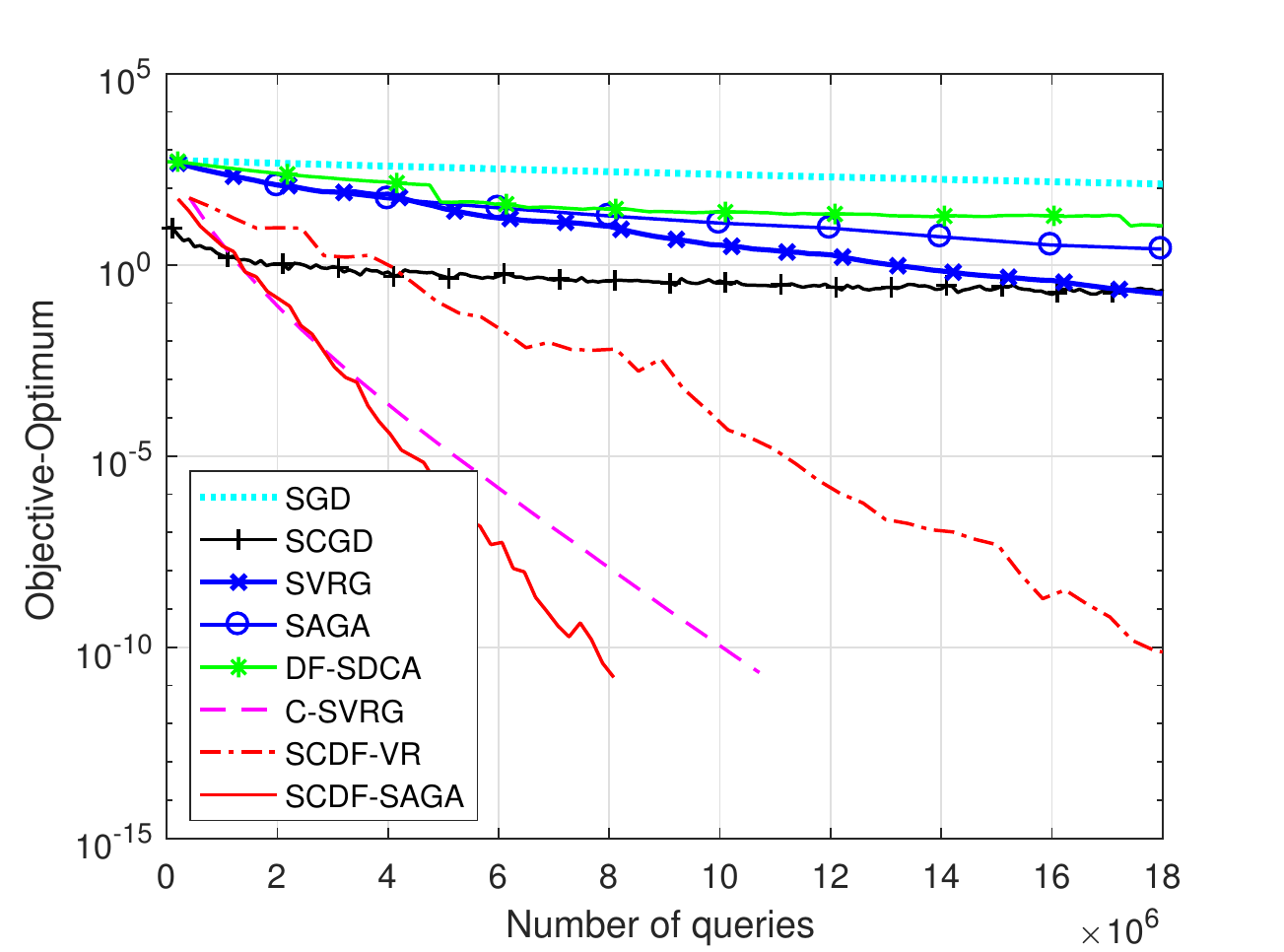}\\ 
			\includegraphics[width=1.0\textwidth]{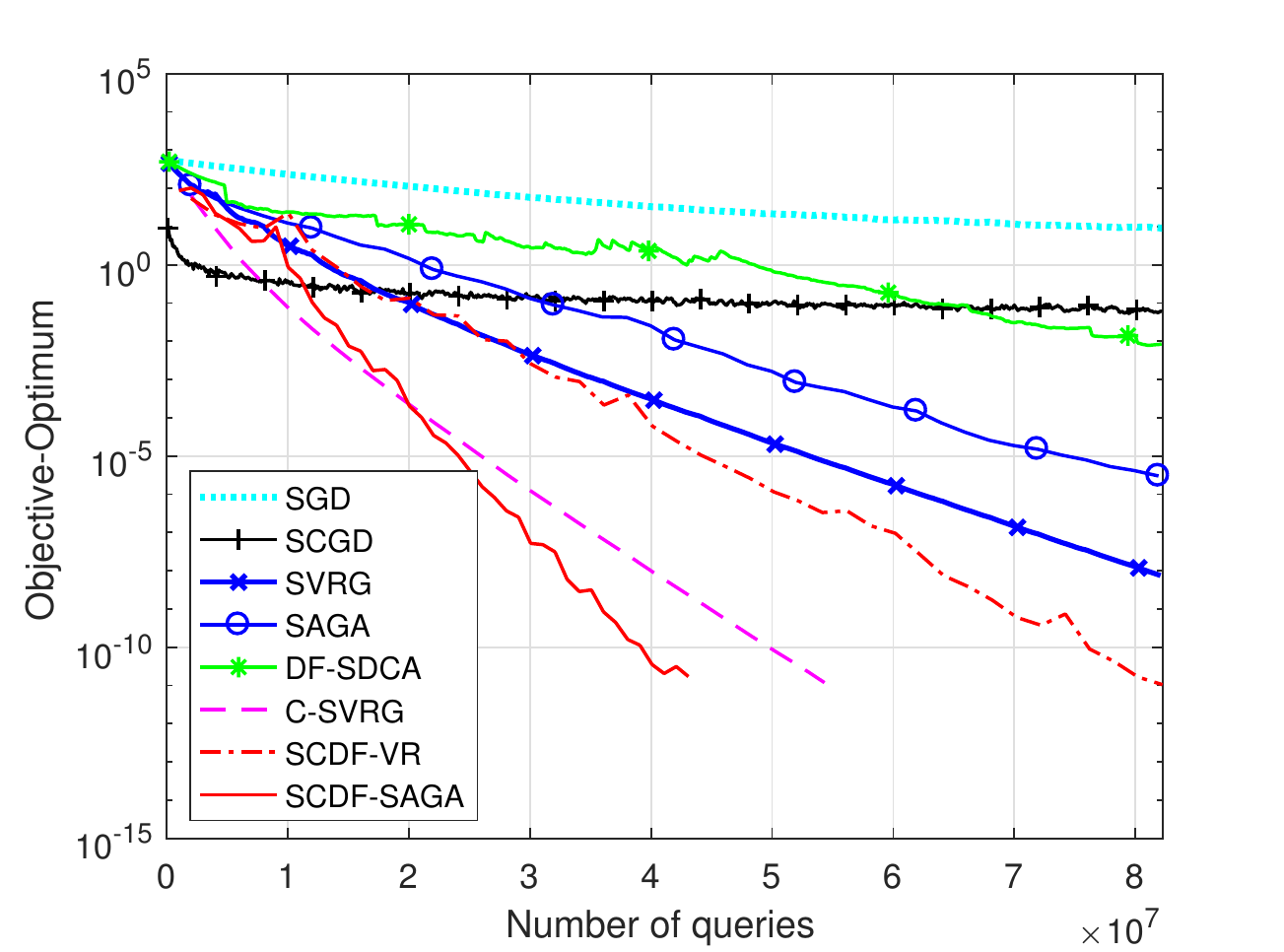} 
		\end{minipage}
	}
	\subfigure[$\kappa=30$]{
		\begin{minipage}[b]{0.32\textwidth}
			\includegraphics[width=1.0\textwidth]{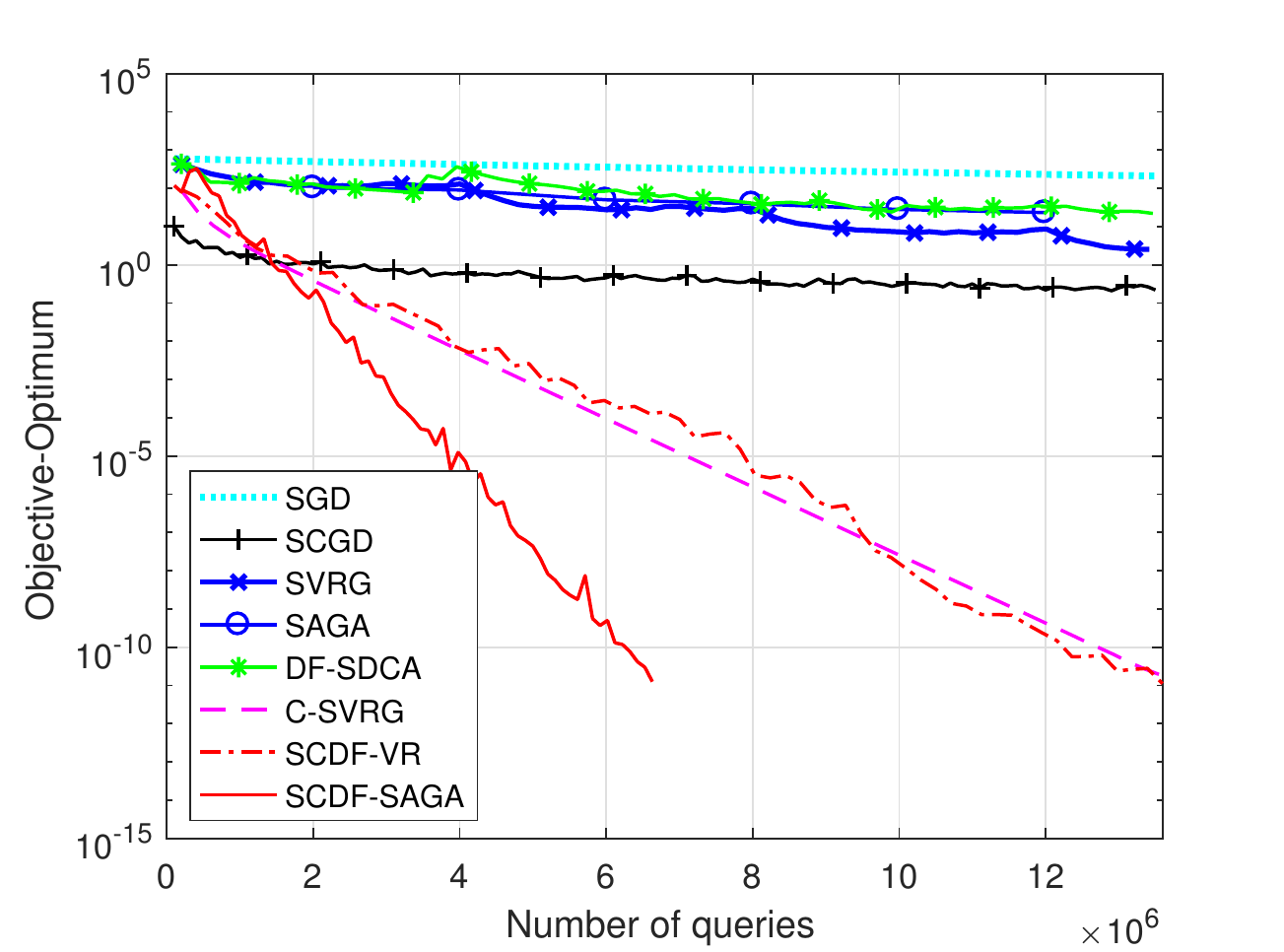}\\
			\includegraphics[width=1.0\textwidth]{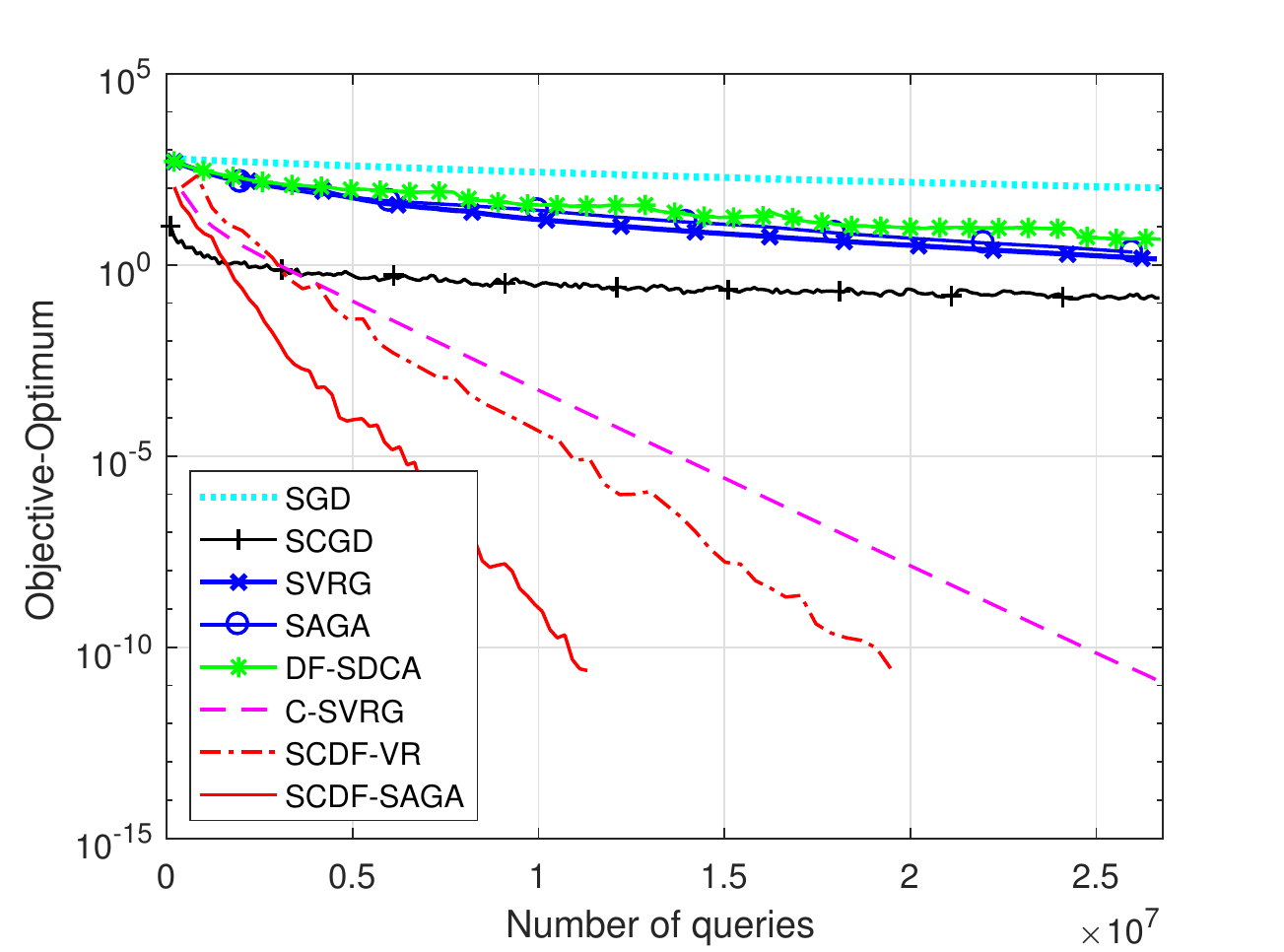}\\ 
			\includegraphics[width=1.0\textwidth]{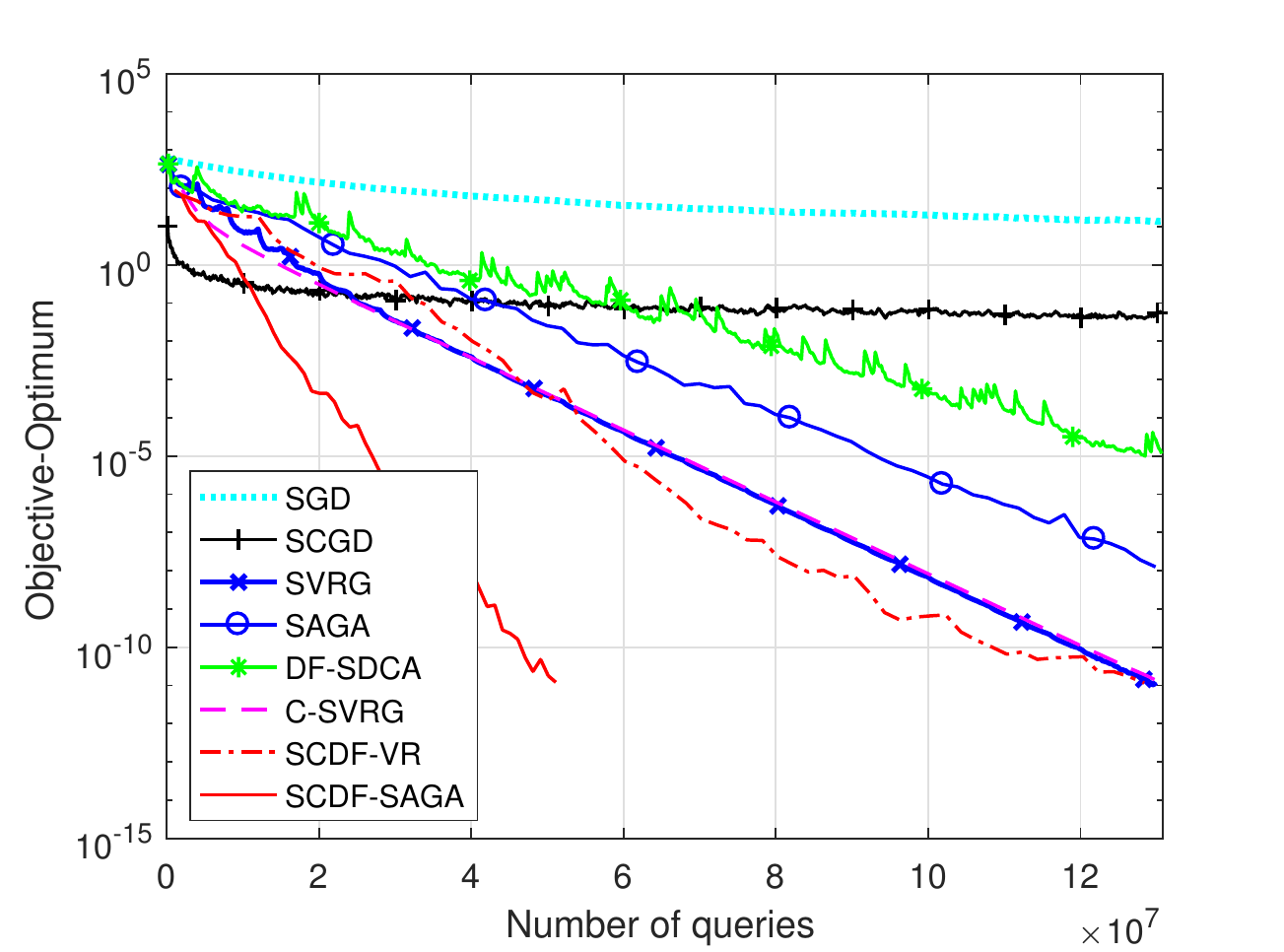} 
		\end{minipage}
	}
	\subfigure[$\kappa=50$]{
		\begin{minipage}[b]{0.32\textwidth}
			\includegraphics[width=1.0\textwidth]{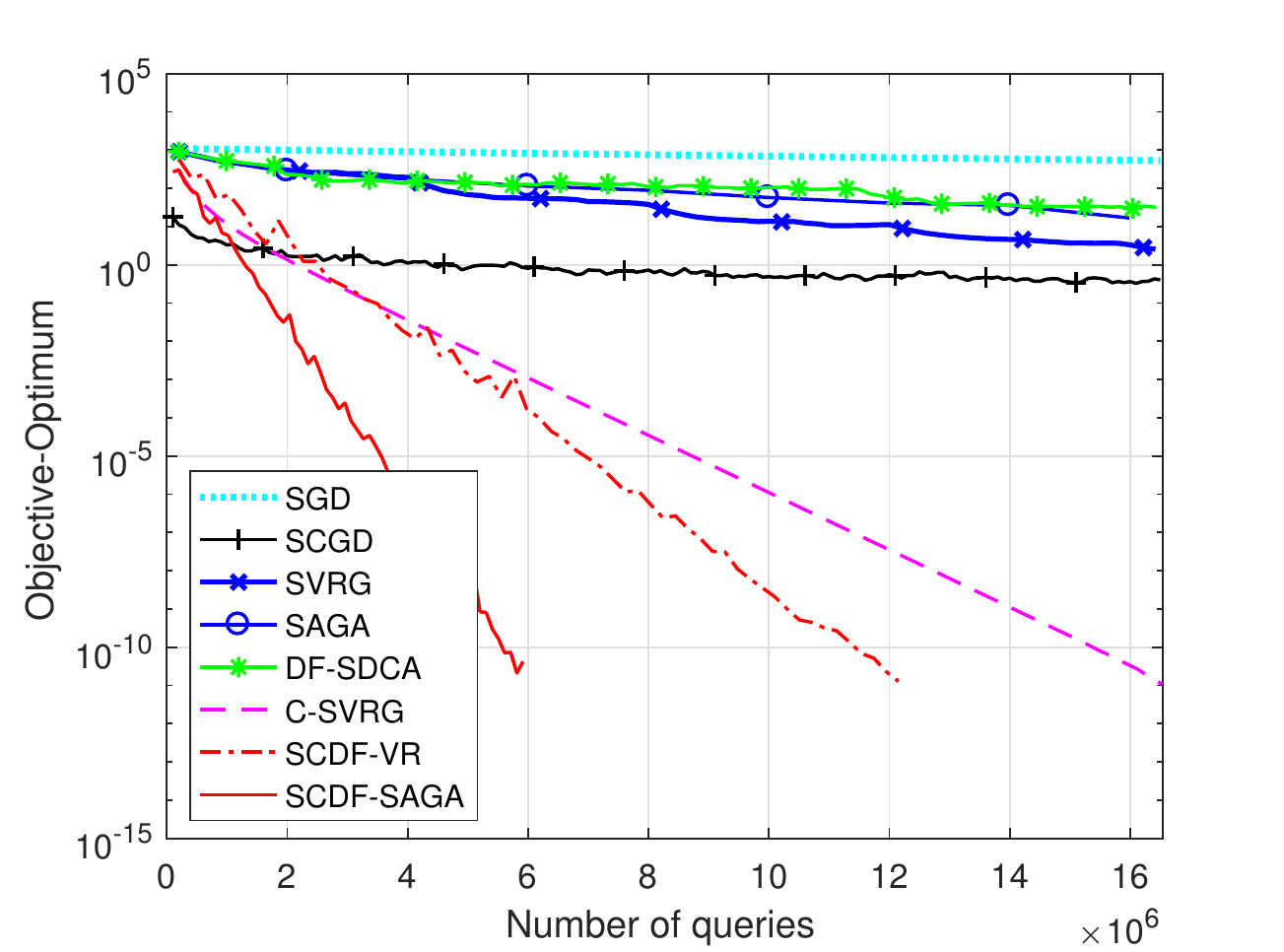}\\
			\includegraphics[width=1.0\textwidth]{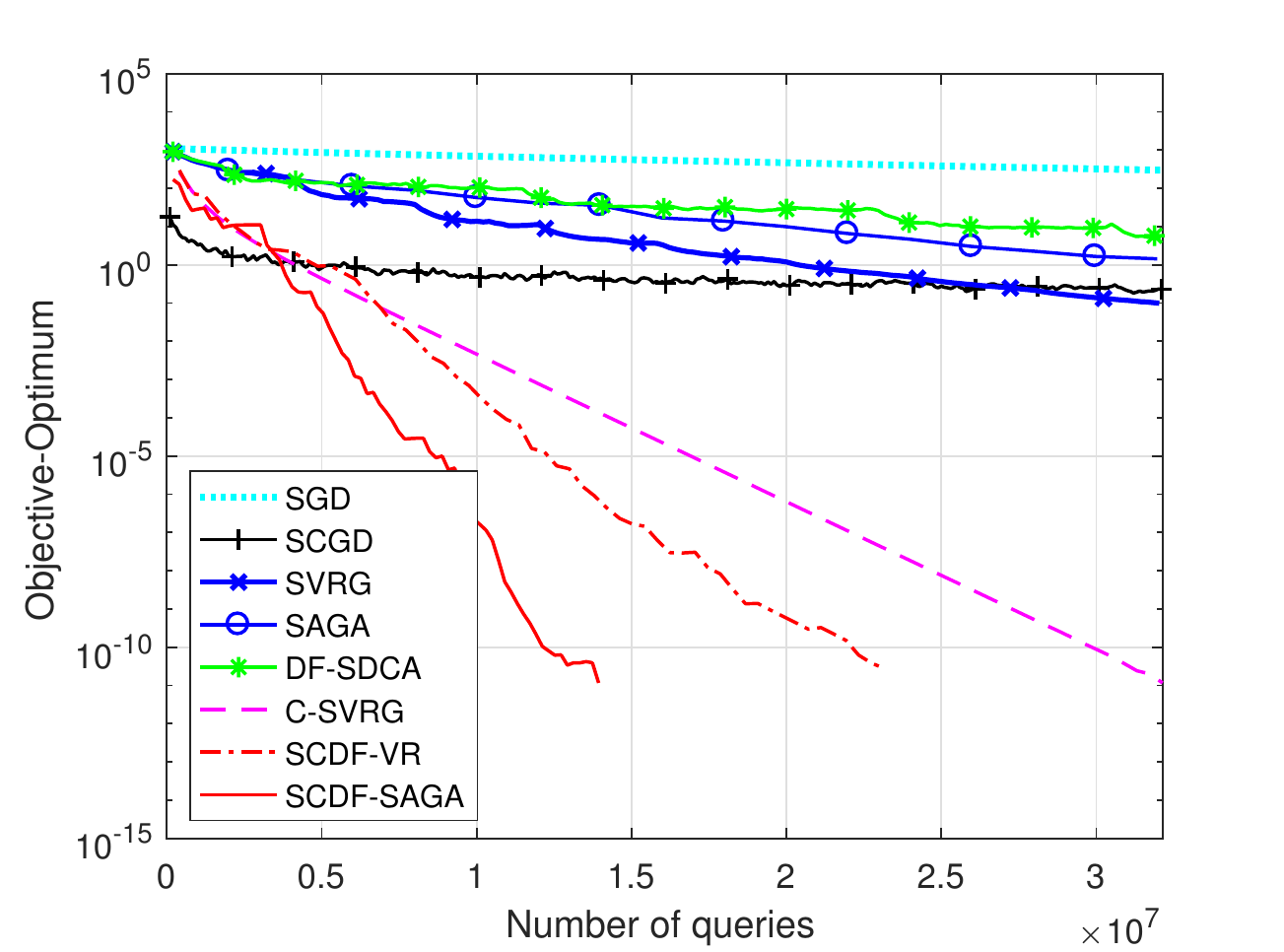}\\ 
			\includegraphics[width=1.0\textwidth]{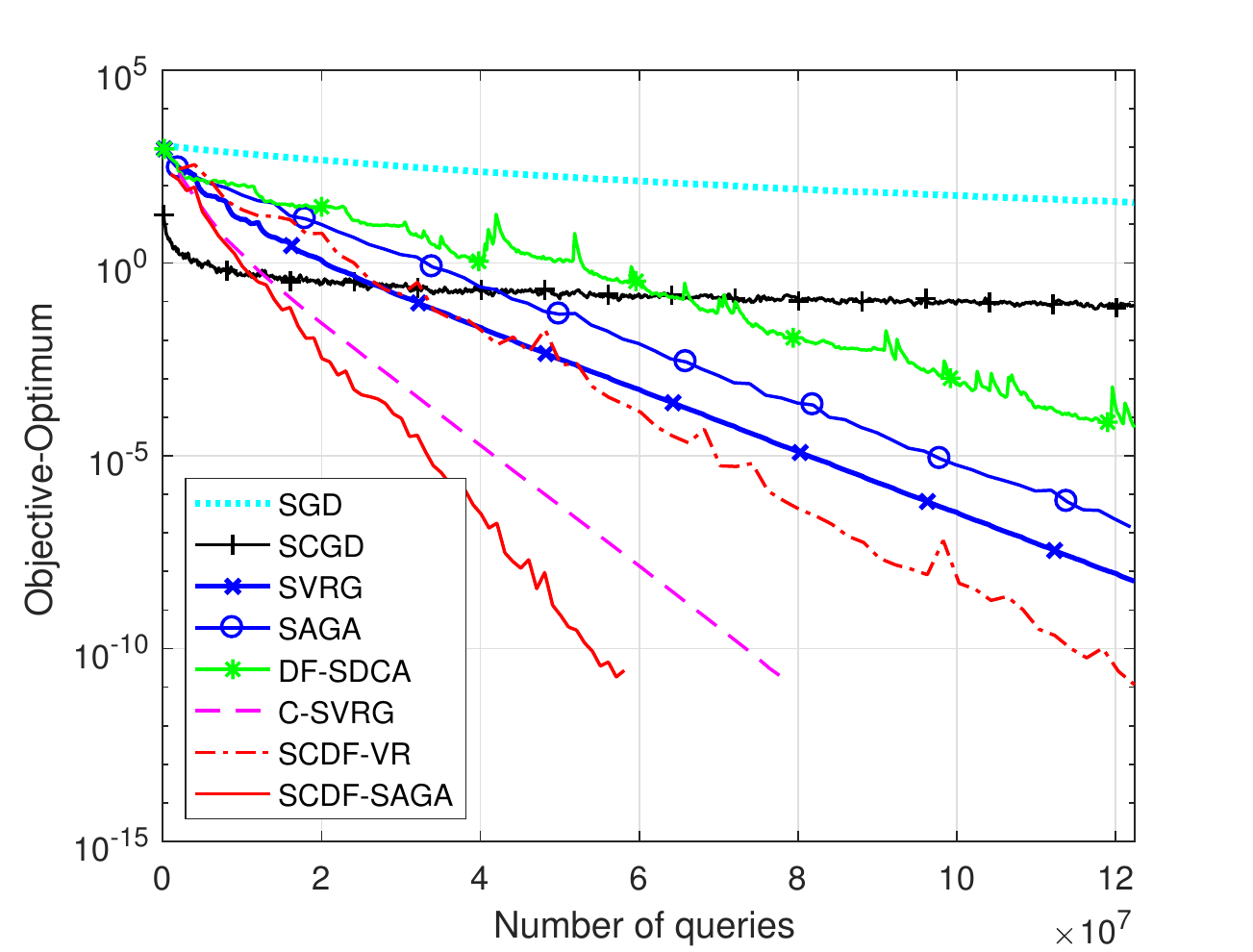} 
		\end{minipage}
	}
	\caption{As the SCDF-SAGA method also has geometric convergence in expectation, the variables $x_k$ and $\beta_k$ both convergence to the optimal solution iteratively. Because they control the upper bound of the  gradient, as indicated in Corollary, the variance of gradient decrease to zero. }
	\label{SCDF:Figure}	
\end{figure*}
	
	In this section, we  experiment with our two proposed algorithms and compare them with previous stochastic methods including SGD, SCGD, SVRG, SAGA, duality-free  SDCA (DF-SDCA) and  compositional-SVRG (C-SVRG).
	
	To verify the effectiveness of the algorithm, we use the mean-variance optimization in portfolio management:
	\begin{align*}
	\mathop {\max }\limits_{x \in {\mathbb{R}^d}} \frac{1}{n}\sum\limits_{i = 1}^n {\langle {{r_i},x} \rangle }  - \frac{1}{n}\sum\limits_{i = 1}^n ( \langle {{r_i},x} \rangle  - \frac{1}{n}\sum\limits_{i = 1}^n \langle {{r_i},x} \rangle   )^2,
	\end{align*}
	where $r_i\in\mathbb{R}^N, i\in[n]$ is the reward vector, and $x\in\mathbb{R}^N$ is the invested quantity. The goal is to maximize the objective function to obtain a large investment and reduce the investment risk. The objective function can be transformed as the composition of two finite-sum functions in \ref{SCDF:ProblemMainCompositionminimization} by the following form:
	\begin{align*}
	{G_j}( x ) =& {[ {x,\langle {{r_j},x} \rangle } ]^\mathsf{T}},\,y = \frac{1}{n}\sum\nolimits_{j = 1}^n {{G_j}( x )}  = {[ {{y_1},{y_2}} ]^\mathsf{T}},\\
	{F_i}( y ) =&  - \langle {{r_i},{y_1}} \rangle  + ( {\langle {{r_i},{y_1}} \rangle  - {y_2}} ),j,i \in [n].
	\end{align*} 
	where $y_1\in \mathbb{R}^M$ and $y_2\in\mathbb{R}$. We follow the un-regularized objective method in \cite{shalev2016sdca}, in which the term $L{\| w \|^2}/2$ is added or subtracted to the objective for DF-SDCA, SCDF-VR, and SCDF-SAGA, where  parameter $L$ can be obtained in advance and directly from the maximal eigenvalue of the Hessian matrix. We choose $n=2000$ and $N=200$ and conduct the experiment on the numerical simulations following \cite{lian2016finite}. Reward vectors $r_i$, $i\in [n]$ are generated from a random Gaussian distribution under different condition numbers of the corresponding covariance matrix, denoted  $\kappa$. We choose three different $\kappa=10,30$, and $50$. Furthermore, we give three different sample times for forming the mini-batch $\mathcal{A}$, $A=50,100$, and $500$. Figure \ref{SCDF:Figure} shows the results with different sample time $A$.  From Figure \ref{SCDF:Figure}, we can see that: 1) our proposed algorithms SCDF-SVRG and SCDF-SAGA both have linear convergence rates; and  2) SCDF-SAGA  outperforms the other algorithms.  

	\section{Conclusions}
In this paper, we  propose a new algorithm based on variance reduction technology and appli it to{ the composition of two finite-sum functions minimization problem}. Unlike most previous approaches, our work applies  duality-free SCDA to {compositional optimization} and tackles the primal and dual problems that cannot be solved directly by the primal-dual algorithm. We show  linear convergence in the situation that the estimator of the inner function is biased. Furthermore, we also show  a linear rate of convergence for the case in which the individual function is non-convex but the finite-sum function is strongly convex.

	\textbf{Appendix:}	

	\appendix

	\section{Analysis tool}
	\begin{lemma}\label{RandomVariable1}
		For the random variable $X$, we have
		\begin{align*}
		E[ {{{\| {X - E[ X ]} \|}^2}} ] { = } E[ {{X^2} - {{\| {E[ X ]} \|}^2}} ] \le E[ {{X^2}} ].
		\end{align*}
	\end{lemma}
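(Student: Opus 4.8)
The plan is to recognize this as the standard variance (bias--variance) decomposition and to prove it by a direct expansion of the squared Euclidean norm. First I would abbreviate $\mu := E[X]$, which is a deterministic vector, and expand using bilinearity of the inner product:
\[
\| X - \mu \|^2 = \| X \|^2 - 2\langle X, \mu \rangle + \| \mu \|^2 .
\]

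Next I would take expectations of both sides and apply linearity of expectation, treating $\mu$ as a constant. The only term requiring attention is the cross term, which collapses via $E[\langle X, \mu \rangle] = \langle E[X], \mu \rangle = \langle \mu, \mu \rangle = \| \mu \|^2$, while $E[\| \mu \|^2] = \| \mu \|^2$ since $\mu$ is constant. Substituting these gives
\[
E[ \| X - \mu \|^2 ] = E[ \| X \|^2 ] - 2\| \mu \|^2 + \| \mu \|^2 = E[ \| X \|^2 ] - \| E[X] \|^2 ,
\]
which is precisely the claimed equality, reading the shorthand $X^2$ in the statement as $\| X \|^2$.

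Finally, the inequality $E[ \| X \|^2 ] - \| E[X] \|^2 \le E[ \| X \|^2 ]$ is immediate from $\| E[X] \|^2 \ge 0$. There is no genuine obstacle here: the whole argument is a single expansion relying only on linearity of expectation and non-negativity of the norm. The one point worth stating carefully is the interpretation of the shorthand $X^2$ as $\| X \|^2$, so that the identity is consistent for a vector-valued $X$; once that convention is fixed, both the equality and the inequality follow in one line.
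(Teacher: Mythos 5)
Your proof is correct and is exactly the standard bias--variance expansion; the paper states this lemma as an analysis tool without giving any proof, so your argument simply supplies the routine derivation it leaves implicit. Your remark that the shorthand $X^2$ must be read as $\|X\|^2$ for vector-valued $X$ is the right reading and resolves the only notational ambiguity in the statement.
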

	\begin{lemma}\label{RandomVariable2}
		For the random variable $X_1,...,X_r$, we have
		\begin{align*}
		E[ {{{\| {{X_1} + ... + {X_r}} \|}^2}} ] \le r( {E[ {{{\| {{X_1}} \|}^2}} ] + ... + [ {{{\| {{X_r}} \|}^2}} ]} ).
		\end{align*}
	\end{lemma}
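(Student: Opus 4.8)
The plan is to reduce the stochastic statement to a purely deterministic vector inequality and then pass to the expectation by monotonicity. First I would fix an arbitrary realization $x_1,\dots,x_r$ of the random vectors $X_1,\dots,X_r$ and establish the pointwise bound
\begin{align*}
\Bigl\| \sum_{i=1}^r x_i \Bigr\|^2 \le r\sum_{i=1}^r \|x_i\|^2 .
\end{align*}
The cleanest route is convexity of the map $u \mapsto \|u\|^2$: writing $\sum_{i=1}^r x_i = \frac{1}{r}\sum_{i=1}^r (r x_i)$ and applying Jensen's inequality to the uniform average of the vectors $r x_i$ gives $\bigl\|\frac{1}{r}\sum_i (r x_i)\bigr\|^2 \le \frac{1}{r}\sum_i \|r x_i\|^2 = r\sum_i \|x_i\|^2$. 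An equivalent derivation expands $\bigl\|\sum_i x_i\bigr\|^2 = \sum_{i,j}\langle x_i, x_j\rangle$, bounds each inner product by $\|x_i\|\,\|x_j\|$ via Cauchy--Schwarz, and then applies Cauchy--Schwarz a second time to the $r$-vector $(\|x_1\|,\dots,\|x_r\|)$ against the all-ones vector to get $\bigl(\sum_i \|x_i\|\bigr)^2 \le r\sum_i \|x_i\|^2$.

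Having established the deterministic inequality for every realization, I would take expectations on both sides. Because the inequality holds pointwise and the expectation is both linear and monotone, this immediately yields
\begin{align*}
E\Bigl[ \Bigl\| \sum_{i=1}^r X_i \Bigr\|^2 \Bigr] \le r\sum_{i=1}^r E\bigl[ \|X_i\|^2 \bigr],
\end{align*}
which is exactly the claimed bound.

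There is essentially no hard part here: the mathematical content is the elementary vector inequality, and the probabilistic layer contributes nothing beyond monotonicity of expectation. The only point worth stating carefully is that the bound requires no independence, no mean-zero property, and no identical distribution of the $X_i$, which is precisely why it can be invoked unconditionally throughout the convergence proofs whenever an estimated gradient is split into $r$ summands (as in Lemma \ref{SCDF:LemmBoundSVRGEstimateFullGradientF} and Corollary \ref{CorollarySVRGGradient}). If one had independent, centered summands a sharper constant would follow from orthogonality, but for the uses in this paper the factor $r$ is exactly what the subsequent estimates consume.
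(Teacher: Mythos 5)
Your proof is correct: the pointwise bound $\|\sum_{i=1}^r x_i\|^2 \le r\sum_{i=1}^r \|x_i\|^2$ via Jensen's inequality (or double Cauchy--Schwarz), followed by monotonicity of expectation, is the standard argument, and your observation that no independence or centering is needed is exactly why the lemma can be applied freely throughout the convergence analysis. The paper states this lemma as an analysis tool without any proof, so there is nothing to compare against; your write-up supplies the missing (routine) justification and is complete.
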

	\begin{lemma}\label{LemmaInEquation} For $a$ and $b$, we have $2\langle {a,b} \rangle  \le \frac{1}{q}\| a \|^2 + q\| b \|^2,\forall q > 0$.
	\end{lemma}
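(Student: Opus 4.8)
The plan is to derive this as an immediate consequence of the nonnegativity of a squared norm, i.e. by completing the square. The inequality is the standard weighted Young (or scaled AM--GM) inequality for the inner product, and there is no real machinery required.

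First I would fix an arbitrary $q>0$ and introduce the scaled difference vector $\tfrac{1}{\sqrt q}\,a-\sqrt q\,b$. Since the Euclidean norm is nonnegative, we have
\begin{align*}
0\le\Bigl\|\tfrac{1}{\sqrt q}\,a-\sqrt q\,b\Bigr\|^2.
\end{align*}
Next I would expand the right-hand side using bilinearity of the inner product and the identity $\|u-v\|^2=\|u\|^2-2\langle u,v\rangle+\|v\|^2$, noting that the cross term $2\langle \tfrac{1}{\sqrt q}a,\sqrt q\,b\rangle$ simplifies to $2\langle a,b\rangle$ because the scalar factors $\tfrac{1}{\sqrt q}$ and $\sqrt q$ cancel. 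This gives
\begin{align*}
0\le\frac{1}{q}\|a\|^2-2\langle a,b\rangle+q\|b\|^2.
\end{align*}
Finally I would rearrange this to isolate the cross term, yielding $2\langle a,b\rangle\le\frac{1}{q}\|a\|^2+q\|b\|^2$ for the arbitrary $q>0$ chosen at the outset, which is exactly the claim.

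There is no substantive obstacle here: the only thing to be careful about is that $\sqrt q$ and $\tfrac{1}{\sqrt q}$ are well defined, which holds precisely because the hypothesis stipulates $q>0$. I would remark that equality is attained when $a=q\,b$, though this is not needed for the statement. Since the bound holds for every positive $q$, one may also optimize over $q$ to recover the Cauchy--Schwarz inequality $\langle a,b\rangle\le\|a\|\,\|b\|$, which serves as a useful sanity check on the constants.
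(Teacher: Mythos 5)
Your proof is correct and complete: the completing-the-square argument via $0\le\bigl\|\tfrac{1}{\sqrt q}a-\sqrt q\,b\bigr\|^2$ is exactly the standard derivation of this weighted Young inequality, and your care about $q>0$ guaranteeing that $\sqrt q$ is well defined is the only hypothesis that matters. The paper itself states this lemma as a known analysis tool without giving any proof, so there is nothing to compare against; your argument is the canonical one and would serve as the omitted proof verbatim.
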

	\begin{lemma}\label{LemmaAppendix} Suppose Assumption \ref{Assumption3} holds,  we have
		\begin{align*}
		\frac{1}{n}\sum\limits_{i = 1}^n {{{\| {{{(\partial G(x))}^\mathsf{T}}\nabla {F_i}(G(x)) + {{(\partial G({x^*}))}^\mathsf{T}}\nabla {F_i}(G({x^*}))} \|}^2}}  \le 2L_f\left( {P(x) - P({x^*}) - \frac{\lambda }{2}{{\| {x - {x^*}} \|}^2}} \right).
		\end{align*}
	\end{lemma}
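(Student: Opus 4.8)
The plan is to read Assumption~\ref{Assumption3} as the co-coercivity (``$L_f$-smooth and convex'') inequality for each composite map $h_i(x):=F_i(G(x))$, where the role of the gradient $\nabla h_i(x)$ is played by $(\partial G(x))^\mathsf{T}\nabla F_i(G(x))$. Under this identification the claimed bound is the standard SDCA-style estimate
\[
\frac{1}{n}\sum_{i=1}^n \big\| \nabla h_i(x) - \nabla h_i(x^*) \big\|^2 \le 2L_f\Big( P(x) - P(x^*) - \tfrac{\lambda}{2}\|x-x^*\|^2 \Big),
\]
so the whole argument reduces to averaging Assumption~\ref{Assumption3} and simplifying the resulting linear term via the optimality condition at $x^*$. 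The ``$+$'' between the two terms on the left-hand side of the statement should be read as the difference $\nabla h_i(x)-\nabla h_i(x^*)$; equivalently $\nabla h_i(x)+\beta_i^*$ under the relation $\beta_i^*=-\nabla h_i(x^*)$ derived earlier in the paper.

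First I would instantiate Assumption~\ref{Assumption3} with $y=x^*$, multiply by $2L_f$, and average over $i\in[n]$. The right-hand side then becomes $\tfrac{1}{n}\sum_i\big(h_i(x)-h_i(x^*)\big)-\big\langle \tfrac{1}{n}\sum_i\nabla h_i(x^*),\,x-x^*\big\rangle$, up to the factor $2L_f$. The crucial input is the first-order optimality condition for the $\lambda$-strongly convex objective: since $x^*$ minimizes $P$ and $R(x)=\tfrac{\lambda}{2}\|x\|^2$, equation~(\ref{EquationGradientP}) together with $\nabla P(x^*)=0$ gives $\tfrac{1}{n}\sum_i\nabla h_i(x^*)=-\lambda x^*$.

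Then I would substitute this optimality relation to turn the averaged linear term into $\lambda\langle x^*,\,x-x^*\rangle$, and rewrite $\tfrac{1}{n}\sum_i\big(h_i(x)-h_i(x^*)\big)$ as $P(x)-P(x^*)-\tfrac{\lambda}{2}\|x\|^2+\tfrac{\lambda}{2}\|x^*\|^2$ using the definition of $P$. The remaining step is a completion of squares: the assembled terms $-\tfrac{\lambda}{2}\|x\|^2+\tfrac{\lambda}{2}\|x^*\|^2+\lambda\langle x^*,x-x^*\rangle$ collapse exactly to $-\tfrac{\lambda}{2}\|x-x^*\|^2$, producing the stated right-hand side.

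I expect the only real subtlety to be bookkeeping of signs and the optimality condition rather than any genuine estimate: no boundedness or Lipschitz hypotheses beyond Assumption~\ref{Assumption3} are needed, since the co-coercivity bound carries all of the analytic work and the strongly-convex correction $-\tfrac{\lambda}{2}\|x-x^*\|^2$ emerges purely from feeding in $\tfrac{1}{n}\sum_i\nabla h_i(x^*)=-\lambda x^*$ and completing the square.
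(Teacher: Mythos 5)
Your proposal is correct and follows essentially the same route as the paper: average the co-coercivity inequality of Assumption~\ref{Assumption3} at $y=x^*$, use $\nabla P(x^*)=0$ to replace the averaged gradient term by $-\nabla R(x^*)=-\lambda x^*$, and absorb the quadratic $R$-terms into $-\frac{\lambda}{2}\|x-x^*\|^2$ (the paper phrases this last step as a smoothness inequality for $R$, whereas you complete the square exactly, but for $R(x)=\frac{\lambda}{2}\|x\|^2$ these coincide). You also correctly flag that the ``$+$'' in the lemma statement must be read as a difference, consistent with $\beta_i^*=-(\partial G(x^*))^\mathsf{T}\nabla F_i(G(x^*))$, which is exactly how the paper's own proof proceeds.
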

	\begin{proof} Based on $L_F$-smoothness and convexity of $F_i$ in (\ref{InequationAssumption3}), we have
		\begin{align*}
		&\frac{1}{n}\sum\limits_{i = 1}^n {{{\| {{(\partial G(x))^\mathsf{T}}\nabla {F_i}(G(x)) - {(\partial G({x^*}))^\mathsf{T}}\nabla {F_i}(G({x^*}))} \|}^2}} \\
		\le &\frac{1}{n}\sum\limits_{i = 1}^n {2L_f( {{F_i}(G(x)) - \nabla {F_i}(G({x^*})) - \langle {{(\partial G({x^*}))^\mathsf{T}}\nabla {F_i}(G({x^*})),x - {x^*}} \rangle } )} \\
		=& 2L_f( {F(x) - F({x^*}) - \langle {F({x^*}),x - {x^*}} \rangle } )\\
		=& 2L_f( {F(x) + R(x) - F({x^*}) - R({x^*}) - R(x) + R({x^*}) - \langle {\nabla P({x^*}) + \nabla R({x^*}) - \nabla R({x^*}),x - {x^*}} \rangle } )\\
		=& 2L_f( {P(x) - P({x^*})} ) + \underbrace {2L( { - R(x) + R({x^*}) - \langle { - \nabla R({x^*}),x - {x^*}} \rangle } )}_{(App1)}\\
		=& 2L_f( {P(x) - P({x^*}) - \frac{\lambda }{2}{{\| {x - {x^*}} \|}^2}} ),
		\end{align*}
		where (\textit{App1}) is based on the smoothness of $R(x)$, that is $R(x) = \frac{1}{2}\lambda {\| x \|^2}$, the smooth constant is $\lambda$, then we have
		\begin{align*}
		- R(x) + R({x^*}) + \langle {\nabla R({x^*}),x - {x^*}} \rangle  \le&  - \frac{1}{{2L_f}}{\| {\nabla R({x^*}) - R(x)} \|^2}\\
		\le& -\frac{\lambda }{2}{\| {x - {x^*}} \|^2}.
		\end{align*}
	\end{proof}

	\section{Proof of SCDF-SVRG} 
\textbf{Proof of Lemma \ref{SCDF:LemmBoundSVRGEstimateFullGradientF}}
	\begin{proof}
		\begin{align*}
		&E[{\| {{(\partial {{\hat G}_k})^\mathsf{T}}\nabla {F_i}({{\hat G}_k}) - {(\partial {{\hat G}_k})^\mathsf{T}}\nabla {F_i}(G({x_k}))} \|^2}]\\
		\le& B_G^2E[{\| {\nabla {F_i}({{\hat G}_k}) - \nabla {F_i}(G({x_k}))} \|^2}]\\
		\le& B_G^2L_F^2E[{\| {{{\hat G}_k} - G({x_k})} \|^2}]\\
		\le& B_G^4L_F^2\frac{1}{A}E[{\| {{x_k} - {{\tilde x}_s}} \|^2}]\\
		\le& B_G^4L_F^2\frac{1}{A}E[{\| {{x_k} - {{ x}^*}} \|^2}]+B_G^4L_F^2\frac{1}{A}E[{\| { {{\tilde x}_s}} -x^*\|^2}],
		\end{align*}
		where the first and   the second  inequalities is based on the bounded Jacobian of $G$ and Lipschitz continuous gradient of $F$. The last inequality follows from Lemma \ref{LemmaSVRGBoundVarianceG}.
	\end{proof}
\textbf{Proof of Theorem \ref{SCDF:SVRG:TheoremSVRGMainConvergenceNonconvex}}
	\begin{proof}
		Based on Lemma \ref{LemmaMainBoundC}, we have
		\begin{align*}
		{C_k} - {C_{k - 1}} \le  - \eta \lambda {C_{k - 1}} + {d_2}E[ {\| {{{\tilde x}_s} - {x^*}} \|^2} ],
		\end{align*}
		where ${d_2}= 2\left( {a\eta qB_G^4L_F^2\frac{1}{A} + b\lambda \eta \left( {4B_F^2L_G^2\frac{1}{A} + 4B_G^4L_F^2} \right)} \right) + b\lambda \eta \left( {4B_F^2L_G^2 + 4B_G^4L_F^2} \right)$. Summing from k=0 to K, we obtain
		\begin{align*}
		{C_K} + \eta \lambda \sum\limits_{k = 1}^{K - 1} {{C_k}}  \le {C_0} + K{d_2}E[ {\| {{{\tilde x}_s} - {x^*}} \|^2} ].
		\end{align*}
		Since ${C_0} = {{\tilde C}_s}$ and ${{\tilde C}_{s + 1}} = \frac{1}{K}\sum\nolimits_{k = 1}^K {{C_k}} $, and $\eta \lambda  \le 1$, we have
		\begin{align*}
		\eta \lambda K{{\tilde C}_{s + 1}} = \eta \lambda \frac{1}{K}\sum\limits_{k = 1}^K {{C_k}}  \le {C_0} + K{d_2}E[ {\| {{{\tilde x}_s} - {x^*}} \|^2} ].
		\end{align*}
		The definition of ${{\tilde C}_s}$ implies that $aE[ {\| {{{\tilde x}_s} - {x^*}} \|^2} ]\le{{\tilde C}_s}  $. Therefore we have
		\begin{align*}
		\eta \lambda K{{\tilde C}_{s + 1}} \le {C_0} + \frac{{K{d_2}}}{a}{{\tilde C}_s}.
		\end{align*}
		Dividing both sides of the inequality by $\eta \lambda K$, we can obtain the linear convergence,
		\begin{align*}
		{{\tilde C}_{s + 1}} \le \left( {\frac{1}{{\eta \lambda K}} + \frac{{{d_2}}}{{a\eta \lambda }}} \right){{\tilde C}_s} \le {\left( {\frac{1}{{\eta \lambda K}} + \frac{{{d_2}}}{{a\eta \lambda }}} \right)^s}{{\tilde C}_0}.
		\end{align*}
	\end{proof}
	\textbf{Proof of Theorem \ref{SCDF:SVRG:TheoremSVRGMainConvergenceConvex}}
	\begin{proof}
		The proof process is similar to Theorem \ref{SCDF:SVRG:TheoremSVRGMainConvergenceNonconvex}, but based on different inner estimation bound from Lemma \ref{LemmaSVRGLemmaMainBoundCConvex}
	\end{proof}
	
	\textbf{Proof of Corollary \ref{CorollarySVRGGradient}}
	\begin{proof} Based on Lemma \ref{RandomVariable2}, we have
		\begin{align*}
		&E[ {\| {{( {\partial {{\hat G}_k}} )^\mathsf{T}}\nabla {F_i}( {{{\hat G}_k}} ) + \beta _i^k} \|^2} ]\\
		\le& 2E[ {\| {{( {\partial {{\hat G}_k}} )^\mathsf{T}}\nabla {F_i}( {{{\hat G}_k}} ) + \beta _i^*} \|^2} ] + 2E[ {\| {\beta _i^k - \beta _i^*} \|^2} ]\\
		\le&\left ( {4B_F^2L_G^2\frac{1}{A} + 4B_G^4L_F^2\frac{1}{A}} \right)E[ {\| {{x_k} - {{\tilde x}_s}} \|^2} ] + ( {4B_F^2L_G^2 + 4B_G^4L_F^2} )E[ {\| {{{\tilde x}_s} - {x^*}} \|^2} ] + E[ {\| {\beta _i^k - \beta _i^*} \|^2} ],
		\end{align*}
		where the first inequality follows from Lemma \ref{LemmaBoundSVRGestimateFullGradient}.
	\end{proof}
	
	\section{Proof of SCDF-SAGA}

	\textbf{Proof of Lemma \ref{LammaBoundSAGANormEsimateGradientShort}}
	\begin{proof}
		Through subtracting and adding ${{(\partial G({x_k}))^\mathsf{T}}\nabla {F_i}({{\hat G}_k})}$, we have
		\begin{align*}
		&E[ {\| {{(\partial {{\hat G}_k})^\mathsf{T}}\nabla {F_i}({{\hat G}_k}) - {(\partial G({x_k}))^\mathsf{T}}\nabla {F_i}({G_k}({x_k}))} \|^2} ]\\
		=& E[ {\| {{(\partial {{\hat G}_k})^\mathsf{T}}\nabla {F_i}({{\hat G}_k}) - {(\partial G({x_k}))^\mathsf{T}}\nabla {F_i}({{\hat G}_k}) + {(\partial G({x_k}))^\mathsf{T}}\nabla {F_i}({{\hat G}_k}) - {(\partial G({x_k}))^\mathsf{T}}\nabla {F_i}({G_k}({x_k}))} \|^2} ]\\
		\le& 2E[ {\| {{(\partial {{\hat G}_k})^\mathsf{T}}\nabla {F_i}({{\hat G}_k}) - (\partial G{({x_k})^\mathsf{T}}\nabla {F_i}({{\hat G}_k})} \|^2} ]\\
		&+ 2E[ {\| {{(\partial G({x_k}))^\mathsf{T}}\nabla {F_i}({{\hat G}_k}) - {(\partial G({x_k}))^\mathsf{T}}\nabla {F_i}({G_k}({x_k}))} \|^2} ]\\
		\le& 2B_F^2E[ {\| {\partial {{\hat G}_k} - \partial G({x_k})} \|^2} ] + 2B_G^2E[ {\| {\nabla {F_i}({{\hat G}_k}) - \nabla {F_i}(\partial G({x_k}))} \|^2} ]\\
		\le& 2B_F^2E[ {\| {\partial {{\hat G}_k} - \partial G({x_k})} \|^2} ] + 2B_G^2L_F^2E[ {\| {{{\hat G}_k} - G({x_k})} \|^2} ]\\
		\le& 2B_F^2L_G^2\frac{1}{{{A^2}}}\sum\limits_{1 \le j \le A}^{} {E[ {\| {{x_k} - \phi _{{{\cal A}_k}[j]}^k} \|_{}^2} ]}  + 2B_G^2L_F^2B_G^2\frac{1}{{{A^2}}}\sum\limits_{1 \le j \le A}^{} {E[ {\| {{x_k} - \phi _{{{\cal A}_k}[j]}^k} \|_{}^2} ]} \\
		=& \left( {2B_F^2L_G^2 + 2B_G^4L_F^2} \right)\frac{1}{{{A^2}}}\sum\limits_{1 \le j \le A}^{} {E[ {\| {{x_k} - \phi _{{{\cal A}_k}[j]}^k} \|_{}^2} ]} 
		\end{align*}
		where the first inequality follows from Lemma \ref{RandomVariable2}, The second and third are based on the bounded gradient of $F_i$ in (\ref{InequationAssumptionF1}), the bounded Jacobian of $G$ in (\ref{InequationAssumptionG1}) and  Lipschitz continuous gradient of $F$ in (\ref{InequationAssumptionF2}), The last inequality follows from the Lemma \ref{LemmaSAGABoundFunctionG} and Lemma \ref{LemmaSAGABoundGradientG}.	
	\end{proof}
	\textbf{Proof of Theorem \ref{SCDF:SAGA:TheoremSAGAMainConvergenceNonconvex}}
	\begin{proof}
		Based on Lemma \ref{LemmaBoundMainA}, Lemma \ref{LemmaBoundMainB} and Lemma \ref{LemmaBoundMainC}, let ${D_k} = aE[ {{A_k}} ] + bE[ {{B_k}} ] + cnE[ {{C_k}} ]$, we have 
		\begin{align*}
		&{D_{k + 1}} - {D_k}\\
		=& a( {E[ {{A_{k + 1}}} ] - E[ {{A_k}} ]} ) + b( {E[ {{B_{k + 1}}} ] - E[ {{B_k}} ]} ) + c( {nE[ {{C_{k + 1}}} ] - nE[ {{C_k}} ]} )\\
		\le&  - a\eta \lambda E\left[ {{A_k}} \right] - b\lambda \eta E\left[ {{B_k}} \right] - c\lambda \eta nE\left[ {{C_k}} \right]\\
		&+ \underbrace {\left( {8a\lambda \eta {R_x}\left( {B_F^2L_G^2 + B_G^4L_F^2} \right)\frac{1}{A} - a\eta \lambda  + 2\lambda \eta b\left( {B_F^2L_G^2\frac{1}{A} + B_G^4L_F^2} \right) + cA} \right)}_{{E_1}}E\left[ {{A_k}} \right]\\
		&+ \underbrace {\left( {8a\lambda \eta {R_x}\left( {B_F^2L_G^2 + B_G^4L_F^2} \right)\frac{1}{A} + 2b\lambda \eta B_F^2L_G^2\frac{1}{A} - cA + c\lambda \eta n} \right)}_{{E_2}}E\left[ {{C_k}} \right]\\
		&+ \underbrace {( {a{\eta ^2} - (1 - \lambda n\eta )b\lambda \eta } )}_{{E_3}}{\| {{(\partial {{\hat G}_k})^\mathsf{T}}\nabla {F_i}({{\hat G}_k}) + \beta _i^k} \|^2},
		\end{align*}
		by setting the last terms $E_1$, $E_2$ and $E_3$ negative. Thus,  we can obtain 
		\begin{align*}
		{D_{k + 1}} - {D_k}\le - \lambda \eta {D_k},
		\end{align*}
		In order to simply the analysis, we define ${Y_1} = {R_x}\left( {B_F^2L_G^2 + B_G^4L_F^2} \right)\frac{1}{A},{Y_2} = B_F^2L_G^2\frac{1}{A} + B_G^4L_F^2,{Y_3} = B_F^2L_G^2\frac{1}{A}$. Both $E_1$ and $E_2$ are negative, we get 
		\begin{align}\label{StepSAGAMainTheorem1}
		\frac{a}{b} \ge \frac{{2{Y_2} + \frac{A}{{A - \lambda \eta n}}2{Y_3}}}{{1 - 8\left( {1 + \frac{A}{{A - \lambda \eta n}}} \right){Y_1}}}.
		\end{align}
		To keep the  bound positive, that is $1 - 8{Y_1} - 8\frac{A}{{A - \lambda \eta }}Y \ge 0$, the sample times $A$ satisfy,
		\begin{align*}
		A \ge \frac{{\left( {\lambda \eta n + 2{R_x}\left( {B_F^2L_G^2 + B_G^4L_F^2} \right)} \right) + \sqrt {{\lambda ^2}{\eta ^2}{n^2} + 4{{\left( {{R_x}\left( {B_F^2L_G^2 + B_G^4L_F^2} \right)} \right)}^2}} }}{2}.
		\end{align*}
		Based on above condition in (\ref{StepSAGAMainTheorem1}) and $E_3\le 0$, we have
		\begin{align*}
		\frac{{2{Y_2} + \frac{A}{{A - \lambda \eta n}}2{Y_3}}}{{1 - 8\left( {1 + \frac{A}{{A - \lambda \eta n}}} \right){Y_1}}} \le \frac{{(1 - \lambda n\eta )\lambda }}{\eta },
		\end{align*}		
		Thus, we get 
		\begin{align*}					
		\eta  \le \frac{\lambda }{{2{Y_2} + \frac{A}{{A - \lambda \eta n}}2{Y_3} + {\lambda ^2}n\left( {1 - 8\left( {1 + \frac{A}{{A - \lambda \eta n}}} \right){Y_1}} \right)}}.
		\end{align*}
		Finally, we can obtain the convergence form,
		\begin{align*}
		aE\left[ {{A_{k + 1}}} \right] + bE\left[ {{B_{k + 1}}} \right] + cnE\left[ {{C_{k + 1}}} \right] \le {\left( {1 - \eta\lambda} \right)^k}\left( {aE\left[ {{A_1}} \right] + bE\left[ {{B_1}} \right] + cnE\left[ {{C_1}} \right]} \right),
		\end{align*}
	\end{proof}
	\textbf{Proof of Theorem \ref{SCDF:SAGA:TheoremSAGAMainConvergenceConvex}}
	\begin{proof}
		Based on Lemma \ref{LemmaSAGABoundMainAconvex}, Lemma \ref{LemmaBoundSAGAMainBConvex} and Lemma \ref{LemmaBoundMainC}, let ${D_k} = aE[ {{A_k}} ] + bE[ {{B_k}} ] + cnE[ {{C_k}} ]$, we have 
		\begin{align*}
		&{D_{k + 1}} - {D_k}\\
		=& a\left( {E\left[ {{A_{k + 1}}} \right] - E\left[ {{A_k}} \right]} \right) + b\left( {E\left[ {{B_{k + 1}}} \right] - E\left[ {{B_k}} \right]} \right) + c\left( {nE\left[ {{C_{k + 1}}} \right] - nE\left[ {{C_k}} \right]} \right)\\
		\le&  - a\eta \lambda E[ {{A_k}} ] - b\lambda \eta E[ {{B_k}} ] - c\lambda \eta nE[ {{C_k}} ]\\
		&+ \underbrace {\left( {8a\eta \lambda {R_x}( {B_F^2L_G^2 + B_G^4L_F^2} )\frac{1}{A} - a\eta d\lambda  + 4b\lambda \eta ( {B_F^2L_G^2 + B_G^4L_F^2} )\frac{1}{A} - 2b{L_f}{\lambda ^2}\eta  + cA} \right)}_{{E_1}}E[ {{A_k}} ]\\
		&+ \underbrace {\left( {8a\eta \lambda {R_x}( {B_F^2L_G^2 + B_G^4L_F^2} )\frac{1}{A} + 4b\lambda \eta ( {B_F^2L_G^2 + B_G^4L_F^2} )\frac{1}{A} - cA + c\lambda \eta n} \right)}_{{E_2}}E[ {{C_k}} ]\\
		&+ \underbrace {( {a{\eta ^2} - (1 - \lambda n\eta )b\lambda \eta } )}_{{E_3}}{\| {{(\partial {{\hat G}_k})^\mathsf{T}}\nabla {F_i}({{\hat G}_k}) + \beta _i^k} \|^2} + \underbrace {( { - 2a( {1 - d} )\eta  + 4b{L_f}\lambda \eta } )}_{{E_4}}(P({x_k}) - P({x^*})).
		\end{align*}
		by setting the last four terms negative,  we can obtain 
		\begin{align*}
		{D_{k + 1}} - {D_k}\le  - \lambda \eta {D_k}.
		\end{align*}
		Define $Y = \left( {B_F^2L_G^2 + B_G^4L_F^2} \right)\frac{1}{A}$ for simply analysis. Based on $E_1$ and $E_2$ that should be negative, we have
		\begin{align}\label{StepSAGAMainTheorem1Convex}
		\frac{a}{b} \ge \frac{{4Y + \frac{A}{{A - \lambda \eta n}}4Y - 2{L_f}\lambda }}{{d - 8\left( {1 + \frac{A}{{A - \lambda \eta n}}} \right){R_x}Y}},
		\end{align}
		In order to keep the bound positive, the sample times $A$ should satisfy 
		\begin{align*}
		A \ge ( {2 + \sqrt 2 } )\left( {\lambda \eta n + \frac{{16{R_x}( {B_F^2L_G^2 + B_G^4L_F^2} )}}{d}} \right).
		\end{align*}
		Based on $E_3$ and $E_4$ that should be negative, we have
		\begin{align*} \label{StepSAGAMainTheorem1Convex2}
		\frac{{2{L_f}\lambda }}{{\left( {1 - d} \right)}} \le \frac{a}{b} \le \frac{{(1 - \lambda n\eta )\lambda }}{\eta }.
		\end{align*}
		Thus, we can get the upper bound of the step
		\begin{align*}
		\eta  \le \frac{1}{{\left( {\frac{{2{L_f}\lambda }}{{\left( {1 - d} \right)}} + \lambda n} \right)}},
		\end{align*}
		where the parameter $d$  satisfy,
		\begin{align*}
		d \le \frac{{\left( {4Y + \frac{A}{{A - \lambda \eta n}}4Y - 2{L_f}\lambda } \right) + 8\left( {1 + \frac{A}{{A - \lambda \eta n}}} \right){R_x}Y2{L_F}\lambda }}{{4Y + \frac{A}{{A - \lambda \eta n}}4Y  }}.
		\end{align*}
		
		Thus we can obtain the convergence form
		\begin{align*}
		aE\left[ {{A_{k + 1}}} \right] + bE\left[ {{B_{k + 1}}} \right] + cnE\left[ {{C_{k + 1}}} \right] \le {\left( {1 - \eta\lambda} \right)^k}\left( {aE\left[ {{A_1}} \right] + bE\left[ {{B_1}} \right] + cnE\left[ {{C_1}} \right]} \right),
		\end{align*}
	\end{proof}

	Note that as the variable $x_k$ and $\beta^k$ go to the optimal solution, we can see that the variance of gradient in the update iteration is also approximating to zero. The following Corollary shows the bound of the estimated gradient variance
	
	\textbf{Proof of Corollary \ref{CorollarySAGAGradient}}

	\begin{proof} Based on the update of $x_k$, we have
		\begin{align*}
		&\frac{1}{{{\eta ^2}}}E[ {{{\| {( {{x_{k + 1}} - {x_k}} )} \|}^2}} ]\\
		=&E[ {\| {{( {\partial {{\hat G}_{k}}} )^\mathsf{T}}\nabla {F_i}( {{{\hat G}_{k }}} ) + \beta _i^k} \|^2} ]\\
		\le & 2E[ {\| {{( {\partial {{\hat G}_{k}}} )^\mathsf{T}}\nabla {F_i}( {{{\hat G}_{k }}} ) + \beta _i^*} \|^2} ]+2E[\|\beta^k-\beta^*\|^2]\\
		\le & 4\left( {B_F^2L_G^2\frac{1}{A} + B_G^4L_F^2} \right)E[ {\| {{x_k} - {x^*}} \|_{}^2} ]+ 4B_F^2L_G^2\frac{1}{{{A^2}}}\sum\limits_{1 \le j \le A}^{} {E[ {\| {\phi _{{\mathcal{A}_k}[j]}^k - {x^*}} \|_{}^2} ]}+2E[\|\beta^k-\beta^*\|^2],
		\end{align*}
		where the first and second inequalities follows from Lemma \ref{LammaBoundSAGAgradientAndOptimal} and Lemma \ref{RandomVariable2}.
	\end{proof}

	\section{Convergence Bound Analysis for SDFC-SVRG}
	\subsection{Bounding the estimation of inner function $G$}
	\begin{lemma}\label{LemmaSVRGBoundVarianceG}
		Assumption \ref{Assumption2} holds, in algorithm \ref{AlgorithmSDFCVRG1}, for the intermediated iteration of $x_k$ and $\tilde{x}_s$, and $\hat{G}_k$ defined in (\ref{SCDF:SVRG:DefinitionSVRGFunctionG}),  the variance of stochastic gradient is ,
		\begin{align*}
		E[ {\| {{{\hat G}_k} - G( {{x_{k }}} )} \|^2} ]	\le B_G^2\frac{1}{A}E[ {\| {{x_{k }} - {{\tilde x}_s}} \|^2} ],
		\end{align*}
		where $B_G$ is the parameter in (\ref{InequationAssumptionG1}).
	\end{lemma}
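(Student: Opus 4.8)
The plan is to recognize $\hat{G}_k$ as a mini-batch sample-mean estimator of $G(x_k)$ and to run the standard SVRG variance-reduction argument. First I would introduce, for each draw $\ell = \mathcal{A}_k[j]$, the per-sample quantity $Z_j = G_{\mathcal{A}_k[j]}(x_k) - G_{\mathcal{A}_k[j]}(\tilde{x}_s)$, so that the definition (\ref{SCDF:SVRG:DefinitionSVRGFunctionG}) becomes $\hat{G}_k = \frac{1}{A}\sum_{j=1}^A Z_j + G(\tilde{x}_s)$. Since the indices $\mathcal{A}_k[j]$ are drawn i.i.d. uniformly from $[m]$, each $Z_j$ has the same mean $\mu := \frac{1}{m}\sum_{\ell=1}^m (G_\ell(x_k) - G_\ell(\tilde{x}_s)) = G(x_k) - G(\tilde{x}_s)$. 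This already re-establishes $E[\hat{G}_k] = G(x_k)$ and, more importantly, lets me rewrite the quantity of interest as a centered sample mean, $\hat{G}_k - G(x_k) = \frac{1}{A}\sum_{j=1}^A (Z_j - \mu)$.

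Next I would exploit the independence of the draws. Conditioning on $x_k$ and $\tilde{x}_s$, the $Z_j$ are i.i.d., so the cross terms in $\|\frac{1}{A}\sum_{j=1}^A (Z_j - \mu)\|^2$ vanish in expectation and the variance of the sample mean contracts by the factor $\frac{1}{A}$, giving $E[\|\hat{G}_k - G(x_k)\|^2] = \frac{1}{A} E[\|Z_1 - \mu\|^2]$. I would then invoke Lemma \ref{RandomVariable1} to bound the per-sample variance by its raw second moment, $E[\|Z_1 - \mu\|^2] \le E[\|Z_1\|^2]$.

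Finally I would bound the single-sample second moment using the Lipschitz continuity of each component $G_\ell$ from (\ref{InequationAssumptionG2}): for every index $\ell$ one has $\|Z_1\|^2 = \|G_\ell(x_k) - G_\ell(\tilde{x}_s)\|^2 \le B_G^2 \|x_k - \tilde{x}_s\|^2$, hence $E[\|Z_1\|^2] \le B_G^2 E[\|x_k - \tilde{x}_s\|^2]$ after taking the outer expectation over $x_k$ and $\tilde{x}_s$. Chaining the three estimates yields the claimed bound $E[\|\hat{G}_k - G(x_k)\|^2] \le B_G^2 \frac{1}{A} E[\|x_k - \tilde{x}_s\|^2]$.

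The only genuinely delicate point is the variance-contraction step: I must condition on the current inner iterate $x_k$ and the snapshot $\tilde{x}_s$ before invoking independence, so that the $\frac{1}{A}$ factor is extracted cleanly, and only afterward take the outer expectation that produces $E[\|x_k - \tilde{x}_s\|^2]$ on the right-hand side. Conflating these two levels of randomness would muddle the bookkeeping; everything else is a direct substitution of the boundedness and Lipschitz assumptions, so I expect no further obstacle.
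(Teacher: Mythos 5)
Your proposal matches the paper's own proof essentially step for step: both rewrite $\hat G_k - G(x_k)$ as a centered mini-batch average, extract the $\tfrac{1}{A}$ factor from the independence of the draws, drop the mean via Lemma \ref{RandomVariable1}, and finish with the Lipschitz bound $\|G_j(x_k)-G_j(\tilde x_s)\|\le B_G\|x_k-\tilde x_s\|$ from (\ref{InequationAssumptionG2}). Your explicit remark about conditioning on $x_k$ and $\tilde x_s$ before invoking independence is a point the paper leaves implicit, but the argument is the same.
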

	\begin{proof}
		Based on the bounded Jacobian of $G_j$ and Lipschitz continuous gradient of $F_i$, $j \in [m]$, $i \in [n]$, we have
		\begin{align*}
		& E[ {\| {{{\hat G}_k} - G( {{x_{k }}} )} \|^2} ]\\
		=& E[ {\| {\frac{1}{A}\sum\limits_{1 \le j \le A}^{} {( {{G_{{{\cal A}_{k }}[j]}}( {{x_{k }}} ) - {G_{{{\cal A}_{k }}[j]}}( {{{\tilde x}_s}} )} )}  + G( {{{\tilde x}_s}} ) - G( {{x_{k }}} )} \|^2} ]\\
		\le& \frac{1}{{{A^2}}}\sum\limits_{1 \le j \le A}^{} { {E[ {\| {{G_{{{\cal A}_{k}}[j]}}( {{x_{k }}} ) - {G_{{{\cal A}_{k }}[j]}}( {{{\tilde x}_s}} ) - ( {G( {{x_{k }}} ) - G( {{{\tilde x}_s}} )} )} \|^2} ]} } \\
		\le& \frac{1}{{{A^2}}}\sum\limits_{1 \le j \le A}^{} { {E[ {\| {{G_{{{\cal A}_{k}}[j]}}( {{x_{k}}} ) - {G_{{{\cal A}_{k}}[j]}}( {{{\tilde x}_s}} )} \|^2} ]} } \\
		\le& B_G^2\frac{1}{A}E[ {\| {{x_{k }} - {{\tilde x}_s}} \|^2} ],
		\end{align*}
		where the first and second  inequalities follow from Lemma (\ref{RandomVariable1}) and (\ref{RandomVariable2}), and the third  inequality is based on the bounded Jacobian of $G$ in (\ref{InequationAssumptionG1}).
	\end{proof}
	\begin{lemma}\label{LemmaSVRGBoundVarianceGradientG}
		Assumption \ref{Assumption2} holds, in algorithm \ref{AlgorithmSDFCVRG1}, for the intermediated iteration of $x_k$ and $\tilde{x}_s$, and $\partial\hat{G}_k$ defined in (\ref{SCDF:SVRG:DefinitionSVRGEstimateG}),  the variance of stochastic gradient is ,
		\begin{align*}
		E[ {\| {\partial{{\hat G}_k} - \partial G( {{x_{k }}} )} \|^2} ]	\le L_G^2\frac{1}{A}E[ {\| {{x_{k }} - {{\tilde x}_s}} \|^2} ],
		\end{align*}
		where $B_G$ is the parameter in (\ref{InequationAssumptionG1}).
	\end{lemma}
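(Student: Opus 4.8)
The plan is to mirror the proof of the immediately preceding Lemma~\ref{LemmaSVRGBoundVarianceG} verbatim, with $G_j$ replaced by its Jacobian $\partial G_j$ throughout and with the Lipschitz continuity of $G_j$ in~(\ref{InequationAssumptionG2}) replaced by the Lipschitz continuity of the Jacobian in~(\ref{InequationAssumptionG3}); note that the constant that actually enters the bound is $L_G$ from~(\ref{InequationAssumptionG3}), not $B_G$. The only structural facts I rely on are that the $A$ indices forming the mini-batch $\mathcal{A}_k$ are drawn i.i.d.\ from $[m]$ and that the estimator~(\ref{SCDF:SVRG:DefinitionSVRGEstimateG}) is unbiased, i.e.\ $E[\partial \hat G_k] = \partial G(x_k)$, which is already recorded in the text.

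First I would substitute the definition~(\ref{SCDF:SVRG:DefinitionSVRGEstimateG}) and write $\partial \hat G_k - \partial G(x_k) = \frac{1}{A}\sum_{1\le j\le A}\bigl(\partial G_{\mathcal{A}_k[j]}(x_k) - \partial G_{\mathcal{A}_k[j]}(\tilde x_s) - (\partial G(x_k) - \partial G(\tilde x_s))\bigr)$, where I have used $\partial G(\tilde x_s) - \partial G(x_k) = -E[\partial G_{\mathcal{A}_k[j]}(x_k) - \partial G_{\mathcal{A}_k[j]}(\tilde x_s)]$. This exhibits the quantity as a centered sample average of i.i.d.\ terms, so that the zero-mean structure can be exploited.

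Next, since the summands are independent and mean zero, the second moment of the average factorizes: applying Lemma~\ref{RandomVariable2} (or directly the independence argument used for Lemma~\ref{LemmaSVRGBoundVarianceG}) yields $E[\|\partial \hat G_k - \partial G(x_k)\|^2] \le \frac{1}{A^2}\sum_{1\le j\le A} E[\|\partial G_{\mathcal{A}_k[j]}(x_k) - \partial G_{\mathcal{A}_k[j]}(\tilde x_s) - (\partial G(x_k) - \partial G(\tilde x_s))\|^2]$. I would then drop the subtracted mean by Lemma~\ref{RandomVariable1}, bounding each term by $E[\|\partial G_{\mathcal{A}_k[j]}(x_k) - \partial G_{\mathcal{A}_k[j]}(\tilde x_s)\|^2]$, and finally invoke~(\ref{InequationAssumptionG3}) to obtain $E[\|\partial G_{\mathcal{A}_k[j]}(x_k) - \partial G_{\mathcal{A}_k[j]}(\tilde x_s)\|^2] \le L_G^2 E[\|x_k - \tilde x_s\|^2]$. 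Summing the $A$ identical bounds and multiplying by $1/A^2$ collapses to $\frac{L_G^2}{A}E[\|x_k - \tilde x_s\|^2]$, which is exactly the claim.

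I expect no genuine obstacle here, since the argument is the precise Jacobian analog of Lemma~\ref{LemmaSVRGBoundVarianceG}. The only point requiring mild care is that $\partial G_j$ is a matrix rather than a vector, so the norm $\|\cdot\|$ appearing in Lemmas~\ref{RandomVariable1} and~\ref{RandomVariable2} and in~(\ref{InequationAssumptionG3}) must be read consistently as the corresponding matrix norm; all three inequalities extend verbatim to that setting, so the chain of bounds goes through unchanged.
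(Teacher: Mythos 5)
Your proposal matches the paper's proof essentially verbatim: the same centering decomposition of $\partial \hat G_k - \partial G(x_k)$, the same factorization of the second moment over the i.i.d.\ mini-batch draws, the same dropping of the subtracted mean via Lemma~\ref{RandomVariable1}, and the same final application of the Jacobian Lipschitz condition to get the $L_G^2/A$ factor. You are also right that the operative constant is $L_G$ from~(\ref{InequationAssumptionG3}); the paper's closing citation of~(\ref{InequationAssumptionG1}) and the mention of $B_G$ in the lemma statement are misattributions.
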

	\begin{proof}
		Based on the bounded Jacobian of $G_j$ and Lipschitz continuous gradient of $F_i$, $j \in [m]$, $i \in [n]$, we have
		\begin{align*}
		& E[ {\| {\partial{{\hat G}_k} - \partial G( {{x_{k }}} )} \|^2} ]\\
		=& E[ {\| {\frac{1}{A}\sum\limits_{1 \le j \le A}^{} {( {\partial{G_{{{\cal A}_{k }}[j]}}( {{x_{k }}} ) - \partial{G_{{{\cal A}_{k }}[j]}}( {{{\tilde x}_s}} )} )}  + \partial G( {{{\tilde x}_s}} ) -\partial G( {{x_{k }}} )} \|^2} ]\\
		\le& \frac{1}{{{A^2}}}\sum\limits_{1 \le j \le A}^{} { {E[ {\| {\partial{G_{{{\cal A}_{k}}[j]}}( {{x_{k }}} ) - \partial{G_{{{\cal A}_{k }}[j]}}( {{{\tilde x}_s}} ) - ( {\partial G( {{x_{k }}} ) - \partial G( {{{\tilde x}_s}} )} )} \|^2} ]} } \\
		\le& \frac{1}{{{A^2}}}\sum\limits_{1 \le j \le A}^{} { {E[ {\| {\partial{G_{{{\cal A}_{k}}[j]}}( {{x_{k}}} ) - {\partial G_{{{\cal A}_{k}}[j]}}( {{{\tilde x}_s}} )} \|^2} ]} } \\
		\le& L_G^2\frac{1}{A}E[ {\| {{x_{k }} - {{\tilde x}_s}} \|^2} ],
		\end{align*}
		where the first and second  inequalities follow from Lemma (\ref{RandomVariable1}) and (\ref{RandomVariable2}), and the third  inequality is based on the bounded Jacobian of $G$ in (\ref{InequationAssumptionG1}).
	\end{proof}
	\begin{lemma}\label{LemmaSVRGBoundestimateGAndOptimal}
		Assumption \ref{Assumption2} holds, in algorithm \ref{AlgorithmSDFCVRG1}, for the intermediated iteration of $x_k$ and $\tilde{x}_s$ and $\hat{G}_k$ defined in (\ref{SCDF:SVRG:DefinitionSVRGFunctionG}),  the bound satisfies,
		\begin{align*}
		E[ {\| {{{\hat G}_{k}} - G( {{x^*}} )} \|^2} ]\le {2B_G^2\frac{1}{A}E[ {\| {{x_{k }} - {{\tilde x}_s}} \|^2} ] + 2B_G^2E[ {\| {{{\tilde x}_s} - {x^*}} \|^2} ]},
		\end{align*}
		where $B_G$ is the parameter in (\ref{InequationAssumptionG1}).
	\end{lemma}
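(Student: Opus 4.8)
The plan is to peel off the stochastic part of $\hat G_k$ using the variance estimate already established, and to control the remaining deterministic gap by the bounded-difference property of $G$. The one design choice that matters is \emph{where to center} the estimator: Lemma~\ref{LemmaSVRGBoundVarianceG} bounds the centered quantity $\hat G_k - G(x_k)$, and it is precisely this centering (whose conditional mean vanishes) that produces the $1/A$ factor. I would therefore insert $G(x_k)$ rather than the snapshot value $G(\tilde x_s)$, writing
\begin{align*}
\hat G_k - G(x^*) = \bigl(\hat G_k - G(x_k)\bigr) + \bigl(G(x_k) - G(x^*)\bigr).
\end{align*}

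First I would apply Lemma~\ref{RandomVariable2} with $r=2$ to the squared norm of this two-term sum, obtaining
\begin{align*}
E[\|\hat G_k - G(x^*)\|^2] \le 2\,E[\|\hat G_k - G(x_k)\|^2] + 2\,E[\|G(x_k) - G(x^*)\|^2].
\end{align*}
The first summand is then bounded directly by Lemma~\ref{LemmaSVRGBoundVarianceG}, which contributes $2B_G^2\tfrac1A E[\|x_k - \tilde x_s\|^2]$ and carries the desired $1/A$ factor with no further work.

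For the second, deterministic summand I would invoke the bounded-difference inequality~(\ref{InequationAssumptionG2}), $\|G_j(x)-G_j(y)\|\le B_G\|x-y\|$, which averages over $j\in[m]$ to $\|G(x)-G(y)\|\le B_G\|x-y\|$ since $G=\tfrac1m\sum_j G_j$; squaring and taking expectations yields $2B_G^2$ times a squared distance to $x^*$, which is then expressed through the snapshot, using that the inner loop is initialized at $\tilde x_s$ so that $x_k-x^*$ and $\tilde x_s-x^*$ are tied together. The main obstacle is not any single estimate but the bookkeeping forced by the centering choice: centering at $G(\tilde x_s)$ would match the snapshot distance $\|\tilde x_s-x^*\|$ cleanly but destroy the $1/A$ gain, whereas centering at $G(x_k)$ preserves the $1/A$ gain while producing an iterate-based distance, so the two contributions must be reconciled to land on the stated form.
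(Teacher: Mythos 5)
Your decomposition is genuinely different from the paper's, and the difference is exactly where the trouble lies. The paper splits
\begin{align*}
\hat G_k - G(x^*) \;=\; \frac{1}{A}\sum_{1\le j\le A}\bigl(G_{\mathcal{A}_k[j]}(x_k)-G_{\mathcal{A}_k[j]}(\tilde x_s)\bigr) \;+\; \bigl(G(\tilde x_s)-G(x^*)\bigr),
\end{align*}
i.e.\ it centers at the snapshot, applies Lemma~\ref{RandomVariable2} with $r=2$, gets the second term $2B_G^2E[\|\tilde x_s-x^*\|^2]$ immediately from (\ref{InequationAssumptionG2}), and claims the factor $\tfrac1A$ on the first piece. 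Your split at $G(x_k)$ handles the stochastic piece cleanly through Lemma~\ref{LemmaSVRGBoundVarianceG} and reproduces the first term of the claim exactly; up to that point the argument is sound.

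The gap is the second term. Your route leaves you with $2E[\|G(x_k)-G(x^*)\|^2]\le 2B_G^2E[\|x_k-x^*\|^2]$, and the proposed reconciliation --- that $x_k-x^*$ and $\tilde x_s-x^*$ are ``tied together'' because the inner loop starts at $\tilde x_s$ --- is not a valid step. The inner iterates are not monotonically closer to $x^*$ than the snapshot, and no inequality of the form $E[\|x_k-x^*\|^2]\le E[\|\tilde x_s-x^*\|^2]$ is available anywhere in the paper; establishing such relations is precisely what the convergence analysis is for, so invoking one here would be circular. The fallback $\|x_k-x^*\|^2\le 2\|x_k-\tilde x_s\|^2+2\|\tilde x_s-x^*\|^2$ does restore the snapshot distance, but it injects an extra $4B_G^2E[\|x_k-\tilde x_s\|^2]$ \emph{without} the $1/A$ factor, so you still do not land on the stated inequality. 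What your argument actually proves is the different (and usable) bound $E[\|\hat G_k-G(x^*)\|^2]\le 2B_G^2\tfrac1A E[\|x_k-\tilde x_s\|^2]+2B_G^2E[\|x_k-x^*\|^2]$. Your closing remark about the tension between the two centerings is, incidentally, sharp: the paper's decomposition yields the snapshot distance for free but obtains the $1/A$ on the first piece by applying a $\tfrac{1}{A^2}\sum_j$ bound to an \emph{uncentered} average whose conditional mean $G(x_k)-G(\tilde x_s)$ is nonzero --- a step that only the centered quantity of Lemma~\ref{LemmaSVRGBoundVarianceG} actually warrants. Neither decomposition alone delivers both the $1/A$ factor and the snapshot distance simultaneously.
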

	\begin{proof}From the definition of $\hat{G}_k$  in (\ref{SCDF:SVRG:DefinitionSVRGFunctionG}),we have
		\begin{align*}
		&E[ {\| {{{\hat G}_{k}} - G( {{x^*}} )} \|^2} ]\\
		=&E[ {\| {\frac{1}{A}\sum\limits_{1 \le j \le A}^{} {( {{G_{{{\cal A}_{k }}[j]}}( {{x_{k }}} ) - {G_{{{\cal A}_{k }}[j]}}( {{{\tilde x}_s}} )} )}  + G( {{{\tilde x}_s}} ) - G( {{x^*}} )} \|^2} ]\\
		\le&  {2E[ {\| {\frac{1}{A}\sum\limits_{1 \le j \le A}^{} {( {{G_{{{\cal A}_{k }}[j]}}( {{x_{k }}} ) - {G_{{{\cal A}_{k}}[j]}}( {{{\tilde x}_s}} )} )} } \|^2} ] + 2E[ {\| {G( {{{\tilde x}_s}} ) - G( {{x^*}} )} \|^2} ]} \\
		\le& 2\frac{1}{{{A^2}}}\sum\limits_{1 \le j \le A}^{} {E{{\| {{G_{{A_k}[j]}}({x_k}) - {G_{{A_k}[j]}}({{\tilde x}_s})} \|}^2}}+ 2E[ {\| {G( {{{\tilde x}_s}} ) - G( {{x^*}} )} \|^2} ] \\
		\le& {2B_G^2\frac{1}{A}E[ {\| {{x_{k }} - {{\tilde x}_s}} \|^2} ] + 2B_G^2E[ {\| {{{\tilde x}_s} - {x^*}} \|^2} ]},
		\end{align*}
		where the first and the second inequalities follow from Lemma \ref{RandomVariable2}, and the third inequality is based on the bounded Jacobian of $G$ in (\ref{InequationAssumptionG1}).
	\end{proof}
	\begin{lemma}\label{LemmaSVRGBoundgradientGAndOptimal}
		Assumption \ref{Assumption2} holds, in algorithm \ref{AlgorithmSDFCVRG1}, for the intermediated iteration of $x_k$ and $\tilde{x}_s$ and $\partial \hat{G}_k$ defined in (\ref{SCDF:SVRG:DefinitionSVRGEstimateG}),  the bound satisfies,
		\begin{align*}
		E[ {\| {\partial {{\hat G}_k} - \partial G( {{x^*}} )} \|^2} ]\le {2L_G^2\frac{1}{A}E[ {\| {{x_{k }} - {{\tilde x}_s}} \|^2} ] +2 L_G^2E[ {\| {{{\tilde x}_s} - {x^*}} \|^2} ]} ,
		\end{align*}
		where $L_G$ is the parameter in (\ref{InequationAssumptionG2}).
	\end{lemma}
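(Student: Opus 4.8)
The plan is to mirror the proof of Lemma~\ref{LemmaSVRGBoundestimateGAndOptimal}, replacing the inner function $G$ by its partial gradient $\partial G$, the bounded-Jacobian constant $B_G$ by the Lipschitz constant $L_G$ of $\partial G_j$ from (\ref{InequationAssumptionG3}), and invoking (\ref{InequationAssumptionG3}) wherever that proof used (\ref{InequationAssumptionG1}). First I would unfold the definition of $\partial\hat{G}_k$ in (\ref{SCDF:SVRG:DefinitionSVRGEstimateG}) and regroup the deterministic snapshot term $\partial G(\tilde{x}_s)$ with $-\partial G(x^*)$, writing
\[
\partial\hat{G}_k - \partial G(x^*) = \frac{1}{A}\sum_{1\le j\le A}\bigl(\partial G_{\mathcal{A}_k[j]}(x_k) - \partial G_{\mathcal{A}_k[j]}(\tilde{x}_s)\bigr) + \bigl(\partial G(\tilde{x}_s) - \partial G(x^*)\bigr),
\]
so that the stochastic part is an average of $A$ Jacobian increments between the inner iterate $x_k$ and the snapshot $\tilde{x}_s$.

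Next I would apply Lemma~\ref{RandomVariable2} with $r=2$ to split $E[\|\partial\hat{G}_k - \partial G(x^*)\|^2]$ into twice the expected squared norm of the mini-batch average plus twice $E[\|\partial G(\tilde{x}_s) - \partial G(x^*)\|^2]$. For the first piece I would again use Lemma~\ref{RandomVariable2} over the $A$ summands and then bound each increment by the $L_G$-Lipschitz inequality (\ref{InequationAssumptionG3}), which collapses the $A$ equal terms into the variance-reduced factor $2L_G^2\frac{1}{A}E[\|x_k - \tilde{x}_s\|^2]$. For the second piece I would observe that $\partial G = \frac{1}{m}\sum_{j}\partial G_j$ inherits the same $L_G$-Lipschitz bound by the triangle inequality, giving $2L_G^2 E[\|\tilde{x}_s - x^*\|^2]$. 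Summing the two contributions yields exactly the claimed inequality.

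The only step requiring care is the bookkeeping of the $1/A$ factor: the squared norm of an average of $A$ i.i.d.\ increments must produce the linear-in-$1/A$ reduction so that the constant matches the one appearing in Lemma~\ref{LemmaSVRGBoundestimateGAndOptimal}. Beyond this there is no real obstacle, since the sole analytic ingredient is the Lipschitz continuity of the individual Jacobians $\partial G_j$ assumed in (\ref{InequationAssumptionG3}); structurally the argument is identical to that of its inner-function counterpart, Lemma~\ref{LemmaSVRGBoundestimateGAndOptimal}.
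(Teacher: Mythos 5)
Your route is structurally identical to the paper's proof: isolate the deterministic snapshot term, apply Lemma \ref{RandomVariable2} with $r=2$, and finish with the Lipschitz continuity of $\partial G_j$ from (\ref{InequationAssumptionG3}). The gap sits exactly in the step you yourself single out as ``the only step requiring care,'' and it is not resolved by the tools you invoke. Writing $X_j=\partial G_{\mathcal{A}_k[j]}(x_k)-\partial G_{\mathcal{A}_k[j]}(\tilde x_s)$, an application of Lemma \ref{RandomVariable2} to the average gives
\[
E\Bigl[\bigl\|\tfrac{1}{A}\sum\nolimits_{1\le j\le A} X_j\bigr\|^2\Bigr]\;\le\;\tfrac{1}{A^2}\cdot A\sum\nolimits_{j} E[\|X_j\|^2]\;=\;\tfrac{1}{A}\sum\nolimits_{j} E[\|X_j\|^2]\;\le\;L_G^2\,E[\|x_k-\tilde x_s\|^2],
\]
so the $A$ identical Lipschitz bounds cancel the remaining $1/A$ and no variance-reduced factor survives. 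The ``linear-in-$1/A$ reduction'' for an average of $A$ i.i.d.\ terms applies only to the \emph{centered} second moment; here the $X_j$ have nonzero mean $\partial G(x_k)-\partial G(\tilde x_s)$, the cross terms in the expanded square do not vanish, and indeed $E[\|\frac{1}{A}\sum_j X_j\|^2]=\frac{1}{A}E[\|X\|^2]+\frac{A-1}{A}\|E[X]\|^2$, whose second piece carries no $1/A$. To extract a genuine $1/A$ you must first subtract the mean, as is done in Lemma \ref{LemmaSVRGBoundVarianceG}, and the mean then contributes a separate term bounded by $L_G^2\|x_k-\tilde x_s\|^2$ without the $1/A$.

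You should know that the paper's own proof of this lemma (and of its companion, Lemma \ref{LemmaSVRGBoundestimateGAndOptimal}) makes the very same unjustified jump, asserting $E[\|\frac{1}{A}\sum_j X_j\|^2]\le\frac{1}{A^2}\sum_j E[\|X_j\|^2]$ for non-centered summands, so you have faithfully reproduced the published argument together with its weak point. As far as the stated inequality goes, what the decomposition actually delivers is the bound with $2L_G^2$ in place of $2L_G^2\frac{1}{A}$ on the first term; obtaining the claimed $1/A$ would require restructuring the estimate around the centered quantity and accepting an extra uncentered remainder.
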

	\begin{proof}
		\begin{align*}
		& E[ {\| {\partial {{\hat G}_k} - \partial G( {{x^*}} )} \|^2} ]\\
		=& E[ {\| {\frac{1}{A}\sum\limits_{1 \le j \le A}^{} {( {\partial {G_{{{\cal A}_{k}}[j]}}( {{x_{k }}} ) - \partial {G_{{{\cal A}_{k }}[j]}}( {{{\tilde x}_s}} )} )}  + \partial G( {{{\tilde x}_s}} ) - \partial G( {{x^*}} )} \|^2} ]\\
		\le& {2E[ {\| {\frac{1}{A}\sum\limits_{1 \le j \le A}^{} {( {\partial {G_{{{\cal A}_{k}}[j]}}( {{x_{k}}} ) - \partial {G_{{{\cal A}_{k}}[j]}}( {{{\tilde x}_s}} )} )} } \|^2} ] + 2E[ {\| {\partial G( {{{\tilde x}_s}} ) - \partial G( {{x^*}} )} \|^2} ]} \\
		\le& 2\frac{1}{{{A^2}}}\sum\limits_{1 \le j \le A}^{} {E{{\| {{\partial G_{{A_k}[j]}}({x_k}) - {\partial G_{{A_k}[j]}}({{\tilde x}_s})} \|}^2}}+ 2E[ {\| {\partial G( {{{\tilde x}_s}} ) - \partial G( {{x^*}} )} \|^2} ] \\
		\le& {2L_G^2\frac{1}{A}E[ {\| {{x_{k }} - {{\tilde x}_s}} \|^2} ] +2 L_G^2E[ {\| {{{\tilde x}_s} - {x^*}} \|^2} ]} ,
		\end{align*}
		where the first and the second inequalities follow from Lemma \ref{RandomVariable2} and the Lipschitz continuous gradient of $G$ in (\ref{InequationAssumptionG2}).
	\end{proof}
	\subsection{Bounding the estimation of  function $F$}
	\begin{lemma}\label{LemmBoundSVRGEstimateFullGradientFConvex}
		Suppose Assumption \ref{Assumption2} and \ref{Assumption3} holds, in algorithm \ref{AlgorithmSDFCVRG1}, for the intermediated iteration at $x_k$ and $\tilde{x}_s$, and $ \hat{G}_k$ and  $\partial \hat{G}_k$ defined in (\ref{SCDF:SVRG:DefinitionSVRGEstimateG}) and (\ref{SCDF:SVRG:DefinitionSVRGFunctionG}),  we have
		\begin{align*}
		&E[ \| (\partial {{\hat G}_k})^\mathsf{T}\nabla {F_i}({{\hat G}_k}) - (\partial G({x_k}))^\mathsf{T}\nabla {F_i}(G(x_k)) \|_{}^2 ]\\\le& 2\left(B_F^2L_G^2+B_G^4L_F^2\right)	\frac{1}{A}E[ {\| {{x_{k }} - x^*} \|^2} ]+2\left(B_F^2L_G^2+B_G^4L_F^2\right)\frac{1}{A}E[ {\| {{{\tilde x}_s} - x^*} \|^2} ]
		\end{align*}
		where $L_F$ and $B_G$ are the parameters in (\ref{InequationAssumptionF2}) and (\ref{InequationAssumptionG1}).
	\end{lemma}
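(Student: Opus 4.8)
The plan is to mirror the argument for Lemma~\ref{SCDF:LemmBoundSVRGEstimateFullGradientF}, modified to account for the fact that here \emph{both} the Jacobian estimate $\partial\hat G_k$ and the function estimate $\hat G_k$ differ from the exact quantities $\partial G(x_k)$ and $G(x_k)$. First I would insert the intermediate quantity $(\partial G(x_k))^\mathsf{T}\nabla F_i(\hat G_k)$ by adding and subtracting it, splitting the difference as
\begin{align*}
&(\partial\hat G_k)^\mathsf{T}\nabla F_i(\hat G_k) - (\partial G(x_k))^\mathsf{T}\nabla F_i(G(x_k))\\
=& (\partial\hat G_k - \partial G(x_k))^\mathsf{T}\nabla F_i(\hat G_k) + (\partial G(x_k))^\mathsf{T}\bigl(\nabla F_i(\hat G_k) - \nabla F_i(G(x_k))\bigr),
\end{align*}
and then apply Lemma~\ref{RandomVariable2} with $r=2$ to bound the expected squared norm by twice the sum of the expected squared norms of the two terms.

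For the first term I would use the bounded gradient of $F_i$ in (\ref{InequationAssumptionF1}) to get the factor $B_F^2\,E[\|\partial\hat G_k - \partial G(x_k)\|^2]$; for the second term I would use the bounded Jacobian in (\ref{InequationAssumptionG1}) together with the Lipschitz-continuous gradient of $F_i$ in (\ref{InequationAssumptionF2}) to get the factor $B_G^2 L_F^2\,E[\|\hat G_k - G(x_k)\|^2]$. This reduces the task to controlling the two variance quantities $E[\|\partial\hat G_k - \partial G(x_k)\|^2]$ and $E[\|\hat G_k - G(x_k)\|^2]$, which are exactly what Lemma~\ref{LemmaSVRGBoundVarianceGradientG} and Lemma~\ref{LemmaSVRGBoundVarianceG} supply, namely $L_G^2\frac1A E[\|x_k-\tilde x_s\|^2]$ and $B_G^2\frac1A E[\|x_k-\tilde x_s\|^2]$.

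Substituting these and collecting the common factor leaves the combined bound $2(B_F^2 L_G^2 + B_G^4 L_F^2)\frac1A E[\|x_k-\tilde x_s\|^2]$. Finally I would split $\|x_k-\tilde x_s\|^2 \le \|x_k-x^*\|^2 + \|\tilde x_s - x^*\|^2$ exactly as in the last step of Lemma~\ref{SCDF:LemmBoundSVRGEstimateFullGradientF}, which produces the stated right-hand side.

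The computation is largely routine; the only genuine choice is the intermediate term $(\partial G(x_k))^\mathsf{T}\nabla F_i(\hat G_k)$, selected so that each resulting piece isolates a single source of error — one the Jacobian estimate, the other the function-value estimate fed into $\nabla F_i$ — and can be dispatched by one boundedness/Lipschitz assumption from Assumption~\ref{Assumption2} together with one SVRG variance lemma. I would also remark that, despite the stated hypotheses, neither the convexity of $F_i$ nor Assumption~\ref{Assumption3} is actually invoked: the estimate rests solely on the boundedness and Lipschitz conditions of Assumption~\ref{Assumption2}.
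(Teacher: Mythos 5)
Your proposal reproduces the paper's own proof essentially step for step: the same add-and-subtract of $(\partial G(x_k))^\mathsf{T}\nabla F_i(\hat G_k)$, the same application of Lemma \ref{RandomVariable2} followed by the $B_F^2$ bound on the Jacobian-error term and the $B_G^2L_F^2$ bound on the function-error term, the same invocation of Lemmas \ref{LemmaSVRGBoundVarianceG} and \ref{LemmaSVRGBoundVarianceGradientG}, and the same final split of $\|x_k-\tilde x_s\|^2$; your remark that neither convexity of $F_i$ nor Assumption \ref{Assumption3} is actually used is also accurate. The only caveat, which you inherit from the paper rather than introduce, is that the last step $\|x_k-\tilde x_s\|^2\le\|x_k-x^*\|^2+\|\tilde x_s-x^*\|^2$ is not valid without a factor of $2$, so the stated constant should strictly be doubled.
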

	\begin{proof} Through subtracting and adding $(\partial G({x_k}))^\mathsf{T}\nabla {F_i}({{\hat G}_k}) $
		\begin{align*}
		&E[ \| (\partial {{\hat G}_k})^\mathsf{T}\nabla {F_i}({{\hat G}_k}) - (\partial G({x_k}))^\mathsf{T}\nabla {F_i}(G(x_k)) \|_{}^2 ]\\
		\le& E[ \| (\partial {{\hat G}_k})^\mathsf{T}\nabla {F_i}({{\hat G}_k})-(\partial G({x_k}))^\mathsf{T}\nabla {F_i}({{\hat G}_k})+(\partial G({x_k}))^\mathsf{T}\nabla {F_i}({{\hat G}_k}) - (\partial G({x_k}))^\mathsf{T}\nabla {F_i}(G(x_k)) \|_{}^2 ]\\
		\le& 2B_F^2E[ \| \partial {{\hat G}_k}-\partial G({x_k})\|_{}^2 ]+2B_G^2E[ \| \nabla {F_i}({{\hat G}_k}) - \nabla {F_i}(G(x_k)) \|_{}^2 ]\\
		\le& 2B_F^2E[ \| \partial {{\hat G}_k}-\partial G({x_k})\|_{}^2 ]+2B_G^2L_F^2E[ \| {{\hat G}_k} - G(x_k) \|_{}^2 ]\\
		\le &  2B_F^2L_G^2\frac{1}{A}E[ {\| {{x_{k }} - {{\tilde x}_s}} \|^2} ]+2B_G^2L_F^2	B_G^2\frac{1}{A}E[ {\| {{x_{k }} - {{\tilde x}_s}} \|^2} ]\\
		\le& \left(2B_F^2L_G^2\frac{1}{A}+2B_G^2L_F^2	B_G^2\frac{1}{A}\right)	E[ {\| {{x_{k }} - x^*} \|^2} ]+\left(2B_F^2L_G^2\frac{1}{A}+2B_G^2L_F^2	B_G^2\frac{1}{A}\right)	E[ {\| {{{\tilde x}_s} - x^*} \|^2} ]
		\end{align*}
		where the first and   the second  inequalities is based on the bounded Jacobian of $G$ and Lipschitz continuous gradient of $F$. The last inequality follows from Lemma \ref{LemmaSVRGBoundVarianceG}.
	\end{proof}
	
	\begin{lemma}\label{LemmaBoundSVRGestimateFullGradient}
		Suppose Assumption \ref{Assumption2} holds, in algorithm \ref{AlgorithmSDFCVRG1}, for the intermediated iteration at $\beta_k$, $ \hat{G}_k$ and  $\partial \hat{G}_k$ defined in (\ref{SCDF:SVRG:DefinitionSVRGEstimateG}) and (\ref{SCDF:SVRG:DefinitionSVRGFunctionG}), we have,
		\begin{align*}
		E[ {\| {{( {\partial {{\hat G}_{k}}} )^\mathsf{T}}\nabla {F_i}( {{{\hat G}_{k }}} ) + \beta _i^*} \|^2} ] \le 4( {B_F^2L_G^2 + B_G^4L_F^2} )\frac{1}{A}E[\| {{x_k} -x^* \|^2}] +  4( {B_F^2L_G^2 + B_G^4L_F^2} )\left(1+\frac{1}{A}\right)E[\| {\tilde x}_s - x^* \|^2],
		\end{align*}
		where  $L_G$, $L_F$, $B_G$ and $B_F$ are the parameters in (\ref{InequationAssumptionF1}) - (\ref{InequationAssumptionG3}).
	\end{lemma}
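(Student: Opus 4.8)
The plan is to exploit the primal--dual optimality relationship so that the left-hand norm collapses into a difference that vanishes at the optimum, and then control that difference with the add-and-subtract trick together with the inner-function variance bounds already proved.

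First I would identify $\beta_i^*$ explicitly. At the optimal pair $(x^*,\alpha^*)$ the first-order condition for the inner maximization defining $F_i^*(-\alpha_i)$ forces $\alpha_i^* = -\nabla F_i(G(x^*))$, and since $\beta_i^* = (\nabla G(x^*))^\mathsf{T}\alpha_i^*$ this yields $\beta_i^* = -(\partial G(x^*))^\mathsf{T}\nabla F_i(G(x^*))$. Substituting, the quantity to be bounded becomes $E[\|(\partial\hat{G}_k)^\mathsf{T}\nabla F_i(\hat{G}_k) - (\partial G(x^*))^\mathsf{T}\nabla F_i(G(x^*))\|^2]$, i.e.\ the squared deviation of the estimated gradient from its exact value at $x^*$.

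Next I would add and subtract the intermediate term $(\partial G(x^*))^\mathsf{T}\nabla F_i(\hat{G}_k)$ and apply Lemma \ref{RandomVariable2} with $r=2$ to split the expression into two pieces. The first piece $\|(\partial\hat{G}_k - \partial G(x^*))^\mathsf{T}\nabla F_i(\hat{G}_k)\|^2$ is controlled by the bounded gradient (\ref{InequationAssumptionF1}), producing a factor $B_F^2$ times $\|\partial\hat{G}_k - \partial G(x^*)\|^2$; the second piece is controlled by the bounded Jacobian (\ref{InequationAssumptionG1}) together with the Lipschitz gradient (\ref{InequationAssumptionF2}), producing a factor $B_G^2 L_F^2$ times $\|\hat{G}_k - G(x^*)\|^2$. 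At this point I would invoke Lemma \ref{LemmaSVRGBoundgradientGAndOptimal} and Lemma \ref{LemmaSVRGBoundestimateGAndOptimal} to bound these two variances by $\frac{1}{A}E[\|x_k - \tilde{x}_s\|^2]$ and $E[\|\tilde{x}_s - x^*\|^2]$ contributions, after which the constants collect cleanly into the coefficient $4(B_F^2 L_G^2 + B_G^4 L_F^2)$.

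Finally I would replace $\|x_k - \tilde{x}_s\|^2$ by $\|x_k - x^*\|^2 + \|\tilde{x}_s - x^*\|^2$ via Lemma \ref{RandomVariable2}; this produces the claimed $\frac{1}{A}E[\|x_k - x^*\|^2]$ term and merges the residual $\tilde{x}_s$-contributions into the coefficient $(1+\frac{1}{A})$, giving the stated bound. The only genuinely nonroutine step is the first one, namely reading off $\beta_i^* = -(\partial G(x^*))^\mathsf{T}\nabla F_i(G(x^*))$ from the dual optimality conditions, because it is what rewrites the estimated gradient plus $\beta_i^*$ as a centered difference; everything afterwards is a mechanical chaining of the prior variance lemmas.
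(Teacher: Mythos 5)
Your proof follows the paper's argument essentially step for step: the same identification of $\beta_i^*$ with $-(\partial G(x^*))^\mathsf{T}\nabla F_i(G(x^*))$, the same add-and-subtract of $(\partial G(x^*))^\mathsf{T}\nabla F_i(\hat G_k)$ followed by the bounded-gradient/bounded-Jacobian/Lipschitz splits, the same invocation of Lemmas \ref{LemmaSVRGBoundestimateGAndOptimal} and \ref{LemmaSVRGBoundgradientGAndOptimal}, and the same final replacement of $\|x_k-\tilde x_s\|^2$. The only caveat --- one you share with the paper --- is that this last replacement via Lemma \ref{RandomVariable2} actually carries a factor of $2$, so the coefficient of $\frac{1}{A}E[\|x_k-x^*\|^2]$ in the stated bound should strictly be $8(B_F^2L_G^2+B_G^4L_F^2)$ rather than $4(B_F^2L_G^2+B_G^4L_F^2)$; since you reproduce the paper's own glossing of this constant, it is not a gap in your approach.
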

	\begin{proof}
		Based on the relationship between $\beta _i^*$ and ${( {\partial G( {{x^*}} )} )^\mathsf{T}}\nabla {F_i}( {G( {{x^*}} )} )$, we have
		\begin{align*}
		E[ {\| {{(\partial {{\hat G}_k})^\mathsf{T}}\nabla {F_i}({{\hat G}_k}) + \beta _i^*} \|_{}^2} ]
		= E[ {\| {{(\partial {{\hat G}_k})^\mathsf{T}}\nabla {F_i}({{\hat G}_k}) - {( {\partial G( {{x^*}} )} )^\mathsf{T}}\nabla {F_i}( {G( {{x^*}} )} )} \|_{}^2} ].
		\end{align*}
		Through  subtracting and adding ${{( {\partial G( {{x_k}} )} )^\mathsf{T}}\nabla {F_i}( {{{ G}_{k}}} )}$, we obtain 
		\begin{align*}
		& E[ {\| {{(\partial {{\hat G}_k})^\mathsf{T}}\nabla {F_i}({{\hat G}_k}) - {( {\partial G( {{x^*}} )} )^\mathsf{T}}\nabla {F_i}( {G( {{x^*}} )} )} \|_{}^2} ]\\
		=& E[ {\| {{(\partial {{\hat G}_k})^\mathsf{T}}\nabla {F_i}({{\hat G}_k}) - {( {\partial G( {{x^*}} )} )^\mathsf{T}}\nabla {F_i}({{\hat G}_k}) + {{( {\partial G( {{x^*}} )} )}^\mathsf{T}}\nabla {F_i}({{\hat G}_k}) - {{( {\partial G( {{x^*}} )} )}^\mathsf{T}}\nabla {F_i}( {G( {{x^*}} )} )} \|_{}^2} ]\\
		\le& 2E[ {\| {{(\partial {{\hat G}_k})^\mathsf{T}}\nabla {F_i}({{\hat G}_k}) - {{( {\partial G( {{x^*}} )} )}^\mathsf{T}}\nabla {F_i}({{\hat G}_k})} \|_{}^2} ]\\
		&+ 2E[ {\| {{( {\partial G( {{x^*}} )} )^\mathsf{T}}\nabla {F_i}({{\hat G}_k}) - {{( {\partial G( {{x^*}} )} )}^\mathsf{T}}\nabla {F_i}( {G( {{x^*}} )} )} \|_{}^2} ]\\
		\le& 2B_F^2E[ {\| {\partial {{\hat G}_k} - \partial G( {{x^*}} )} \|_{}^2} ] + 2B_G^2E[ {\| {\nabla {F_i}({{\hat G}_k}) - \nabla {F_i}( {G( {{x^*}} )} )} \|_{}^2} ]\\
		\le& 2B_F^2E[ {\| {\partial {{\hat G}_k} - \partial G( {{x^*}} )} \|_{}^2} ] + 2B_G^2L_F^2E[ {\| {{{\hat G}_k} - G( {{x^*}} )} \|_{}^2} ]\\
		\le& 4B_F^2L_G^2\frac{1}{A}E[{\| {{x_k} - {{\tilde x}_s}} \|^2}] + 4B_F^2L_G^2E[{\| {{{\tilde x}_s} - {x^*}} \|^2}] + 4B_G^4L_F^2\frac{1}{A}E[{\| {{x_k} - {{\tilde x}_s}} \|^2}] + 4B_G^4L_F^2E[{\| {{{\tilde x}_s} - {x^*}} \|^2}]\\
		=& \left( {4B_F^2L_G^2\frac{1}{A} + 4B_G^4L_F^2\frac{1}{A}} \right)E[{\| {{x_k} - {{\tilde x}_s}} \|^2}] + ( {4B_F^2L_G^2 + 4B_G^4L_F^2} )E[{\| {{{\tilde x}_s} - {x^*}} \|^2}]\\
		\le &4( {B_F^2L_G^2 + B_G^4L_F^2} )\frac{1}{A}E[\| {{x_k} -x^* \|^2}] +  4( {B_F^2L_G^2 + B_G^4L_F^2} )\left(1+\frac{1}{A}\right)E[\| {\tilde x}_s - x^* \|^2],
		\end{align*}
		where the first and fourth inequality follows from Lemma \ref{RandomVariable2}, the second and third inequalities are based on the bounded gradient of $F$ (\ref{InequationAssumptionF1}), the bounded Jacobian of $G$ in (\ref{InequationAssumptionG1}), and  Lipschitz continuous gradient of $F$ in (\ref{InequationAssumptionF2}), the fourth inequality follows from Lemma \ref{LemmaSVRGBoundestimateGAndOptimal} and \ref{LemmaSVRGBoundgradientGAndOptimal}.
	\end{proof}
	\begin{lemma}\label{LemmaBoundSVRGestimateFullGradientConvex}
		Suppose Assumption \ref{Assumption2} and \ref{Assumption3} holds, in algorithm \ref{AlgorithmSDFCVRG1}, for the intermediated iteration at $\beta_k$, $ \hat{G}_k$ and  $\partial \hat{G}_k$ defined in (\ref{SCDF:SVRG:DefinitionSVRGEstimateG}) and (\ref{SCDF:SVRG:DefinitionSVRGFunctionG}), we have,
		\begin{align*}
		&E[ {\| {{( {\partial {{\hat G}_{k}}} )^\mathsf{T}}\nabla {F_i}( {{{\hat G}_{k }}} ) + \beta _i^*} \|^2} ]\\ \le& 4\left(B_F^2L_G^2+B_G^4L_F^2\right)	\frac{1}{A}E[ {\| {{x_{k }} - x^*} \|^2} ]+4\left(B_F^2L_G^2+B_G^4L_F^2\right)\frac{1}{A}E[ {\| {{{\tilde x}_s} - x^*} \|^2} ]\\
		&+ 4L_f( P(x_k) - P({x^*})) -2 L_F\lambda ({{\| {x_k - {x^*}} \|}^2} ),
		\end{align*}
		where  $L_G$, $L_F$, $B_G$ and $B_F$ are the parameters in (\ref{InequationAssumptionF1}) - (\ref{InequationAssumptionG3}).
	\end{lemma}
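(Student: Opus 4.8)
The plan is to mirror the structure of the proof of Lemma \ref{LemmaBoundSVRGestimateFullGradient}, replacing its crude optimality-gap estimate with the sharper functional bound that Assumption \ref{Assumption3} makes available. First I would invoke the primal--dual optimality relation established earlier, namely $\beta_i^* = -(\partial G(x^*))^\mathsf{T}\nabla F_i(G(x^*))$, so that the target quantity becomes
$$E[\|(\partial \hat{G}_k)^\mathsf{T}\nabla F_i(\hat{G}_k) + \beta_i^*\|^2] = E[\|(\partial \hat{G}_k)^\mathsf{T}\nabla F_i(\hat{G}_k) - (\partial G(x^*))^\mathsf{T}\nabla F_i(G(x^*))\|^2].$$
Then I would insert the intermediate term $(\partial G(x_k))^\mathsf{T}\nabla F_i(G(x_k))$ by subtracting and adding it, and split the result into two pieces using Lemma \ref{RandomVariable2} with $r=2$.

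The first piece, $(\partial \hat{G}_k)^\mathsf{T}\nabla F_i(\hat{G}_k) - (\partial G(x_k))^\mathsf{T}\nabla F_i(G(x_k))$, captures the error caused by estimating the inner function and its Jacobian through the SVRG surrogates. This is exactly the quantity controlled by Lemma \ref{LemmBoundSVRGEstimateFullGradientFConvex}; after multiplying by the factor $2$ from Lemma \ref{RandomVariable2} it contributes the two $4(B_F^2 L_G^2 + B_G^4 L_F^2)\tfrac{1}{A}$ terms in $E[\|x_k-x^*\|^2]$ and $E[\|\tilde{x}_s-x^*\|^2]$.

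The second piece, $(\partial G(x_k))^\mathsf{T}\nabla F_i(G(x_k)) - (\partial G(x^*))^\mathsf{T}\nabla F_i(G(x^*))$, is where convexity does the real work. Taking the expectation over the random index $i$ first converts $E_i[\|\cdot\|^2]$ into the average $\tfrac{1}{n}\sum_{i=1}^n \|\cdot\|^2$, which is precisely the left-hand side of Lemma \ref{LemmaAppendix}. Applying that lemma bounds the average by $2L_f\bigl(P(x_k) - P(x^*) - \tfrac{\lambda}{2}\|x_k - x^*\|^2\bigr)$, and the factor $2$ from Lemma \ref{RandomVariable2} then produces the $4L_f(P(x_k) - P(x^*))$ term together with the negative curvature term $-2L_f\lambda\|x_k-x^*\|^2$. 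Summing the two contributions yields the claimed inequality.

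The main obstacle — and indeed the whole reason the convex case is treated separately — lies in handling this second piece. Without convexity one can only bound it by a multiple of $\|x_k - x^*\|^2$, as in Lemma \ref{LemmaBoundSVRGestimateFullGradient}; Assumption \ref{Assumption3} instead lets us trade that quadratic term for the functional gap $P(x_k) - P(x^*)$ plus a strictly negative $-L_f\lambda\|x_k - x^*\|^2$ term. Retaining this negative term is exactly what allows the downstream analysis to absorb the estimation error and obtain the tighter step-size and sample-size conditions of Theorem \ref{SCDF:SVRG:TheoremSVRGMainConvergenceConvex}. The only bookkeeping care required is to take the index expectation over $i$ \emph{before} applying Lemma \ref{LemmaAppendix}, since that lemma is an averaged statement over the individual functions $F_i$.
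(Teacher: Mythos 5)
Your proposal is correct and follows essentially the same route as the paper's own proof: the same primal--dual identification of $\beta_i^*$, the same insertion of $(\partial G(x_k))^\mathsf{T}\nabla F_i(G(x_k))$, the same split via Lemma \ref{RandomVariable2}, with the two pieces bounded by Lemma \ref{LemmBoundSVRGEstimateFullGradientFConvex} and Lemma \ref{LemmaAppendix} respectively. Your remark about taking the expectation over $i$ before invoking Lemma \ref{LemmaAppendix} is a point the paper leaves implicit, and your derivation confirms that the curvature term should read $-2L_f\lambda\|x_k-x^*\|^2$ (the $L_F$ in the lemma statement is a typo).
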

	\begin{proof}
		Based on the relationship between $\beta _i^*$ and ${{( {\partial G( {{x^*}} )} )}^\mathsf{T}}\nabla {F_i}( {G( {{x^*}} )} )$, we have
		\begin{align*}
		E[ {\| {{(\partial {{\hat G}_k})^\mathsf{T}}\nabla {F_i}({{\hat G}_k}) + \beta _i^*} \|_{}^2} ]
		= E[ {\| {{(\partial {{\hat G}_k})^\mathsf{T}}\nabla {F_i}({{\hat G}_k}) - {( {\partial G( {{x^*}} )} )^\mathsf{T}}\nabla {F_i}( {G( {{x^*}} )} )} \|_{}^2} ].
		\end{align*}
		Through  subtracting and adding ${{{(\partial G({x_k}))}^\mathsf{T}}\nabla {F_i}(G(x_k))}$, we obtain 
		\begin{align*}
		& E[ {\| {{(\partial {{\hat G}_k})^\mathsf{T}}\nabla {F_i}({{\hat G}_k}) - {( {\partial G( {{x^*}} )} )^\mathsf{T}}\nabla {F_i}( {G( {{x^*}} )} )} \|_{}^2} ]\\
		=& E[ \| {{(\partial {{\hat G}_k})^\mathsf{T}}\nabla {F_i}({{\hat G}_k}) - {{(\partial G({x_k}))^\mathsf{T}}\nabla {F_i}(G(x_k))} + {{(\partial G({x_k}))^\mathsf{T}}\nabla {F_i}(G(x_k))}- ( {\partial G( {{x^*}} )} )^\mathsf{T}\nabla {F_i}( G( {{x^*}} ) )} \|_{}^2 ]\\
		\le& 2E[ {\| {{(\partial {{\hat G}_k})^\mathsf{T}}\nabla {F_i}({{\hat G}_k}) - {{(\partial G({x_k}))^\mathsf{T}}\nabla {F_i}(G(x_k))}} \|_{}^2} ]
		+ 2E[ \| {{(\partial G({x_k}))^\mathsf{T}}\nabla {F_i}(G(x_k))}- ( \partial G( {{x^*}} ) )^\mathsf{T}\nabla {F_i}( G( {{x^*}} ) ) \|_{}^2 ]\\
		\le &4\left(B_F^2L_G^2+B_G^4L_F^2\right)	\frac{1}{A}E[ {\| {{x_{k }} - x^*} \|^2} ]+4\left(B_F^2L_G^2+B_G^4L_F^2\right)\frac{1}{A}E[ {\| {{{\tilde x}_s} - x^*} \|^2} ]\\
		&+ 4L_f\left( {P(x_k) - P({x^*}) - \frac{\lambda }{2}{{\| {x_k - {x^*}} \|}^2}} \right),
		\end{align*}
		where the first inequality follow from Lemma \ref{RandomVariable2}, and the second are based on Lemma  \ref{LemmBoundSVRGEstimateFullGradientFConvex} and Lemma \ref{LemmaAppendix}. 		
	\end{proof}
	\subsection{Bound the difference of variable and the optimal solution}
	\begin{lemma}\label{LemmaMainBoundC}
		Suppose Assumption \ref{Assumption1} and \ref{Assumption2} hold, and $P(x)$ is $\lambda$-strongly convex.
		In algorithm \ref{AlgorithmSDFCVRG1}, let let ${A_k} =\| {{x_k} - {x^*}} \|^2$, ${B_k} = \frac{1}{n}\sum\nolimits_{i = 1}^n {{{\| {\beta _i^k - \beta _i^*} \|}^2}} $ and ${C_k} = aE[ {{A_k}} ] + bE[ {{B_k}} ]$, $a, b\ge 0$. As long as $A \ge 2{R_x}B_G^4L_F^2$, the step 
		\begin{align}
		\eta  \le \frac{{1 - 2{R_x}B_G^4L_F^2\frac{1}{A}}}{{4\left( {B_F^2L_G^2 + B_G^4L_F^2} \right) + n{\lambda ^2}\left( {1 - 2{R_x}B_G^4L_F^2\frac{1}{A}} \right)}},
		\end{align}
		we can obtain	
		\begin{align*}
		{C_{k+1}} - {C_{k}} \le  - \eta \lambda {C_{k}} + {d_2}E[ {\| {{{\tilde x}_s} - {x^*}} \|^2} ],
		\end{align*} 
		where the parameters $a$, $b$ and $d_2$ satisfy
		\begin{align*}
		&{d_2} = 2a\eta \lambda {R_x}B_G^4L_F^2\frac{1}{A} + 4b\lambda \eta \left( {B_F^2L_G^2 + B_G^4L_F^2} \right)\left( {1 + \frac{1}{A}} \right)\\
		&\frac{{4\left( {B_F^2L_G^2 + B_G^4L_F^2} \right)}}{{1 - 2{R_x}B_G^4L_F^2\frac{1}{A}}}\le\frac{a}{b} \le \frac{{(1 - n\lambda \eta )\lambda }}{\eta }.
		\end{align*}
	\end{lemma}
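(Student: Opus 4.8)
The plan is to derive one-step recursions for the primal distance $A_k=\|x_k-x^*\|^2$ and the dual distance $B_k=\frac1n\sum_i\|\beta_i^k-\beta_i^*\|^2$ separately, and then add $a$ times the former to $b$ times the latter so that the coupled potential $C_k=aE[A_k]+bE[B_k]$ contracts. Writing $v_k=(\partial\hat G_k)^\mathsf{T}\nabla F_i(\hat G_k)+\beta_i^k$ for the (biased) update direction, the primal step $x_{k+1}=x_k-\eta v_k$ gives $E[A_{k+1}-A_k]=-2\eta\langle E[v_k],x_k-x^*\rangle+\eta^2E\|v_k\|^2$, while the dual step $\beta_i^{k+1}=\beta_i^k-\lambda n\eta v_k$ (only the sampled coordinate changes) gives, after averaging over the uniform choice of $i$, $E[B_{k+1}-B_k]=-2\lambda\eta E\langle v_k,\beta_i^k-\beta_i^*\rangle+\lambda^2n\eta^2E\|v_k\|^2$. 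First I would record the optimality relations $\beta_i^*=-(\partial G(x^*))^\mathsf{T}\nabla F_i(G(x^*))$ and $\frac1n\sum_i\beta_i^*=\lambda x^*$, together with the invariant $\frac1n\sum_i\beta_i^k=\lambda x_k$ maintained by the mirror updates ($\beta_i^{k+1}-\beta_i^k=\lambda n(x_{k+1}-x_k)$ for the sampled $i$); the latter is exactly what makes $E[v_k]=\nabla P(x_k)+\overline{\mathrm{bias}}_k$, where $\overline{\mathrm{bias}}_k=E[(\partial\hat G_k)^\mathsf{T}(\nabla F_i(\hat G_k)-\nabla F_i(G(x_k)))]$.

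For the dual recursion I would split $v_k=p_k+(\beta_i^k-\beta_i^*)$ with $p_k=(\partial\hat G_k)^\mathsf{T}\nabla F_i(\hat G_k)+\beta_i^*$: the self term yields the contraction $-2\lambda\eta B_k$, and the cross term $-2\lambda\eta\langle p_k,\beta_i^k-\beta_i^*\rangle$ is split by Young's inequality (Lemma \ref{LemmaInEquation}) and bounded through Lemma \ref{LemmaBoundSVRGestimateFullGradient}, which supplies precisely the $4b\lambda\eta(B_F^2L_G^2+B_G^4L_F^2)(1+\frac1A)E[\|\tilde x_s-x^*\|^2]$ piece of $d_2$ plus an $O(\frac1A)A_k$ remainder and a $+b\lambda\eta B_k$ term. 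For the primal recursion I would use $E[v_k]=\nabla P(x_k)+\overline{\mathrm{bias}}_k$ together with strong convexity of $P$ and $\nabla P(x^*)=0$ to obtain $-2\eta\langle\nabla P(x_k),x_k-x^*\rangle\le-2\eta\lambda A_k$, and I would control the $\|v_k\|^2$ terms appearing in both recursions by Corollary \ref{CorollarySVRGGradient}, after bounding $\|x_k-\tilde x_s\|^2\le 2A_k+2E[\|\tilde x_s-x^*\|^2]$.

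The remaining ingredient, and the main obstacle, is the bias cross term $-2\eta\langle\overline{\mathrm{bias}}_k,x_k-x^*\rangle$: because $\hat G_k$ is only an estimate of $G(x_k)$, $v_k$ is not an unbiased estimate of $\nabla P(x_k)$, so this term does not vanish and threatens the strong-convexity contraction. I would bound it by Cauchy--Schwarz and Young against the bias estimate of Lemma \ref{SCDF:LemmBoundSVRGEstimateFullGradientF}, $\|\overline{\mathrm{bias}}_k\|^2\le B_G^4L_F^2\frac1A(A_k+E[\|\tilde x_s-x^*\|^2])$, and use the diameter bound $\|x_k-x^*\|^2\le\lambda R_x$ from the definition of $R_x$ to move the $A_k$-proportional part into the snapshot term $2a\eta\lambda R_xB_G^4L_F^2\frac1A\,E[\|\tilde x_s-x^*\|^2]$, the first piece of $d_2$. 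This is exactly where the requirement $A\ge 2R_xB_G^4L_F^2$ and the shrinking factor $1-2R_xB_G^4L_F^2\frac1A$ in the step-size bound originate, since the bias eats into the primal contraction.

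Finally I would collect the coefficients of $A_k$, $B_k$, and $E[\|\tilde x_s-x^*\|^2]$ in $aE[A_{k+1}-A_k]+bE[B_{k+1}-B_k]$. The $B_k$ coefficient becomes $\le-\eta\lambda b$ precisely when the quadratic-in-$\eta$ contribution $a\eta^2+b\lambda^2n\eta^2$ coming from $E\|v_k\|^2$ is dominated by the dual contraction, which is the origin of the ceiling $\frac ab\le\frac{(1-n\lambda\eta)\lambda}{\eta}$ and of the step-size bound; the $A_k$ coefficient becomes $\le-\eta\lambda a$ once the lower bound $\frac ab\ge\frac{4(B_F^2L_G^2+B_G^4L_F^2)}{1-2R_xB_G^4L_F^2\frac1A}$ holds, which balances the residual $A_k$ terms left from the dual cross term and the variance bound against the primal contraction. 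Reading off the leftover multiplier of $E[\|\tilde x_s-x^*\|^2]$ as $d_2$ then yields $C_{k+1}-C_k\le-\eta\lambda C_k+d_2\,E[\|\tilde x_s-x^*\|^2]$. The hard part is juggling the bias (through $R_x$) and the primal--dual coupling simultaneously, so that both coefficients land on the correct side within a single admissible window for $a/b$ and $\eta$.
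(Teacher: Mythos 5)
Your primal recursion, the treatment of the bias term via Lemma \ref{SCDF:LemmBoundSVRGEstimateFullGradientF} together with the diameter bound $\lambda R_x$, and the resulting origin of the condition $A\ge 2R_xB_G^4L_F^2$ all match the paper's Lemmas \ref{BoundLemmaA} and \ref{LemmaBoundA1}. The gap is in your dual recursion. The paper does not bound the cross term $-2\lambda\eta\langle p_k,\beta_i^k-\beta_i^*\rangle$ by Young's inequality; it writes $\beta_i^{k+1}-\beta_i^*=(1-n\lambda\eta)(\beta_i^k-\beta_i^*)+n\lambda\eta(-p_k)$ and uses the exact identity $\|a x+(1-a)y\|^2=a\|x\|^2+(1-a)\|y\|^2-a(1-a)\|x-y\|^2$, which yields
\begin{align*}
E[B_{k+1}]-E[B_k]=-\lambda\eta E[B_k]+\lambda\eta E\|p_k\|^2-(1-n\lambda\eta)\lambda\eta\,E\|v_k\|^2 .
\end{align*}
The negative term $-(1-n\lambda\eta)\lambda\eta E\|v_k\|^2$ is essential: in the combination $aE[A_{k+1}-A_k]+bE[B_{k+1}-B_k]$ it is the only thing available to cancel the $+a\eta^2E\|v_k\|^2$ coming from the primal step, and the condition $a/b\le(1-n\lambda\eta)\lambda/\eta$ is exactly the statement that the combined coefficient $d_3=a\eta^2-b(1-n\lambda\eta)\lambda\eta$ is nonpositive, after which the whole $\|v_k\|^2$ term is simply discarded. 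Corollary \ref{CorollarySVRGGradient} is never invoked in this proof.

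Your version replaces that exact expansion by Young with unit parameter, which produces $-\lambda\eta E[B_k]+\lambda\eta E\|p_k\|^2+n\lambda^2\eta^2E\|v_k\|^2$ --- i.e.\ you have thrown away an additional $-\lambda\eta E\|v_k\|^2$. The $\|v_k\|^2$ coefficient in the combined potential is then $a\eta^2+bn\lambda^2\eta^2>0$, and your plan of expanding it via Corollary \ref{CorollarySVRGGradient} re-injects a positive multiple of $E[B_k]$ (and, through $\|x_k-\tilde x_s\|^2\le 2A_k+2\|\tilde x_s-x^*\|^2$, of $E[A_k]$ and the snapshot term). The resulting $B_k$ coefficient is $-b\lambda\eta+(a\eta^2+bn\lambda^2\eta^2)$, which is strictly greater than $-b\lambda\eta$ for any $a,b>0$; requiring $a/b\le(1-n\lambda\eta)\lambda/\eta$ only makes it nonpositive, so you get at best $C_{k+1}-C_k\le d_2'E[\|\tilde x_s-x^*\|^2]$ with an inflated $d_2'$, not the claimed $-\eta\lambda C_k$ contraction that Theorem \ref{SCDF:SVRG:TheoremSVRGMainConvergenceNonconvex} needs for its telescoping step. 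The fix is to use the paper's exact three-term expansion of the dual update (valid for $0\le n\lambda\eta\le1$) so that the $\|v_k\|^2$ term arrives with a negative sign and can be dropped outright.
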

	\begin{proof} By adding bound results of Lemma \ref{BoundLemmaA} and Lemma \ref{BoundLemmaB}, we have,
		\begin{align*}
		{C_{k + 1}} =& aE[{A_{k + 1}}] + bE\left[ {{B_{k + 1}}} \right]\\
		\le& a\left( {1 - \eta \lambda } \right)E[{A_k}] + b\left( {1 - \lambda \eta } \right)E[{B_k}]\\
		&+ \underbrace {\left( {2a\eta \lambda {R_x}B_G^4L_F^2\frac{1}{A} - a\eta \lambda  + 4b\lambda \eta \left( {B_F^2L_G^2 + B_G^4L_F^2} \right)\frac{1}{A}} \right)}_{{d_1}}E[{A_k}]\\
		&+ \underbrace {\left( {2a\eta \lambda {R_x}B_G^4L_F^2\frac{1}{A} + 4b\lambda \eta \left( {B_F^2L_G^2 + B_G^4L_F^2} \right)\left( {1 + \frac{1}{A}} \right)} \right)}_{{d_2}}E[{\| {{{\tilde x}_s} - {x^*}} \|^2}]\\
		&+ \underbrace {( {a{\eta ^2} - b(1 - n\lambda \eta )\lambda \eta } )}_{{d_3}}E[{\| {{(\partial {{\hat G}_k})^\mathsf{T}}\nabla {F_i}({{\hat G}_k}) + \beta _i^k} \|^2}].
		\end{align*}
		In order to obtain ${C_{k+1}} - {C_{k}} \le  - \eta \lambda {C_{k}} + {d_2}E[ {\left\| {{{\tilde x}_s} - {x^*}} \right\|^2} ]$, we can choose the step $\eta$ such that $d_1$ and $d_3$ are both negative, that is 
		\begin{align}\label{StepSVRGMainTheorem1}
		\frac{a}{b} \ge& \frac{{4\left( {B_F^2L_G^2 + B_G^4L_F^2} \right)}}{{1 - 2{R_x}B_G^4L_F^2\frac{1}{A}}}\\
		\label{StepSVRGMainTheorem2}
		\frac{a}{b} \le& \frac{{(1 - n\lambda \eta )\lambda }}{\eta },
		\end{align}
		In order to keep $1  - 2 {R_x}B_G^4L_F^2\frac{1}{A} \ge 0$ positive, the sample times $A$ should satisfy $A \ge 2{R_x}B_G^4L_F^2$. Based on conditions (\ref{StepSVRGMainTheorem1}) and (\ref{StepSVRGMainTheorem2}), the step $\eta$ can be bounded as 
		\begin{align*}
		\eta  \le \frac{{1 - 2{R_x}B_G^4L_F^2\frac{1}{A}}}{{4\left( {B_F^2L_G^2 + B_G^4L_F^2} \right) + n{\lambda ^2}\left( {1 - 2{R_x}B_G^4L_F^2\frac{1}{A}} \right)}}.
		\end{align*}
	\end{proof}
	\begin{lemma}\label{LemmaSVRGLemmaMainBoundCConvex}
		Suppose Assumption \ref{Assumption1}, \ref{Assumption2} and \ref{Assumption3} hold, and $P(x)$ is $\lambda$-strongly convex.
		In algorithm \ref{AlgorithmSDFCVRG1}, let let ${A_k} =\| {{x_k} - {x^*}} \|^2$, ${B_k} = \frac{1}{n}\sum\nolimits_{i = 1}^n {{{\| {\beta _i^k - \beta _i^*} \|}^2}} $ and ${C_k} = aE[ {{A_k}} ] + bE[ {{B_k}} ]$, $a, b\ge 0$. As long as the sample times $A$ and  the step satisfy 
		\begin{align*}
		A \ge \frac{{2{R_x}B_G^4L_F^2}}{d},\eta  \le \frac{{1 - d}}{{2{L_f} + \lambda n\left( {1 - d} \right)}},
		\end{align*}
		we can obtain	
		\begin{align*}
		{C_{k+1}} - {C_{k}} \le  - \eta \lambda {C_{k}} + {e_2}E[ {\| {{{\tilde x}_s} - {x^*}} \|^2} ],
		\end{align*} 
		where the parameters $a$, $b$ and $e_2$ satisfy
		\begin{align*}
		&e_2={2a\eta \lambda {R_x}B_G^4L_F^2\frac{1}{A} + 4b\lambda \eta \left( {B_F^2L_G^2 + B_G^4L_F^2} \right)\frac{1}{A}}\\
		&\frac{{2\left( {2B_F^2L_G^2 + B_G^4L_F^2} \right)\frac{1}{A} - {L_f}\lambda }}{{d - 2{R_x}B_G^4L_F^2\frac{1}{A}}} \le \frac{a}{b} \le \frac{{(1 - n\lambda \eta )\lambda }}{\eta }\\
		&d \le \frac{{\left( {2B_F^2L_G^2 + B_G^4L_F^2} \right)\frac{1}{A} + \lambda {L_f}{R_x}B_G^4L_F^2\frac{1}{A}}}{{\left( {2B_F^2L_G^2 + B_G^4L_F^2} \right)\frac{1}{A} + \lambda {L_f}}}.
		\end{align*}
	\end{lemma}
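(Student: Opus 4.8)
The plan is to mirror the non-convex argument of Lemma \ref{LemmaMainBoundC}: write the combined potential $C_{k+1}=aE[A_{k+1}]+bE[B_{k+1}]$, bound the primal and dual one-step increments separately, add them, and then choose $a,b,d,\eta,A$ so that every residual coefficient except that of $E[\|\tilde x_s-x^*\|^2]$ is non-positive. The one new ingredient is Assumption \ref{Assumption3}: wherever the non-convex proof used Corollary \ref{CorollarySVRGGradient}, I would instead use the convex variance bound of Lemma \ref{LemmaBoundSVRGestimateFullGradientConvex}, and wherever it bounded the cross term by strong convexity alone I would keep the full first-order term. Both changes introduce a $P(x_k)-P(x^*)$ contribution, and balancing its sign is the whole point of the convex analysis.

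First I would expand the primal update. Writing $g_k=(\partial\hat G_k)^\mathsf{T}\nabla F_i(\hat G_k)$, the identity $E[A_{k+1}]=E[A_k]-2\eta E[\langle g_k+\beta_i^k,\,x_k-x^*\rangle]+\eta^2 E[\|g_k+\beta_i^k\|^2]$ holds. In the cross term the estimator $g_k$ is \emph{biased} for $(\partial G(x_k))^\mathsf{T}\nabla F(G(x_k))$ because $\hat G_k$ is subsampled, so I would split $g_k$ into its unbiased target plus the bias, apply $\lambda$-strong convexity of $P$ in the form $\langle\nabla P(x_k),x_k-x^*\rangle\ge P(x_k)-P(x^*)+\tfrac{\lambda}{2}\|x_k-x^*\|^2$, and control the bias by Lemma \ref{LemmBoundSVRGEstimateFullGradientFConvex} together with a Young-type splitting (Lemma \ref{LemmaInEquation}) whose free constant becomes the slack $d$. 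This yields a \emph{negative} multiple of $P(x_k)-P(x^*)$. The quadratic term I would first split as $\|g_k+\beta_i^k\|^2\le 2\|g_k+\beta_i^*\|^2+2\|\beta_i^k-\beta_i^*\|^2$ (Lemma \ref{RandomVariable2}) and then bound $\|g_k+\beta_i^*\|^2$ by Lemma \ref{LemmaBoundSVRGestimateFullGradientConvex}; this contributes the \emph{positive} pair $4L_f(P(x_k)-P(x^*))-2L_F\lambda\|x_k-x^*\|^2$ plus the $A_k$, $\tilde x_s$ and $B_k$ terms.

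Next I would expand the dual update $\beta_i^{k+1}=\beta_i^k-\lambda n\eta(g_k+\beta_i^k)$ the same way, using $E[\beta_i^*]=\lambda x^*$, to bound $E[B_{k+1}]$. Adding $aE[A_{k+1}]+bE[B_{k+1}]$ produces a leading $-\eta\lambda C_k$ together with residual coefficients multiplying $E[A_k]$, $E[\|\tilde x_s-x^*\|^2]$, $E[\|g_k+\beta_i^k\|^2]$ and $(P(x_k)-P(x^*))$. Forcing the $(P(x_k)-P(x^*))$-coefficient non-positive is the key balancing of its opposite-signed contributions and fixes the admissible range of $d$ and the step bound $\eta\le(1-d)/(2L_f+\lambda n(1-d))$; forcing the $E[A_k]$-coefficient non-positive gives the lower bound on $a/b$ (whose denominator $d-2R_xB_G^4L_F^2/A$ must stay positive, hence $A\ge 2R_xB_G^4L_F^2/d$); forcing the gradient-norm coefficient non-positive gives the upper bound $a/b\le(1-n\lambda\eta)\lambda/\eta$. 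What survives is exactly $C_{k+1}-C_k\le-\eta\lambda C_k+e_2E[\|\tilde x_s-x^*\|^2]$ with the stated $e_2$.

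The main obstacle is this sign juggling: unlike Lemma \ref{LemmaMainBoundC}, I cannot simply discard the quadratic contribution, because under Assumption \ref{Assumption3} it carries the useful but positive $4L_f(P(x_k)-P(x^*))$ mass, which must be dominated by the negative $P$-term extracted from the primal cross product (and, through strong convexity, partly recycled as extra $A_k$-contraction). The slack parameter $d$ is the single knob that makes these competing requirements simultaneously satisfiable while still permitting a larger step than the non-convex bound; once arranged, the per-step recursion feeds into the outer-epoch telescoping exactly as in Theorem \ref{SCDF:SVRG:TheoremSVRGMainConvergenceConvex} to give the geometric rate.
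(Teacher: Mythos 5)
Your proposal follows essentially the same route as the paper: the paper's proof simply adds the primal one-step bound (Lemma \ref{LemmaSVRGBoundLemmaAConvex}, where strong convexity plus the slack $d$ produce the negative $-2\eta(1-d)(P(x_k)-P(x^*))$ term) to the dual one-step bound (Lemma \ref{LemmaSAGABoundLemmaBConvex}, where $\|(\partial\hat G_k)^\mathsf{T}\nabla F_i(\hat G_k)+\beta_i^*\|^2$ is bounded via Lemma \ref{LemmaBoundSVRGestimateFullGradientConvex} and contributes the positive $4L_f\lambda\eta(P(x_k)-P(x^*))$ mass), and then forces the coefficients $e_1,e_3,e_4$ non-positive exactly as you describe. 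The only cosmetic difference is that in the paper the $4L_f(P(x_k)-P(x^*))$ term enters through the quadratic term native to the \emph{dual} recursion rather than through an expansion of the primal quadratic term, which is instead kept intact and cancelled against the dual's $-(1-n\lambda\eta)\lambda\eta$ term via the condition $a/b\le(1-n\lambda\eta)\lambda/\eta$; this does not change the resulting conditions.
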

	\begin{proof} By adding bound results of Lemma \ref{LemmaSVRGBoundLemmaAConvex} and Lemma \ref{LemmaSAGABoundLemmaBConvex}, we have,
		\begin{align*}
		{C_{k + 1}} =& aE[{A_{k + 1}}] + bE\left[ {{B_{k + 1}}} \right]\\
		\le& a\left( {1 - \eta \lambda } \right)E[{A_k}] + b\left( {1 - \lambda \eta } \right)E[{B_k}]\\
		&+ \underbrace {\left( {2a\eta \lambda {R_x}B_G^4L_F^2\frac{1}{A} + 4b\lambda \eta \left( {2B_F^2L_G^2 + B_G^4L_F^2} \right)\frac{1}{A} - 2b\eta \lambda^2 {L_f}  - a\eta \lambda d} \right)}_{{e_1}}E[{A_k}]\\
		&+ \underbrace {\left( {2a\eta \lambda {R_x}B_G^4L_F^2\frac{1}{A} + 4b\lambda \eta \left( {B_F^2L_G^2 + B_G^4L_F^2} \right)\frac{1}{A}} \right)}_{{e_2}}E[{\left\| {{{\tilde x}_s} - {x^*}} \right\|^2}]\\
		&\underbrace {\left( { - 2a\eta (1 - d) + 4b\eta \lambda {L_f}} \right)}_{{e_3}}\left( {P(x) - P({x^*})} \right)\\
		&+ \underbrace {\left( {a{\eta ^2} - b(1 - n\lambda \eta )\lambda \eta } \right)}_{{e_4}}E[{\left\| {{{(\partial {{\hat G}_k})}^T}\nabla {F_i}({{\hat G}_k}) + \beta _i^k} \right\|^2}],
		\end{align*}
		In order to obtain ${C_{k+1}} - {C_{k}} \le  - \eta \lambda {C_{k}} + {d_2}E[ {\left\| {{{\tilde x}_s} - {x^*}} \right\|^2} ]$, we can choose the step $\eta$ such that $e_1$, $e_2$, $e_3$ and $e_4$ are all negative, that is 
		\begin{align}\label{StepSVRGMainTheorem1Convex}
		\frac{{\lambda {L_f}}}{{(1 - d)}} \le \frac{{2\left( {2B_F^2L_G^2 + B_G^4L_F^2} \right)\frac{1}{A} - {L_f}\lambda }}{{d - 2{R_x}B_G^4L_F^2\frac{1}{A}}} \le \frac{a}{b} \le \frac{{(1 - n\lambda \eta )\lambda }}{\eta }
		\end{align}
		In order to keep ${d - 2{R_x}B_G^4L_F^2\frac{1}{A}}$ positive, the sample times $A$ should satisfy $A \ge {{2{R_x}B_G^4L_F^2} \mathord{\left/
				{\vphantom {{2{R_x}B_G^4L_F^2} d}} \right.
				\kern-\nulldelimiterspace} d}$. Based on conditions (\ref{StepSVRGMainTheorem1}), the step $\eta$ can be bounded as 
		\begin{align*}
		\eta  \le \frac{{1 - d}}{{2{L_f} + \lambda n\left( {1 - d} \right)}}.
		\end{align*}
	\end{proof}
	\begin{lemma}\label{BoundLemmaA}
		Suppose Assumption  \ref{Assumption1} and \ref{Assumption2} hold, in algorithm \ref{AlgorithmSDFCVRG1}, for the intermediated iteration at $x_k$, let ${A_k} =\| {{x_k} - {x^*}} \|^2$, define $\lambda {R_x} = \max _x \{ \| {{x^*} - x} \|^2:F(G(x)) \le F(G({x_0})) \}$, the bound of ${A_k} $ satisfies,
		\begin{align*}
		E[ {{A_{k+1}}} ] \le&  E[ {{A_{k }}} ] +2\eta \lambda R_xB_G^4L_F^2\frac{1}{A} E[ {{A_{k }}} ]+2\eta  \lambda R_xB_G^4L_F^2\frac{1}{A}E[ {\| { {{\tilde x}_s}}-x^* \|^2} ]\\
		&-2\eta  \lambda E[ {{A_{k }}} ]  + {\eta ^2}E[ {\| {{( {\partial {{\hat G}_{k}}} )^\mathsf{T}}\nabla {F_i}( {{{\hat G}_{k }}} ) + \beta _i^{k}} \|^2} ],
		\end{align*}
		where $x^*$ is the optimal solution.
	\end{lemma}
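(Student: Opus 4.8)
The plan is to start from the update $x_{k+1} = x_k - \eta v_k$ with $v_k := (\partial\hat G_k)^\mathsf{T}\nabla F_i(\hat G_k) + \beta_i^k$, expand the squared distance, and take conditional expectation over the step-$k$ randomness. Writing $A_{k+1}=\|x_{k+1}-x^*\|^2$, I would obtain
\begin{align*}
E[A_{k+1}] = E[A_k] - 2\eta E[\langle v_k, x_k - x^*\rangle] + \eta^2 E[\|v_k\|^2].
\end{align*}
The last term is exactly the quantity appearing in the statement, so I would leave it untouched; it is bounded separately through Lemma \ref{LemmaBoundSVRGestimateFullGradient} and Corollary \ref{CorollarySVRGGradient}. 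All the real work is in the cross term.

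To handle the cross term, I would first take expectation over the index $i\in[n]$, which is independent of the mini-batch $\mathcal A_k$ defining $\hat G_k,\partial\hat G_k$. Using $E_i[(\partial\hat G_k)^\mathsf{T}\nabla F_i(\hat G_k)] = (\partial\hat G_k)^\mathsf{T}\nabla F(\hat G_k)$ and $E_i[\beta_i^k] = \frac1n\sum_i\beta_i^k$, together with the primal--dual invariant $\frac1n\sum_i\beta_i^k = \lambda x_k = \nabla R(x_k)$ that the coupled $\beta$/$x$ updates maintain, the expected direction becomes $(\partial\hat G_k)^\mathsf{T}\nabla F(\hat G_k) + \nabla R(x_k)$. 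Adding and subtracting the exact full gradient $(\partial G(x_k))^\mathsf{T}\nabla F(G(x_k))$, I split this into $\nabla P(x_k)$ plus a bias term, which I decompose further as $[(\partial\hat G_k)^\mathsf{T}\nabla F(\hat G_k) - (\partial\hat G_k)^\mathsf{T}\nabla F(G(x_k))] + [(\partial\hat G_k - \partial G(x_k))^\mathsf{T}\nabla F(G(x_k))]$.

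The $\nabla P(x_k)$ contribution, paired with $x_k-x^*$, is controlled by $\lambda$-strong convexity of $P$ and $\nabla P(x^*)=0$, giving $-2\eta\langle\nabla P(x_k), x_k-x^*\rangle \le -2\eta\lambda A_k$. The second bias piece has zero conditional $\mathcal A_k$-mean, since $E_{\mathcal A}[\partial\hat G_k]=\partial G(x_k)$, so its inner product with the $\mathcal A_k$-fixed vector $x_k-x^*$ vanishes in expectation. The first bias piece is exactly the quantity estimated in Lemma \ref{SCDF:LemmBoundSVRGEstimateFullGradientF}; I would bound its inner product with $x_k-x^*$ by Cauchy--Schwarz and Young's inequality (Lemma \ref{LemmaInEquation}), use the sublevel-set bound $\|x_k-x^*\|^2\le\lambda R_x$ to convert one factor into the constant $\lambda R_x$, and finally substitute the Lemma \ref{SCDF:LemmBoundSVRGEstimateFullGradientF} estimate $B_G^4L_F^2\frac1A(E[A_k]+E[\|\tilde x_s-x^*\|^2])$. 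This produces the two stated terms $2\eta\lambda R_x B_G^4L_F^2\frac1A E[A_k]$ and $2\eta\lambda R_x B_G^4 L_F^2\frac1A E[\|\tilde x_s-x^*\|^2]$.

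I expect the main obstacle to be the biasedness of the estimator: because $E[v_k]\ne\nabla P(x_k)$, the standard one-step SDCA/SVRG argument does not apply verbatim. The care needed is (i) to justify the invariant $\frac1n\sum_i\beta_i^k=\lambda x_k$ so the $\beta$-average reconstructs $\nabla R$, and (ii) to split the estimation error so that only the part covered by Lemma \ref{SCDF:LemmBoundSVRGEstimateFullGradientF} survives while the Jacobian-only fluctuation cancels in mean. The most delicate bookkeeping is calibrating the free parameter in Young's inequality against $\lambda R_x$ so that the residual $\|x_k-x^*\|^2$ term is absorbed into the strong-convexity term $-2\eta\lambda A_k$ rather than leaving a spurious additive constant.
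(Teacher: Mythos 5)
Your proposal is correct and follows essentially the same route as the paper: expand the squared update, leave the $\eta^2$ term alone, and split the cross term into the function-argument bias piece (bounded via Lemma \ref{SCDF:LemmBoundSVRGEstimateFullGradientF}, Young's inequality, and the sublevel-set constant $\lambda R_x$) plus a full-gradient piece handled by $\lambda$-strong convexity of $P$, with the Jacobian fluctuation eliminated by the unbiasedness $E[\partial\hat G_k]=\partial G(x_k)$ and the invariant $\frac{1}{n}\sum_i\beta_i^k=\lambda x_k$. The paper packages your last two observations into a single term (its $(A12)$) in Lemma \ref{LemmaBoundA1}, but the decomposition and the bounds used are the same.
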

	\begin{proof}
		Based on  the update of $x_k$,we have 
		\begin{align*}
		{A_{k+1}}=& \| {{x_{k}} - \eta ( {{( {\partial {{\hat G}_{k}}} )^\mathsf{T}}\nabla {F_i}( {{{\hat G}_{k}}} ) + \beta _i^{k }} ) - {x^*}} \|^2\\
		=& \| {{x_{k}} - {x^*}} \|^2 - 2\eta \langle {{( {\partial {{\hat G}_{k }}} )^\mathsf{T}}\nabla {F_i}( {{{\hat G}_{k}}} ) + \beta _i^{k },{x_{k}} - {x^*}} \rangle  + \| {\eta ( {{( {\partial {{\hat G}_{k }}} )^\mathsf{T}}\nabla {F_i}( {{{\hat G}_{k}}} ) + \beta _i^{k }} )} \|^2.
		\end{align*}
		Taking expectation with respect to $i,j$, we get,
		\begin{align*}
		E[ {{A_{k+1}}} ] =& E[ {{A_{k}}} ]\underbrace { - 2\eta E[ {\langle {{( {\partial {{\hat G}_{k }}} )^\mathsf{T}}\nabla {F_i}( {{{\hat G}_{k }}} ) + \beta _i^{k},{x_{k}} - {x^*}} \rangle } ]}_{A1} + {\eta ^2}E[ {\| {{( {\partial {{\hat G}_{k }}} )^\mathsf{T}}\nabla {F_i}( {{{\hat G}_{k}}} ) + \beta _i^{k }} \|^2} ]\\
		\le& E[ {{A_{k }}} ] +2\eta \lambda R_xB_G^4L_F^2\frac{1}{A}E[ {\| {{x_{k}} - {x^*}} \|^2} ]+2\eta  \lambda R_xB_G^4L_F^2\frac{1}{A}E[ {\| { {{\tilde x}_s}}-x^* \|^2} ]\\
		&-2\eta  \lambda\| {{x_{k}} - {x^*}} \|^2 + + {\eta ^2}E[ {\| {{( {\partial {{\hat G}_{k}}} )^\mathsf{T}}\nabla {F_i}( {{{\hat G}_{k }}} ) + \beta _i^{k}} \|^2} ],
		\end{align*}
		where $A1$ follows from Lemma \ref{LemmaBoundA1}.
	\end{proof}
	Based on above Lemma, we can also get another form upper bound.
	\begin{lemma}\label{LemmaSVRGBoundLemmaAConvex}
		Suppose Assumption  \ref{Assumption1} and \ref{Assumption2} hold, in algorithm \ref{AlgorithmSDFCVRG1}, for the intermediated iteration at $x_k$, let ${A_k} =\| {{x_k} - {x^*}} \|^2$, define $\lambda {R_x} = \max _x \{ \| {{x^*} - x} \|^2:F(G(x)) \le F(G({x_0})) \}$, the bound of ${A_k} $ satisfies,
		\begin{align*}
		E[ {{A_{k+1}}} ] \le&  E[ {{A_{k }}} ] +2\eta \lambda R_xB_G^4L_F^2\frac{1}{A} E[ {{A_{k }}} ]+2\eta  \lambda R_xB_G^4L_F^2\frac{1}{A}E[ {\| { {{\tilde x}_s}}-x^* \|^2} ]\\
		&- 2\eta (1 - d)(P({x_k}) - P({x^*})) - \eta \lambda (1 + d){\left\| {{x_k} - {x^*}} \right\|^2} + {\eta ^2}E[ {\| {{( {\partial {{\hat G}_{k}}} )^\mathsf{T}}\nabla {F_i}( {{{\hat G}_{k }}} ) + \beta _i^{k}} \|^2} ],
		\end{align*}
		where $x^*$ is the optimal solution, and $1>d\ge0$.
	\end{lemma}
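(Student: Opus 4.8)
The plan is to follow the proof of the preceding Lemma~\ref{BoundLemmaA} line by line and to change only the single step in which the ``exact-gradient'' part of the inner product is estimated from below. Writing $\hat g_k=(\partial\hat G_k)^\mathsf{T}\nabla {F_i}(\hat G_k)+\beta_i^k$ for the search direction, the update $x_{k+1}=x_k-\eta\hat g_k$ gives
\begin{align*}
A_{k+1}=\|x_k-x^*\|^2-2\eta\langle \hat g_k,x_k-x^*\rangle+\eta^2\|\hat g_k\|^2 .
\end{align*}
Taking the conditional expectation over the independently drawn index $i\in[n]$ and mini-batch $\mathcal A_k$, I would decompose the direction by adding and subtracting $(\partial\hat G_k)^\mathsf{T}\nabla F_i(G(x_k))$. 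The remainder $(\partial\hat G_k)^\mathsf{T}\nabla F_i(G(x_k))+\beta_i^k$ is \emph{unbiased} for $\nabla P(x_k)$: since $\mathcal A_k$ and $i$ are independent, $E[(\partial\hat G_k)^\mathsf{T}\nabla F_i(G(x_k))]=(\partial G(x_k))^\mathsf{T}\nabla f(G(x_k))$, and the duality-free relation $\frac1n\sum_i\beta_i^k=\nabla R(x_k)=\lambda x_k$ maintained by the updates (cf.~(\ref{EquationGradientR})) supplies the remaining $\nabla R(x_k)$. Hence $E[\hat g_k]=\nabla P(x_k)+\delta_k$, where $\delta_k:=E[(\partial\hat G_k)^\mathsf{T}\nabla F_i(\hat G_k)-(\partial\hat G_k)^\mathsf{T}\nabla F_i(G(x_k))]$ is exactly the bias coming from estimating the inner function.

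For the bias contribution $-2\eta\langle\delta_k,x_k-x^*\rangle$ I would argue exactly as in Lemma~\ref{BoundLemmaA}: Cauchy--Schwarz followed by Young's inequality, $E\|\delta_k\|^2$ bounded through Lemma~\ref{SCDF:LemmBoundSVRGEstimateFullGradientF} (which is why the coefficient is the pure $B_G^4L_F^2$ rather than the full $B_F^2L_G^2+B_G^4L_F^2$), and the diameter estimate $\|x_k-x^*\|^2\le\lambda R_x$, valid by the definition of $R_x$ together with the descent property $F(G(x_k))\le F(G(x_0))$ of the iterates. This reproduces verbatim the two terms $2\eta\lambda R_x B_G^4L_F^2\frac1A E[A_k]$ and $2\eta\lambda R_x B_G^4L_F^2\frac1A E[\|\tilde x_s-x^*\|^2]$, while the square term $\eta^2\|\hat g_k\|^2$ is simply carried along, to be absorbed later via Lemma~\ref{LemmaBoundSVRGestimateFullGradient} and Corollary~\ref{CorollarySVRGGradient}.

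The one new ingredient, and the reason this lemma differs from Lemma~\ref{BoundLemmaA}, is the lower bound on $\langle\nabla P(x_k),x_k-x^*\rangle$. Instead of plain strong monotonicity $\langle\nabla P(x_k),x_k-x^*\rangle\ge\lambda\|x_k-x^*\|^2$ (which produced the $-2\eta\lambda A_k$ of the non-convex version), I would take a weighted combination, with weights $2(1-d)$ and $2d$, of the sharp strong-convexity inequality $\langle\nabla P(x_k),x_k-x^*\rangle\ge P(x_k)-P(x^*)+\tfrac\lambda2\|x_k-x^*\|^2$ (using $\nabla P(x^*)=0$) and of the monotonicity inequality. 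This yields
\begin{align*}
2\langle\nabla P(x_k),x_k-x^*\rangle\ge 2(1-d)\big(P(x_k)-P(x^*)\big)+(1+d)\lambda\|x_k-x^*\|^2 ,
\end{align*}
and multiplying by $-\eta$ produces precisely the terms $-2\eta(1-d)(P(x_k)-P(x^*))$ and $-\eta\lambda(1+d)\|x_k-x^*\|^2$ in the claim. Collecting the three contributions gives the stated inequality.

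The step I expect to be the main obstacle is the dual bookkeeping in the first paragraph: one must be sure that the average $\frac1n\sum_i\beta_i^k$ really equals $\nabla R(x_k)$ at every iterate, so that the expectation of the search direction collapses to $\nabla P(x_k)$ plus the pure inner-function bias $\delta_k$. Were this invariant to fail, an extra, uncontrolled term $\langle\frac1n\sum_i\beta_i^k-\nabla R(x_k),x_k-x^*\rangle$ would survive and break the recursion. Once the reduction $E[\hat g_k]=\nabla P(x_k)+\delta_k$ is secured, the convexity split through the parameter $d$ is elementary and the biased-gradient terms are inherited unchanged from the non-convex analysis.
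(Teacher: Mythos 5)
Your proposal is correct and follows essentially the same route as the paper: the paper omits an explicit proof of this lemma (deferring to Lemma~\ref{BoundLemmaA}), but the intended argument is exactly the decomposition you describe, i.e.\ the one carried out in Lemma~\ref{LemmaBoundA1} with the strong-monotonicity step replaced by the convex-case split, precisely as in the parallel SAGA statement (Lemma~\ref{LemmaBoundSAGAEstimatedGradientWithvarianceConvex}). Your packaging of the split as a $2(1-d)/2d$ convex combination of the two strong-convexity inequalities is algebraically identical to the paper's device of converting $d\bigl(P(x_k)-P(x^*)\bigr)\ge\tfrac{\lambda d}{2}\|x_k-x^*\|^2$, and your flagged invariant $\frac{1}{n}\sum_i\beta_i^k=\lambda x_k$ is indeed the relation the paper assumes throughout.
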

	\begin{lemma}\label{BoundLemmaB}
		Suppose Assumption \ref{Assumption2} holds, in algorithm \ref{AlgorithmSDFCVRG1}, for the intermediated iteration at $\beta^k$, let ${B_k} = \frac{1}{n}\sum\nolimits_{i = 1}^n {{{\| {\beta _i^k - \beta _i^*} \|}^2}} $, the bound of $B_k$ satisfy,
		\begin{align*}
		E[ {{B_{k+1}}} ] \le&  E[ {{B_{k}}} ]  - \lambda \eta E[ {{B_{k}}} ] + 4\lambda \eta ( {B_F^2L_G^2 + B_G^4L_F^2} )\frac{1}{A}E[\| {{x_k} -x^* \|^2}] +  4\lambda \eta( {B_F^2L_G^2 + B_G^4L_F^2} )\left(1+\frac{1}{A}\right)E[\| {\tilde x}_s - x^* \|^2]\\
		&- ( {1 - n\lambda {\eta }} )\lambda {\eta}E[ {\| {{( {\partial {{\hat G}_{k }}} )^\mathsf{T}}\nabla {F_i}( {{{\hat G}_{k}}} ) + \beta _i^{k}} \|^2} ],
		\end{align*}
		where $B_F$, $L_F$, $B_G$ and $L_G$ are the parameters in (\ref{InequationAssumptionF1}) - (\ref{InequationAssumptionG3}).
	\end{lemma}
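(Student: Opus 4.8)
The plan is to track the one-step change of $B_k = \frac{1}{n}\sum_{i=1}^n \|\beta_i^k - \beta_i^*\|^2$ under the dual update of Algorithm \ref{AlgorithmSDFCVRG1}, where at iteration $k$ a single index $i$ is drawn uniformly from $[n]$ and only that coordinate moves, $\beta_i^{k+1} = \beta_i^k - \lambda n\eta\,u_i$ with $u_i := (\partial\hat G_k)^\mathsf{T}\nabla F_i(\hat G_k) + \beta_i^k$. Since only the sampled coordinate changes, conditioning on the mini-batch $\mathcal{A}_k$ and the history and averaging over $i$ gives
\begin{align*}
E_i[B_{k+1}] = B_k + \frac{1}{n^2}\sum_{i=1}^n\big(\|\beta_i^k - \lambda n\eta\,u_i - \beta_i^*\|^2 - \|\beta_i^k - \beta_i^*\|^2\big).
\end{align*}
First I would expand each squared term as $-2\lambda n\eta\langle u_i,\beta_i^k-\beta_i^*\rangle + \lambda^2 n^2\eta^2\|u_i\|^2$, so that dividing by $n^2$ the increment collapses to $-\frac{2\lambda\eta}{n}\sum_i\langle u_i,\beta_i^k-\beta_i^*\rangle + n\lambda^2\eta^2 E_i[\|u_i\|^2]$.

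The key step is an algebraic identity exploiting the optimality relation $\beta_i^* = -(\partial G(x^*))^\mathsf{T}\nabla F_i(G(x^*))$ used earlier to justify Lemma \ref{LemmaBoundSVRGestimateFullGradient}. Writing $g_i := (\partial\hat G_k)^\mathsf{T}\nabla F_i(\hat G_k) + \beta_i^*$, one observes that $u_i - (\beta_i^k - \beta_i^*) = g_i$, hence
\begin{align*}
\|g_i\|^2 = \|u_i\|^2 - 2\langle u_i, \beta_i^k-\beta_i^*\rangle + \|\beta_i^k - \beta_i^*\|^2.
\end{align*}
I would use this to eliminate the cross term: solving for $-\frac{2}{n}\sum_i\langle u_i,\beta_i^k-\beta_i^*\rangle$ and substituting back, while collecting the $E_i[\|u_i\|^2]$ contributions, yields the exact identity
\begin{align*}
E_i[B_{k+1}] = B_k - \lambda\eta B_k + \lambda\eta\,E_i[\|g_i\|^2] - (1-n\lambda\eta)\lambda\eta\,E_i[\|u_i\|^2].
\end{align*}
Note that the coefficient $-(1-n\lambda\eta)$ on the $\|u_i\|^2$ term arises precisely from combining the $-\lambda\eta$ produced by the identity with the $+n\lambda^2\eta^2$ coming from the squared-step term, and that the per-coordinate bookkeeping leaves only a single factor $-\lambda\eta B_k$.

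Finally I would take the full expectation over the mini-batch and the history and invoke Lemma \ref{LemmaBoundSVRGestimateFullGradient} to bound $E[\|g_i\|^2] = E[\|(\partial\hat G_k)^\mathsf{T}\nabla F_i(\hat G_k) + \beta_i^*\|^2]$ by $4(B_F^2L_G^2+B_G^4L_F^2)\frac{1}{A}E[\|x_k-x^*\|^2] + 4(B_F^2L_G^2+B_G^4L_F^2)(1+\frac{1}{A})E[\|\tilde x_s-x^*\|^2]$; multiplying by $\lambda\eta$ reproduces exactly the two variance terms in the claim, while the residual is already in the required form $-(1-n\lambda\eta)\lambda\eta\,E[\|(\partial\hat G_k)^\mathsf{T}\nabla F_i(\hat G_k)+\beta_i^k\|^2]$. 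The main obstacle is the careful accounting of the single-coordinate update (getting the $1/n$ versus $1/n^2$ factors right) together with spotting the completion-of-square identity that converts the cross term into the quantity already controlled by the previous lemma; once these are in place the one-step recursion is an exact equality, and the only inequality entering the final bound is the one borrowed from Lemma \ref{LemmaBoundSVRGestimateFullGradient}.
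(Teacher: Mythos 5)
Your proposal is correct and follows essentially the same route as the paper: a single-coordinate expansion of the dual update, an algebraic identity that converts the cross term into $\lambda\eta E[\|(\partial\hat G_k)^\mathsf{T}\nabla F_i(\hat G_k)+\beta_i^*\|^2] - \lambda\eta B_k - (1-n\lambda\eta)\lambda\eta E[\|u_i\|^2]$, and an appeal to Lemma \ref{LemmaBoundSVRGestimateFullGradient} for the only inequality. The paper packages the same algebra via the identity $\|ax+(1-a)y\|^2=a\|x\|^2+(1-a)\|y\|^2-a(1-a)\|x-y\|^2$ with $a=1-n\lambda\eta$, whereas you expand the square directly and complete it; the two are equivalent, and your bookkeeping of the $1/n$ versus $1/n^2$ factors is if anything more explicit than the paper's.
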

	\begin{proof} Based on the definition of $B_k$, and the update of $\beta$, we have
		\begin{align*}
		&{B_{k+1}} - {B_{k}}\\
		=& \frac{1}{n}\sum\limits_{i = 1}^n {\| {( {\beta _i^{k} - n\lambda \eta ( {{( {\partial {{\hat G}_{k }}} )^\mathsf{T}}\nabla {F_i}( {{{\hat G}_{k }}} ) + \beta _i^{k}} )} ) - \beta _i^*} \|^2}  - \frac{1}{n}\sum\limits_{i = 1}^n {\| {\beta _i^{k} - \beta _i^*} \|^2} \\
		=& \frac{1}{n}\| {( {\beta _i^{k} - n\lambda \eta ( {{( {\partial {{\hat G}_{k}}} )^\mathsf{T}}\nabla {F_i}( {{{\hat G}_{k}}} ) + \beta _i^{k}} )} ) - \beta _i^*} \|^2 - \frac{1}{n}\| {\beta _i^{k} - \beta _i^*} \|^2\\
		=& \frac{1}{n}\| {( {\beta _i^{k} - n\lambda \eta ( {{( {\partial {{\hat G}_{k}}} )^\mathsf{T}}\nabla {F_i}( {{{\hat G}_{k}}} ) + \beta _i^{k} - \beta _i^* + \beta _i^*} )} ) - \beta _i^*} \|^2 - \frac{1}{n}\| {\beta _i^{k} - \beta _i^*} \|^2\\
		=& \frac{1}{n}\| {( {1 - n\lambda \eta } )( {\beta _i^{k} - \beta _i^*} ) + n\lambda \eta ( { - {( {\partial {{\hat G}_{k}}} )^\mathsf{T}}\nabla {F_i}( {{{\hat G}_{k}}} ) - \beta _i^*} )} \|^2 - \frac{1}{n}\| {\beta _i^{k} - \beta _i^*} \|^2\\
		=& \frac{1}{n}( {1 - n\lambda \eta } )\| {\beta _i^{k} - {\beta ^*}} \|^2 + \lambda \eta \| { - {( {\partial {{\hat G}_{k}}} )^\mathsf{T}}\nabla {F_i}( {{{\hat G}_{k }}} ) - \beta _i^*} \|^2 - ( {1 - n\lambda \eta } )\lambda \eta \| {{( {\partial {{\hat G}_{k}}} )^\mathsf{T}}\nabla {F_i}( {{{\hat G}_{k}}} ) + \beta _i^{k}} \|^2\\& - \frac{1}{n}\| {\beta _i^{k} - \beta _i^*} \|^2\\
		=&  - \lambda \eta \| {\beta _i^{k} - {\beta ^*}} \|^2 + \lambda \eta \| { - {( {\partial {{\hat G}_{k }}} )^\mathsf{T}}\nabla {F_i}( {{{\hat G}_{k - 1}}} ) - \beta _i^*} \|^2 - ( {1 - n\lambda \eta } )\lambda \eta \| {{( {\partial {{\hat G}_{k}}} )^\mathsf{T}}\nabla {F_i}( {{{\hat G}_{k}}} ) + \beta _i^{k}} \|^2.
		\end{align*}
		Taking expectation with respect to $i$ on both sides, we have
		\begin{align*}
		&E[ {{B_{k+1}}} ] - E[ {{B_{k}}} ]\\
		=&  - \lambda \eta E[ {\| {\beta _i^{k} - {\beta ^*}} \|^2} ] + \lambda \eta \underbrace {E[ {\| {{( {\partial {{\hat G}_{k}}} )^\mathsf{T}}\nabla {F_i}( {{{\hat G}_{k}}} ) + \beta _i^*} \|^2} ]}_{B1} - ( {1 - n\lambda {\eta}} )\lambda {\eta}E[ {\| {{( {\partial {{\hat G}_{k}}} )^\mathsf{T}}\nabla {F_i}( {{{\hat G}_{k}}} ) + \beta _i^{k}} \|^2} ]\\
		\le&  - \lambda \eta E[ {{B_{k}}} ] + 4\lambda \eta \left( {B_F^2L_G^2 + B_G^4L_F^2} \right)\frac{1}{A}E[\| {{x_k} -x^* \|^2}] +  4\lambda \eta\left( {B_F^2L_G^2 + B_G^4L_F^2} \right)\left(1+\frac{1}{A}\right)E[\| {\tilde x}_s - x^* \|^2]\\
		&- ( {1 - n\lambda {\eta }} )\lambda {\eta}E[ {\| {{{( {\partial {{\hat G}_{k }}} )}^\mathsf{T}}\nabla {F_i}( {{{\hat G}_{k}}} ) + \beta _i^{k}} \|^2} ],
		\end{align*}
		where (B1) follows from Lemma \ref{LemmaBoundSVRGestimateFullGradient}.
	\end{proof}
	\begin{lemma}\label{LemmaSAGABoundLemmaBConvex}
		Suppose Assumption \ref{Assumption2} and \ref{Assumption3} hold, in algorithm \ref{AlgorithmSDFCVRG1}, for the intermediated iteration at $\beta^k$, let ${B_k} = \frac{1}{n}\sum\nolimits_{i = 1}^n \| \beta _i^k - \beta _i^* \|^2 $, the bound of $B_k$ satisfy,
		\begin{align*}
		E[ {{B_{k+1}}} ] \le&  - \lambda \eta E[ {{B_{k}}} ] + 4\lambda \eta\left(B_F^2L_G^2+B_G^4L_F^2\right)	\frac{1}{A}E[ {\| {{x_{k }} - x^*} \|^2} ]+4\lambda \eta\left(B_F^2L_G^2+B_G^4L_F^2\right)\frac{1}{A}E[ {\| {{{\tilde x}_s} - x^*} \|^2} ]\\
		&+ 4 \lambda \eta L_f( P(x_k) - P({x^*})) - 2 \lambda \eta L_F\|x_k - {x^*} \|^2 )- ( {1 - n\lambda {\eta }} )\lambda {\eta}E[ {\| {{( {\partial {{\hat G}_{k }}} )^\mathsf{T}}\nabla {F_i}( {{{\hat G}_{k}}} ) + \beta _i^{k}} \|^2} ],
		\end{align*}
		where $B_F$, $L_F$, $B_G$ and $L_G$ are the parameters in (\ref{InequationAssumptionF1}) - (\ref{InequationAssumptionG3}).
	\end{lemma}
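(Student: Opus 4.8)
The plan is to mirror the proof of Lemma~\ref{BoundLemmaB} almost verbatim, changing only the estimate applied to the cross term. The sole structural difference is that, since $F_i$ is now convex (Assumption~\ref{Assumption3}), the bound on $E[\|(\partial\hat G_k)^\mathsf{T}\nabla F_i(\hat G_k)+\beta_i^*\|^2]$ supplied by Lemma~\ref{LemmaBoundSVRGestimateFullGradientConvex} is sharper than the one in Lemma~\ref{LemmaBoundSVRGestimateFullGradient}: it carries the additional ``progress'' terms $4L_f(P(x_k)-P(x^*))-2L_F\lambda\|x_k-x^*\|^2$, and its snapshot contribution scales only as $\frac{1}{A}$ (rather than $1+\frac{1}{A}$), because the convex lemma absorbs the extra $\|\tilde x_s-x^*\|^2$ into the objective gap through Lemma~\ref{LemmaAppendix}. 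These are exactly the features visible in the target statement.

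First I would start from $B_k=\frac{1}{n}\sum_{i=1}^n\|\beta_i^k-\beta_i^*\|^2$ and substitute the update $\beta_i^{k+1}=\beta_i^k-\lambda n\eta\bigl((\partial\hat G_k)^\mathsf{T}\nabla F_i(\hat G_k)+\beta_i^k\bigr)$ from Algorithm~\ref{AlgorithmSDFCVRG1}. After adding and subtracting $\beta_i^*$ inside the squared norm, the increment $\beta_i^{k+1}-\beta_i^*$ rewrites as the convex combination $(1-n\lambda\eta)(\beta_i^k-\beta_i^*)+n\lambda\eta\bigl(-(\partial\hat G_k)^\mathsf{T}\nabla F_i(\hat G_k)-\beta_i^*\bigr)$. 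Invoking the strong-convexity decomposition of $\|\cdot\|^2$ from Definition~\ref{DefinitionConvex} with weight $a=n\lambda\eta$, exactly as in Lemma~\ref{BoundLemmaB}, isolates three pieces: a contraction term $-\lambda\eta\|\beta_i^k-\beta^*\|^2$, the cross term $\lambda\eta\|(\partial\hat G_k)^\mathsf{T}\nabla F_i(\hat G_k)+\beta_i^*\|^2$, and the negative quadratic $-(1-n\lambda\eta)\lambda\eta\|(\partial\hat G_k)^\mathsf{T}\nabla F_i(\hat G_k)+\beta_i^k\|^2$.

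Next I would take expectation with respect to $i$ and bound the cross term by Lemma~\ref{LemmaBoundSVRGestimateFullGradientConvex} instead of Lemma~\ref{LemmaBoundSVRGestimateFullGradient}. Multiplying that bound through by $\lambda\eta$ replaces the cross term with $4\lambda\eta(B_F^2L_G^2+B_G^4L_F^2)\frac{1}{A}E[\|x_k-x^*\|^2]+4\lambda\eta(B_F^2L_G^2+B_G^4L_F^2)\frac{1}{A}E[\|\tilde x_s-x^*\|^2]+4\lambda\eta L_f(P(x_k)-P(x^*))-2\lambda\eta L_F\|x_k-x^*\|^2$, while the contraction term $-\lambda\eta E[B_k]$ and the negative-quadratic term are carried over unchanged. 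Collecting these pieces yields precisely the claimed inequality.

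The computation is entirely mechanical, so the main obstacle is bookkeeping rather than any new estimate. The two points demanding care are matching the strong-convexity weight $a=n\lambda\eta$ so that the identity in Definition~\ref{DefinitionConvex} applies cleanly, and ensuring that the snapshot term inherits only the factor $\frac{1}{A}$ from Lemma~\ref{LemmaBoundSVRGestimateFullGradientConvex} (the convex bound having already shifted the remaining dependence onto the objective gap $P(x_k)-P(x^*)$). Beyond invoking Lemma~\ref{LemmaBoundSVRGestimateFullGradientConvex}, no genuinely new ingredient is required.
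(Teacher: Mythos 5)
Your proposal matches the paper's own proof: the paper likewise reuses the three-term decomposition of $E[B_{k+1}]-E[B_k]$ from Lemma~\ref{BoundLemmaB} (contraction, cross term, negative quadratic) and then bounds the cross term $E[\|(\partial\hat G_k)^\mathsf{T}\nabla F_i(\hat G_k)+\beta_i^*\|^2]$ via Lemma~\ref{LemmaBoundSVRGestimateFullGradientConvex} instead of Lemma~\ref{LemmaBoundSVRGestimateFullGradient}. The only caveat is cosmetic: what both you and the paper actually establish is a bound on $E[B_{k+1}]-E[B_k]$, so the lemma statement as printed (bounding $E[B_{k+1}]$ alone) is missing a $+E[B_k]$ on the right-hand side.
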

	\begin{proof} Based on the definition of $B_k$, and the update of $\beta$, we have
		\begin{align*}
		&E[ {{B_{k+1}}} ] - E[ {{B_{k}}} ]\\
		=&  - \lambda \eta E[ {\| {\beta _i^{k} - {\beta ^*}} \|^2} ] + \lambda \eta \underbrace {E[ {\| {{( {\partial {{\hat G}_{k}}} )^\mathsf{T}}\nabla {F_i}( {{{\hat G}_{k}}} ) + \beta _i^*} \|^2} ]}_{B1} - ( {1 - n\lambda {\eta}} )\lambda {\eta}E[ {\| {{( {\partial {{\hat G}_{k}}} )^\mathsf{T}}\nabla {F_i}( {{{\hat G}_{k}}} ) + \beta _i^{k}} \|^2} ]\\		
		\le&  - \lambda \eta E[ {{B_{k}}} ] + 4\lambda \eta\left(B_F^2L_G^2+B_G^4L_F^2\right)	\frac{1}{A}E[ {\| {{x_{k }} - x^*} \|^2} ]+4\lambda \eta\left(B_F^2L_G^2+B_G^4L_F^2\right)\frac{1}{A}E[ {\| {{{\tilde x}_s} - x^*} \|^2} ]\\
		&+ 4L_f\lambda\eta\left( {P(x_k) - P({x^*}) - \frac{\lambda }{2}{{\| {x_k - {x^*}} \|}^2}} \right)- ( {1 - n\lambda {\eta }} )\lambda {\eta}E[ {\| {{( {\partial {{\hat G}_{k }}} )^\mathsf{T}}\nabla {F_i}( {{{\hat G}_{k}}} ) + \beta _i^{k}} \|^2} ],
		\end{align*}
		where (B1) follows from Lemma \ref{LemmaBoundSVRGestimateFullGradientConvex}.
	\end{proof}

	\begin{lemma}\label{LemmaBoundA1}	
		Suppose Assumption  \ref{Assumption1} and \ref{Assumption2} hold, in algorithm \ref{AlgorithmSDFCVRG1}, for the intermediated iteration at $x_k$ and $\beta^k$, $ \hat{G}_k$ and  $\partial \hat{G}_k$ defined in (\ref{SCDF:SVRG:DefinitionSVRGEstimateG}) and (\ref{SCDF:SVRG:DefinitionSVRGFunctionG}), let ${A_k} =\left\| {{x_k} - {x^*}} \right\|^2$,  define $\lambda {R_x} = \max _x \{ \| {{x^*} - x} \|^2:F(G(x)) \le F(G({x_0})) \}$,  we have
		\begin{align*}
		E[ {\langle {{( {\partial {{\hat G}_{k}}} )^\mathsf{T}}\nabla {F_i}( {{{\hat G}_{k }}} ) + \beta _i^{k },{x_{k}} - {x^*}} \rangle } ] \ge  - \lambda R_xB_G^4L_F^2\frac{1}{A}E[ {\| {{x_{k}} - {x^*}} \|^2} ]- \lambda R_xB_G^4L_F^2\frac{1}{A}E[ {\| { {{\tilde x}_s}}-x^* \|^2} ]+ \lambda\| {{x_{k}} - {x^*}} \|^2,
		\end{align*}
		where $L_F$ and $B_G$ are the parameters in (\ref{InequationAssumptionF2}) and (\ref{InequationAssumptionG1}).
	\end{lemma}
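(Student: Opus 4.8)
The plan is to prove this lower bound by splitting the estimated search direction $(\partial\hat G_k)^\mathsf{T}\nabla F_i(\hat G_k)+\beta_i^k$ into a part whose conditional expectation reproduces the true gradient $\nabla P(x_k)$ and a \emph{bias} part created by replacing $G(x_k)$ with the SVRG estimate $\hat G_k$. Concretely I would write
\begin{align*}
(\partial\hat G_k)^\mathsf{T}\nabla F_i(\hat G_k)+\beta_i^k
&= \underbrace{\big[(\partial\hat G_k)^\mathsf{T}\nabla F_i(\hat G_k)-(\partial\hat G_k)^\mathsf{T}\nabla F_i(G(x_k))\big]}_{=:e}\\
&\quad + \big[(\partial\hat G_k)^\mathsf{T}\nabla F_i(G(x_k))+\beta_i^k\big],
\end{align*}
so that $e$ is exactly the quantity whose squared norm is controlled by Lemma~\ref{SCDF:LemmBoundSVRGEstimateFullGradientF}, and the remaining bracket is the ``clean'' gradient surrogate.

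First I would handle the clean bracket. Taking expectation over the index $i$ and the minibatch $\mathcal A_k$, I use independence of $i$ and $\mathcal A_k$ together with $E_{\mathcal A_k}[\partial\hat G_k]=\partial G(x_k)$ and Assumption~\ref{Assumption1} to obtain $E[(\partial\hat G_k)^\mathsf{T}\nabla F_i(G(x_k))]=(\partial G(x_k))^\mathsf{T}\nabla F(G(x_k))$. For the dual term I invoke the coupling maintained by Algorithm~\ref{AlgorithmSDFCVRG1}: because the $x$-update and the $\beta$-update share the same step direction, the invariant $\frac1n\sum_{i}\beta_i^k=\nabla R(x_k)=\lambda x_k$ of equation~(\ref{EquationGradientR}) is preserved, hence $E_i[\beta_i^k]=\nabla R(x_k)$ since $i$ is uniform. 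Combining, the conditional expectation of the clean bracket equals $\nabla P(x_k)=(\partial G(x_k))^\mathsf{T}\nabla F(G(x_k))+\nabla R(x_k)$ from~(\ref{EquationGradientP}); then $\lambda$-strong convexity of $P$ with $\nabla P(x^*)=0$ yields $\langle\nabla P(x_k),x_k-x^*\rangle\ge\lambda\|x_k-x^*\|^2$, which produces the leading $+\lambda\|x_k-x^*\|^2$ term.

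Next I would bound the bias contribution $E[\langle e,x_k-x^*\rangle]$ from below via Cauchy--Schwarz, $\langle e,x_k-x^*\rangle\ge-\|e\|\,\|x_k-x^*\|$, and then use the sublevel-set radius. Since the iterates remain in $\{x:F(G(x))\le F(G(x_0))\}$, the definition $\lambda R_x=\max_x\{\|x^*-x\|^2:F(G(x))\le F(G(x_0))\}$ gives $\|x_k-x^*\|^2\le\lambda R_x$ and $\|\tilde x_s-x^*\|^2\le\lambda R_x$. Feeding in the variance estimate of Lemma~\ref{SCDF:LemmBoundSVRGEstimateFullGradientF}, namely $E\|e\|^2\le B_G^4L_F^2\tfrac1A\big(E\|x_k-x^*\|^2+E\|\tilde x_s-x^*\|^2\big)$, and converting the surviving cross term $\|e\|\,\|x_k-x^*\|$ using the radius bound, I would arrive at the two error terms $-\lambda R_x B_G^4L_F^2\tfrac1A E\|x_k-x^*\|^2$ and $-\lambda R_x B_G^4L_F^2\tfrac1A E\|\tilde x_s-x^*\|^2$, completing the statement.

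The hard part will be precisely this last combination: the factor $\tfrac1A$ appears cleanly only in the \emph{squared} variance bound of Lemma~\ref{SCDF:LemmBoundSVRGEstimateFullGradientF}, whereas Cauchy--Schwarz leaves the bias paired with a single power $\|x_k-x^*\|$, so matching the exact coefficient $\lambda R_x B_G^4L_F^2/A$ requires carefully trading one factor of $\|x_k-x^*\|$ against the radius $\sqrt{\lambda R_x}$ before substituting the squared bound. I would also have to argue that the iterates never leave the sublevel set, so that $R_x$ legitimately dominates both $\|x_k-x^*\|$ and $\|\tilde x_s-x^*\|$. Crucially, the leading $+\lambda\|x_k-x^*\|^2$ term must be kept exact rather than absorbed into $R_x$, since it is precisely the term that drives the $-2\eta\lambda E[A_k]$ contraction when this lemma is plugged into Lemma~\ref{BoundLemmaA}.
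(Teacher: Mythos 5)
Your decomposition is exactly the paper's: you split off the bias $e=(\partial\hat G_k)^\mathsf{T}\nabla F_i(\hat G_k)-(\partial\hat G_k)^\mathsf{T}\nabla F_i(G(x_k))$ (the paper's term (A11)) from the ``clean'' bracket $(\partial\hat G_k)^\mathsf{T}\nabla F_i(G(x_k))+\beta_i^k$ (the paper's (A12)), and your treatment of the clean bracket --- unbiasedness of $\partial\hat G_k$, Assumption \ref{Assumption1}, the invariant $\frac1n\sum_i\beta_i^k=\lambda x_k$, and strong convexity of $P$ giving $\langle\nabla P(x_k),x_k-x^*\rangle\ge\lambda\|x_k-x^*\|^2$ --- matches the paper's argument and is fine.

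The gap is in the bias term, and you have correctly identified where it is but your proposed fix does not close it. Cauchy--Schwarz gives $E[\langle e,x_k-x^*\rangle]\ge-\sqrt{E[\|e\|^2]}\,\sqrt{E[\|x_k-x^*\|^2]}$; substituting Lemma \ref{SCDF:LemmBoundSVRGEstimateFullGradientF} then leaves $-\sqrt{\lambda R_x}\cdot B_G^2L_F\,A^{-1/2}\sqrt{E[\|x_k-x^*\|^2]+E[\|\tilde x_s-x^*\|^2]}$, i.e.\ a $1/\sqrt{A}$ rate and square roots of the squared distances. Your suggested repair --- trading one factor of $\|x_k-x^*\|$ for $\sqrt{\lambda R_x}$ --- still leaves you needing a bound on $E[\|e\|]$ of order $1/A$ times the \emph{squared} distances, which Lemma \ref{SCDF:LemmBoundSVRGEstimateFullGradientF} (a bound on $E[\|e\|^2]$) does not provide; the exponents simply do not match, so the stated coefficient $\lambda R_xB_G^4L_F^2\frac1A$ is unreachable along this route. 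The paper instead writes the (admittedly unusual, and itself not cleanly justified by its cited Lemma \ref{LemmaInEquation}) inequality $E[\langle e,x_k-x^*\rangle]\ge-E[\|e\|^2]\cdot E[\|x_k-x^*\|^2]$, then bounds the second factor by $\lambda R_x$ and the first by Lemma \ref{SCDF:LemmBoundSVRGEstimateFullGradientF}; that is the only way the product ``$\lambda R_x$ times $\frac1A$ times squared distances'' arises. To actually prove the lemma as stated you would need either that degree-four inequality or some substitute (e.g.\ Young's inequality $\langle e,b\rangle\ge-\frac{1}{2q}\|e\|^2-\frac{q}{2}\|b\|^2$, which however introduces an extra $-\frac{q}{2}\|x_k-x^*\|^2$ loss that the lemma's statement does not allow); plain Cauchy--Schwarz cannot deliver it.
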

	\begin{proof}
		Through subtracting and adding ${( {\partial {{\hat G}_{k}}} )^\mathsf{T}}\nabla {F_i}( {{G_{k }}} ) $, we have
		\begin{align*}
		&E[ {\langle {{( {\partial {{\hat G}_{k}}} )^\mathsf{T}}\nabla {F_i}( {{{\hat G}_{k}}} ) + \beta _i^{k},{x_{k}} - {x^*}} \rangle } ]\\
		=& E[ {\langle {{( {\partial {{\hat G}_{k}}} )^\mathsf{T}}\nabla {F_i}( {{{\hat G}_{k}}} ) - {( {\partial {{\hat G}_{k}}} )^\mathsf{T}}\nabla {F_i}( {{G_{k }}} ) + {( {\partial {{\hat G}_{k }}} )^\mathsf{T}}\nabla {F_i}( {{G_{k }}} ) + \beta _i^{k},{x_{k}} - {x^*}} \rangle } ]\\
		=& \underbrace {E[\langle {(\partial {{\hat G}_k})^\mathsf{T}}\nabla {F_i}({{\hat G}_k}) -{( {\partial {{\hat G}_{k}}} )^\mathsf{T}}\nabla {F_i}( {{G_{k }}} ) ,{x_k} - {x^*}\rangle ]}_{(A11)}+\underbrace {E[\langle {(\partial {{\hat G}_k})^\mathsf{T}}\nabla {F_i}({G_k}) + \beta _i^k,{x_k} - {x^*}\rangle ]}_{(A12)} .
		\end{align*}		
		For the bound of (A11),  we have,
		\begin{align*}
		& E[ {\langle {{( {\partial {{\hat G}_{k }}} )^\mathsf{T}}\nabla {F_i}( {{{\hat G}_{k}}} ) - {( {\partial {{\hat G}_{k}}} )^\mathsf{T}}\nabla {F_i}( {{G_{k}}} ),{x_{k}} - {x^*}} \rangle } ]\\
		\ge& -\underbrace {E[ {\| {{( {\partial {{\hat G}_{k}}} )^\mathsf{T}}\nabla {F_i}( {{{\hat G}_{k }}} ) - {( {\partial {{\hat G}_{k}}} )^\mathsf{T}}\nabla {F_i}\left( {{G_{k}}} \right)} \|^2} ]}_{A2}E[\|x_k-x^*\|^2]\\
		\ge&    - \lambda R_xB_G^4L_F^2\frac{1}{A}E[ {\| {{x_{k}} - {x^*}} \|^2} ]- \lambda R_xB_G^4L_F^2\frac{1}{A}E[ {\| { {{\tilde x}_s}}-x^* \|^2} ],
		\end{align*}
		where the first inequation is based on Lemma \ref{LemmaInEquation}, (A2) follows from Lemma \ref{SCDF:LemmBoundSVRGEstimateFullGradientF}.

		For the bound of (A12), based on the relationship between $\beta$ and $x$, that is $\frac{1}{n}\sum\nolimits_{i = 1}^n {\beta _i^{}}  = \lambda x$, we have
		\begin{align*}
		E[ {\langle {{( {\partial {{\hat G}_{k}}} )^\mathsf{T}}\nabla {F_i}( {{G_{k}}} ) + \beta _i^{k },{x_{k}} - {x^*}} \rangle } ] 
		=& \langle {{( {\partial G( {{x_{k}}} )} )^\mathsf{T}}\nabla F( {{G_{k}}} ) + \lambda {x_{k}},{x_{k}} - {x^*}} \rangle  \\
		=& \langle {\nabla P( {{x_{k}}} ),{x_{k}} - {x^*}} \rangle \\
		\ge&P( {{x_{k}}} ) - P( {{x^*}} ) + \frac{\lambda }{2}\| {{x_{k}} - {x^*}} \|^2 \\
		\ge &  \lambda\| {{x_{k}} - {x^*}} \|^2 ,
		\end{align*}
		where the first and the second inequalities are based on the $\lambda$-strongly convexity of $P(x)$. Thus, combine the bound of (A11) and (A12), we can get the result.
	\end{proof}
	\section{Convergence Bound Analyses for SDFC-SAGA}

	\subsection{Bounding the estimation of inner function $G$ and partial gradient of $G$}
	The bound on the variance of inner function $G$ and its partial gradient  $\partial{\hat G}$ is in the following two lemmas,
	\begin{lemma}\label{LemmaSAGABoundFunctionG} 
		Suppose Assumption \ref{Assumption2} holds, in algorithm \ref{AlgorithmSCDFSAGA}, for the intermediated iteration at $x_k$, and $ \hat G$ defined in (\ref{DefinitionSAGAEstimateG}), we have
		\begin{align*}
		E[ {\| {{{\hat G}_k} - G\left( {{x_k}} \right)} \|_{}^2} ] \le B_G^2\frac{1}{{{A^2}}}\sum\limits_{1 \le j \le A}^{} {E[ {\| {{x_k} - \phi _{{{\cal A}_k}[j]}^k} \|_{}^2} ]},  
		\end{align*}
		where $B_G$ is parameter of the bounded Jacobian of $G$.
	\end{lemma}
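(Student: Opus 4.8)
The plan is to mirror the SVRG variance bound in Lemma~\ref{LemmaSVRGBoundVarianceG}, replacing the single snapshot $\tilde{x}_s$ by the per-coordinate SAGA table points $\phi_j^k$. The first observation is that the correction term $\frac{1}{m}\sum_{j=1}^m G_j(\phi_j^k)$ appearing in (\ref{DefinitionSAGAEstimateG}) is exactly the conditional expectation (over the mini-batch draw, with $x_k$ and the table $\{\phi_j^k\}$ held fixed) of the summands $G_{\mathcal{A}_k[j]}(\phi_{\mathcal{A}_k[j]}^k)$, since each index $\mathcal{A}_k[j]$ is sampled uniformly from $[m]$. Writing $X_j = G_{\mathcal{A}_k[j]}(x_k) - G_{\mathcal{A}_k[j]}(\phi_{\mathcal{A}_k[j]}^k)$, the same uniform-sampling identity gives $E[X_j] = G(x_k) - \frac{1}{m}\sum_{l=1}^m G_l(\phi_l^k)$, so the estimation error collapses to a centered empirical mean,
\begin{align*}
\hat{G}_k - G(x_k) = \frac{1}{A}\sum_{1 \le j \le A}\bigl(X_j - E[X_j]\bigr).
\end{align*}

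Next I would use the independence of the $A$ draws forming $\mathcal{A}_k$: because the $X_j$ are i.i.d.\ under the conditioning above, the cross terms in the expected squared norm vanish, so that $E[\|\hat{G}_k - G(x_k)\|^2] = \frac{1}{A^2}\sum_{1 \le j \le A} E[\|X_j - E[X_j]\|^2]$ (this is the scaled-sum step recorded in Lemma~\ref{RandomVariable2} combined with independence). Then Lemma~\ref{RandomVariable1} discards the centering, bounding each term by its raw second moment $E[\|X_j\|^2] = E[\|G_{\mathcal{A}_k[j]}(x_k) - G_{\mathcal{A}_k[j]}(\phi_{\mathcal{A}_k[j]}^k)\|^2]$. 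Finally I would invoke the Lipschitz property of each $G_j$ from (\ref{InequationAssumptionG2}), namely $\|G_{\mathcal{A}_k[j]}(x_k) - G_{\mathcal{A}_k[j]}(\phi_{\mathcal{A}_k[j]}^k)\|^2 \le B_G^2\|x_k - \phi_{\mathcal{A}_k[j]}^k\|^2$, and summing over $j$ produces exactly the claimed right-hand side $B_G^2\frac{1}{A^2}\sum_{1 \le j \le A} E[\|x_k - \phi_{\mathcal{A}_k[j]}^k\|^2]$.

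The computation is a routine variance-of-the-mean argument, so there is no genuine analytic obstacle; the one point I would state carefully is the bookkeeping of randomness. Unlike the SVRG case, the centering terms $G_{\mathcal{A}_k[j]}(\phi_{\mathcal{A}_k[j]}^k)$ differ across $j$ because each draw pulls a different table entry $\phi_{\mathcal{A}_k[j]}^k$, which is why the bound retains the full sum $\frac{1}{A^2}\sum_j E[\|x_k - \phi_{\mathcal{A}_k[j]}^k\|^2]$ rather than collapsing to the $\frac{1}{A}$ factor obtained in Lemma~\ref{LemmaSVRGBoundVarianceG}. Keeping the SAGA index $\mathcal{A}_k[j]$ consistent between the function value and the stored point is essential for the identity $E[X_j] = G(x_k) - \frac{1}{m}\sum_{l=1}^m G_l(\phi_l^k)$ to hold, and hence for the error to be genuinely centered.
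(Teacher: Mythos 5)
Your proposal is correct and follows essentially the same route as the paper's proof: rewrite $\hat G_k - G(x_k)$ as the centered empirical mean $\frac{1}{A}\sum_j (X_j - E[X_j])$, bound the squared norm by $\frac{1}{A^2}\sum_j E[\|X_j - E[X_j]\|^2]$, drop the centering via Lemma~\ref{RandomVariable1}, and finish with the Lipschitz bound (\ref{InequationAssumptionG2}). If anything, you are more careful than the paper on the one delicate step: the paper attributes the $\frac{1}{A^2}$ factor to Lemma~\ref{RandomVariable2} alone, which by itself only yields $\frac{1}{A}\sum_j$, whereas you correctly note that independence of the draws plus the zero-mean centering is what kills the cross terms.
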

	\begin{proof}
		From the definition of ${{{\hat G}_k}}$ in (\ref{DefinitionSAGAEstimateG}), we have
		\begin{align*}
		&E[ {\| {{{\hat G}_k} - G\left( {{x_k}} \right)} \|_{}^2} ]\\
		=& E[ {\| {\frac{1}{A}\sum\limits_{1 \le j \le A}^{} {\left( {{G_{{{\cal A}_k}[j]}}({x_k}) - {G_{{{\cal A}_k}[j]}}(\phi _{{{\cal A}_k}[j]}^k)} \right)}  + \frac{1}{m}\sum\limits_{j = 1}^m {G_j(\phi _j^k)}  -  {G}({x_k})} \|_{}^2} ]\\
		\le& \frac{1}{{{A^2}}}\sum\limits_{1 \le j \le A}^{} {E[ {\| {{G_{{{\cal A}_k}[j]}}({x_k}) - {G_{{\mathcal{A}_k}[j]}}(\phi _{{{\cal A}_k}[j]}^k) + \frac{1}{m}\sum\limits_{j = 1}^m {G_j(\phi _j^k)}  - {G}({x_k})} \|_{}^2} ]} \\
		\le& \frac{1}{{{A^2}}}\sum\limits_{1 \le j \le A}^{} {E[ {\| {{G_{{{\cal A}_k}[j]}}({x_k}) - {G_{{\mathcal{A}_k}[j]}}(\phi _{{{\cal A}_k}[j]}^k)} \|_{}^2} ]} \\
		\le& B_G^2\frac{1}{{{A^2}}}\sum\limits_{1 \le j \le A}^{} {E[ {\| {{x_k} - \phi _{{{\cal A}_k}[j]}^k} \|_{}^2} ]},
		\end{align*}
		where the first and the second inequality follow from Lemma \ref{RandomVariable2} and Lemma \ref{RandomVariable1}, and the third inequality is based on the bounded Jacobian of $G$ in (\ref{InequationAssumptionG2}).
	\end{proof}
	
	\begin{lemma}\label{LemmaSAGABoundGradientG} 
		Suppose Assumption \ref{Assumption2} holds, in algorithm \ref{AlgorithmSCDFSAGA}, for the intermediated iteration at $x_k$, and $\partial \hat G_k$ defined in (\ref{DefinitionSAGAEstimateGradientG}), we have
		\begin{align*}
		E[ {\| {\partial {{\hat G}_k} - \partial G({x_k})} \|_{}^2} ] \le L_G^2\frac{1}{{{A^2}}}\sum\limits_{1 \le j \le A}^{} {E[ {\| {{x_k} - \phi _{{A_k}[j]}^k} \|^2} ]},
		\end{align*}
		where $B_G$ is parameter of bounded Jacobian of $G$.
	\end{lemma}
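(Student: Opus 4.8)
The plan is to follow the proof of Lemma~\ref{LemmaSAGABoundFunctionG} almost verbatim, substituting the partial gradient $\partial G_j$ for the inner function $G_j$ throughout and invoking the Lipschitz continuity of the gradient of $G_j$ (assumption~\eqref{InequationAssumptionG3}, constant $L_G$) in place of the Lipschitz continuity of $G_j$ itself (assumption~\eqref{InequationAssumptionG2}, constant $B_G$). First I would expand the difference using the definition~\eqref{DefinitionSAGAEstimateGradientG},
\[
\partial\hat G_k - \partial G(x_k) = \frac{1}{A}\sum_{1\le j\le A}\bigl(\partial G_{\mathcal{A}_k[j]}(x_k) - \partial G_{\mathcal{A}_k[j]}(\phi^k_{\mathcal{A}_k[j]})\bigr) + \frac{1}{m}\sum_{l=1}^m \partial G_l(\phi^k_l) - \partial G(x_k),
\]
and observe that, because the indices $\mathcal{A}_k[j]$ are drawn i.i.d.\ uniformly from $[m]$, each summand $Z_j := \partial G_{\mathcal{A}_k[j]}(x_k) - \partial G_{\mathcal{A}_k[j]}(\phi^k_{\mathcal{A}_k[j]})$ satisfies $E[Z_j] = \partial G(x_k) - \frac{1}{m}\sum_{l=1}^m \partial G_l(\phi^k_l)$, so the right-hand side is the sample mean $\frac{1}{A}\sum_{j}(Z_j - E[Z_j])$ of i.i.d.\ zero-mean vectors.

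Next I would exploit independence and the zero-mean property to reduce the expected squared norm of this sample mean to $\frac{1}{A^2}\sum_{1\le j\le A} E[\|Z_j - E[Z_j]\|^2]$ (the cross terms vanish; this is the step the companion lemma attributes to Lemma~\ref{RandomVariable2}). Applying Lemma~\ref{RandomVariable1} then discards the centering, yielding the per-term bound
\[
E[\|Z_j - E[Z_j]\|^2] \le E[\|Z_j\|^2] = E\bigl[\|\partial G_{\mathcal{A}_k[j]}(x_k) - \partial G_{\mathcal{A}_k[j]}(\phi^k_{\mathcal{A}_k[j]})\|^2\bigr].
\]
Finally I would apply the Lipschitz-gradient assumption~\eqref{InequationAssumptionG3} term by term, so that $E[\|\partial G_{\mathcal{A}_k[j]}(x_k) - \partial G_{\mathcal{A}_k[j]}(\phi^k_{\mathcal{A}_k[j]})\|^2] \le L_G^2\, E[\|x_k - \phi^k_{\mathcal{A}_k[j]}\|^2]$, and sum over $j$ to obtain the claimed bound $L_G^2\frac{1}{A^2}\sum_{1\le j\le A} E[\|x_k - \phi^k_{\mathcal{A}_k[j]}\|^2]$.

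I do not expect any genuine obstacle: the argument is an exact structural parallel of the function-value estimate in Lemma~\ref{LemmaSAGABoundFunctionG}, and the only substantive change is the Lipschitz constant used in the last step ($L_G$ from~\eqref{InequationAssumptionG3} rather than $B_G$ from~\eqref{InequationAssumptionG2}). The one point I would flag is the minor inconsistency in the statement, which names $B_G$ as the relevant parameter whereas the derived bound correctly features $L_G$; the body of the proof makes clear that $L_G$ is the intended constant.
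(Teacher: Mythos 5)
Your proposal is correct and follows essentially the same route as the paper's proof: expand $\partial\hat G_k-\partial G(x_k)$ via the definition, pass to $\frac{1}{A^2}\sum_j E[\|\cdot\|^2]$ using the i.i.d.\ zero-mean structure, drop the centering term by Lemma~\ref{RandomVariable1}, and finish with the $L_G$-Lipschitz gradient bound~\eqref{InequationAssumptionG3}. If anything you are more careful than the paper, which attributes the $\frac{1}{A^2}$ factor to Lemma~\ref{RandomVariable2} (that lemma alone would only give $\frac{1}{A}$); your explicit cross-terms-vanish argument is the correct justification, and your observation that the stated constant should be $L_G$ rather than $B_G$ is also right.
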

	\begin{proof}
		From the definition of $\partial{{{\hat G}_k}}$ in (\ref{DefinitionSAGAEstimateGradientG}), we have
		\begin{align*}
		&E[ {\| {\partial {{\hat G}_k} - \partial G({x_k})} \|_{}^2} ]\\
		=& E[ {\| {\frac{1}{A}\sum\limits_{1 \le j \le A}^{} {( {\partial {G_{{\mathcal{A}_k}[j]}}({x_k}) - \partial {G_{{A_k}[j]}}(\phi _{{\mathcal{A}_k}[j]}^k)} )}  + \frac{1}{m}\sum\limits_{j = 1}^m {\partial G_j(\phi _j^k)}  - \partial {G}({x_k})} \|^2} ]\\
		\le& \frac{1}{{{A^2}}}\sum\limits_{1 \le j \le A}^{} {E[ {\| {\partial {G_{{\mathcal{A}_k}[j]}}({x_k}) - \partial {G_{{\mathcal{A}_k}[j]}}(\phi _{{\mathcal{A}_k}[j]}^k) + \frac{1}{m}\sum\limits_{j = 1}^m {\partial G_j(\phi _j^k)}  - \partial {G}({x_k})} \|^2} ]} \\
		\le& \frac{1}{{{A^2}}}\sum\limits_{1 \le j \le A}^{} {E[ {\| {\partial {G_{{\mathcal{A}_k}[j]}}({x_k}) - \partial {G_{{\mathcal{A}_k}[j]}}(\phi _{{\mathcal{A}_k}[j]}^k)} \|^2} ]} \\
		\le& L_G^2\frac{1}{{{A^2}}}\sum\limits_{1 \le j \le A}^{} {E[ {\| {{x_k} - \phi _{{\mathcal{A}_k}[j]}^k} \|^2} ]} ,
		\end{align*}
		where the first and the second inequality follow from Lemma \ref{RandomVariable2} and Lemma \ref{RandomVariable1}, the third inequality is based on the Lipschitz continuous gradient of $G$ in (\ref{InequationAssumptionG3}).
	\end{proof}
	
	\subsection{Bounding the estimation of function $F$}
	The following two lemmas shows the upper bound between the estimated gradient of $F(G(x))$ and unbiased estimate gradient of $F(G(x))$, and between estimated gradient of $F(G(x))$ and optimal solution.
	\begin{lemma}\label{LammaBoundSAGANormEsimateGradient}
		Assume Assumption \ref{Assumption2} holds, in algorithm \ref{AlgorithmSCDFSAGA}, for the intermediated iteration at $x_k$, $ \hat G$ defined in (\ref{DefinitionSAGAEstimateG}) and  $\partial \hat G_k$ defined in (\ref{DefinitionSAGAEstimateGradientG}), the following bound satisfies,
		\begin{align*}
		&E[ {\| {{(\partial {{\hat G}_k})^\mathsf{T}}\nabla {F_i}({{\hat G}_k}) - {(\partial G({x_k}))^\mathsf{T}}\nabla {F_i}({G_k}({x_k}))} \|^2} ]\\
		\le& 4\left( {B_F^2L_G^2 + B_G^4L_F^2} \right)\frac{1}{{{A}}}{E[ {\| {{x_k} - x^*} \|_{}^2} ]} +4\left( {B_F^2L_G^2 + B_G^4L_F^2} \right)\frac{1}{{{A^2}}}\sum\limits_{1 \le j \le A}^{} {E[ {\| { \phi _{{{\cal A}_k}[j]}^k-x^*} \|_{}^2} ]},
		\end{align*}
		where  $L_F$, $L_G$, $B_F$ and $B_G$ are the parameters in (\ref{InequationAssumptionF1})- (\ref{InequationAssumptionG3}).  
	\end{lemma}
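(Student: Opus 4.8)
The plan is to obtain this bound as an essentially immediate consequence of Lemma \ref{LammaBoundSAGANormEsimateGradientShort}, which already controls the identical left-hand side by $\left(2B_F^2L_G^2+2B_G^4L_F^2\right)\frac{1}{A^2}\sum_{1\le j\le A}E[\|x_k-\phi_{{\cal A}_k[j]}^k\|^2]$. The only remaining work is to re-center each displacement $x_k-\phi_{{\cal A}_k[j]}^k$ at the optimum $x^*$, so I would treat the present statement as a recentered restatement of that tighter inequality rather than redoing the variance estimates on $\hat G_k$ and $\partial\hat G_k$ from scratch.

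First I would insert $x^*$ and write $x_k-\phi_{{\cal A}_k[j]}^k=(x_k-x^*)-(\phi_{{\cal A}_k[j]}^k-x^*)$, then apply Lemma \ref{RandomVariable2} with $r=2$ to get the pointwise bound $\|x_k-\phi_{{\cal A}_k[j]}^k\|^2\le 2\|x_k-x^*\|^2+2\|\phi_{{\cal A}_k[j]}^k-x^*\|^2$. Substituting this into the estimate supplied by Lemma \ref{LammaBoundSAGANormEsimateGradientShort} and taking expectations splits the right-hand side into two sums, one built from $E[\|x_k-x^*\|^2]$ and one from $E[\|\phi_{{\cal A}_k[j]}^k-x^*\|^2]$.

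Next I would simplify the two sums separately. Since $E[\|x_k-x^*\|^2]$ does not depend on the summation index $j$, the first sum collapses to $\frac{1}{A^2}\cdot 2A\cdot E[\|x_k-x^*\|^2]=\frac{2}{A}E[\|x_k-x^*\|^2]$, and after multiplying by the prefactor $\left(2B_F^2L_G^2+2B_G^4L_F^2\right)$ this gives precisely the first claimed term $4\left(B_F^2L_G^2+B_G^4L_F^2\right)\frac{1}{A}E[\|x_k-x^*\|^2]$. The second sum keeps its index-dependent form, and after absorbing the factor $2$ it produces exactly the second claimed term $4\left(B_F^2L_G^2+B_G^4L_F^2\right)\frac{1}{A^2}\sum_{1\le j\le A}E[\|\phi_{{\cal A}_k[j]}^k-x^*\|^2]$.

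I do not expect any genuine obstacle: the lemma is a routine recentering of the sharper bound already established in Lemma \ref{LammaBoundSAGANormEsimateGradientShort}, and the only care required is the bookkeeping of constants, namely that the factor $2$ introduced by Lemma \ref{RandomVariable2} combines with the $2$ already present to yield the factor $4$, and that one power of $A$ cancels in the $x_k$-term because its summand is constant in $j$.
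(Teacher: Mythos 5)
Your proposal is correct and matches the paper's own proof exactly: both start from Lemma \ref{LammaBoundSAGANormEsimateGradientShort} and recenter $x_k-\phi_{{\cal A}_k[j]}^k$ at $x^*$ via Lemma \ref{RandomVariable2}, collapsing the $j$-independent sum to produce the $\frac{1}{A}$ term and doubling the constants to $4$. Your bookkeeping of the factors is accurate.
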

	\begin{proof}
		Based on Lemma \ref{LammaBoundSAGANormEsimateGradientShort}, we have
		\begin{align*}
		&E[ {\| {{(\partial {{\hat G}_k})^\mathsf{T}}\nabla {F_i}({{\hat G}_k}) - {(\partial G({x_k}))^\mathsf{T}}\nabla {F_i}({G_k}({x_k}))} \|^2} ]\\
		\le& \left( {2B_F^2L_G^2 + 2B_G^4L_F^2} \right)\frac{1}{{{A^2}}}\sum\limits_{1 \le j \le A}^{} {E[ {\| {{x_k} - \phi _{{{\cal A}_k}[j]}^k} \|_{}^2} ]} \\
		\le& 2\left( {2B_F^2L_G^2 + 2B_G^4L_F^2} \right)\frac{1}{{{A}}}{E[ {\| {{x_k} - x^*} \|_{}^2} ]} +2\left( {2B_F^2L_G^2 + 2B_G^4L_F^2} \right)\frac{1}{{{A^2}}}\sum\limits_{1 \le j \le A}^{} {E[ {\| { \phi _{{{\cal A}_k}[j]}^k-x^*} \|_{}^2} ]},
		\end{align*}
		where the last inequality follows from Lemma \ref{RandomVariable2}.	
	\end{proof}
	\begin{lemma}\label{LammaBoundSAGAgradientAndOptimal}
		Assume Assumption \ref{Assumption2} holds, in algorithm \ref{AlgorithmSCDFSAGA}, for the intermediated iteration at $x_k$, $ \hat G$ defined in (\ref{DefinitionSAGAEstimateG}) and  $\partial \hat G_k$ defined in (\ref{DefinitionSAGAEstimateGradientG}) and $\beta_i^*$ is the optimal dual solution, $i\in[n]$, the following bound satisfies,
		\begin{align*}
		E[ {\| {{(\partial {{\hat G}_k})^\mathsf{T}}\nabla {F_i}({{\hat G}_k}) + \beta _i^*} \|_{}^2} ] \le& 2\left( {B_F^2L_G^2\frac{1}{A} + B_G^4L_F^2} \right)E[ {\| {{x_k} - {x^*}} \|_{}^2} ]+ 2B_F^2L_G^2\frac{1}{{{A^2}}}\sum\limits_{1 \le j \le A}^{} {E[ {\| {\phi _{{\mathcal{A}_k}[j]}^k - {x^*}} \|_{}^2} ]},
		\end{align*}
		where  $L_F$, $L_G$, $B_F$ and $B_G$ are the parameters in (\ref{InequationAssumptionF1})- (\ref{InequationAssumptionG3}). 
	\end{lemma}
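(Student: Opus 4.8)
The plan is to mirror the SVRG argument of Lemma~\ref{LemmaBoundSVRGestimateFullGradient}, exploiting that the optimal dual variable satisfies the same primal--dual relation used there, namely $\beta_i^* = -(\partial G(x^*))^\mathsf{T}\nabla F_i(G(x^*))$. Substituting this identity turns the quantity to be bounded into a pure difference of gradients evaluated at the estimate and at the optimum:
\begin{align*}
E[\|(\partial\hat{G}_k)^\mathsf{T}\nabla F_i(\hat{G}_k) + \beta_i^*\|^2] = E[\|(\partial\hat{G}_k)^\mathsf{T}\nabla F_i(\hat{G}_k) - (\partial G(x^*))^\mathsf{T}\nabla F_i(G(x^*))\|^2].
\end{align*}
This is the SAGA counterpart of the identity opening the proof of Lemma~\ref{LemmaBoundSVRGestimateFullGradient}, so the only genuinely new work lies in controlling the two estimation errors $\hat{G}_k - G(x^*)$ and $\partial\hat{G}_k - \partial G(x^*)$ in the SAGA geometry.

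Next I would insert the intermediate term $(\partial G(x^*))^\mathsf{T}\nabla F_i(\hat{G}_k)$ and apply Lemma~\ref{RandomVariable2} to split the squared norm into two pieces. Bounding the first piece with the bounded Jacobian $\|\partial G(x^*)\|\le B_G$ (\ref{InequationAssumptionG1}) together with $\|\nabla F_i(\hat{G}_k)\|\le B_F$ (\ref{InequationAssumptionF1}), and the second piece with the Lipschitz gradient $L_F$ (\ref{InequationAssumptionF2}), reduces the problem to
\begin{align*}
E[\|(\partial\hat{G}_k)^\mathsf{T}\nabla F_i(\hat{G}_k) + \beta_i^*\|^2] \le 2B_F^2\, E[\|\partial\hat{G}_k - \partial G(x^*)\|^2] + 2B_G^2 L_F^2\, E[\|\hat{G}_k - G(x^*)\|^2],
\end{align*}
exactly as in the SVRG case but now with the SAGA estimators of (\ref{DefinitionSAGAEstimateG})--(\ref{DefinitionSAGAEstimateGradientG}).

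It remains to bound the two estimation errors against $x^*$ and the table points $\phi^k$. Since conditioning on $x_k$ and $\phi^k$ gives $E[\hat{G}_k]=G(x_k)$ and $E[\partial\hat{G}_k]=\partial G(x_k)$, I would use the bias--variance (Pythagorean) split, so that $E[\|\hat{G}_k - G(x^*)\|^2] = E[\|\hat{G}_k - G(x_k)\|^2] + \|G(x_k) - G(x^*)\|^2$, and likewise for the Jacobian. The variance terms are furnished directly by Lemma~\ref{LemmaSAGABoundFunctionG} and Lemma~\ref{LemmaSAGABoundGradientG}, which contribute the $\tfrac{1}{A^2}\sum_{1\le j\le A}$ sums, while the bias terms are controlled by the Lipschitz estimates $\|G(x_k)-G(x^*)\|\le B_G\|x_k-x^*\|$ (\ref{InequationAssumptionG2}) and $\|\partial G(x_k)-\partial G(x^*)\|\le L_G\|x_k-x^*\|$ (\ref{InequationAssumptionG3}), producing the $E[\|x_k-x^*\|^2]$ terms. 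Substituting these back and grouping the $B_F^2 L_G^2$ and $B_G^4 L_F^2$ contributions yields a bound of the advertised shape, after converting the sampled squared distances into the stated $\|x_k-x^*\|^2$ and $\|\phi^k_{\mathcal{A}_k[j]}-x^*\|^2$ terms via $\|x_k-\phi\|^2\le 2\|x_k-x^*\|^2+2\|\phi-x^*\|^2$ wherever a $\phi$-centred sum is wanted.

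The main obstacle is precisely this last bookkeeping step: the SAGA variance lemmas are stated relative to $x_k$ (through $\|x_k-\phi^k_{\mathcal{A}_k[j]}\|^2$), whereas the target mixes an $x_k$-centred term with a $\phi$-centred sum, so one must pass between the two centrings without inflating the constants beyond those claimed. Establishing clean $x^*$-centred estimates for $E[\|\hat{G}_k-G(x^*)\|^2]$ and $E[\|\partial\hat{G}_k-\partial G(x^*)\|^2]$ — the SAGA analogues of Lemmas~\ref{LemmaSVRGBoundestimateGAndOptimal} and \ref{LemmaSVRGBoundgradientGAndOptimal} — is therefore the crux; everything upstream is the routine gradient-splitting already carried out for SVRG, and none of it requires convexity of $F_i$, consistent with the lemma relying only on Assumption~\ref{Assumption2}.
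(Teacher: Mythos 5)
Your opening moves coincide with the paper's: the identity $\beta_i^*=-(\partial G(x^*))^\mathsf{T}\nabla F_i(G(x^*))$, the insertion of $(\partial G(x^*))^\mathsf{T}\nabla F_i(\hat G_k)$, and the split via Lemma~\ref{RandomVariable2} into $2B_F^2E[\|\partial\hat G_k-\partial G(x^*)\|^2]+2B_G^2L_F^2E[\|\cdot\|^2]$. The divergence — and the gap — is exactly at the step you yourself flag as the crux. Your second piece is $2B_G^2L_F^2E[\|\hat G_k-G(x^*)\|^2]$, and the bias--variance split turns this into $2B_G^4L_F^2E[\|x_k-x^*\|^2]$ \emph{plus} an extra variance term $2B_G^4L_F^2\tfrac{1}{A^2}\sum_jE[\|x_k-\phi^k_{\mathcal A_k[j]}\|^2]$ that has no counterpart in the stated bound, which carries no $B_G^4L_F^2$ contribution in the $\phi$-sum at all. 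Likewise, your bias--variance treatment of $E[\|\partial\hat G_k-\partial G(x^*)\|^2]$ gives $L_G^2E[\|x_k-x^*\|^2]+L_G^2\tfrac{1}{A^2}\sum_jE[\|x_k-\phi^k_{\mathcal A_k[j]}\|^2]$, and after the recentring $\|x_k-\phi\|^2\le 2\|x_k-x^*\|^2+2\|\phi-x^*\|^2$ you end up with a coefficient of order $L_G^2$ on $E[\|x_k-x^*\|^2]$ and an extra factor $2$ on the $\phi$-sum, whereas the lemma claims $L_G^2/A$ and no factor $2$. So the plan as written proves an inequality of the right shape but with strictly larger constants than the statement; since these constants are consumed verbatim by Lemma~\ref{LemmaBoundMainB} and Corollary~\ref{CorollarySAGAGradient}, this is not mere bookkeeping.

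The paper reaches the stated constants by two devices you did not use. First, in the add-and-subtract it replaces $\nabla F_i(\hat G_k)$ by $\nabla F_i(G(x_k))$ in the second piece, so that piece becomes $2B_G^2E[\|\nabla F_i(G(x_k))-\nabla F_i(G(x^*))\|^2]\le 2B_G^4L_F^2E[\|x_k-x^*\|^2]$ with no $\phi$-dependence whatsoever. (Be aware this substitution is not an algebraically exact decomposition of the original difference — the two inserted middle terms do not cancel — so the paper's own step is itself questionable; but it is how the claimed constants arise.) Second, for $(G1)=E[\|\partial\hat G_k-\partial G(x^*)\|^2]$ the paper does not recentre via the triangle inequality at the end; it inserts $\tfrac{1}{A}\sum_j\partial G_{\mathcal A_k[j]}(x^*)$ \emph{inside} the mini-batch average and argues the cross term has zero expectation, which directly yields $L_G^2\tfrac{1}{A}E[\|x_k-x^*\|^2]+L_G^2\tfrac{1}{A^2}\sum_jE[\|\phi^k_{\mathcal A_k[j]}-x^*\|^2]$ — preserving the $1/A$ factor and avoiding any doubling. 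Without these two ingredients (or a substitute for them) your proposal cannot close to the bound as stated.
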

	\begin{proof}
		Through subtracting and adding ${{{(\partial G({x^*}))}^\mathsf{T}}\nabla {F_i}({{\hat G}_k})}$, and the relationship between $\beta^*$ and $x^*$,  we have
		\begin{align*}
		&E[ {\| {{(\partial {{\hat G}_k})^\mathsf{T}}\nabla {F_i}({{\hat G}_k}) + \beta _i^*} \|_{}^2} ]\\
		=& E[ {\| {{(\partial {{\hat G}_k})^\mathsf{T}}\nabla {F_i}({{\hat G}_k}) - {(\partial G({x^*}))^\mathsf{T}}\nabla {F_i}(G({x^*}))} \|_{}^2} ]\\
		=& E[ {\| {{(\partial {{\hat G}_k})^\mathsf{T}}\nabla {F_i}({{\hat G}_k}) - {(\partial G({x^*}))^\mathsf{T}}\nabla {F_i}({{\hat G}_k}) + {(\partial G({x^*}))^\mathsf{T}}\nabla {F_i}(G({x_k})) - {(\partial G({x^*}))^\mathsf{T}}\nabla {F_i}(G({x^*}))} \|_{}^2} ]\\
		\le& 2E[ {\| {{(\partial {{\hat G}_k})^\mathsf{T}}\nabla {F_i}({{\hat G}_k}) - {(\partial G({x^*}))^\mathsf{T}}\nabla {F_i}({{\hat G}_k})} \|_{}^2} ]+ 2E[ {\| {{(\partial G({x^*}))^\mathsf{T}}\nabla {F_i}(G({x_k})) - {(\partial G({x^*}))^\mathsf{T}}\nabla {F_i}(G({x^*}))} \|_{}^2} ]\\
		\le& 2B_F^2\underbrace {E[ {\| {\partial {{\hat G}_k} - \partial G({x^*})} \|_{}^2} ]}_{(G1)} + 2B_G^2E[ {\| {\nabla {F_i}(G({x_k})) - \nabla {F_i}(G({x^*}))} \|_{}^2} ]\\
		\le& 2B_F^2L_G^2\frac{1}{A}E[ {\| {{x_k} - {x^*}} \|_{}^2} ] + 2B_F^2L_G^2\frac{1}{{{A^2}}}\sum\limits_{1 \le j \le A}^{} {E[ {\| {\phi _{{\mathcal{A}_k}[j]}^k - {x^*}} \|_{}^2} ]}  + 2B_G^2L_F^2B_G^2E[ {\| {{x_k} - {x^*}} \|_{}^2} ]\\
		=& \left( {2B_F^2L_G^2\frac{1}{A} + 2B_G^4L_F^2} \right)E[ {\left\| {{x_k} - {x^*}} \right\|_{}^2} ] + 2B_F^2L_G^2\frac{1}{{{A^2}}}\sum\limits_{1 \le j \le A}^{} {E[ {\| {\phi _{{\mathcal{A}_k}[j]}^k - {x^*}} \|_{}^2} ]} ,
		\end{align*}
		where the first inequality is from the bounded of Jacobian of $G$ in (\ref{InequationAssumptionG1}) and the gradient of $F$ in (\ref{InequationAssumptionF1}), the second inequality is from (G1) and Jacobian bound of $G$ and Lipschitz continuous gradient of $B$. The upper bound of (G1) is derived by subtracting and adding $\frac{1}{A}\sum\nolimits_{ 1\le j \le A} {\partial {G_{{\mathcal{A}_k}[j]}}({x^*})}$, 		
		\begin{align*}
		(G1) =& E[ {\| {\partial {{\hat G}_k} - \partial G({x^*})} \|_{}^2} ]\\
		=& E[ {\| {\partial {{\hat G}_k} - \frac{1}{A}\sum\limits_{1 \le j \le A}^{} {\partial {G_{{\mathcal{A}_k}[j]}}({x^*})}  + \frac{1}{A}\sum\limits_{1 \le j \le A}^{} {\partial {G_{{\mathcal{A}_k}[j]}}({x^*})}  - \partial G({x^*})} \|_{}^2} ]\\
		=& E[ {\| {\frac{1}{A}\sum\limits_{1 \le j \le A}^{} {\partial {G_{{\mathcal{A}_k}[j]}}\left( {{x_k}} \right)}  - \frac{1}{A}\sum\limits_{1 \le j \le A}^{} {\partial {G_{{\mathcal{A}_k}[j]}}({x^*})} } \|_{}^2} ]\\
		&+ E[ {\| {\frac{1}{A}\sum\limits_{1 \le j \le A}^{} {\partial {G_{{\mathcal{A}_k}[j]}}({x^*})}  - \frac{1}{A}\sum\limits_{1 \le j \le A}^{} {\partial {G_{{\mathcal{A}_k}[j]}}(\phi _{{\mathcal{A}_k}[j]}^k)}  + \frac{1}{m}\sum\limits_{j = 1}^m {\partial {G_j}(\phi _j^k)}  - \partial G({x^*})} \|_{}^2} ]\\
		\le& \frac{1}{{{A^2}}}\sum\limits_{1 \le j \le A}^{} {E[ {\left\| {\partial {G_{{\mathcal{A}_k}[j]}}({x_k}) - \partial {G_{{\mathcal{A}_k}[j]}}({x^*})} \right\|_{}^2} ]}  + \frac{1}{{{A^2}}}\sum\limits_{1 \le j \le A}^{} {E[ {\| {\partial {G_{{\mathcal{A}_k}[j]}}({x^*}) - \partial {G_{{\mathcal{A}_k}[j]}}(\phi _{{A_k}[j]}^k)} \|_{}^2} ]} \\
		\le& L_G^2\frac{1}{A}E[ {\| {{x_k} - {x^*}} \|_{}^2} ] + L_G^2\frac{1}{{{A^2}}}\sum\limits_{1 \le j \le A}^{} {E[ {\| {\phi _{{\mathcal{A}_k}[j]}^k - {x^*}} \|_{}^2} ]}, 
		\end{align*}
		where the third equality is based on the expectation on the second term that is equal to zero,
		\begin{align*}
		&E\left[ { {\frac{1}{A}\sum\limits_{1 \le j \le A}^{} {\partial {G_{{\mathcal{A}_k}[j]}}({x^*})}  - \frac{1}{A}\sum\limits_{1 \le j \le A}^{} {\partial {G_{{\mathcal{A}_k}[j]}}(\phi _{{\mathcal{A}_k}[j]}^k)}  + \frac{1}{m}\sum\limits_{j = 1}^m {\partial {G_j}(\phi _j^k)}  - \partial G({x^*})} } \right]\\
		=&E\left[  \frac{1}{A}\sum\limits_{1 \le j \le A}^{} {\partial {G_{{\mathcal{A}_k}[j]}}({x^*})}  - \frac{1}{A}\sum\limits_{1 \le j \le A}^{} {\partial {G_{{\mathcal{A}_k}[j]}}(\phi _{{\mathcal{A}_k}[j]}^k)} \right]-\left( \partial G({x^*})-\frac{1}{m}\sum\limits_{j = 1}^m {\partial {G_j}(\phi _j^k)}  \right) =0
		\end{align*}			
		and first inequalities follow from Lemma \ref{RandomVariable2} and Lemma \ref{RandomVariable1}, the last inequality is based on the bounded Jacobian of $G$ in (\ref{InequationAssumptionG1}) and Lipschitz continuous gradient of $G$ in (\ref{InequationAssumptionG3}).
	\end{proof}
	\begin{lemma}\label{LammaBoundSAGAgradientAndOptimalConvex}
		Assume Assumption \ref{Assumption2} and Assumption \ref{Assumption3} hold, in algorithm \ref{AlgorithmSCDFSAGA}, for the intermediated iteration at $x_k$, $ \hat G$ defined in (\ref{DefinitionSAGAEstimateG}) and  $\partial \hat G_k$ defined in (\ref{DefinitionSAGAEstimateGradientG}) and $\beta_i^*$ is the optimal dual solution, $i\in[n]$, the following bound satisfies,
		\begin{align*}
		&E[ {\| {{(\partial {{\hat G}_k})^\mathsf{T}}\nabla {F_i}({{\hat G}_k}) + \beta _i^*} \|_{}^2} ]\\
		\le&4\left( {B_F^2L_G^2 + B_G^4L_F^2} \right)\frac{1}{{{A}}}{E[ {\| {{x_k} - x^*} \|_{}^2} ]} +4\left( {B_F^2L_G^2 + B_G^4L_F^2} \right)\frac{1}{{{A^2}}}\sum\limits_{1 \le j \le A}^{} {E[ {\| { \phi _{{{\cal A}_k}[j]}^k-x^*} \|_{}^2} ]}\\
		&+ 4L_f\left( {P(x) - P({x^*}) - \frac{\lambda }{2}{{\| {x - {x^*}} \|}^2}} \right),
		\end{align*}
		where  $L_F$, $L_G$, $B_F$ and $B_G$ are the parameters in (\ref{InequationAssumptionF1})- (\ref{InequationAssumptionG3}). 
		\begin{proof}
			Through subtracting and adding ${{{(\partial G({x_k}))}^\mathsf{T}}\nabla {F_i}(G(x_k))}$, we have
			\begin{align*}
			&E[ {\| {{(\partial {{\hat G}_k})^\mathsf{T}}\nabla {F_i}({{\hat G}_k}) + \beta _i^*} \|_{}^2} ]\\
			=& E[ {\| {{(\partial {{\hat G}_k})^\mathsf{T}}\nabla {F_i}({{\hat G}_k}) - {(\partial G({x^*}))^\mathsf{T}}\nabla {F_i}(G({x^*}))} \|_{}^2} ]\\
			=& E[ {\| {{(\partial {{\hat G}_k})^\mathsf{T}}\nabla {F_i}({{\hat G}_k}) - {{(\partial G({x_k}))}^\mathsf{T}}\nabla {F_i}({{ G}(x_k)}) +  {(\partial G({x_k}))^\mathsf{T}}\nabla {F_i}({{ G}(x_k)})  - {(\partial G({x^*}))^\mathsf{T}}\nabla {F_i}(G({x^*}))} \|_{}^2} ]\\
			\le& 2\underbrace {E[{{\| {{(\partial {{\hat G}_k})^\mathsf{T}}\nabla {F_i}({{\hat G}_k}) - {(\partial G({x_k}))^\mathsf{T}}\nabla {F_i}(G({x_k}))} \|^2}}]}_{(b)} + 2\underbrace {E[{{\| {{{(\partial G({x_k}))}^T}\nabla {F_i}(G({x_k})) - {(\partial G({x^*}))^\mathsf{T}}\nabla {F_i}(G({x^*}))} \|^2}}]}_{(a)}\\
			\le&2\left( {2B_F^2L_G^2 + 2B_G^4L_F^2} \right)\frac{1}{{{A}}}{E[ {\| {{x_k} - x^*} \|_{}^2} ]} +2\left( {2B_F^2L_G^2 + 2B_G^4L_F^2} \right)\frac{1}{{{A^2}}}\sum\limits_{1 \le j \le A}^{} {E[ {\| { \phi _{{{\cal A}_k}[j]}^k-x^*} \|_{}^2} ]}\\
			&+ 4L_f\left( {P(x) - P({x^*}) - \frac{\lambda }{2}{{\| {x - {x^*}} \|}^2}} \right),
			\end{align*}
			where the first inequality follow from Lemma \ref{RandomVariable2}, and the upper bound of (a) and (b) are based on Lemma  \ref{LammaBoundSAGANormEsimateGradient} and Lemma \ref{LemmaAppendix}. 		
		\end{proof}
	\end{lemma}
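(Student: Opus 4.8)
The plan is to reduce everything to two already-established building blocks by exploiting the stationarity characterisation of the optimal dual variable. Combining the primal and dual optimality conditions (\ref{EquationGradientP}) and (\ref{EquationGradientR}) with the change of variable $\beta_i = (\nabla G(x))^\mathsf{T}\alpha_i$ gives the per-index identity $\beta_i^* = -(\partial G(x^*))^\mathsf{T}\nabla F_i(G(x^*))$, exactly as was used in Lemma \ref{LammaBoundSAGAgradientAndOptimal}. Substituting this identity turns the target quantity into a difference of two gradient-type vectors,
\begin{align*}
E[\|(\partial \hat G_k)^\mathsf{T}\nabla F_i(\hat G_k) + \beta_i^*\|^2] = E[\|(\partial \hat G_k)^\mathsf{T}\nabla F_i(\hat G_k) - (\partial G(x^*))^\mathsf{T}\nabla F_i(G(x^*))\|^2].
\end{align*}
First I would insert the exact (non-estimated) gradient $(\partial G(x_k))^\mathsf{T}\nabla F_i(G(x_k))$ by subtracting and adding it, and then apply the $r=2$ case of Lemma \ref{RandomVariable2} to split the right-hand side into $2\,(b) + 2\,(a)$, where $(b) = E[\|(\partial \hat G_k)^\mathsf{T}\nabla F_i(\hat G_k) - (\partial G(x_k))^\mathsf{T}\nabla F_i(G(x_k))\|^2]$ measures the SAGA estimation error at $x_k$, and $(a) = E[\|(\partial G(x_k))^\mathsf{T}\nabla F_i(G(x_k)) - (\partial G(x^*))^\mathsf{T}\nabla F_i(G(x^*))\|^2]$ measures the deviation of the exact gradient from optimality.

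The two pieces are then handled by invoking results proved earlier. For $(b)$ I would apply Lemma \ref{LammaBoundSAGANormEsimateGradient}, which controls precisely this estimation error and, after re-expressing the table differences relative to $x^*$, contributes the terms proportional to $\frac{1}{A}E[\|x_k - x^*\|^2]$ and $\frac{1}{A^2}\sum_{1\le j\le A}E[\|\phi_{\mathcal{A}_k[j]}^k - x^*\|^2]$. For $(a)$ I would use Lemma \ref{LemmaAppendix}: because the expectation over the uniformly drawn index $i$ equals the average $\frac{1}{n}\sum_{i=1}^n$, Lemma \ref{LemmaAppendix} bounds $(a)$ by $2L_f\bigl(P(x_k) - P(x^*) - \frac{\lambda}{2}\|x_k - x^*\|^2\bigr)$. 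Carrying the factor of two from the splitting through both estimates and collecting constants produces the stated inequality, with the $4L_f(\cdot)$ term arising from $(a)$ and the two $A$-dependent terms from $(b)$.

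The step that carries the real content is the bound on $(a)$: it is here that Assumption \ref{Assumption3}, the $L_f$-smoothness together with convexity of each $F_i\circ G$, is essential to convert a bare gradient-difference norm into a genuine objective-value gap, and it is the smoothness of the quadratic regulariser $R$ inside Lemma \ref{LemmaAppendix} that manufactures the negative curvature correction $-\frac{\lambda}{2}\|x_k - x^*\|^2$. This negative term is what will later let the Lyapunov recursion in Theorem \ref{SCDF:SAGA:TheoremSAGAMainConvergenceConvex} absorb the variance into a strict contraction, so getting its sign and coefficient right is the delicate part; term $(b)$, by contrast, follows routinely from the SAGA function- and Jacobian-variance bounds already in hand.
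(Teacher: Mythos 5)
Your proposal matches the paper's own proof essentially step for step: the same substitution of $\beta_i^* = -(\partial G(x^*))^\mathsf{T}\nabla F_i(G(x^*))$, the same subtract-and-add of $(\partial G(x_k))^\mathsf{T}\nabla F_i(G(x_k))$ followed by the $r=2$ splitting from Lemma \ref{RandomVariable2}, and the same two invocations of Lemma \ref{LammaBoundSAGANormEsimateGradient} for the estimation-error term and Lemma \ref{LemmaAppendix} for the optimality-gap term. The argument is correct and no further comparison is needed.
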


	\subsection{Bound the difference of variable and the optimal solution}
	\begin{lemma} \label{LemmaBoundMainA}
		Suppose Assumption \ref{Assumption1} and \ref{Assumption2} hold, in algorithm\ref{AlgorithmSCDFSAGA}, for the intermediated iteration at $x_k$, 
		let ${A_k} = \| {{x_k} - {x^*}} \|^2$, define $\lambda {R_x} = {\max _x}\{ {{{\| {{x^*} - x} \|}^2}:F(G(x)) \le F(G({x_0}))} \}$, we have
		\begin{align*}
		E[ {{A_{k + 1}}} ] - E[ {{A_k}} ]  \le & 8\eta\lambda {R_x}( {B_F^2L_G^2 + B_G^4L_F^2} )\frac{1}{{{A}}} E[ {{A_k}} ]  +  8\eta\lambda {R_x}( {B_F^2L_G^2 + B_G^4L_F^2} )\frac{1}{{{A^2}}}\sum\limits_{1 \le j \le A}^{} {E[\| {\phi _{{\mathcal{A}_k}[j]}^k - {x^*}} \|_{}^2]} 
		\\ &  -  2\eta\lambda  E[ {{A_k}} ]+ {\eta ^2}E[ {\| {{( {\partial {{\hat G}_k}} )^\mathsf{T}}\nabla {F_i}( {{{\hat G}_k}( {{x_k}} )} ) + \beta _i^k} \|^2} ],
		\end{align*}
		where  $B_F$, $L_F$, $B_G$ and $L_G$ are the parameters in (\ref{InequationAssumptionF1}) to (\ref{InequationAssumptionG3}) and  $p>0$.
	\end{lemma}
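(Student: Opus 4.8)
The plan is to mirror the SVRG argument of Lemma~\ref{LemmaBoundA1} and Lemma~\ref{BoundLemmaA}, replacing the snapshot-based variance estimates by the SAGA table-based ones. Writing $g_k := (\partial\hat G_k)^\mathsf{T}\nabla F_i(\hat G_k)+\beta_i^k$ for the estimated gradient driving the update $x_{k+1}=x_k-\eta g_k$, I would first expand
\begin{align*}
A_{k+1}=\|x_{k+1}-x^*\|^2 = \|x_k-x^*\|^2 - 2\eta\langle g_k, x_k-x^*\rangle + \eta^2\|g_k\|^2,
\end{align*}
and take expectation over the random outer index $i$ and the mini-batch $\mathcal A_k$. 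The quadratic term $\eta^2 E[\|g_k\|^2]$ is precisely the last term appearing in the statement, so the whole task reduces to lower-bounding the cross term $E[\langle g_k, x_k-x^*\rangle]$ by an auxiliary lemma playing the SAGA role of Lemma~\ref{LemmaBoundA1}.

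The key step is to split $g_k$ into a biased part and an (essentially) unbiased part,
\begin{align*}
g_k = \big[(\partial\hat G_k)^\mathsf{T}\nabla F_i(\hat G_k) - (\partial G(x_k))^\mathsf{T}\nabla F_i(G(x_k))\big] + \big[(\partial G(x_k))^\mathsf{T}\nabla F_i(G(x_k)) + \beta_i^k\big],
\end{align*}
and to treat the two inner products separately. For the unbiased part I would take expectation over $i$: by Assumption~\ref{Assumption1} together with the primal--dual relation $\tfrac1n\sum_i\beta_i^k=\lambda x_k$ coming from (\ref{EquationGradientR}), this term equals $\langle\nabla P(x_k),x_k-x^*\rangle$, and $\lambda$-strong convexity of $P$ yields $\langle\nabla P(x_k),x_k-x^*\rangle\ge P(x_k)-P(x^*)+\tfrac{\lambda}{2}\|x_k-x^*\|^2\ge\lambda\|x_k-x^*\|^2$; multiplied by $-2\eta$ this produces the $-2\eta\lambda E[A_k]$ term. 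For the biased part I would apply Cauchy--Schwarz exactly as in Lemma~\ref{LemmaBoundA1}, replace the resulting iterate-distance factor $\|x_k-x^*\|^2$ by the sublevel-set radius $\lambda R_x$, and then substitute the SAGA variance estimate of Lemma~\ref{LammaBoundSAGANormEsimateGradient}. Since that lemma carries the coefficient $4(B_F^2L_G^2+B_G^4L_F^2)$, combining it with the factor $2$ from $-2\eta\langle\cdot,\cdot\rangle$ gives exactly the coefficient $8\eta\lambda R_x(B_F^2L_G^2+B_G^4L_F^2)$ in front of both $E[A_k]$ (via the $\tfrac1A$ piece) and the table-distance sum $\tfrac1{A^2}\sum_{1\le j\le A}E[\|\phi_{\mathcal A_k[j]}^k-x^*\|^2]$. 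Collecting the three contributions and reinserting them into the expansion of $E[A_{k+1}]$ yields the claimed inequality.

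I expect the main obstacle to be the bias of the composite gradient estimate: because $\hat G_k$ is itself a random estimate of $G(x_k)$, we have $E[g_k]\ne\nabla P(x_k)$, so the cross term does not collapse to a clean strong-convexity inequality as in vanilla SGD, and the residual error must be absorbed using both the SAGA variance lemma and the sublevel-set radius $R_x$. Two points require care. First, the argument uses $\|x_k-x^*\|^2\le\lambda R_x$ along the whole trajectory, which relies on the iterates remaining in the sublevel set $\{x:F(G(x))\le F(G(x_0))\}$ defining $R_x$; this monotonicity must be assumed or verified. Second, one must track the constants through the factor-of-two splits (the $2$ from the cross term and the $4$ from Lemma~\ref{LammaBoundSAGANormEsimateGradient}) so that the leading coefficient comes out as exactly $8$ rather than a looser multiple.
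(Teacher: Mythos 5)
Your proposal follows essentially the same route as the paper: expand $\|x_{k+1}-x^*\|^2$, split the cross term into the bias $(\partial\hat G_k)^\mathsf{T}\nabla F_i(\hat G_k)-(\partial G(x_k))^\mathsf{T}\nabla F_i(G(x_k))$ plus the unbiased part handled by strong convexity of $P$ (the paper's Lemma \ref{LemmaBoundSAGAEstimatedGradientWithvariance} and Lemma \ref{LemmaSAGABoundGradientAndVariable}), and absorb the bias via Cauchy--Schwarz, the sublevel-set radius $R_x$, and the SAGA variance bound of Lemma \ref{LammaBoundSAGANormEsimateGradient}, with the constants combining to $8$ exactly as you describe. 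The caveat you raise about iterates remaining in the sublevel set defining $R_x$ is real but is likewise left implicit in the paper.
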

	\begin{proof}
		Let ${A_k} = \left\| {{x_k} - {x^*}} \right\|^2$, we obtain
		\begin{align*}
		{A_{k + 1}} =& \| {{x_{k + 1}} - {x^*}} \|^2\\
		=& \| {{x_k} - \eta ( {{( {\partial {{\hat G}_k}} )^\mathsf{T}}\nabla {F_i}( {{{\hat G}_k}} ) + \beta _i^k} ) - {x^*}} \|^2\\
		=& \| {{x_k} - {x^*}} \|^2 - 2\eta \langle {( {{( {\partial {{\hat G}_k}} )^\mathsf{T}}\nabla {F_i}( {{{\hat G}_k}} ) + \beta _i^k} ),{x_k} - {x^*}} \rangle  + {\eta ^2}\| {{( {\partial {{\hat G}_k}} )^\mathsf{T}}\nabla {F_i}( {{{\hat G}_k}} ) + \beta _i^k} \|^2.
		\end{align*}
		Taking expectation on above both sides, we have
		\begin{align*}
		&E[ {{A_{k + 1}}} ] - E[ {{A_k}} ]\\
		= & - 2\eta \underbrace {E[ {\langle {( {{( {\partial {{\hat G}_k}} )^\mathsf{T}}\nabla {F_i}( {{{\hat G}_k}} ) + \beta _i^k} ),{x_k} - {x^*}} \rangle } ]}_{( {A1} )} + {\eta ^2}E[ {\| {{( {\partial {{\hat G}_k}} )^\mathsf{T}}\nabla {F_i}( {{{\hat G}_k}} ) + \beta _i^k} \|^2} ]\\
		\le &  8\eta\lambda {R_x}\left( {B_F^2L_G^2 + B_G^4L_F^2} \right)\frac{1}{{{A}}}E[\left\| {{x_k} - {x^*}} \right\|_{}^2]  +  8\eta\lambda {R_x}\left( {B_F^2L_G^2 + B_G^4L_F^2} \right)\frac{1}{{{A^2}}}\sum\limits_{1 \le j \le A}^{} {E[\| {\phi _{{\mathcal{A}_k}[j]}^k - {x^*}} \|_{}^2]} 
		\\ &  - 2\eta\lambda E[ {\| {{x^*} - {x_k}} \|^2} ]+ {\eta ^2}E[ {\| {{( {\partial {{\hat G}_k}} )^\mathsf{T}}\nabla {F_i}( {{{\hat G}_k}( {{x_k}} )} ) + \beta _i^k} \|^2} ],
		\end{align*}
		where (A1) follows from Lemma \ref{LemmaBoundSAGAEstimatedGradientWithvariance}.
	\end{proof}
	\begin{lemma} \label{LemmaSAGABoundMainAconvex}
		Suppose Assumption \ref{Assumption1}, \ref{Assumption2} and \ref{Assumption3} hold, in algorithm\ref{AlgorithmSCDFSAGA}, for the intermediated iteration at $x_k$, 
		let ${A_k} = \| {{x_k} - {x^*}} \|^2$, we have
		\begin{align*}
		E[ {{A_{k + 1}}} ] - E[ {{A_k}} ]  \le&  8\eta\lambda {R_x}\left( {B_F^2L_G^2 + B_G^4L_F^2} \right)\frac{1}{{{A}}} E[ {{A_k}} ] +  8\eta\lambda {R_x}\left( {B_F^2L_G^2 + B_G^4L_F^2} \right)\frac{1}{{{A^2}}}\sum\limits_{1 \le j \le A}^{} {E[\| {\phi _{{\mathcal{A}_k}[j]}^k - {x^*}} \|_{}^2]} 
		\\ &  -2(1-d)\eta( P(x_k)-P(x^*))  -  (1+d)\lambda\eta E\| {{x^*} - {x_k}} \|^2+ {\eta ^2}E[ {\| {{( {\partial {{\hat G}_k}} )^\mathsf{T}}\nabla {F_i}( {{{\hat G}_k}( {{x_k}} )} ) + \beta _i^k} \|^2} ],
		\end{align*}
		where  $B_F$, $L_F$, $B_G$ and $L_G$ are the parameters in (\ref{InequationAssumptionF1}) to (\ref{InequationAssumptionG3}) and  $1>d\ge0$.
	\end{lemma}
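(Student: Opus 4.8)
The plan is to mirror the proof of the non-convex companion, Lemma~\ref{LemmaBoundMainA}, and to depart from it only in how the inner product is lower-bounded: keeping the optimality gap $P(x_k)-P(x^*)$ explicit rather than collapsing it through strong convexity. I would start from the SAGA update $x_{k+1}=x_k-\eta((\partial\hat G_k)^\mathsf{T}\nabla F_i(\hat G_k)+\beta_i^k)$ and expand the square,
\begin{align*}
A_{k+1}=A_k-2\eta\langle(\partial\hat G_k)^\mathsf{T}\nabla F_i(\hat G_k)+\beta_i^k,\,x_k-x^*\rangle+\eta^2\|(\partial\hat G_k)^\mathsf{T}\nabla F_i(\hat G_k)+\beta_i^k\|^2.
\end{align*}
Taking expectation over the random index $i$ and the mini-batch $\mathcal{A}_k$, the squared-norm term passes through unchanged and becomes the final $\eta^2E[\|\cdot\|^2]$ of the claim, so the whole argument reduces to producing a lower bound on the inner-product term.

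To bound the inner product I would add and subtract $(\partial G(x_k))^\mathsf{T}\nabla F_i(G(x_k))$ and split it into a biased part $\langle(\partial\hat G_k)^\mathsf{T}\nabla F_i(\hat G_k)-(\partial G(x_k))^\mathsf{T}\nabla F_i(G(x_k)),\,x_k-x^*\rangle$ and an exact part $\langle(\partial G(x_k))^\mathsf{T}\nabla F_i(G(x_k))+\beta_i^k,\,x_k-x^*\rangle$. The biased part I would control with Young's inequality (Lemma~\ref{LemmaInEquation}) together with the SAGA estimation-error bound (Lemma~\ref{LammaBoundSAGANormEsimateGradient}) and the sublevel-set diameter bound $E[\|x_k-x^*\|^2]\le\lambda R_x$ implied by the definition of $R_x$. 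This is exactly what produces the $8\eta\lambda R_x(B_F^2L_G^2+B_G^4L_F^2)$ coefficients on $E[A_k]$ and on $\frac{1}{A^2}\sum_j E[\|\phi_{\mathcal{A}_k[j]}^k-x^*\|^2]$, the factor $8$ arising from the $2\eta$ of the square expansion times the factor $4$ of Lemma~\ref{LammaBoundSAGANormEsimateGradient}, with the $\lambda R_x$ weight supplied by the sublevel-set bound.

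For the exact part, taking expectation over $i$ and using the primal--dual relation $\frac1n\sum_{i=1}^n\beta_i^k=\lambda x_k=\nabla R(x_k)$ turns it into $\langle\nabla P(x_k),x_k-x^*\rangle$ by (\ref{EquationGradientP}). Here is where the convex case diverges: instead of invoking $\langle\nabla P(x_k),x_k-x^*\rangle\ge\lambda\|x_k-x^*\|^2$ as in Lemma~\ref{LemmaBoundMainA}, I would keep the strong-convexity inequality $\langle\nabla P(x_k),x_k-x^*\rangle\ge P(x_k)-P(x^*)+\tfrac{\lambda}{2}\|x_k-x^*\|^2$, write $P(x_k)-P(x^*)=(1-d)(P(x_k)-P(x^*))+d(P(x_k)-P(x^*))$, and bound the $d$-fraction below by $d\tfrac{\lambda}{2}\|x_k-x^*\|^2$ using strong convexity once more. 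This yields $\langle\nabla P(x_k),x_k-x^*\rangle\ge(1-d)(P(x_k)-P(x^*))+\tfrac{(1+d)\lambda}{2}\|x_k-x^*\|^2$; multiplying by $-2\eta$ produces precisely the $-2(1-d)\eta(P(x_k)-P(x^*))$ and $-(1+d)\lambda\eta\|x_k-x^*\|^2$ terms of the statement.

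I expect the main obstacle to be the biased cross-term: because $\hat G_k$ is only an unbiased estimate of $G(x_k)$ sitting inside the \emph{nonlinear} map $\nabla F_i$, the estimated gradient is biased, so one cannot simply take its expectation to $\nabla P$. The saving grace is that the bias is second order, controlled in mean square by Lemma~\ref{LammaBoundSAGANormEsimateGradient}, and its contribution to the inner product is damped by the $\|x_k-x^*\|^2\le\lambda R_x$ factor. The bookkeeping that makes the constants $8$, $(1-d)$ and $(1+d)$ come out exactly, together with the (implicit) requirement that the iterates remain in the sublevel set so that $R_x$ is a legitimate bound, are the only delicate points; everything else is a direct adaptation of Lemma~\ref{LemmaBoundMainA}.
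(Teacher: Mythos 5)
Your proposal is correct and follows essentially the same route as the paper: the paper also expands the square from the SAGA update, isolates the inner product, and bounds it via a dedicated lemma (Lemma~\ref{LemmaBoundSAGAEstimatedGradientWithvarianceConvex}) that performs exactly your add-and-subtract of $(\partial G(x_k))^\mathsf{T}\nabla F_i(G(x_k))$, controls the bias term through Lemma~\ref{LammaBoundSAGANormEsimateGradientShort}/\ref{LammaBoundSAGANormEsimateGradient} and the $\lambda R_x$ sublevel-set bound, and obtains the $(1-d)$/$(1+d)$ split from strong convexity in the same way. The only cosmetic difference is that you invoke Young's inequality where the paper cites Cauchy--Schwarz for the cross term; the constants and conclusions are identical.
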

	\begin{proof}
		The beginning of the proof is the same as the proof of Lemma \ref{LemmaBoundMainA},
		\begin{align*}
		&E[ {{A_{k + 1}}} ] - E[ {{A_k}} ]\\
		= & \underbrace { - 2\eta E[\langle ({(\partial {{\hat G}_k})^\mathsf{T}}\nabla {F_i}({{\hat G}_k}) + \beta _i^k),{x_k} - {x^*}\rangle ]}_{A2} + {\eta ^2}E[ {\| {{( {\partial {{\hat G}_k}} )^\mathsf{T}}\nabla {F_i}( {{{\hat G}_k}} ) + \beta _i^k} \|^2} ]\\
		\le &  8\eta\lambda {R_x}\left( {B_F^2L_G^2 + B_G^4L_F^2} \right)\frac{1}{{{A}}}E[\left\| {{x_k} - {x^*}} \right\|_{}^2]  +  8\eta\lambda {R_x}\left( {B_F^2L_G^2 + B_G^4L_F^2} \right)\frac{1}{{{A^2}}}\sum\limits_{1 \le j \le A}^{} {E[\| {\phi _{{\mathcal{A}_k}[j]}^k - {x^*}} \|_{}^2]} 
		\\ & -2(1-d)\eta( P(x_k)-P(x^*))  -  (1+d)\lambda\eta E\| {{x^*} - {x_k}} \|^2+ {\eta ^2}E[ {\| {{( {\partial {{\hat G}_k}} )^\mathsf{T}}\nabla {F_i}( {{{\hat G}_k}( {{x_k}} )} ) + \beta _i^k} \|^2} ],
		\end{align*}
		where (A2) follows from Lemma \ref{LemmaBoundSAGAEstimatedGradientWithvarianceConvex}.
	\end{proof}
	\begin{lemma}\label{LemmaBoundMainB}
		Suppose Assumption \ref{Assumption2} holds, in algorithm \ref{AlgorithmSCDFSAGA}, for the intermediated iteration at $x_k$ and $\beta^k$, let ${B_k} = \frac{1}{n}\sum\nolimits_{i = 1}^n {\| {\beta _i^k - \beta _i^*} \|^2}  $, then we have
		\begin{align*}
		&E\left[ {{B_{k + 1}}} \right] - E\left[ {{B_k}} \right]\\
		\le&  - \lambda \eta E\left[ {{B_k}} \right] + 2\lambda \eta \left( {B_F^2L_G^2\frac{1}{A} +B_G^4L_F^2} \right)E[ {\| {{x_k} - {x^*}} \|^2} ] + 2\lambda \eta B_F^2L_G^2\frac{1}{{{A^2}}}\sum\limits_{1 \le j \le A}^{} {E[ {\| {\phi _{{\mathcal{A}_k}[j]}^k - {x^*}} \|^2} ]} \\
		&- ( {1 - \lambda n\eta } )\lambda \eta \| {{(\partial {{\hat G}_k})^\mathsf{T}}\nabla {F_i}({{\hat G}_k}) + \beta _i^k} \|^2,
		\end{align*}
		where $B_F$, $L_F$, $B_G$ and $L_G$ are the parameters in (\ref{InequationAssumptionF1}) to (\ref{InequationAssumptionG3}).
	\end{lemma}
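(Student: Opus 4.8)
The plan is to mirror the SVRG-based argument in Lemma \ref{BoundLemmaB} and simply replace the SVRG gradient estimate by its SAGA counterpart. First I would recall the SAGA dual update $\beta_i^{k+1} = \beta_i^k - \lambda n\eta\left((\partial\hat{G}_k)^\mathsf{T}\nabla F_i(\hat{G}_k) + \beta_i^k\right)$ and observe that at iteration $k$ only the single randomly selected coordinate $i$ is modified, so that the sum defining $B_k$ collapses to the change in its $i$-th term:
$$B_{k+1} - B_k = \frac{1}{n}\left(\|\beta_i^{k+1} - \beta_i^*\|^2 - \|\beta_i^k - \beta_i^*\|^2\right).$$

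Next I would rewrite the updated residual as a convex combination, $\beta_i^{k+1} - \beta_i^* = (1 - n\lambda\eta)(\beta_i^k - \beta_i^*) + n\lambda\eta\bigl(-(\partial\hat{G}_k)^\mathsf{T}\nabla F_i(\hat{G}_k) - \beta_i^*\bigr)$, and apply the exact identity $\|\theta a + (1-\theta)b\|^2 = \theta\|a\|^2 + (1-\theta)\|b\|^2 - \theta(1-\theta)\|a-b\|^2$ with $\theta = 1 - n\lambda\eta$. Since here $a - b = (\partial\hat{G}_k)^\mathsf{T}\nabla F_i(\hat{G}_k) + \beta_i^k$, dividing by $n$ and cancelling the subtracted $\frac{1}{n}\|\beta_i^k - \beta_i^*\|^2$ gives
$$B_{k+1} - B_k = -\lambda\eta\|\beta_i^k - \beta_i^*\|^2 + \lambda\eta\|(\partial\hat{G}_k)^\mathsf{T}\nabla F_i(\hat{G}_k) + \beta_i^*\|^2 - (1-n\lambda\eta)\lambda\eta\|(\partial\hat{G}_k)^\mathsf{T}\nabla F_i(\hat{G}_k) + \beta_i^k\|^2.$$

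Taking expectation over the random index $i$ converts the first term into $-\lambda\eta E[B_k]$ and leaves the middle term as $\lambda\eta\,E[\|(\partial\hat{G}_k)^\mathsf{T}\nabla F_i(\hat{G}_k) + \beta_i^*\|^2]$, which is precisely the quantity controlled by Lemma \ref{LammaBoundSAGAgradientAndOptimal}. Substituting that bound and multiplying through by $\lambda\eta$ produces the two data-dependent terms $2\lambda\eta(B_F^2 L_G^2\tfrac{1}{A} + B_G^4 L_F^2)E[\|x_k - x^*\|^2]$ and $2\lambda\eta B_F^2 L_G^2\tfrac{1}{A^2}\sum_{1\le j\le A} E[\|\phi_{\mathcal{A}_k[j]}^k - x^*\|^2]$, while the negative quadratic term is retained verbatim, yielding the stated inequality.

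The only delicate point is that the convex-combination identity requires $n\lambda\eta \in [0,1]$, so I would note that this is guaranteed by the step-size restriction $\eta \le (1-\lambda n\eta)\lambda/\eta$-type constraints used throughout the SAGA theorems, which force $n\lambda\eta < 1$. Everything else is a routine substitution of the already-established SAGA variance bound, so no genuinely new estimate is needed and the argument is structurally identical to the SVRG case.
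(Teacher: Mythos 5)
Your proposal is correct and follows essentially the same route as the paper's proof: collapse $B_{k+1}-B_k$ to the single updated coordinate, apply the exact convex-combination norm identity with weight $1-n\lambda\eta$, take expectation over $i$, and invoke Lemma \ref{LammaBoundSAGAgradientAndOptimal} to bound $E[\|(\partial\hat G_k)^\mathsf{T}\nabla F_i(\hat G_k)+\beta_i^*\|^2]$. Your added remark that the identity requires $n\lambda\eta\in[0,1]$ is a point the paper leaves implicit but is consistent with its step-size conditions.
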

	\begin{proof}
		In algorithm \ref{AlgorithmSCDFSAGA}, for the intermediated iteration at $\beta^k$, based on the definition of $B_k$ and  update for  $\beta^{k+1}_i$, $i\in[n]$, we get
		\begin{align*}
		{B_{k + 1}} - {B_k} =& \frac{1}{n}\sum\limits_{i = 1}^n {\| {\beta _i^{k + 1} - \beta _i^*} \|^2}  - \frac{1}{n}\sum\limits_{i = 1}^n {\| {\beta _i^k - \beta _i^*} \|^2} \\
		=& \frac{1}{n}\| {\beta _i^{k + 1} - \beta _i^*} \|^2 - \frac{1}{n}\| {\beta _i^k - \beta _i^*} \|^2\\
		=& \frac{1}{n}\underbrace {\| {\beta _i^k - \lambda n\eta ( {{( {\partial {{\hat G}_k}} )^\mathsf{T}}\nabla {F_i}( {{{\hat G}_k}} ) + \beta _i^k} ) - \beta _i^*} \|_2^2}_{B1} - \frac{1}{n}\| {\beta _i^k - \beta _i^*} \|^2.
		\end{align*}
		Based on the strongly convex property in Definition (\ref{DefinitionConvex}),  $\left\| {ax + \left( {1 - a} \right)y} \right\|^2 = a\left\| x \right\|^2 + \left( {1 - a} \right)\left\| y \right\|^2 - a\left( {1 - a} \right)\left\| {x - y} \right\|^2$, $(0\le a \le 1)$, (B1) can be expressed as 
		\begin{align*}
		(B1) =& \| {\beta _i^k - \lambda n\eta ( {{(\partial {{\hat G}_k})^\mathsf{T}}\nabla {F_i}({{\hat G}_k}) + \beta _i^k} ) - \beta _i^*} \|_{}^2\\
		=& \| {\beta _i^k - \lambda n\eta ( {{(\partial {{\hat G}_k})^\mathsf{T}}\nabla {F_i}({{\hat G}_k}) + \beta _i^* + \beta _i^k - \beta _i^*} ) - \beta _i^*} \|_{}^2\\
		=& \| {( {1 - \lambda n\eta } )( {\beta _i^k - \beta _i^*} ) - \lambda n\eta ( {{(\partial {{\hat G}_k})^\mathsf{T}}\nabla {F_i}({{\hat G}_k}) + \beta _i^*} )} \|_{}^2\\
		=& \| {( {1 - \lambda n\eta } )( {\beta _i^k - \beta _i^*} ) + \lambda n\eta ( { - {(\partial {{\hat G}_k})^\mathsf{T}}\nabla {F_i}({{\hat G}_k}) - \beta _i^*} )} \|_{}^2\\
		=& ( {1 - \lambda n\eta } )\| {\beta _i^k - \beta _i^*} \|_{}^2 + \lambda n\eta \| {{(\partial {{\hat G}_k})^\mathsf{T}}\nabla {F_i}({{\hat G}_k}) + \beta _i^*} \|_{}^2 - ( {1 - \lambda n\eta } )\lambda n\eta \| {{(\partial {{\hat G}_k})^\mathsf{T}}\nabla {F_i}({{\hat G}_k}) + \beta _i^k} \|_{}^2.
		\end{align*}
		Taking expectation on both sides of above equality, we get,
		\begin{align*}
		&E[ {{B_{k + 1}}} ] - E[ {{B_k}} ]\\
		=&  - \lambda \eta E[ {{B_k}} ] + \lambda \eta \underbrace {\| {{(\partial {{\hat G}_k})^\mathsf{T}}\nabla {F_i}({{\hat G}_k}) + \beta _i^*} \|^2}_{( {B2} )} - ( {1 - \lambda n\eta } )\lambda \eta \| {{(\partial {{\hat G}_k})^\mathsf{T}}\nabla {F_i}({{\hat G}_k}) + \beta _i^k} \|^2 \\
		\le&  - \lambda \eta E\left[ {{B_k}} \right] + 2\lambda \eta \left( {B_F^2L_G^2\frac{1}{A} +B_G^4L_F^2} \right)E[ {\| {{x_k} - {x^*}} \|^2} ] + 2\lambda \eta B_F^2L_G^2\frac{1}{{{A^2}}}\sum\limits_{1 \le j \le A}^{} {E[ {\| {\phi _{{\mathcal{A}_k}[j]}^k - {x^*}} \|^2} ]} \\
		&- ( {1 - \lambda n\eta } )\lambda \eta \| {{(\partial {{\hat G}_k})^\mathsf{T}}\nabla {F_i}({{\hat G}_k}) + \beta _i^k} \|^2,
		\end{align*}
		where (B2) follows from Lemma \ref{LammaBoundSAGAgradientAndOptimal}.
	\end{proof}
	\begin{lemma}\label{LemmaBoundSAGAMainBConvex}
		Suppose Assumption \ref{Assumption2} and Assumption \ref{Assumption3} hold, in algorithm \ref{AlgorithmSCDFSAGA}, for the intermediated iteration at $x_k$ and $\beta^k$, let ${B_k} = \frac{1}{n}\sum\nolimits_{i = 1}^n {\| {\beta _i^k - \beta _i^*} \|^2}  $, then we have
		\begin{align*}
		&E\left[ {{B_{k + 1}}} \right] - E\left[ {{B_k}} \right]\\
		\le&   -\lambda \eta E\left[ {{B_k}} \right] +4\lambda \eta\left( {B_F^2L_G^2 + B_G^4L_F^2} \right)\frac{1}{{{A}}}{E[ {\| {{x_k} - x^*} \|_{}^2} ]} +4\lambda \eta\left( {B_F^2L_G^2 + B_G^4L_F^2} \right)\frac{1}{{{A^2}}}\sum\limits_{1 \le j \le A}^{} {E[ {\| { \phi _{{{\cal A}_k}[j]}^k-x^*} \|_{}^2} ]}\\
		&+ 4L_f\lambda \eta( P(x_k) - P({x^*}))  - ( {1 - \lambda n\eta } )\lambda \eta \| {{(\partial {{\hat G}_k})^\mathsf{T}}\nabla {F_i}({{\hat G}_k}) + \beta _i^k} \|^2,
		\end{align*}
		where $B_F$, $L_F$, $B_G$ and $L_G$ are the parameters in (\ref{InequationAssumptionF1}) to (\ref{InequationAssumptionG3}).
	\end{lemma}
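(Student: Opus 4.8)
The plan is to mirror the proof of the non-convex SAGA bound in Lemma \ref{LemmaBoundMainB}, changing only the single step where the optimality gap of the estimated gradient is controlled. First I would write $B_{k+1} - B_k = \frac{1}{n}\sum_{i=1}^n \|\beta_i^{k+1} - \beta_i^*\|^2 - \frac{1}{n}\sum_{i=1}^n\|\beta_i^k - \beta_i^*\|^2$. Since only the randomly selected coordinate $i$ is updated in Algorithm \ref{AlgorithmSCDFSAGA}, this collapses to $\frac{1}{n}\|\beta_i^{k+1} - \beta_i^*\|^2 - \frac{1}{n}\|\beta_i^k - \beta_i^*\|^2$, into which I would substitute the SAGA dual update $\beta_i^{k+1} = \beta_i^k - \lambda n\eta\big((\partial\hat{G}_k)^\mathsf{T}\nabla F_i(\hat{G}_k) + \beta_i^k\big)$.

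Next I would expand the squared term by adding and subtracting $\beta_i^*$ inside the update, rewriting the difference $\beta_i^{k+1}-\beta_i^*$ as the combination $(1 - \lambda n\eta)(\beta_i^k - \beta_i^*) + \lambda n\eta\big(-(\partial\hat{G}_k)^\mathsf{T}\nabla F_i(\hat{G}_k) - \beta_i^*\big)$, and then apply the strong-convexity identity $\|a u + (1-a)v\|^2 = a\|u\|^2 + (1-a)\|v\|^2 - a(1-a)\|u - v\|^2$ from Definition \ref{DefinitionConvex} with $a = 1 - \lambda n\eta$. Note that here $u - v = \beta_i^k + (\partial\hat{G}_k)^\mathsf{T}\nabla F_i(\hat{G}_k)$. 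After dividing by $n$ and taking expectation over $i$, the $\|\beta_i^k - \beta_i^*\|^2$ terms combine into $-\lambda\eta E[B_k]$, and I am left with the cross term $\lambda\eta E[\|(\partial\hat{G}_k)^\mathsf{T}\nabla F_i(\hat{G}_k) + \beta_i^*\|^2]$ (call it $(B2)$) together with the nonpositive term $-(1 - \lambda n\eta)\lambda\eta\|(\partial\hat{G}_k)^\mathsf{T}\nabla F_i(\hat{G}_k) + \beta_i^k\|^2$. This algebra is identical to that in Lemma \ref{LemmaBoundMainB}.

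The only substantive difference, and the single place convexity enters, is the bound on $(B2)$. Whereas Lemma \ref{LemmaBoundMainB} controls it via Lemma \ref{LammaBoundSAGAgradientAndOptimal}, here I would instead invoke Lemma \ref{LammaBoundSAGAgradientAndOptimalConvex}, which under Assumption \ref{Assumption3} produces the extra contribution $4L_f\big(P(x_k) - P(x^*) - \tfrac{\lambda}{2}\|x_k - x^*\|^2\big)$ alongside the two variance terms in $\|x_k - x^*\|^2$ and $\frac{1}{A^2}\sum_{1\le j\le A}\|\phi_{\mathcal{A}_k[j]}^k - x^*\|^2$. Multiplying the whole bound by $\lambda\eta$ and discarding the nonpositive $-2\lambda^2\eta L_f\|x_k - x^*\|^2$ term then yields exactly the claimed inequality.

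I do not expect any genuine obstacle: the argument is a verbatim replay of Lemma \ref{LemmaBoundMainB} up to the choice of which gradient-optimality lemma to cite. The only point requiring care is bookkeeping, namely distributing the factor $\lambda\eta$ correctly across all three terms of the convex bound and verifying that dropping $-2\lambda^2\eta L_f\|x_k - x^*\|^2$ preserves the direction of the inequality, which it does since that term is $\le 0$.
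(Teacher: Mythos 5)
Your proposal is correct and matches the paper's own proof essentially verbatim: the paper likewise reuses the algebra of Lemma \ref{LemmaBoundMainB} (SAGA dual update, the identity $\|au+(1-a)v\|^2 = a\|u\|^2+(1-a)\|v\|^2-a(1-a)\|u-v\|^2$ with $a=1-\lambda n\eta$, expectation over $i$), bounds the $(B2)$/$(B3)$ term via Lemma \ref{LammaBoundSAGAgradientAndOptimalConvex}, and then drops the nonpositive $-2L_f\lambda^2\eta\|x_k-x^*\|^2$ contribution. No gaps.
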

	\begin{proof}
		The beginning proof is the same as Lemma \ref{LemmaBoundMainB}
		\begin{align*}
		&E[ {{B_{k + 1}}} ] - E[ {{B_k}} ]\\
		=&  - \lambda \eta E[ {{B_k}} ] + \lambda \eta \underbrace {\| {{(\partial {{\hat G}_k})^\mathsf{T}}\nabla {F_i}({{\hat G}_k}) + \beta _i^*} \|^2}_{( {B3} )} - ( {1 - \lambda n\eta } )\lambda \eta \| {{(\partial {{\hat G}_k})^\mathsf{T}}\nabla {F_i}({{\hat G}_k}) + \beta _i^k} \|^2\\
		\le&  - \lambda \eta E\left[ {{B_k}} \right] +4\lambda \eta\left( {B_F^2L_G^2 + B_G^4L_F^2} \right)\frac{1}{{{A}}}{E[ {\| {{x_k} - x^*} \|_{}^2} ]} +4\lambda \eta\left( {B_F^2L_G^2 + B_G^4L_F^2} \right)\frac{1}{{{A^2}}}\sum\limits_{1 \le j \le A}^{} {E[ {\| { \phi _{{{\cal A}_k}[j]}^k-x^*} \|_{}^2} ]}\\
		&+ 4L_f\lambda \eta\left( {P(x_k) - P({x^*}) - \frac{\lambda }{2}{{\| {x_k - {x^*}} \|}^2}} \right)- ( {1 - \lambda n\eta } )\lambda \eta \| {{(\partial {{\hat G}_k})^\mathsf{T}}\nabla {F_i}({{\hat G}_k}) + \beta _i^k} \|^2\\
		\le&  - \lambda \eta E\left[ {{B_k}} \right] +4\lambda \eta\left( {B_F^2L_G^2 + B_G^4L_F^2} \right)\frac{1}{{{A}}}{E[ {\| {{x_k} - x^*} \|_{}^2} ]} +4\lambda \eta\left( {B_F^2L_G^2 + B_G^4L_F^2} \right)\frac{1}{{{A^2}}}\sum\limits_{1 \le j \le A}^{} {E[ {\| { \phi _{{{\cal A}_k}[j]}^k-x^*} \|_{}^2} ]}\\
		&+ 4L_f\lambda \eta\left( {P(x_k) - P({x^*}) } \right)- ( {1 - \lambda n\eta } )\lambda \eta \| {{(\partial {{\hat G}_k})^\mathsf{T}}\nabla {F_i}({{\hat G}_k}) + \beta _i^k} \|^2
		\end{align*}
		where (B3) follows from Lemma \ref{LammaBoundSAGAgradientAndOptimalConvex}.
	\end{proof}	
	
	
	\begin{lemma}\label{LemmaBoundMainC}
		In algorithm\ref{AlgorithmSCDFSAGA}, for the intermediated iteration at $x_k$, let ${C_{k}} = \frac{1}{m}\sum\nolimits_{j = 1}^m {\| {\phi _j^k - {x^*}} \|^2} $ and ${A_k} =  {\| {{x^k} - {x^*}} \|^2} $, then we have
		\begin{align*}
		E\left[ {{C_{k + 1}}} \right] - E\left[ {{C_k}} \right] =  - \frac{A}{n}E\left[ {{C_k}} \right] + \frac{A}{n}E\left[ {{A_k}} \right],
		\end{align*}
		where $A$ is the number of sample times for forming the mini-batch $ {{\mathcal{A}_k}}$.
	\end{lemma}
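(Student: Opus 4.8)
The plan is to condition on the history up to iteration $k$ and observe that the only randomness entering $C_{k+1}$ is the draw of the mini-batch $\mathcal{A}_k$, since both $x_k$ and all the table entries $\phi_j^k$ are already determined at the start of iteration $k$. First I would read off the table update rule of Algorithm \ref{AlgorithmSCDFSAGA}: for every $j\in[m]$ the refreshed entry equals $\phi_j^{k+1}=x_k$ when $j$ is among the sampled indices $\mathcal{A}_k$, and $\phi_j^{k+1}=\phi_j^k$ otherwise. Writing $\mathbf{1}[\cdot]$ for the indicator, this gives, for each $j$,
\begin{align*}
\|\phi_j^{k+1}-x^*\|^2 = \mathbf{1}[j\in\mathcal{A}_k]\,\|x_k-x^*\|^2 + \mathbf{1}[j\notin\mathcal{A}_k]\,\|\phi_j^k-x^*\|^2.
\end{align*}

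Next I would take the expectation conditional on the history $\mathcal{F}_k$, under which $x_k$ and $\phi_j^k$ are fixed. The only input I need is the marginal probability that a given coordinate is refreshed, $\Pr[j\in\mathcal{A}_k]=A/n$, which is identical for every $j$ under the sampling scheme forming $\mathcal{A}_k$. Hence
\begin{align*}
E[\|\phi_j^{k+1}-x^*\|^2\mid\mathcal{F}_k] = \frac{A}{n}\,\|x_k-x^*\|^2 + \Bigl(1-\frac{A}{n}\Bigr)\|\phi_j^k-x^*\|^2.
\end{align*}
Averaging over $j=1,\dots,m$ and using the definitions $A_k=\|x_k-x^*\|^2$ and $C_k=\frac1m\sum_{j=1}^m\|\phi_j^k-x^*\|^2$ yields $E[C_{k+1}\mid\mathcal{F}_k]=\frac{A}{n}A_k+(1-\frac{A}{n})C_k$; subtracting $C_k$ and taking the outer expectation gives the claimed identity.

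The only genuinely delicate point is pinning down the refresh probability $\Pr[j\in\mathcal{A}_k]=A/n$: this is exact when the $A$ indices are drawn so that each coordinate's marginal inclusion probability is $A/n$ (e.g.\ sampling without replacement), whereas naive sampling with replacement would instead produce $1-(1-1/n)^A$ and only approximate the stated constant. Once this probability is fixed, every remaining step is a deterministic, linear manipulation, so I expect no further difficulty; the result is an exact equality precisely because no variance or smoothness inequalities are invoked, in contrast with the bounds on $A_k$ and $B_k$ used elsewhere.
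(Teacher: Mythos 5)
Your proof is correct and follows essentially the same route as the paper's: both compute the expected change of the table $\{\phi_j^k\}$ under the uniform draw of $\mathcal{A}_k$ — the paper by summing the increments over the refreshed entries and then averaging over which indices were drawn, you by a per-coordinate indicator decomposition with the marginal inclusion probability. Your closing caveat is well taken: the paper's sum over $1 \le j \le A$ implicitly treats the $A$ sampled indices as distinct (sampling without replacement), and under sampling with replacement the stated identity holds only approximately, a subtlety the paper's proof does not acknowledge.
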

	\begin{proof}
		In algorithm             \ref{AlgorithmSCDFSAGA}, at the intermediated iteration at $x_k$, for $j\in {{\mathcal{A}_k}}$, ${\phi _{_{{\mathcal{A}_k}[j]}}^{k + 1}}=x_k$, thus, we have
		\begin{align*}
		{C_{k + 1}} - {C_k}	=& \frac{1}{n}\left( {\sum\limits_{1 \le j \le A}^{} {( {E[ {\| {\phi _{_{{\mathcal{A}_k}[j]}}^{k + 1} - {x^*}} \|^2} ] - \| {\phi _{\mathcal{A}_k[j]}^k - {x^*}} \|^2} )} } \right)\\
		=& \frac{A}{n}\| {{x^k} - {x^*}} \|^2 -\frac{1}{n}\sum\limits_{1 \le j \le A} {{{\| {\phi _{{\mathcal{A}_k[j]}}^k - {x^*}} \|}^2}} .
		\end{align*}
		Taking expectation on both sides,
		\begin{align*}
		E[ {{C_{k + 1}}} ] - E[ {{C_k}} ] =  - \frac{A}{n}E[ {{C_k}} ] + \frac{A}{n}E[ {{A_k}} ],
		\end{align*}
		where $\frac{1}{n}\sum\nolimits_{1 \le j \le A} {E[\| {\phi _{{\mathcal{A}_k}[j]}^k - {x^*}} \|_{}^2]}  = \frac{1}{n}\sum\nolimits_{1 \le j \le A} {\frac{1}{m}\sum\nolimits_{j = 1}^m {\| {\phi _j^k - {x^*}} \|_{}^2} }  = \frac{A}{n}{C_k}$.
	\end{proof}
	\begin{lemma}\label{LemmaBoundSAGAEstimatedGradientWithvariance} Assume Assumption \ref{Assumption1} and \ref{Assumption2} hold, in algorithm \ref{AlgorithmSCDFSAGA}. Define $\lambda {R_x} = {\max _x}\{ {{{\| {{x^*} - x} \|}^2}:F(G(x)) \le F(G({x_0}))} \}$. The bound satisfies,
		\begin{align*}
		&-E[ {\langle {( {{( {\partial {{\hat G}_k}} )^\mathsf{T}}\nabla {F_i}( {{{\hat G}_k}} ) + \beta _i^k} ),{x_k} - {x^*}} \rangle } ]\\
		\le & \lambda {R_x}\left( {2B_F^2L_G^2 + 2B_G^4L_F^2} \right)\frac{1}{{{A}}}E[\left\| {{x_k} - {x^*}} \right\|_{}^2]  + \lambda {R_x}\left( {2B_F^2L_G^2 + 2B_G^4L_F^2} \right)\frac{1}{{{A^2}}}\sum\limits_{1 \le j \le A}^{} {E[\| {\phi _{{\mathcal{A}_k}[j]}^k - {x^*}} \|_{}^2]} 
		\\ &  - \lambda E[ {\| {{x^*} - {x_k}} \|^2} ],
		\end{align*}
		where  $B_F$, $L_F$, $B_G$ and $L_G$ are the parameters in (\ref{InequationAssumptionF1}) to (\ref{InequationAssumptionG3}), $p>0$.
	\end{lemma}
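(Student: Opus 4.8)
The plan is to mirror the argument of Lemma~\ref{LemmaBoundA1} from the SVRG analysis, substituting the SAGA inner-function estimates for the SVRG ones. Because the update is $x_{k+1}=x_k-\eta((\partial\hat G_k)^\mathsf{T}\nabla F_i(\hat G_k)+\beta_i^k)$, the object to control is the correlation of this search direction with $x_k-x^*$. First I would add and subtract the exact full-gradient direction $(\partial G(x_k))^\mathsf{T}\nabla F_i(G(x_k))$ inside the inner product, splitting $E[\langle(\partial\hat G_k)^\mathsf{T}\nabla F_i(\hat G_k)+\beta_i^k,\,x_k-x^*\rangle]$ into a \emph{bias} term built from $(\partial\hat G_k)^\mathsf{T}\nabla F_i(\hat G_k)-(\partial G(x_k))^\mathsf{T}\nabla F_i(G(x_k))$ and a \emph{gradient} term built from $(\partial G(x_k))^\mathsf{T}\nabla F_i(G(x_k))+\beta_i^k$, exactly as in the $(A11)$/$(A12)$ decomposition used for SVRG.

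For the bias term I would invoke Lemma~\ref{LemmaInEquation} (Young's inequality with a free parameter $p>0$) to convert the inner product into a multiple of the squared norm of the bias direction. That squared norm is precisely what Lemma~\ref{LammaBoundSAGANormEsimateGradient} controls, which supplies the $\tfrac1A E[\|x_k-x^*\|^2]$ term together with the table contribution $\tfrac1{A^2}\sum_j E[\|\phi^k_{\mathcal A_k[j]}-x^*\|^2]$. The accompanying factor $\lambda R_x$ arises by bounding the companion iterate distance $\|x_k-x^*\|^2\le\lambda R_x$, which is legitimate because $x_k$ remains in the sublevel set $\{x:F(G(x))\le F(G(x_0))\}$ on which $R_x$ is defined; this is the same mechanism that produced the $\lambda R_x$ prefactor in the SVRG version.

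For the gradient term I would take expectation over $i$: since $\partial G(x_k)$ is the exact Jacobian, Assumption~\ref{Assumption1} gives $E_i[(\partial G(x_k))^\mathsf{T}\nabla F_i(G(x_k))]=(\partial G(x_k))^\mathsf{T}\nabla F(G(x_k))$, while the $\ell_2$ primal--dual relation $\tfrac1n\sum_i\beta_i^k=\lambda x_k=\nabla R(x_k)$ coming from (\ref{EquationGradientR}) identifies $E_i[\beta_i^k]=\nabla R(x_k)$. Hence the conditional expectation of the gradient term is $\langle\nabla P(x_k),x_k-x^*\rangle$, and $\lambda$-strong convexity of $P$ yields $\langle\nabla P(x_k),x_k-x^*\rangle\ge P(x_k)-P(x^*)+\tfrac\lambda2\|x_k-x^*\|^2\ge\lambda\|x_k-x^*\|^2$, where the last step uses $P(x_k)-P(x^*)\ge\tfrac\lambda2\|x_k-x^*\|^2$. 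Negating both contributions and combining then delivers the claimed bound, with the positive variance terms on the one side and the negative $-\lambda E[\|x^*-x_k\|^2]$ on the other.

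The main obstacle is the bias term: because the inner map is only estimated by the SAGA surrogate $\hat G_k$, the direction $(\partial\hat G_k)^\mathsf{T}\nabla F_i(\hat G_k)$ is a \emph{biased} estimator of $\nabla P(x_k)$, so the classical dual-free SDCA argument, which relies on unbiasedness, does not apply directly. The decomposition isolates this bias, and the key observation is that its size is governed entirely by the table distances $\|x_k-\phi^k\|$ (and hence vanishes as the iterates converge), which is exactly the content of Lemma~\ref{LammaBoundSAGANormEsimateGradient}. The delicate part is the bookkeeping of the $1/A$ and $1/A^2$ scalings and the $\lambda R_x$ factor, arranged so that in the downstream convergence analysis the negative $-\lambda\|x_k-x^*\|^2$ term can dominate the residual positive bias and table terms.
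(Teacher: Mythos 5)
Your proposal mirrors the paper's proof in its essential structure: the same add-and-subtract of $(\partial G(x_k))^\mathsf{T}\nabla F_i(G(x_k))$ splitting the inner product into a bias part and an exact-gradient part, the same appeal to Lemma~\ref{LammaBoundSAGANormEsimateGradientShort}/\ref{LammaBoundSAGANormEsimateGradient} for the squared bias norm, and the same strong-convexity step (via $\tfrac1n\sum_i\beta_i^k=\lambda x_k$, which is Lemma~\ref{LemmaSAGABoundGradientAndVariable}) producing $-\lambda E[\|x_k-x^*\|^2]$. The one place you genuinely diverge is the cross term. You propose Young's inequality (Lemma~\ref{LemmaInEquation}), which gives $\langle \mathrm{bias},x^*-x_k\rangle\le\tfrac1{2q}E[\|\mathrm{bias}\|^2]+\tfrac q2 E[\|x_k-x^*\|^2]$: a free prefactor $1/(2q)$ on the bias plus a residual quadratic that must be absorbed into the negative term, so you would end up with a constant of the form $1/(2q)$ in front of the bias bound and a weakened coefficient $-(\lambda-q/2)$ on $\|x_k-x^*\|^2$ --- not the stated prefactor $\lambda R_x$ together with the full $-\lambda$. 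The paper instead bounds the inner product by the product $E[\|\mathrm{bias}\|^2]\cdot\|x_k-x^*\|^2$ and then replaces the second factor by $\lambda R_x$ via the sublevel-set definition; that multiplicative mechanism is the only way the $\lambda R_x$ prefactor in the statement arises. (To be fair, the paper's step is itself not a legitimate Cauchy--Schwarz, since it turns an inner product into a product of \emph{squared} norms; your Young's-inequality route is the more defensible one, but it proves a variant of the lemma with different constants rather than the lemma as stated, and the downstream theorems use the $\lambda R_x$ form.)
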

	\begin{proof}Through subtracting and adding term ${{( {\partial {G_k}} )}^\mathsf{T}}\nabla {F_i}( {{G}( {{x_k}} )} )$, we have
		\begin{align*}
		&- E[ {\langle {( {{( {\partial {{\hat G}_k}} )^\mathsf{T}}\nabla {F_i}( {{{\hat G}_k}} ) + \beta _i^k} ),{x_k} - {x^*}} \rangle } ]\\
		=& \langle {E[ {{( {\partial {{\hat G}_k}} )^\mathsf{T}}\nabla {F_i}( {{{\hat G}_k}} ) + \beta _i^k} ],{x^*} - {x_k}} \rangle \\
		=& \langle {E[ {{( {\partial {{\hat G}_k}} )^\mathsf{T}}\nabla {F_i}( {{{\hat G}_k}} ) - {( {\partial {G_k}} )^\mathsf{T}}\nabla {F_i}( {{G}( {{x_k}} )} ) + {( {\partial {G_k}} )^\mathsf{T}}\nabla {F_i}( {{G}( {{x_k}} )} ) + \beta _i^k} ],{x^*} - {x_k}} \rangle \\
		=& \langle {E[ {{( {\partial {{\hat G}_k}} )^\mathsf{T}}\nabla {F_i}( {{{\hat G}_k}} ) - {{( {\partial {G_k}} )}^\mathsf{T}}\nabla {F_i}( {{G}( {{x_k}} )} )} ],{x^*} - {x_k}} \rangle  + \underbrace {\langle {E[{( {\partial G( {{x_k}} )} )^\mathsf{T}}\nabla F_i( {{G}( {{x_k}} )} ) + \beta _i^k],{x^*} - {x_k}} \rangle }_{( {A3} )}\\
		\le& E[{\| {{(\partial {{\hat G}_k})^\mathsf{T}}\nabla {F_i}({{\hat G}_k}) - {(\partial {G_k})^\mathsf{T}}\nabla {F_i}(G({x_k}))} \|^2}]{\| {{x^*} - {x_k}} \|^2} - \lambda {\| {{x^*} - {x_k}} \|^2}\\
		\le & 4\lambda {R_x}\left( {B_F^2L_G^2 + B_G^4L_F^2} \right)\frac{1}{{{A}}}E[\left\| {{x_k} - {x^*}} \right\|_{}^2]  + 4\lambda {R_x}\left( {B_F^2L_G^2 + B_G^4L_F^2} \right)\frac{1}{{{A^2}}}\sum\limits_{1 \le j \le A}^{} {E[\| {\phi _{{\mathcal{A}_k}[j]}^k - {x^*}} \|_{}^2]} 
		\\ &  - \lambda E[ {\| {{x^*} - {x_k}} \|^2} ],
		\end{align*}
		where the first inequality follows from Cauchy-Schwarz inequality and Lemma \ref{LemmaSAGABoundGradientAndVariable} (A3), the second inequality follows from Lemma \ref{LammaBoundSAGANormEsimateGradient}. 
	\end{proof}
	\begin{lemma}\label{LemmaBoundSAGAEstimatedGradientWithvarianceConvex} Assume Assumption \ref{Assumption1}, \ref{Assumption2} and  \ref{Assumption3} hold, in algorithm \ref{AlgorithmSCDFSAGA},  the bound satisfies,
		\begin{align*}
		&-E[ {\langle {( {{( {\partial {{\hat G}_k}} )^\mathsf{T}}\nabla {F_i}( {{{\hat G}_k}} ) + \beta _i^k} ),{x_k} - {x^*}} \rangle } ]\\
		\le & 4\lambda {R_x}\left( {B_F^2L_G^2 + B_G^4L_F^2} \right)\frac{1}{{{A}}}E[\left\| {{x_k} - {x^*}} \right\|_{}^2]  + 4\lambda {R_x}\left( {B_F^2L_G^2 + B_G^4L_F^2} \right)\frac{1}{{{A^2}}}\sum\limits_{1 \le j \le A}^{} {E[\| {\phi _{{\mathcal{A}_k}[j]}^k - {x^*}} \|_{}^2]}\\&   
		-(1-d)( P(x_k)-P(x^*))  - \frac{1}{2}\lambda (1+d)\| {{x^*} - {x_k}} \|^2, 
		\end{align*}
		where  $B_F$, $L_F$, $B_G$ and $L_G$ are the parameters in (\ref{InequationAssumptionF1}) to (\ref{InequationAssumptionG3}), $p>0$,  and $A$ is the number of sample times for forming the mini-batch $ {{\mathcal{A}_k}}$.
	\end{lemma}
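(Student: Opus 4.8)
The plan is to mirror the decomposition used for the non-convex companion Lemma~\ref{LemmaBoundSAGAEstimatedGradientWithvariance}, changing only the way the exact-gradient inner product is bounded so that the function-value gap $P(x_k)-P(x^*)$ is retained rather than absorbed entirely into $\|x_k-x^*\|^2$. First I would subtract and add the unbiased term $(\partial G(x_k))^\mathsf{T}\nabla F_i(G(x_k))$ inside the inner product, splitting
\begin{align*}
-E[\langle (\partial\hat G_k)^\mathsf{T}\nabla F_i(\hat G_k)+\beta_i^k,\,x_k-x^*\rangle]
=&\ \langle E[(\partial\hat G_k)^\mathsf{T}\nabla F_i(\hat G_k)-(\partial G(x_k))^\mathsf{T}\nabla F_i(G(x_k))],\,x^*-x_k\rangle\\
&+\langle E[(\partial G(x_k))^\mathsf{T}\nabla F_i(G(x_k))+\beta_i^k],\,x^*-x_k\rangle.
\end{align*}
The first (variance) term is treated exactly as in the non-convex case: Cauchy--Schwarz, the sublevel-set bound $\|x^*-x_k\|^2\le\lambda R_x$, and Lemma~\ref{LammaBoundSAGANormEsimateGradient} yield the two terms $4\lambda R_x(B_F^2L_G^2+B_G^4L_F^2)\frac1A E[\|x_k-x^*\|^2]$ and the analogous $\frac1{A^2}\sum_{1\le j\le A}E[\|\phi_{\mathcal{A}_k[j]}^k-x^*\|^2]$ contribution.

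For the second term I would use the unbiasedness in Assumption~\ref{Assumption1}, $E[(\partial G(x_k))^\mathsf{T}\nabla F_i(G(x_k))]=(\partial G(x_k))^\mathsf{T}\nabla F(G(x_k))$, together with the primal--dual relation $\frac1n\sum_i\beta_i^k=\lambda x_k$, to identify the averaged vector as $\nabla P(x_k)$, so that this term equals $\langle\nabla P(x_k),\,x^*-x_k\rangle$. Strong convexity of $P$ then gives
\begin{align*}
\langle\nabla P(x_k),\,x^*-x_k\rangle\le -\big(P(x_k)-P(x^*)\big)-\frac{\lambda}{2}\|x_k-x^*\|^2.
\end{align*}

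The only genuinely new step, and where the parameter $d\in[0,1)$ enters, is to redistribute this last bound using the strong-convexity consequence $P(x_k)-P(x^*)\ge\frac{\lambda}{2}\|x_k-x^*\|^2$. Writing $P(x_k)-P(x^*)=(1-d)(P(x_k)-P(x^*))+d\,(P(x_k)-P(x^*))$ and applying the consequence only to the $d$-fraction converts the bound into
\begin{align*}
\langle\nabla P(x_k),\,x^*-x_k\rangle\le -(1-d)\big(P(x_k)-P(x^*)\big)-\frac12\lambda(1+d)\|x_k-x^*\|^2,
\end{align*}
which are precisely the remaining two terms of the claim; adding the variance contribution finishes the proof. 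I expect no real obstacle here: the argument is structurally identical to Lemma~\ref{LemmaBoundSAGAEstimatedGradientWithvariance}, and the only care needed is the coefficient bookkeeping so that $(1-d)$ and $\frac12\lambda(1+d)$ emerge correctly. Assumption~\ref{Assumption3} is not strictly required for this inequality itself, but it is carried in the hypotheses so that the retained gap $P(x_k)-P(x^*)$ can be matched downstream against the corresponding positive gap term produced by Lemma~\ref{LammaBoundSAGAgradientAndOptimalConvex}.
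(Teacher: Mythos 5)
Your proposal matches the paper's proof essentially step for step: the paper reuses the decomposition and variance bound from Lemma \ref{LemmaBoundSAGAEstimatedGradientWithvariance}, keeps the strong-convexity bound $\langle\nabla P(x_k),x^*-x_k\rangle\le -(P(x_k)-P(x^*))-\frac{\lambda}{2}\|x_k-x^*\|^2$ without collapsing the gap, and then applies exactly your redistribution $d\,(P(x_k)-P(x^*))\ge\frac{\lambda d}{2}\|x_k-x^*\|^2$ to produce the $(1-d)$ and $\frac{1}{2}\lambda(1+d)$ coefficients. Your side remark that Assumption \ref{Assumption3} is not actually used in this particular inequality is also consistent with the paper's argument.
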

	\begin{proof}The beginning proof is the same as the Lemma \ref{LemmaBoundSAGAEstimatedGradientWithvariance}
		\begin{align*}
		&- E[ {\langle {( {{( {\partial {{\hat G}_k}} )^\mathsf{T}}\nabla {F_i}( {{{\hat G}_k}} ) + \beta _i^k} ),{x_k} - {x^*}} \rangle } ]\\
		\le & 4\lambda {R_x}\left( {B_F^2L_G^2 + B_G^4L_F^2} \right)\frac{1}{{{A}}}E[\left\| {{x_k} - {x^*}} \right\|_{}^2]  +4 \lambda {R_x}\left( {B_F^2L_G^2 + B_G^4L_F^2} \right)\frac{1}{{{A^2}}}\sum\limits_{1 \le j \le A}^{} {E[\| {\phi _{{\mathcal{A}_k}[j]}^k - {x^*}} \|_{}^2]}\\&   -( P(x_k)-P(x^*))  - \frac{1}{2}\lambda \| {{x^*} - {x_k}} \|^2\\
		\le & 4\lambda {R_x}\left( {B_F^2L_G^2 + B_G^4L_F^2} \right)\frac{1}{{{A}}}E[\left\| {{x_k} - {x^*}} \right\|_{}^2]  + 4\lambda {R_x}\left( {B_F^2L_G^2 + B_G^4L_F^2} \right)\frac{1}{{{A^2}}}\sum\limits_{1 \le j \le A}^{} {E[\| {\phi _{{\mathcal{A}_k}[j]}^k - {x^*}} \|_{}^2]}\\&   
		-(1-d)( P(x_k)-P(x^*))  - \frac{1}{2}\lambda (1+d)\| {{x^*} - {x_k}} \|^2,
		\end{align*}
		where  $1> d\ge0$, and the last inequality based on $d( {P( {{x_k}} ) - P( {{x^*}} )} ) \ge \frac{1}{2}\lambda d{\| {{x_k} - {x^*}} \|^2}$.
	\end{proof}
	\begin{lemma}\label{LemmaSAGABoundGradientAndVariable}
		In algorithm \ref{AlgorithmSCDFSAGA}, suppose $P(x)$ is $\lambda$-strongly convex, for the intermediated iteration at $x_k$, the bound satisfies,
		\begin{align*}
		E[\langle {(\partial G({x_k}))^\mathsf{T}}\nabla F_i(G({x_k})) + \lambda {x_k},{x^*} - {x_k}\rangle ] \le  - \lambda E[ {\| {{x_k} - {x^*}} \|^2} ].
		\end{align*}
	\end{lemma}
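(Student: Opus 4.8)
The plan is to first take expectation over the random index $i$ so that the stochastic quantity collapses to the true full gradient $\nabla P(x_k)$, and then invoke the $\lambda$-strong convexity of $P$ together with the first-order optimality condition $\nabla P(x^*)=0$.

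First I would note that $\nabla F=\frac{1}{n}\sum_{i=1}^n\nabla F_i$, so that conditioning on $x_k$ and averaging only over $i$ gives $E[(\partial G(x_k))^\mathsf{T}\nabla F_i(G(x_k))]=(\partial G(x_k))^\mathsf{T}\nabla F(G(x_k))$. Since the regularizer is $R(x)=\frac{1}{2}\lambda\|x\|^2$, we have $\nabla R(x_k)=\lambda x_k$, so by the expression (\ref{EquationGradientP}) for $\nabla P$ the bracketed term equals $\nabla P(x_k)$ in expectation. Hence
\begin{align*}
E[\langle (\partial G(x_k))^\mathsf{T}\nabla F_i(G(x_k))+\lambda x_k,\,x^*-x_k\rangle]=E[\langle \nabla P(x_k),\,x^*-x_k\rangle].
\end{align*}

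The core step is to bound $\langle\nabla P(x_k),x^*-x_k\rangle$. I would use strong monotonicity of $\nabla P$, obtained by adding the first-order $\lambda$-strong-convexity inequality of Definition \ref{DefinitionConvex} applied to the ordered pair $(x_k,x^*)$ and to $(x^*,x_k)$: this yields $\langle\nabla P(x_k)-\nabla P(x^*),x_k-x^*\rangle\ge\lambda\|x_k-x^*\|^2$. Because $x^*$ minimizes the differentiable strongly convex function $P$, we have $\nabla P(x^*)=0$, so $\langle\nabla P(x_k),x_k-x^*\rangle\ge\lambda\|x_k-x^*\|^2$, i.e. $\langle\nabla P(x_k),x^*-x_k\rangle\le-\lambda\|x_k-x^*\|^2$. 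Taking expectation and combining with the identity above gives the claim.

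The computation is routine, and the only delicate point is securing the full constant $\lambda$ rather than $\lambda/2$. Using the single inequality $P(x^*)\ge P(x_k)+\langle\nabla P(x_k),x^*-x_k\rangle+\frac{\lambda}{2}\|x^*-x_k\|^2$ together with the crude bound $P(x^*)\le P(x_k)$ would only produce $\langle\nabla P(x_k),x^*-x_k\rangle\le-\frac{\lambda}{2}\|x_k-x^*\|^2$. The improvement to $\lambda$ comes from replacing $P(x^*)\le P(x_k)$ by the sharper quadratic-growth estimate $P(x_k)-P(x^*)\ge\frac{\lambda}{2}\|x_k-x^*\|^2$ (itself the strong-convexity inequality at $x^*$ with $\nabla P(x^*)=0$), which is exactly the symmetric information encoded in the strong-monotonicity argument, and which matches the chain of inequalities already used in Lemma \ref{LemmaBoundA1}.
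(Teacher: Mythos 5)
Your proposal is correct and follows essentially the same route as the paper: after averaging over $i$ to reduce the inner product to $\langle\nabla P(x_k),x^*-x_k\rangle$, the paper applies the strong-convexity inequality at $x_k$ and then the quadratic-growth bound $P(x_k)-P(x^*)\ge\frac{\lambda}{2}\|x_k-x^*\|^2$, which summed together is exactly the strong-monotonicity inequality you invoke. Your closing remark about how the full constant $\lambda$ (rather than $\lambda/2$) is obtained is precisely the paper's two-step chain.
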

	\begin{proof}	
		Based on the $\lambda$-strongly convexity of function $P(x)$, we have
		\begin{align*}
		E[{(\partial G({x_k}))^\mathsf{T}}\nabla {F_i}(G({x_k})) + \beta _i^k] =& \langle {(\partial G({x_k}))^\mathsf{T}}\nabla F(G({x_k})) + \lambda {x_k},{x^*} - {x_k}\rangle \\
		=& \langle {\nabla P( {{x_k}} ),{x^*} - {x_k}} \rangle \\
		\le& P( {{x^*}} ) - P( {{x_k}} ) - \frac{\lambda }{2}\| {{x^*} - {x_k}} \|^2\\
		\le&  - \lambda \| {{x^*} - {x_k}} \|^2,
		\end{align*}	
		where $E[ {\beta _i^k} ] = \lambda {x_k}$.
	\end{proof}


\begin{thebibliography}{10}
	
	\bibitem{wang2017stochastic}
	Mengdi Wang, Ethan~X Fang, and Han Liu.
	\newblock Stochastic compositional gradient descent: algorithms for minimizing
	compositions of expected-value functions.
	\newblock {\em Mathematical Programming}, 161(1-2):419--449, 2017.
	
	\bibitem{dai2016learning}
	Bo~Dai, Niao He, Yunpeng Pan, Byron Boots, and Le~Song.
	\newblock Learning from conditional distributions via dual kernel embeddings.
	\newblock {\em arXiv preprint arXiv:1607.04579}, 2016.
	
	\bibitem{sutton1998reinforcement}
	Richard~S Sutton and Andrew~G Barto.
	\newblock {\em Reinforcement learning: An introduction}, volume~1.
	\newblock MIT press Cambridge, 1998.
	
	\bibitem{zhang2015stochastic}
	Yuchen Zhang and Xiao Lin.
	\newblock Stochastic primal-dual coordinate method for regularized empirical
	risk minimization.
	\newblock In {\em Proceedings of the 32nd International Conference on Machine
		Learning (ICML-15)}, pages 353--361, 2015.
	
	\bibitem{johnson2013accelerating}
	Rie Johnson and Tong Zhang.
	\newblock Accelerating stochastic gradient descent using predictive variance
	reduction.
	\newblock In {\em Advances in neural information processing systems}, pages
	315--323, 2013.
	
	\bibitem{defazio2014saga}
	Aaron Defazio, Francis Bach, and Simon Lacoste-Julien.
	\newblock Saga: A fast incremental gradient method with support for
	non-strongly convex composite objectives.
	\newblock In {\em Advances in Neural Information Processing Systems}, pages
	1646--1654, 2014.
	
	\bibitem{shalev2013stochastic}
	Shai Shalev-Shwartz and Tong Zhang.
	\newblock Stochastic dual coordinate ascent methods for regularized loss
	minimization.
	\newblock {\em Journal of Machine Learning Research}, 14(Feb):567--599, 2013.
	
	\bibitem{shalev2016sdca}
	Shai Shalev-Shwartz.
	\newblock Sdca without duality, regularization, and individual convexity.
	\newblock In {\em ICML}, 2016.
	
	\bibitem{lian2016finite}
	Xiangru Lian, Mengdi Wang, and Ji~Liu.
	\newblock Finite-sum composition optimization via variance reduced gradient
	descent.
	\newblock In {\em AISTATS}, 2017.
	
	\bibitem{wang2016accelerating}
	Mengdi Wang, Ji~Liu, and Ethan Fang.
	\newblock Accelerating stochastic composition optimization.
	\newblock In {\em Advances in Neural Information Processing Systems}, pages
	1714--1722, 2016.
	
	\bibitem{roux2012stochastic}
	Nicolas~L Roux, Mark Schmidt, and Francis~R Bach.
	\newblock A stochastic gradient method with an exponential convergence \_rate
	for finite training sets.
	\newblock In {\em Advances in Neural Information Processing Systems}, pages
	2663--2671, 2012.
	
	\bibitem{schmidt2017minimizing}
	Mark Schmidt, Nicolas Le~Roux, and Francis Bach.
	\newblock Minimizing finite sums with the stochastic average gradient.
	\newblock {\em Mathematical Programming}, 162(1-2):83--112, 2017.
	
	\bibitem{xiao2014proximal}
	Lin Xiao and Tong Zhang.
	\newblock A proximal stochastic gradient method with progressive variance
	reduction.
	\newblock {\em SIAM Journal on Optimization}, 24(4):2057--2075, 2014.
	
	\bibitem{liu2017accelerated}
	Yuanyuan Liu, Fanhua Shang, and James Cheng.
	\newblock Accelerated variance reduced stochastic admm.
	\newblock In {\em AAAI}, pages 2287--2293, 2017.
	
	\bibitem{harikandeh2015stopwasting}
	Reza Harikandeh, Mohamed~Osama Ahmed, Alim Virani, Mark Schmidt, Jakub
	Kone{\v{c}}n{\`y}, and Scott Sallinen.
	\newblock Stopwasting my gradients: Practical svrg.
	\newblock In {\em Advances in Neural Information Processing Systems}, pages
	2251--2259, 2015.
	
	\bibitem{allen2016katyusha}
	Zeyuan Allen-Zhu.
	\newblock Katyusha: The first direct acceleration of stochastic gradient
	methods.
	\newblock In {\em STOC}, 2017.
	
	\bibitem{allen2016improved}
	Zeyuan Allen-Zhu and Yang Yuan.
	\newblock Improved svrg for non-strongly-convex or sum-of-non-convex
	objectives.
	\newblock In {\em International conference on machine learning}, pages
	1080--1089, 2016.
	
	\bibitem{shalev2014accelerated}
	Shai Shalev-Shwartz and Tong Zhang.
	\newblock Accelerated proximal stochastic dual coordinate ascent for
	regularized loss minimization.
	\newblock In {\em International Conference on Machine Learning}, pages 64--72,
	2014.
	
	\bibitem{lin2014accelerated}
	Qihang Lin, Zhaosong Lu, and Lin Xiao.
	\newblock An accelerated proximal coordinate gradient method.
	\newblock In {\em Advances in Neural Information Processing Systems}, pages
	3059--3067, 2014.
	
	\bibitem{lin2014acceleratedSIAM}
	Qihang Lin, Zhaosong Lu, and Lin Xiao.
	\newblock An accelerated randomized proximal coordinate gradient method and its
	application to regularized empirical risk minimization.
	\newblock {\em SIAM Journal on Optimization}, pages 2244--2273, 2015.
	
	\bibitem{wang2016stochastic}
	Mengdi Wang and Ji~Liu.
	\newblock A stochastic compositional gradient method using markov samples.
	\newblock In {\em Proceedings of the 2016 Winter Simulation Conference}, pages
	702--713, 2016.
	
	\bibitem{yu2017fast}
	Yue Yu and Longbo Huang.
	\newblock Fast stochastic variance reduced admm for stochastic composition
	optimization.
	\newblock In {\em IJCAI}, 2017.
	
	\bibitem{bertsekas1999nonlinear}
	Dimitri~P Bertsekas.
	\newblock {\em Nonlinear programming}.
	\newblock Athena scientific Belmont, 1999.
	
\end{thebibliography}

\end{document}